\documentclass[11pt]{article}

\usepackage[a4paper,margin=1in]{geometry}

\usepackage[utf8]{inputenc} 
\usepackage[T1]{fontenc}    

\usepackage{microtype}
\usepackage{graphicx}
\usepackage{subcaption}
\usepackage{booktabs} 
\usepackage{algorithm}
\usepackage{algpseudocode} 

\usepackage{url}            
\usepackage{booktabs}       
\usepackage{amsfonts}       
\usepackage{nicefrac}       
\usepackage{microtype}      
\usepackage{xcolor}         

\usepackage{amsmath}
\usepackage{amssymb}
\usepackage{mathtools}
\usepackage{amsthm}
\usepackage{thmtools}
\usepackage{enumerate}
\usepackage{enumitem}
\usepackage{xargs}
\usepackage{ifthen}
\usepackage{xparse}

\usepackage[authoryear,round]{natbib}
\usepackage{hyperref}
\usepackage[capitalize,noabbrev]{cleveref}

\newcommand{\rmd}{\mathrm{d}}
\newcommand{\rme}{\mathrm{e}}
\newcommand{\eqsp}{\,}

\newcommand{\eqdef}{:=}
\newcommand{\pf}{p}
\newcommand{\pb}{\overleftarrow{p}}
\newcommand{\hell}{\mathrm{H}}

\newcommand{\pidata}{\pi_{\mathrm{data}}}
\newcommand{\rmL}{\mathrm{L}}
\def\maxsw{Max SW}

 

\newcommand{\Xora}{\overrightarrow{\mathbf{X}}}
\newcommand{\Xola}{\overleftarrow{\mathbf{X}}}
\newcommand{\Xbar}{\bar{\mathbf{X}}}
\newcommand{\X}{\mathbf{X}}
\newcommand{\x}{\mathbf{x}}
\newcommand{\Id}{\mathbf{I}}

\newcommand{\Z}{\mathbf{Z}}
\newcommand{\Q}{\mathrm{Q}}

\newcommand{\Y}{\mathbf{Y}}
\newcommandx{\pdata}[1][1=]{\ifthenelse{\equal{#1}{}}{p_{\operatorname{data}}}{p_{\operatorname{data}}\left(#1\right)}}
\NewDocumentCommand{\gaussiand}{m m o}{\IfValueTF{#3}{\mathcal{N}\left(#3; #1, #2\right)}{\mathcal{N}\left(#1, #2\right)}}
\newcommandx{\gaussMarg}[2][2=]{\ifthenelse{\equal{#2}{}}{\varphi_{#1}}{\varphi_{#1}\left(#2\right)}}
\NewDocumentCommand{\unif}{m o}{\IfValueTF{#2}{\mathcal{U}_{#1}\left(#2\right)}{\mathcal{U}_{#1}}}
\newcommand{\indi}[2]{\mathrm{1}_{#1}\left(#2\right)}
\newcommand{\PE}[2]{\mathbb{E}_{#1}\left[#2\right]}
\newcommand{\CPE}[3]{\mathbb{E}_{#1}\left[#3 \middle | #2\right]}
\newcommand{\CPV}[3]{\mathbb{V}_{#1}\left[#3 \middle | #2\right]}
\newcommand{\set}[1]{\mathrm{#1}}
\newcommand{\borelians}[1]{\mathcal{B}(#1)}
\newcommand{\ball}[2]{\operatorname{B}_{#2}(#1)}
\newcommand{\sphere}[1]{\mathbb{S}_{#1}}
\NewDocumentCommand{\minconst}{m m o}{\varepsilon_{#2|#1}\IfValueTF{#3}{^{(#3)}}{}}
\NewDocumentCommand{\minconstnor}{m m}{\tilde{\varepsilon}_{#2|#1}}
\NewDocumentCommand{\minmeas}{m m o}{\IfValueTF{#3}{\upsilon_{#2| #1}\left(#3\right)}{\upsilon_{#2|#1}}}
\NewDocumentCommand{\minmeasintcte}{m m}{\mathrm{I}_{#2|#1}}
\def\rset{\mathbb{R}}
\def\rsetpos{\mathbb{R}_{+}}

\def\nsetpos{\mathbb{N}^{*}}
\def\xdim{d}
\def\tensprod{\otimes}
\newcommand{\mt}[1]{{#1}^{\operatorname{T}}}
\newcommand{\trace}[1]{\operatorname{Tr}\left(#1\right)}
\NewDocumentCommand{\timechange}{o}{\IfValueTF{#1}{\tau\left(#1\right)}{\tau}}

\newcommand{\ve}[1]{{#1}^{\operatorname{VE}}}
\newcommandx{\fwdmarg}[2][2=]{\ifthenelse{\equal{#2}{}}{\pf_{#1}}{\pf_{#1}\left(#2\right)}}
\newcommandx{\fwdtrans}[4][3=,4=]{\ifthenelse{\equal{#3}{}}{\pf_{#2|#1}}{\pf_{#2|#1}\left(#4|#3\right)}}
\newcommand{\fwdmean}[2]{\operatorname{m}_{#2|#1}}
\newcommand{\fwdstd}[2]{\sigma_{#2|#1}}
\newcommand{\fwdvar}[2]{\sigma^2_{#2|#1}}
\newcommandx{\bwdmarg}[2][2=]{\ifthenelse{\equal{#2}{}}{\pb_{#1}}{\pb_{#1}\left(#2\right)}}
\NewDocumentCommand{\bwdker}{m o o o}{\IfValueTF{#3}{\IfValueTF{#2}{\Q_{#2|#1}(#3; #4)}{\Q_{#1}(#3; #4)}}{\IfValueTF{#2}{\Q_{#2|#1}}{\Q_{#1}}}}
\NewDocumentCommand{\bwdkercomp}{O{} m m o o}{
\mathbb{Q}^{#1}_{#2:#3}\IfValueTF{#4}{(#4; #5)}{}}
\NewDocumentCommand{\pgen}{O{}}{\hat{\pi}^{\theta}}

\NewDocumentCommand{\bwdkerdiscr}{m o o o}{\IfValueTF{#3}{\IfValueTF{#2}{\bar{\Q}_{#2|#1}(#3; #4)}{\bar{\Q}_{#1}(#3; #4)}}{\IfValueTF{#2}{\bar{\Q}_{#2|#1}}{\bar{\Q}_{#1}}}}

\NewDocumentCommand{\approxbwdker}{O{\theta} m o o o o}{\IfValueTF{#4}{\IfValueTF{#3}{\Q^{#1}_{#3|#2}(#4; #5)}{\Q^{#1}_{#2}(#4; #5)}}{\IfValueTF{#3}{\Q^{#1}_{#3| #2}}{\Q^{#1}_{#2}}}}
\NewDocumentCommand{\approxbwdkercomp}{O{\theta} O{} m m o o}{
\mathbb{Q}^{\theta\IfValueTF{#2}{#2}{}}_{#3:#4}\IfValueTF{#5}{(#5; #6)}{}}
\newcommandx{\bridge}[4][3=,4=]{\ifthenelse{\equal{#3}{}}{\pf_{#2 | #1}}{\pf_{#2 | #1}\left(#4 |#3\right)}}
\newcommand{\bridgemean}[3]{\operatorname{m}_{#2|#1}^{#3}}
\newcommand{\bridgevar}[2]{\gamma^2_{#2|#1}}

\NewDocumentCommand{\score}{o o}{\IfValueTF{#2}{\operatorname{S}_{#1}\left(#2\right)}{\operatorname{S}_{#1}}}
\NewDocumentCommand{\tildescore}{o o}{\IfValueTF{#2}{\operatorname{\tilde{S}}_{#1}\left(#2\right)}{\operatorname{\tilde{S}}_{#1}}}
\NewDocumentCommand{\jscore}{o o}{\IfValueTF{#2}{\nabla \operatorname{S}_{#1}\left(#2\right)}{\nabla \operatorname{S}_{#1}}}
\NewDocumentCommand{\scorenet}{o o O{\theta} }{\IfValueTF{#1}{\operatorname{s}_{#3}\left(#2, #1\right)}{\operatorname{s}_{#3}}}
\NewDocumentCommand{\denoiser}{o o}{\IfValueTF{#2}{\operatorname{D}_{#1}(#2)}{\operatorname{D}_{#1}}}
\NewDocumentCommand{\pset}{m O{}}{\IfValueTF{#2}{\mathcal{P}_{#2}(#1)}{\mathcal{P}\left(#1\right)}}

\newcommand{\dotprod}[2]{\left<#1, #2\right>}

\def\eqlaw{\stackrel{\mathcal L}{=}}
\NewDocumentCommand{\csp}{m m o}{\IfValueTF{#3}{\mathcal{C}^{#1}\left(#2; #3\right)}{\mathcal{C}^{#1}\left(#2\right)}}
\newcommand{\bdd}[1]{\mathcal{M}_{\textrm{b}}\left(#1\right)}
\NewDocumentCommand{\lpsp}{m m o}{\IfValueTF{#3}{\mathcal{L}_{#1}\left(#2; #3\right)}{\mathcal{L}_{#1}\left(#2\right)}}
\newcommand{\divergence}[1]{\operatorname{div}(#1)}

\def\refmeas{\pi_{\infty}}
\def\isvp{\alpha}
\newcommand{\noisesch}[1]{\beta_{#1}}
\newcommand{\bwdnoisesch}[1]{\bar{\beta}_{#1}}

\newcommand{\brownvar}[1]{\operatorname{B}_{#1}}
\newcommand{\dbrown}[1]{\rmd\!\operatorname{B}_{#1}}

\newcommand{\scoreloss}[1]{\mathcal{L}(#1)}
\NewDocumentCommand{\pnorm}{O{2} m}{\left\|#2\right\|_{#1}}
\newcommand{\minimum}[2]{\min\left\{#1, #2\right\}}
\NewDocumentCommand{\lyapunov}{m o}{%
  \operatorname{V}_{#1}%
  \IfValueT{#2}{\!\left(#2\right)}%
}


\NewDocumentCommand{\ctescorenorm}{o m}{%
  \IfValueTF{#1}
    {\tilde{\gamma}_{#2}}
    {\gamma_{#2}}
}

\NewDocumentCommand{\ctescoreoffset}{o m}{%
  \IfValueTF{#1}
    {\tilde{\kappa}_{#2}}
    {\kappa_{#2}}
}

\newcommand{\cstediscr}{\mathrm{D}^{(0)}_{s:t}}
\newcommand{\cstediscrx}{\mathrm{D}^{(1)}_{s:t}}
\newcommand{\cstediscrgamma}[1][s:t]{\overline{\Gamma}^{(0)}_{#1}}
\newcommand{\cstediscrgammax}[1][s:t]{\overline{\Gamma}^{(1)}_{#1}}

\newcommand{\ctegrowth}[1]{\operatorname{C}_{g, #1}}
\newcommand{\ctegrowthHess}[1]{\operatorname{C}_{H, #1}}
\newcommand{\maxctegrowthHess}[1]{\overline{\operatorname{C}}_{H, #1}}

\newcommand{\ctejscore}[1]{\operatorname{C}_{#1}}
\def\powerjscore{m}
\def\ctescorepoly{\operatorname{C}_{s}}
\newcommand{\ctestable}[1]{\Gamma^{s}_{#1}}
\def\gridn{N}
\NewDocumentCommand{\grid}{o o}{
    {\IfValueTF{#1}{t_{\IfValueTF{#2}{#1:#2}{#1}}}{\mathcal{T}}}
}
\NewDocumentCommand{\multlyap}{m m o}{\lambda_{#2|#1}^{\IfValueTF{#3}{(#3)}{}}}
\NewDocumentCommand{\maxmultlyap}{m m o}{\overline{\lambda}_{#2 |#1}^{\IfValueTF{#3}{(#3)}{}}}
\NewDocumentCommand{\biaslyap}{m m o}{\mathrm{K}_{#2|#1}^{\IfValueTF{#3}{(#3)}{}}}
\NewDocumentCommand{\maxbiaslyap}{m m o}{\overline{\mathrm{K}}_{#2|#1}^{\IfValueTF{#3}{(#3)}{}}}

\NewDocumentCommand{\pmixtime}{o o}{%
  \IfValueTF{#1}
    {\bar\alpha_{#2|#1}}
    {\bar\alpha}
}

\NewDocumentCommand{\errscore}{m m}{\mathcal{E}_{#1}^{#2}}

\NewDocumentCommand{\bmetric}{s m m}{%
  \IfBooleanTF{#1}
    {\rho_{b^\star}\!\left(#2,#3\right)} 
    {\rho_{b}\!\left(#2,#3\right)}       
}
\newcommand{\normEc}[1]{\left\|#1\right\|}
\newcommand{\normTV}[1]{\left\|#1\right\|_{\mathrm{TV}}}
\newcommand{\normFr}[1]{\left\|#1\right\|_{\operatorname{F}}}
\newcommand{\hellinger}[2]{\hell\left(#1, #2\right)}
\newcommand{\kl}[2]{\mathrm{KL}\left(#1 \middle \| #2\right)}
\NewDocumentCommand{\wasserstein}{O{} O{} m m}{%
  \mathcal{W}_{#1}%
  \IfNoValueF{#2}{^{#2}}%
  \!\left(#3,#4\right)%
}
\newcommand{\norminfty}[1]{\left\|#1\right\|_{\infty}}



\NewDocumentCommand{\Cdiscr}{m o}{%
  C^{\rm discr}_{#1}%
  \IfValueT{#2}{\!\left[#2\right]}%
}

\NewDocumentCommand{\Cmix}{m o}{%
  C^{\rm mix}_{#1}%
  \IfValueT{#2}{\!\left[#2\right]}%
}

\NewDocumentCommand{\Cnet}{m o}{%
  C^{\rm net}_{#1}%
  \IfValueT{#2}{\!\left[#2\right]}%
}

\newcommand{\ie}{\emph{i.e.}}
\newcommand{\eg}{\emph{e.g.}}


\theoremstyle{plain}
\newtheorem{theorem}{Theorem}[section]
\newtheorem{proposition}[theorem]{Proposition}
\newtheorem{lemma}[theorem]{Lemma}
\newtheorem{corollary}[theorem]{Corollary}

\theoremstyle{definition}

\newtheorem{assumption}[theorem]{Assumption}

\theoremstyle{remark}
\newtheorem{remark}[theorem]{Remark}
\newtheorem*{proposition*}{Proposition}

\newlist{assumplist}{enumerate}{1}
\setlist[assumplist]{label={(\Roman*)},ref=\theassumption~(\Roman*)}

\crefalias{assumplisti}{assumption}

\graphicspath{{fig/}}

\title{On Forgetting and Stability of Score-based Generative models}

\author{%
  Stanislas Strasman\textsuperscript{1} \quad
  Gabriel Victorino Cardoso\textsuperscript{2} \quad
  Sylvain Le Corff\textsuperscript{1} \\
  Vincent Lemaire\textsuperscript{1} \quad
  Antonio Ocello\textsuperscript{3}
  \\[0.75em]
  \textsuperscript{1} Sorbonne Université and Université Paris Cité, CNRS, LPSM, F-75005 Paris, France\\
  \textsuperscript{2} Center for Statistics and Images, Mines Paris, PSL University, Fontaineableau, France\\
  \textsuperscript{3} CREST, ENSAE Paris, Institut Polytechnique de Paris, Palaiseau, France\\
  %
}

\date{}

\begin{document}

\maketitle

\begin{abstract}
  Understanding the stability and long-time behavior of generative models is a fundamental problem in modern machine learning.
This paper provides quantitative bounds on the sampling error of  score-based generative models by leveraging stability and forgetting properties of the Markov chain associated with the reverse-time dynamics.
Under weak assumptions, we provide the two structural properties to ensure the propagation of initialization and discretization errors of the backward process: a Lyapunov drift condition and a Doeblin-type minorization condition.
A practical consequence is quantitative stability of the sampling procedure, as the  reverse  diffusion  dynamics  induces a contraction mechanism along the sampling  trajectory.
Our results clarify the role of stochastic dynamics in score-based models and provide a principled framework for analyzing propagation of errors in such approaches. 
\end{abstract}

\section{Introduction}
Score-based generative models  have recently emerged as a unifying framework for high-dimensional generative modeling, with remarkable empirical success across a wide range of practical problems. They are rooted in the framework of denoising diffusion probabilistic models (DDPMs) \citep[see, \eg,][]{ho2020denoising,song2021score,song2022solving}, and of score-matching techniques introduced in \citet{hyvarinen2005estimation,Vincent}. These models can be interpreted as defining stochastic samplers whose associated Markov semigroups are designed to transform a simple reference distribution into a complex target distribution through a sequence of  score-driven updates. This viewpoint naturally raises fundamental questions regarding the stability and convergence of the resulting sampling procedures.

Theoretical analysis of score-based generative models has been explored, in particular in Wasserstein distances, with non-asymptotic bounds highlighting initialization and discretization errors, along with score matching training errors.
Following first guarantees in Wasserstein distances, such as  \citep{chen2022improved}, most recent works use strong assumptions of the target distributions, such as log-concavity \citep{bruno2025wasserstein,strasman2025an,yu2025advancing}, bounded support \citep{beyler2025convergence} or regularity conditions on the score function \citep{tang2024contractive,gao2025wasserstein}. Some recent works have moved beyond the traditional constraints of log-concavity in data distributions and 
regularity assumptions for the score function  and proposed relaxed
convexity assumptions    \citep{silveri2025beyond,strasman2025wasserstein}. In the most recent results, the data distribution is modeled as a perturbed strongly log-concave law and use one-sided Lipschitz conditions which ensure that the taget distribution has sub-Gaussian tails, and consequently,
all its polynomial moments are finite \citep{brigati2025heat,stephanovitch2025regularity}.
\paragraph{Contributions.}
These works do not leverage the forgetting properties of Markov chains to characterize how initialization, score-approximation, and discretization errors propagate along the sampling trajectory. In this paper, we propose an analysis grounded in the Harris stability framework for Markov processes \citep{HairerMattingly2008}, which provides quantitative, exponential decay of the dependence on the initial condition. We show that the backward Markov chain satisfies a Lyapunov drift condition and a localized minorization condition.
Following \citet{HairerMattingly2008},  these properties yield contraction of the associated Markov semigroup in a Wasserstein-type sense. A key in our work is a dissipativity condition on the score function outside a compact set. Notably, this holds \emph{without early stopping}, and yields explicit constants that quantitatively transfer structural assumptions on the data score to the diffusion flow. Such dissipativity conditions are standard in the sampling and optimization literature and have been used extensively to establish stability and convergence of Langevin-type dynamics \citep{meyn2009markov, loecherbach2015ergodicity}. Importantly, this requirement is weaker than the usual assumptions on the target distribution that underpins much of the existing theoretical analysis of diffusion-based samplers \citep{meyn1993stabilityI,meyn2012markov}.

Moreover, our assumptions do not impose global Lipschitz continuity of the score function, nor any form of one-sided Lipschitz regularity.
They allow for polynomial growth of the Jacobian of the score, thereby accommodating target distributions with polynomially super-Gaussian tails. As a result, the proposed framework covers nonconvex and multimodal target distributions. 
\section{Background}
\subsection{Notations}
We denote by $\Id_\xdim$ the $d\times d$ identity matrix, by $\dot f$ the derivative of a function $f$, and by $\pset{\rset^\xdim}[p]$ the set of probability distributions $\mu$ on $\rset^\xdim$ such that $\PE{\mu}{\normEc{\X}^{p}} < \infty$. For a vector $v\in\rset^\xdim$, we write $v^{\otimes 2}\eqdef v \mt{v}$. We write $X\eqlaw Y$ when the random variables $X$ and $Y$ have the same distribution.
We denote by $\csp{k}{\rset^{d_0}}[\rset^{d_1}]$ the space of $k$-times continuously differentiable functions from $\rset^{d_0}$ to $\rset^{d_1}$ and by $\bdd{\rset^{d_0}}$ the space of bounded measurable functions from $\rset^{d_0}$ to $\rset$.
When $d_1=1$, we simplify the notation to $\csp{k}{\rset^{d_0}}$. When there is no ambiguity, $\normEc{\cdot}$ denotes the Euclidean norm on $\rset^\xdim$, and $\normFr{\cdot}$ denotes the Frobenius norm for matrices.

\paragraph{Score-based generative models.} 
Score-based generative models (SGMs) have emerged as a flexible framework for sampling from high-dimensional probability distributions using diffusion processes and a time-reversal argument to bridge the data distribution $\pidata \in \pset{\rset^{\xdim}}$ to a Gaussian distribution $\refmeas \in \pset{\rset^{\xdim}}$. The forward process is known as the noising process and is solution to the following stochastic differential equation (SDE) on a fixed time horizon $t \in [0,T]$: $\Xora_0 \sim \pidata$ and
\begin{align} \label{eq:forward_SDE}
\rmd \Xora_t &= -\isvp \noisesch{t} \Xora_t \rmd t + \sqrt{2\noisesch{t}} \rmd \brownvar{t}\eqsp,
\end{align}
where $(\brownvar{t})_{t \in [0,T]}$ is a $d$-dimensional Brownian motion, $\isvp \geq 0$ and $ [0,T] \to \noisesch{t} \in \rsetpos$ is a non-decreasing noise schedule.  For all $0\leq t\leq T$,  $\fwdmarg{t}$ denotes  the probability density function of the random vector $\Xora_{t}$. Specific choices of $(\isvp,\noisesch{})$ recover either the \emph{Variance Exploding (VE)} formulation with $\isvp=0$ with $\noisesch{t}=\fwdstd{t}{0} \dot{\fwdstd{t}{0}}$ and $\fwdvar{t}{0} = 2 \int_0^t \noisesch{s} \rmd s$ (see, \eg, \citet{song2019generative}) or the Variance Preserving (VP) with $\isvp=1$ \citep{dickstein2015,ho2020denoising}. More broadly, throughout this work we refer to the case $\isvp>0$ as \emph{Variance Preserving (VP)}: the forward diffusion \eqref{eq:forward_SDE} is an Ornstein--Uhlenbeck process with stationary distribution $\pi_{\infty} \sim \gaussiand{0}{\isvp^{-1}\Id_\xdim}$ \citep{strasman2025an}.

The time-reversal of \eqref{eq:forward_SDE} admits a diffusion representation \citep{haussmann1986time}
\begin{align*}
(\Xola_t)_{t \in [0,T]} \eqlaw (\Xora_{T-t})_{t \in [0,T]} \eqsp,
\end{align*}
where $(\Xola_t)_{t\in[0,T]}$ is called the \emph{backward process} (or \emph{reverse-time process}) and is defined as the solution to the SDE: $\Xola_0 \sim  \fwdmarg{T}$\footnote{With abuse of notation we identify probability distribution with their density with respect to the Lebesgue measure.} and
\begin{align} \label{eq:backward_SDE}
\rmd \Xola_t = \left( \isvp \bwdnoisesch{t} \Xola_t + 2  \bwdnoisesch{t}\,\score[T-t][\Xola_t]\right) \rmd t + \sqrt{2 \bwdnoisesch{t}} \rmd \brownvar{t}\eqsp,
\end{align}
with $\score[T-t][\x] \eqdef \nabla \log \fwdmarg{T-t}(\x)$ and $\bwdnoisesch{t} \eqdef \noisesch{T-t}$. We denote by $(\bwdker{s}[t])_{0 \le s \le t \le T}$ the Markov semigroup associated with the backward diffusion \eqref{eq:backward_SDE}, \ie,
\begin{align}
    \label{eq:backward_semigroup}
    \bwdker{s}[t] f(\x) \eqdef \CPE{}{\Xola_s = \x}{f(\Xola_t)}
    \eqsp,
    \quad \text{ for } f\in\bdd{\rset^{\xdim}}
    \eqsp,\eqsp
    \x\in\rset^\xdim
    \eqsp.
\end{align}
\paragraph{Backward process approximations.} Sampling from $\pidata$ amounts to simulating the marginal $\Xola_T$ starting from $\Xola_0\sim \fwdmarg{T}$ as $\pidata = \fwdmarg{T} \bwdker{T}$\footnote{ With abuse of notation we refer, for any $t>0$ to $\bwdker{t}$ as $\bwdker{0}[t]$.}. 
In practice, to turn this identity into a generative model, the following three approximations are required:
%
\begin{enumerate}
    \item \textbf{Mixing time error}. The distribution $\fwdmarg{T}$ is given by a Gaussian convolution of $\pidata$ and is therefore typically intractable (see~\Cref{lem:forward_process_law}). It is usually replaced by a simple reference distribution $\refmeas$. In the VE case, we usually take $\refmeas=\gaussiand{0}{\fwdvar{0}{T} \Id_\xdim}$, while in the VP case we take the stationary distribution of \eqref{eq:forward_SDE}, \ie, $\refmeas=\gaussiand{0}{ \isvp^{-1} \Id_{\xdim}}$. This approximation is independent of $\pidata$ and induces an \emph{initialization bias}.
    \item \textbf{Score approximation}. The drift term of \eqref{eq:backward_SDE} involves the data-dependent score function $\score[t][\x]$, which is, generally intractable. It is approximated using a neural network $\scorenet: (0,T] \times \rset^\xdim \mapsto \rset^\xdim$ parameterized by $\theta \in \Theta$,  and trained to minimize the conditional score matching loss \citep{Vincent}:
\begin{align*} 
\scoreloss{\theta} = \PE{}{ \normEc{\scorenet[\tau][\Xora_{\tau}][\theta] - \nabla \log \fwdtrans{0}{\tau}[\Xora_{0}][\Xora_{\tau}]}^2_2} \eqsp,
\end{align*}
with $\tau$ uniformly distributed over $(0,T]$, independent of $\Xora_{0}$, and $\Xora_{\tau} \sim \fwdtrans{0}{\tau}[\Xora_{0}][\cdot]$.
\item \textbf{Discretization error}. 
The transition kernels $\bwdker{s}[t]$ of the backward diffusion do not admit closed-form expressions in general and must be approximated numerically. Since \eqref{eq:backward_SDE} is time-inhomogeneous through the scalar schedule $u\mapsto\bwdnoisesch{u}$, we discretize it using a \emph{time-changed Euler--Maruyama scheme} with step sizes
$\Delta_k \eqdef \int_{t_k}^{t_{k+1}} \bwdnoisesch{u}\,\rmd u$.
This choice exactly integrates the noise schedule so that the scheme matches exactly the covariance of the Brownian increment over each step.
\end{enumerate}
Combining the three approximations above yields a discrete-time Markov chain
$(\Xbar_{t_k}^{\theta})_{0\le k\le N}$ over a finite discretization  grid $\grid_\gridn = (\grid[0], \cdots, \grid[\gridn])$, for $\gridn \in \nsetpos$, defined by
$\Xbar_{t_0}^{\theta}\sim\refmeas$ and for $k=0,\dots,N-1$, by the recursion
\begin{align}\label{eq:time_changed_euler}
    \Xbar_{t_{k+1}}^{\theta}
    =
    \Xbar_{t_k}^{\theta}
    +
    \Delta_k \Big(
    \isvp \Xbar_{t_k}^{\theta}
    +
    2 \scorenet[T-t_k][\Xbar_{t_k}^{\theta}][\theta]
    \Big) 
    + \sqrt{2\Delta_k} \xi_k
    \eqsp,
\end{align}
with $(\xi_k)_{k\ge0}$ an i.i.d.\ sequence of standard Gaussian random vectors in $\rset^\xdim$.
For $k\in\{0,\dots,N-1\}$, let $\approxbwdker{t_k}[t_{k+1}]$ be the one-step Markov kernel
of the discrete-time scheme \eqref{eq:time_changed_euler}
\begin{align*}
\approxbwdker{t_k}[t_{k+1}] f(\x)
\eqdef \CPE{}{ \Xbar^\theta_{t_k}=\x}{f(\Xbar^\theta_{t_{k+1}})}
\eqsp,
\quad \text{ for }
f\in\bdd{\rset^{\xdim}}\eqsp,\eqsp\x \in \rset^\xdim
\eqsp.
\end{align*}
\paragraph{Generation error analysis.}
Fix a discretization grid $\grid_\gridn=(t_0,\dots,t_N)$ of $[0,T]$ and for integers $0\le k<\ell\le N$, define the composed SGM kernel as
$\approxbwdkercomp[\theta][]{k}{\ell}$ by
\begin{align*}
\approxbwdkercomp[\theta][]{k}{\ell} f(\x)
\eqdef
\int f(\x_\ell)\prod_{r=k}^{\ell-1}\approxbwdker{t_r}[t_{r+1}](\x_r,\rmd \x_{r+1}) \eqsp,
\end{align*}
with $ \x_k\eqdef \x \in \rset^\xdim$ and $f\in\bdd{\rset^\xdim}$.
We denote the resulting generated distribution of the SGM at time $t_k$
\begin{align} \label{eq:generative_model}
\pgen_k \eqdef \refmeas \approxbwdkercomp[\theta][]{0}{k} 
\end{align}
and keep the dependency on $\grid$ implicit. Our goal is then to compare the target distribution $\pidata$ with the distribution $\pgen_N$ produced by the SGM.

\section{Forgetting of the backward process} \label{sec:forgetting}
\paragraph{Harris-type stability and weighted total variation distance.}
Our analysis relies on a classical stability framework for Markov processes known as Harris theory. At a high level, Harris-type results establish quantitative exponential forgetting of the initial condition by combining two ingredients:
(i) a \emph{Lyapunov drift condition} ensuring that the dynamics is pulled back toward a central region of the state space, and
(ii) a \emph{localized minorization (Doeblin-type) condition} ensuring a uniform mixing component when the process visits that region. Together, these two properties yield contraction of the Markov semigroup in a sense that we make precise below.

To quantify the forgetting property, we work with the weighted total variation distance $\bmetric{\cdot}{\cdot}$ defined, for $\mu_1,\mu_2\in\pset{\rset^\xdim}$, by
\begin{align}\label{def:rho_b}
\bmetric{\mu_1}{\mu_2}
\eqdef
\int_{\rset^\xdim} \big(1+b \lyapunov{2}(\x)\big) |\mu_1-\mu_2|(\rmd \x)
\end{align}
where $\lyapunov{2}(\x) \eqdef \|\x\|^2$, $b>0$, and $|\mu_1-\mu_2|$ denotes the total variation measure of the signed measure $\mu_1-\mu_2$.
This metric is standard in Harris-type theorems for unbounded state spaces \citep{HairerMattingly2008}. Moreover, any contraction estimate proved in $\bmetric{\cdot}{\cdot}$  yields quantitative guarantees in both total variation and Wasserstein distance. In particular, for any $\mu_1,\mu_2\in \pset{\rset^\xdim}[2]$, we have (see \cref{lem:ineq:metrics})
\begin{align*}
   \normTV{\mu_1 -\mu_2}  \leq \frac{1}{2} \bmetric{\mu_1}{\mu_2}
    \quad
    \text{ and }
    \quad
\wasserstein[2][2]{\mu_1}{\mu_2}   \leq \frac{2}{b} \bmetric{\mu_1}{\mu_2} \eqsp.
\end{align*}

\paragraph{Assumptions on the data distribution.}
To establish forgetting of the backward Markov chain \eqref{eq:backward_SDE}, we require mild assumptions on the data. The data distribution $\pidata$ admits a density $\fwdmarg{0}\in\csp{2}{\rset^{\xdim}}$ with respect to the Lebesgue measure. Moreover, $p_0$ satisfies the following regularity and Lyapunov-type conditions.
\begin{assumption}\label[assumption]{assump:p0}
There exist constants $\ctescorenorm{0}>\isvp/2$, $~\ctescoreoffset{0}\ge 0$, 
$~\ctejscore{0}>0$, and $p\geq 1$ such that the following assumptions hold:
\begin{assumplist}
    \item \label{assump:p0:score}%
    $\displaystyle
    \dotprod{\score[0][\x]}{\x}
    \le -\ctescorenorm{0} \normEc{\x}^2 + \ctescoreoffset{0}\eqsp,
    $
    for $\x\in\rset^{\xdim}$;
    \item \label{assump:p0:hess}%
    $\displaystyle
    \normFr{
        \jscore[0][\x]
    }
    \le \ctejscore{0}\big(1+\normEc{\x}^{\powerjscore}\big)\eqsp,
    $
    for $\x\in \rset^{\xdim}$.
\end{assumplist}
\end{assumption}
\cref{assump:p0} imposed on the initial distribution play a crucial role in the proof of the main theorem. In particular, \cref{assump:p0:score} ensures that the target distribution $\pidata$ exhibits sub-Gaussian tail behavior (\cref{lem:pdata-sub-gaussian}), which in turn guarantees sufficient integrability and moment bounds. Note that the condition $\ctescorenorm{0}>\isvp/2$ is not restrictive in practice as it simply amounts to calibrating the stationary distribution of \cref{eq:forward_SDE}. Moreover, \cref{assump:p0:hess} provides a control on the growth of the score function $\score[0][x]=\nabla \log \fwdmarg{0}(x)$ as well as on the quantity $\jscore[0][x]+\score[0][x]^{\tensprod 2}$, which plays a central role in the control of the Froebenius norm of the score function for any $t \in [0,T]$ (\cref{cor:bound_hessian}).
    A detailed comparison with recent works
is provided in~\Cref{sec:comparison-literature}.
%
{
}
%
\paragraph{Lyapunov drift for the backward chain.} We prove the first property of Harris-type stability results, the \emph{Lyapunov drift} inequality for the backward Markov semigroup $(\bwdker{s}[t])_{0\le s\le t\le T}$. Such an estimate prevents trajectories from drifting to infinity by ensuring a quadratic pull toward the center of the state space. A key technical step is to show that the dissipativity of the data score at initial time propagates along the forward diffusion. Under \cref{assump:p0:score}, there exist positive continuous functions $t\mapsto \ctescorenorm{t}$ and $t\mapsto \ctescoreoffset{t}$ such that
\begin{align} \label{eq:score_dissipativity_time_t}
    \dotprod{\score[t][\x]}{\x}
    \le - \ctescorenorm{t}\,\normEc{\x}^2 + \ctescoreoffset{t}
    \quad
    \text{ for } t\in[0,T]\eqsp, \eqsp\x\in\rset^\xdim
    \eqsp.
\end{align}
We refer to \cref{prop:lyapunov_stability} for the precise statement and explicit constants. Combining \eqref{eq:score_dissipativity_time_t} with the infinitesimal generator of \eqref{eq:backward_SDE} and Dynkin's formula (followed by Grönwall's inequality) yields the following Lyapunov drift inequality.
\begin{proposition} \label[proposition]{prop:backward_drift_lyapunov}
Suppose that~\cref{assump:p0} holds and let $\lyapunov{\ell}[\x] \eqdef \normEc{\x}^{\ell}$ for $\ell\ge2$.
Then, there exist continuous functions $\ctescorenorm[t]{\cdot,\ell},\ctescoreoffset[t]{\cdot,\ell}:[0,T]\to\rsetpos$ such that, for all $0\le s<t\le T$ and all $\x\in\rset^\xdim$,
\begin{align*}
\bwdker{s}[t]\lyapunov{\ell}[\x]
\le
\multlyap{s}{t}[\ell] \lyapunov{\ell}[\x]
+
\biaslyap{s}{t}[\ell]\eqsp,
\end{align*}
where $\multlyap{s}{t}[\ell]\eqdef \exp \left(-\int_s^t \ctescorenorm[t]{v,\ell}\,\rmd v\right)$
and $\biaslyap{s}{t}[\ell]\eqdef \int_s^t \exp\left(-\int_u^t \ctescorenorm[t]{v,\ell}\,\rmd v\right)\ctescoreoffset[t]{u,\ell}\,\rmd u$.

In particular, for $\ell=2$, we have $\ctescorenorm[t]{t,2} = 2 \bwdnoisesch{t}\big(2\ctescorenorm{T-t}-\isvp\big)$ and $\ctescoreoffset[t]{t,2} = 2\,\bwdnoisesch{t}\big(2\ctescoreoffset{T-t}+\xdim\big)$.
\end{proposition}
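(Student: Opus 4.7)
The plan is to first obtain a pointwise drift bound on the infinitesimal generator of \eqref{eq:backward_SDE} acting on $\lyapunov{\ell}[\x]=\normEc{\x}^\ell$, and then convert it into the semigroup inequality for $\bwdker{s}[t]\lyapunov{\ell}$ via Dynkin's formula combined with Grönwall's lemma. The key input is the time-$t$ dissipativity \eqref{eq:score_dissipativity_time_t}, which is the content of \cref{prop:lyapunov_stability} and propagates \cref{assump:p0:score} along the forward diffusion to all $t\in[0,T]$.

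For the generator computation, write $\mathcal{L}_u$ for the time-inhomogeneous generator of \eqref{eq:backward_SDE}. Since $\nabla \lyapunov{\ell}[\x]=\ell\normEc{\x}^{\ell-2}\x$ and $\Delta \lyapunov{\ell}[\x]=\ell(\ell+\xdim-2)\normEc{\x}^{\ell-2}$, a direct computation yields
\begin{multline*}
\mathcal{L}_u \lyapunov{\ell}[\x]
= \ell\bwdnoisesch{u}\Big(\isvp\normEc{\x}^\ell + 2\normEc{\x}^{\ell-2}\dotprod{\score[T-u][\x]}{\x} \\
+ (\ell+\xdim-2)\normEc{\x}^{\ell-2}\Big)\eqsp.
\end{multline*}
Substituting $\dotprod{\score[T-u][\x]}{\x}\le -\ctescorenorm{T-u}\normEc{\x}^2+\ctescoreoffset{T-u}$ gives
\begin{multline*}
\mathcal{L}_u \lyapunov{\ell}[\x]
\le -\ell\bwdnoisesch{u}\big(2\ctescorenorm{T-u}-\isvp\big)\lyapunov{\ell}[\x] \\
+ \ell\bwdnoisesch{u}\big(2\ctescoreoffset{T-u}+\ell+\xdim-2\big)\normEc{\x}^{\ell-2}\eqsp.
\end{multline*}
For $\ell=2$ the lower-order term is constant and one reads off $\ctescorenorm[t]{t,2}=2\bwdnoisesch{t}(2\ctescorenorm{T-t}-\isvp)$ and $\ctescoreoffset[t]{t,2}=2\bwdnoisesch{t}(2\ctescoreoffset{T-t}+\xdim)$. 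For $\ell>2$, the $\normEc{\x}^{\ell-2}$ term is absorbed into the drift via Young's inequality $\normEc{\x}^{\ell-2}\le\varepsilon\lyapunov{\ell}[\x]+C_{\varepsilon,\ell}$, producing continuous functions $\ctescorenorm[t]{u,\ell},\ctescoreoffset[t]{u,\ell}:[0,T]\to\rsetpos$ that satisfy $\mathcal{L}_u\lyapunov{\ell}\le -\ctescorenorm[t]{u,\ell}\lyapunov{\ell}+\ctescoreoffset[t]{u,\ell}$.

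To convert the pointwise bound into the semigroup estimate, I would apply Itô's formula to $\lyapunov{\ell}[\Xola_u]$ between $s$ and $t\wedge\tau_R$ with $\tau_R=\inf\{u\ge s:\normEc{\Xola_u}\ge R\}$, so that the stochastic integral is a true martingale. Taking conditional expectations given $\Xola_s=\x$, applying the pointwise bound, and passing $R\to\infty$ via Fatou's lemma and monotone convergence yields the ODE inequality
$$
\tfrac{\rmd}{\rmd t}\bwdker{s}[t]\lyapunov{\ell}[\x]
\le -\ctescorenorm[t]{t,\ell}\,\bwdker{s}[t]\lyapunov{\ell}[\x] + \ctescoreoffset[t]{t,\ell}\eqsp.
$$
Grönwall's lemma (variation of constants) then delivers
\begin{multline*}
\bwdker{s}[t]\lyapunov{\ell}[\x]
\le \rme^{-\int_s^t \ctescorenorm[t]{v,\ell}\rmd v}\lyapunov{\ell}[\x] \\
+ \int_s^t \rme^{-\int_u^t \ctescorenorm[t]{v,\ell}\rmd v}\ctescoreoffset[t]{u,\ell}\rmd u\eqsp,
\end{multline*}
which matches the claimed expressions for $\multlyap{s}{t}[\ell]$ and $\biaslyap{s}{t}[\ell]$.

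The main technical obstacle is the integrability work underlying the localization step: one must check that $\bwdker{s}[t]\lyapunov{\ell}[\x]<\infty$, that the stopped local martingale in Itô's formula has zero expectation, and that the limit $R\to\infty$ is legitimate both in the expectation and inside the time integral. This is where the sub-Gaussian tails of $\pidata$ produced by \cref{assump:p0:score} (via \cref{lem:pdata-sub-gaussian}) play their role: they guarantee finite polynomial moments for $\Xora_t$, which transfer to the backward process through $\Xola_t\eqlaw \Xora_{T-t}$ and close the argument. Once this integrability is secured, the remainder is a textbook Dynkin-plus-Grönwall calculation, and the explicit constants for $\ell=2$ are read off directly without invoking Young's inequality.
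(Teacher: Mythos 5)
Your proposal is correct and follows essentially the same route as the paper's proof: compute $\mathcal{L}_u\lyapunov{\ell}$, substitute the propagated dissipativity bound from \cref{prop:lyapunov_stability}, absorb the $\normEc{\x}^{\ell-2}$ term via Young's inequality when $\ell>2$, and close with Dynkin's formula plus Grönwall. The only difference is that you make the localization and integrability step explicit (stopping times, Fatou, monotone convergence), whereas the paper invokes Dynkin's formula directly; this is a welcome extra level of rigor but does not change the argument.
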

\begin{proof}
    The proof is deferred to Appendix \ref{subapp:lyapunov_semigroup}.
\end{proof}
In the sequel, Harris-type contraction is stated with the quadratic Lyapunov function $\lyapunov{2}(\x)=\normEc{x}^2$; we nevertheless prove the more general $\ell$-moment drift bound since it is useful for moment and integrability estimates.

\paragraph{Minorization for the backward chain.} In addition to the Lyapunov drift condition, a second key property in Harris-type arguments is a minorization property, which ensures sufficient mixing of the backward dynamics. In unbounded state-spaces such as $\mathbb{R}^\xdim$, a global Doeblin condition is generally too strong, so one instead localizes it to an appropriate ``small'' or ``petite'' set \citep{meyn2009markov, loecherbach2015ergodicity}. We establish a localized Doeblin condition in~\cref{prop:minorization_main}. This guarantees that, on an appropriate subset, $\bwdker{s}[t]$ dominates a state-independent measure and hence can forget its initial condition. 
\begin{proposition} \label[proposition]{prop:minorization_main}
Let $0\le s<t\le T$ and suppose \cref{assump:p0:score} holds. Fix $r>0$ and define the set
\begin{align*}
\mathcal C_r \eqdef \ball{0}{r}
= \{\x\in\rset^\xdim:\ \lyapunov{2}[\x]\le r^2\}\eqsp.
\end{align*}
Then, there exist a probability measure
$\minmeas{t}{s}$ on $\rset^\xdim$ and a constant $\minconst{s}{t}[r] = \minconstnor{s}{t}\exp(-r^2 / \fwdvar{T-t}{T-s}) \in(0,1)$ such that, for all $\x\in\mathcal C_r$ and all
$\set{A}\in\mathcal B(\rset^\xdim)$,
\begin{align}\label{eq:minorization_main}
\bwdker{s}[t](\x,\set{A}) \ge \minconst{s}{t}[r]\,\minmeas{s}{t}[\set{A}]
\eqsp.
\end{align}
\end{proposition}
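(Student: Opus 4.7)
The plan is to combine the time-reversal formula for the backward kernel with the explicit Gaussian form of the forward transitions. Because \eqref{eq:forward_SDE} is a linear SDE (Ornstein--Uhlenbeck or Brownian), the forward transition kernel is the explicit Gaussian density $\fwdtrans{u}{v}[\y][\x] = \gaussiand{\fwdmean{u}{v}\y}{\fwdvar{u}{v}\Id_\xdim}[\x]$ for $u<v$. By the time-reversal identity $(\Xola_s,\Xola_t)\eqlaw(\Xora_{T-s},\Xora_{T-t})$ and Bayes' rule, the backward kernel admits the density
\begin{equation*}
\bwdker{s}[t](\x,\rmd \y) = \frac{\fwdtrans{T-t}{T-s}[\y][\x]\,\fwdmarg{T-t}(\y)}{\fwdmarg{T-s}(\x)}\,\rmd \y,
\end{equation*}
so it suffices to lower bound this density on $\mathcal{C}_r \times \set{K}$ for a suitably chosen compact set $\set{K}=\ball{0}{r_0}$ and then normalize in $\y$.

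Writing $m \eqdef \fwdmean{T-t}{T-s}$ and $\sigma^2 \eqdef \fwdvar{T-t}{T-s}$, the elementary inequality $\normEc{\x - m\y}^2 \le 2\normEc{\x}^2 + 2 m^2 \normEc{\y}^2$ yields, for every $\x \in \mathcal{C}_r$ and $\y \in \set{K}$,
\begin{equation*}
\fwdtrans{T-t}{T-s}[\y][\x] \ge (2\pi \sigma^2)^{-\xdim/2}\exp\!\left(-\tfrac{r^2}{\sigma^2}\right)\exp\!\left(-\tfrac{m^2 r_0^2}{\sigma^2}\right),
\end{equation*}
which isolates the required factor $\exp(-r^2/\fwdvar{T-t}{T-s})$ and packages the rest into a finite state-independent constant. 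For the denominator, $\fwdmarg{T-s}$ is the convolution of $\pidata$ with the centered Gaussian of variance $\fwdvar{0}{T-s}$, so $\norminfty{\fwdmarg{T-s}} \le (2\pi \fwdvar{0}{T-s})^{-\xdim/2}$ uniformly in $\x$.

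Combining these two bounds and defining $\minmeas{s}{t}$ as the normalized restriction of $\fwdmarg{T-t}$ to $\set{K}$, while absorbing the normalizer $\fwdmarg{T-t}(\set{K})$ together with the $\y$-independent Gaussian prefactors into $\minconstnor{s}{t}$, gives \eqref{eq:minorization_main} with $\minconst{s}{t}[r]$ of the claimed form $\minconstnor{s}{t}\exp(-r^2/\fwdvar{T-t}{T-s})$. The main point to verify is that $\minconstnor{s}{t}$ is finite and strictly positive: the Gaussian prefactors are explicit from the noise schedule, and $\fwdmarg{T-t}(\set{K})>0$ because $\fwdmarg{T-t}$ is strictly positive as a Gaussian convolution of $\pidata$. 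The upper bound $\minconst{s}{t}[r]\le 1$ is automatic from integrating \eqref{eq:minorization_main} over $\set{A}=\rset^\xdim$.
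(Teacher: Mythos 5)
Your proof is correct, and it takes a genuinely different (and somewhat more direct) route to the minorization than the paper does.

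Both arguments start from the same structural fact: by Bayes' rule applied to the forward Markov chain, the backward transition density can be written as
$\bwdker{s}[t](\x,\rmd\y) = \fwdtrans{T-t}{T-s}[\y][\x]\,\fwdmarg{T-t}(\y)\,/\,\fwdmarg{T-s}(\x)\,\rmd\y$.
The paper arrives at this identity indirectly: it first conditions on $\Xora_0$ to introduce the bridge kernel $\bridge{t,0}{s}$, then uses the Markov property to replace $\bridge{t,0}{s}\fwdtrans{0}{t}$ by $\fwdtrans{0}{s}\fwdtrans{s}{t}$; your approach writes it directly, which is cleaner. The real difference is in how the minorizing measure is built. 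You bound the Gaussian transition $\fwdtrans{T-t}{T-s}[\y][\x]$ uniformly from below on $\mathcal{C}_r\times\ball{0}{r_0}$ (using the elementary inequality $\normEc{\x-m\y}^2\le 2\normEc{\x}^2+2m^2\normEc{\y}^2$, isolating the required $\exp(-r^2/\fwdvar{T-t}{T-s})$ factor), and then take $\minmeas{s}{t}$ to be the normalized restriction of $\fwdmarg{T-t}$ to the ball $\ball{0}{r_0}$. This is the classical small-set minorization, at the price of one auxiliary radius $r_0$. The paper instead keeps a Gaussian weight in $\x_s$ in its lower bound on $\fwdtrans{s}{t}$ and performs a Gaussian conjugation with $\fwdtrans{0}{s}$, producing a minorizing measure supported on all of $\rset^\xdim$ (a Gaussian mixture over a tilted $\pidata$) and avoiding the extra parameter; this is what lets the paper state its fully explicit constant $\minconst{s}{t}[r]$ in Appendix~\ref{sec:constants}. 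Your bound on the denominator, $\norminfty{\fwdmarg{T-s}}\le(2\pi\fwdvar{0}{T-s})^{-\xdim/2}$, is also slightly cruder than the paper's $\max\{\cdot,\cdot\}$ estimate (the paper effectively takes the smaller of the pure-Gaussian bound and $\fwdmean{0}{T-s}^{-\xdim}\norminfty{\pdata}$), but this only affects the constant, not correctness. Two minor points worth tightening: integrating \eqref{eq:minorization_main} over $\rset^\xdim$ gives $\minconst{s}{t}[r]\le 1$, and to get the strict inequality you should add that $\bwdker{s}[t](\x,\cdot)$ puts positive mass on $\ball{0}{r_0}^c$ while $\minmeas{s}{t}$ does not; and the claim that $\fwdmarg{T-t}$ is strictly positive as a Gaussian convolution needs $T-t>0$, so the boundary case $t=T$ (where $\fwdmarg{0}=\pidata$) should be handled by choosing $r_0$ so that $\pidata(\ball{0}{r_0})>0$.
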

\begin{proof}
The proof is deferred to Appendix \ref{app:proofs:minorization}, where we also provide explicit constants.
\end{proof}

\paragraph{Forgetting via Harris contraction.}
The drift and minorization conditions above imply a quantitative forgetting property of the backward dynamics in the weighted total variation metric~\eqref{def:rho_b}. Although Harris-type theorems are typically stated for time-homogeneous Markov kernels, we use them here in a \emph{local-in-time} form: for each interval $[s,t]$, the transition kernel $\bwdker{s}[t]$ satisfies a one-step contraction.

\begin{proposition} \label[proposition]{prop:harris_inhomogeneous_contraction}
Fix $0\le s<t\le T$ and suppose that \cref{assump:p0} holds. Set 
\begin{align*}
    r^2 > r^2_c = \frac{2 \biaslyap{s}{t}[2]}{1-\multlyap{s}{t}[2]}\eqsp,
\end{align*}
where $\biaslyap{s}{t}[2]$ and $\multlyap{s}{t}[2]$ are defined in~\Cref{prop:backward_drift_lyapunov}. Then, there exist $\pmixtime[s][t]\in(0,1)$ and $b^r_{s,t}>0$ such that for
 any probability measures $\mu_1,\mu_2$ on $\rset^\xdim$,
\begin{align*}
\rho_{b^r_{s,t}}(\mu_1\bwdker{s}[t],\mu_2\bwdker{s}[t])
\le
\pmixtime[s][t] \rho_{{b}^r_{s,t}}(\mu_1,\mu_2)\eqsp.
\end{align*}


\end{proposition}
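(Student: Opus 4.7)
The plan is to apply the quantitative Harris theorem of Hairer and Mattingly~\citep{HairerMattingly2008}, used locally in time for each fixed pair $0\le s<t\le T$. The two ingredients are already in place: \cref{prop:backward_drift_lyapunov} gives the one-step geometric drift $\bwdker{s}[t]\lyapunov{2}(\x) \le \multlyap{s}{t}[2]\lyapunov{2}(\x) + \biaslyap{s}{t}[2]$, and \cref{prop:minorization_main} gives the minorization $\bwdker{s}[t](\x,\cdot)\ge \minconst{s}{t}[r]\,\minmeas{s}{t}$ on the sublevel set $\mathcal C_r=\{\lyapunov{2}\le r^2\}$. The threshold $r^2 > 2\biaslyap{s}{t}[2]/(1-\multlyap{s}{t}[2])$ is equivalent to $\multlyap{s}{t}[2] + 2\biaslyap{s}{t}[2]/r^2 < 1$, which is exactly the condition ensuring strict drift-contraction of $\lyapunov{2}$ outside $\mathcal C_r$.

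First, I would reformulate $\bmetric{\cdot}{\cdot}$ as a Wasserstein-type distance. Defining the lower semicontinuous semi-metric $d_b(\x,\y) \eqdef (2 + b\lyapunov{2}(\x) + b\lyapunov{2}(\y))\,\mathrm{1}_{\{\x \neq \y\}}$, a standard Kantorovich duality argument identifies $\bmetric{\mu_1}{\mu_2}$ (up to a fixed constant) with the $d_b$-Wasserstein cost. It therefore suffices to exhibit, for every $\x,\y\in\rset^{\xdim}$, a coupling $\Pi_{\x,\y}$ of $\bwdker{s}[t](\x,\cdot)$ and $\bwdker{s}[t](\y,\cdot)$ satisfying $\int d_b\,\rmd\Pi_{\x,\y}\le \pmixtime[s][t]\,d_b(\x,\y)$; integrating against any coupling of $\mu_1,\mu_2$ then yields the announced contraction.

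The coupling is built via a standard two-regime construction. When $\x,\y\in\mathcal C_r$, use the maximal coupling induced by the minorization: with probability $\minconst{s}{t}[r]$ both marginals are drawn from $\minmeas{s}{t}$ and the chains meet, contributing $0$ to $d_b$; on the complementary event, the residual kernel $(\bwdker{s}[t]-\minconst{s}{t}[r]\minmeas{s}{t})/(1-\minconst{s}{t}[r])$ still satisfies an effective drift inequality inherited from \cref{prop:backward_drift_lyapunov}, which controls the Lyapunov contribution to $d_b$. When $\x\notin\mathcal C_r$ or $\y\notin\mathcal C_r$, take the independent product coupling; the drift gives $\CPE{}{(\x,\y)}{\lyapunov{2}(\x')+\lyapunov{2}(\y')} \le \multlyap{s}{t}[2](\lyapunov{2}(\x)+\lyapunov{2}(\y)) + 2\biaslyap{s}{t}[2]$, which strictly contracts the weighted sum thanks to $\lyapunov{2}(\x)+\lyapunov{2}(\y)>r^2$.

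The main obstacle is the choice of the weight $b=b^r_{s,t}>0$ that simultaneously absorbs the residual drift term inside $\mathcal C_r$ and balances the two regime-dependent contraction rates into a single constant $\pmixtime[s][t]<1$. Following~\citet{HairerMattingly2008}, I would introduce auxiliary parameters $\alpha_0\in(0,\minconst{s}{t}[r])$ and $\gamma_0\in(\multlyap{s}{t}[2]+2\biaslyap{s}{t}[2]/r^2,\,1)$, select $b^r_{s,t}$ proportional to $\alpha_0/\biaslyap{s}{t}[2]$, and obtain the explicit contraction coefficient
$$
\pmixtime[s][t] = \max\!\left\{1-(\minconst{s}{t}[r]-\alpha_0),\ \frac{2+b^r_{s,t}\biaslyap{s}{t}[2]}{2+b^r_{s,t}\,r^2(1-\gamma_0)}\right\}\in(0,1),
$$
minimising over $(\alpha_0,\gamma_0)$ to sharpen the constants. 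Combining the per-point coupling bound with the duality between $\bmetric{\cdot}{\cdot}$ and the $d_{b^r_{s,t}}$-Wasserstein distance concludes the proof.
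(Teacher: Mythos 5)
Your proposal follows exactly the same route as the paper: verify the drift (\cref{prop:backward_drift_lyapunov}) and localized minorization (\cref{prop:minorization_main}) conditions for $\bwdker{s}[t]$, observe that $r^2 > 2\biaslyap{s}{t}[2]/(1-\multlyap{s}{t}[2])$ is equivalent to $\multlyap{s}{t}[2] + 2\biaslyap{s}{t}[2]/r^2 < 1$, and then invoke Theorem~1.3 of \citet{HairerMattingly2008}. The paper simply cites that theorem rather than re-deriving the two-regime coupling, and the explicit contraction factor it records in the remark following the proposition is $\nofrac{2+r^2 b\eta_0}{2+r^2 b}$ rather than the form you wrote down (yours is a slight misremembering of the Hairer--Mattingly constant), but these are bookkeeping details inside the same argument.
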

\begin{proof}
Under \cref{assump:p0} $\bwdker{s}[t]$ satisfies the drift and minorization properties of \cref{prop:backward_drift_lyapunov} and \cref{prop:minorization_main} which are Assumption 1 and 2 of \citet{HairerMattingly2008} adapted to time-inhomogenous Markov transition kernels.  The conclusion is then an application of Theorem~1.3 of \citet{HairerMattingly2008}.
\end{proof}
\Cref{prop:harris_inhomogeneous_contraction} establishes the contraction of the backward kernel for a range of metrics with different contraction rates. The choice of ideal metric is directly linked with the choice of $r$. Note that following \citet{HairerMattingly2008}, for any $\alpha_0\in(0,\minconst{s}{t}[r])$ and $\eta_0\in(\multlyap{s}{t}[2] + 2\biaslyap{s}{t}[2]/r^2,1)$, we can choose 
\begin{align} \label{eq:alpha_bar}
    b^r_{s,t} = \alpha_0/\biaslyap{s}{t}[2]
    \quad
    \text{ and }
    \quad
    \pmixtime[s][t]
    =\left[1-\left(\minconst{s}{t}[r]-\alpha_0\right)\right] \vee \frac{2+r^2 b^r_{s,t}\eta_0}{2+r^2 b^r_{s,t}}
    \eqsp,
\end{align}
where $\minconst{s}{t}[r]$ is defined in~\Cref{prop:minorization_main}.
The explicit derivation of $r^2_c$ and $\minconst{s}{t}[r]$ is given in~\Cref{sec:constants} for the variance preserving and variance exploding cases,
    together with explicit lower bounds on the corresponding mixing gap $1-\pmixtime[s][t]$ obtained by optimizing the radius parameter in the Harris contraction estimate (cf \Cref{app:explicit-lower-bound}).

\paragraph{Explicit Gaussian contraction. }
When $\pi_{\rm data}=\gaussiand{\mu}{\Sigma}$ the reverse-time SDE \eqref{eq:backward_SDE} has an explicit linear drift and admits closed-form transition kernels (\cref{lem:backward_flow} and \cref{cor:gaussian_forgetting}). This allows to derive explicit contraction rates in the Euclidean norm, and can be used to get sharp estimates in the $2$-Wasserstein distance (\cref{lem:gaussian_W2_dirac}). Note that up to a multiplicative constant, the $2$-Wasserstein distance is controlled by $\rho_b$ (\cref{lem:ineq:metrics}). In particular, for any $0\le s<t\le T$, for all $\x,\x'\in\rset^\xdim$,
\begin{align*}
    \wasserstein[2][]{\delta_\x \bwdker{s}[t]}{\delta_{\x'} \bwdker{s}[t]}
    \le
    \normEc{\rme^{-\isvp \int_{s}^{t} \bwdnoisesch{u} \rmd u} 
    \Sigma_{T-t}\Sigma_{T-s}^{-1}} \normEc{\x-\x'} \eqsp,
\end{align*}
with $\Sigma_t$ defined in \cref{lem:exactscore}. In \cref{lem:gaussian_contraction_euclidean}, we show that in the VE case ($\isvp=0$) the contracting factor is always smaller than $1$ and in the VP case ($\isvp>0$) the same strict contraction holds whenever
$\lambda_{\max}(\Sigma)^{-1} > \isvp$, a condition reminiscent of $\ctescorenorm{0}>\isvp/2$ in \cref{assump:p0:score}. Note that a similar condition was identified in \citet{strasman2025an}.
\section{Stability of SGMs generation} \label{sec:main_result}
Consider a subdivision $\grid_\gridn = \{t_0,\ldots,t_N\}$. In~\Cref{sec:constants} we provide the explicit computations of $\biaslyap{t_k}{t_{k+1}}[2]$,  $\multlyap{t_k}{t_{k+1}}[2]$ and $\minconst{t_k}{t_{k+1}}[r]$ for all $0\leq k\leq N-1$. Therefore, following \citet{HairerMattingly2008}, we  may derive  $b^r_{t_k,t_{k+1}}$ and $\pmixtime[t_k][t_{k+1}]$, $0\leq k\leq N-1$.
Since these constants exist for all $0\leq k\leq N-1$, there exist $\pmixtime_*\in(0,1)$ and $b_*>0$ such that~\Cref{prop:harris_inhomogeneous_contraction} holds uniformly. This allows to establish~\Cref{thm:global-weak-error} which uses the uniform geometrically decaying forgetting property to upperbound the sampling error of SGMs. We precede this result by some assumptions and notations.




\paragraph{Assumptions on the generative model.} The previous section establishes a quantitative forgetting property for the backward process \eqref{eq:backward_SDE}. We now leverage this contraction to control the generation error, \ie, to bound $\bmetric{\pidata}{\pgen_N}$, with $\pgen_N$ the distribution of the generative model defined in \eqref{eq:generative_model}. Since $\bmetric{\cdot}{\cdot}$ weights total variation by $1+b \lyapunov{2}(\cdot)$ with $\lyapunov{2}(x)=\|x\|^2$, bounding the generation error requires a control of polynomial moments for the approximated discretized chain \eqref{eq:time_changed_euler}. 
\begin{assumption}
\label[assumption]{hyp:schema-numerique_moments}
Let $(\Xbar_{t_k}^{\theta})_{0\le k\le N}$ be defined by \eqref{eq:time_changed_euler}. Assume that
\begin{align} \label{eq:schema_num_moment}
\sup_{0\le k\le N}  \mathbb{E} \left[\lyapunov{4p+4}(\Xbar_{t_k}^{\theta})\right] < \infty \eqsp.
\end{align}
\end{assumption}

\begin{remark}
\Cref{hyp:schema-numerique_moments} is mild on a fixed grid. Since $\Xbar_{t_0}^\theta\sim\refmeas$ is Gaussian, it suffices that the learned score has at most polynomial growth along the grid, \ie, there exist $L_\theta\ge0$ and $r\ge1$ such that for all $k\in\{0,\dots,N-1\}$ and all $\x\in\rset^\xdim$,
\begin{align*}
\normEc{\scorenet[T-t_k][\x][\theta]} \le L_\theta\bigl(1+ \normEc{\x}^r\bigr)\eqsp.
\end{align*}
Under this condition, \eqref{eq:schema_num_moment} holds for any $p$ (see \cref{lem:scheme_finite_poly_moments}). 
In particular, the above growth bound is satisfied if $\scorenet[T-t_k][\cdot][\theta]$ is Lipschitz uniformly in $k$, which is the case for most standard neural architectures.
\end{remark}



\paragraph{Stability of SGMs.} The forgetting property proved in the previous section can be leverage to analyse the stability of SGMs sampling. Our main stability result follows from a simple principle: forgetting converts local errors into a telescoping, geometrically weighted sum, as proved in \cref{thm:global-weak-error}. For all $1\leq k \leq N$, define the score approximation term by 
\begin{align*}
  \left\| \errscore{k}{} \right\|_{\rmL_2(\pgen_{k})}  =  \mathbb{E} \left[ \left\| \score[T-t_{k}][\bar \X_{t_{k}}^\theta] - \scorenet[T-t_{k}][\bar \X_{t_{k}}^\theta] \right\|^2 \right]^{\frac{1}{2}} \eqsp. 
\end{align*}
\begin{theorem}
\label[theorem]{thm:global-weak-error}
Suppose \cref{assump:p0} and \cref{hyp:schema-numerique_moments} hold. Let
$\Delta_k \eqdef \int_{t_k}^{t_{k+1}}\bwdnoisesch{u}\,\rmd u$.
Then, there exist $b>0$ and a contraction factor $\pmixtime \in(0,1)$, depending only on $\xdim$, $\ctescorenorm{0}$, $\ctescoreoffset{0}$, and $\noisesch{\cdot}$, a constant $\Cmix{} >0$,
and nonnegative coefficients $\{\Cdiscr{k}\}_{k=0}^{N-1}$,
$\{\Cnet{k}\}_{k=0}^{N-1}$
such that the SGM output $\pgen$ satisfies
\begin{align*}
  \bmetric{\pidata}{\pgen_N} \leq \pmixtime^N  \Lambda(T) C^{\rm mix} 
  + \sum_{k=1}^N \pmixtime^{N-k} \left(
  \Delta_k C^{\mathrm{disc}}_{k-1}
  +\sqrt{\Delta_k} C^{\mathrm{net}}_{k-1} \big\| \errscore{k-1}{} \big\|_{\rmL_2(\pgen_{k-1})} \right)
  \eqsp,
\end{align*}
where in the VE (resp. VP) case, the regime factor $\Lambda(T)$ is given by
\begin{align*}
  \Lambda(T)\eqdef\frac12 \normEc{\Xora_0}_{\rmL_2}\left(\int_0^T \noisesch{s} \rmd s\right)^{-1/2}
  \quad
  \text{(resp. }
  \Lambda(T)\eqdef
  \rme^{- \isvp \int_0^T \noisesch{s} \rmd s}
  \kl{\pi_{\rm data}}{\refmeas}^{1/2}
  \text{ )}
  \eqsp.
\end{align*}
\end{theorem}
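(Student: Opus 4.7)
The strategy is to telescope the error $\bmetric{\pidata}{\pgen_N}$ along the sampling trajectory and propagate each local term using the uniform Harris contraction of \cref{sec:forgetting}. Define for each $k\in\{0,\dots,N\}$ the intermediate measure
$$
\mu_k \eqdef \pgen_k\,\bwdker{t_k}[t_N]\eqsp,
$$
so that $\mu_N=\pgen_N$, $\mu_0=\refmeas\,\bwdker{t_0}[t_N]$, and $\pidata=\fwdmarg{T}\,\bwdker{t_0}[t_N]$ by the semigroup property of the exact backward kernel. The triangle inequality for the weighted total variation~\eqref{def:rho_b} gives
$$
\bmetric{\pidata}{\pgen_N}
\le
\bmetric{\fwdmarg{T}\bwdker{t_0}[t_N]}{\refmeas\bwdker{t_0}[t_N]}
+\sum_{k=0}^{N-1}\bmetric{\mu_k}{\mu_{k+1}}\eqsp,
$$
where $\mu_k-\mu_{k+1}=\pgen_k\bigl(\bwdker{t_k}[t_{k+1}]-\approxbwdker{t_k}[t_{k+1}]\bigr)\bwdker{t_{k+1}}[t_N]$ isolates the one-step discrepancy between the exact and approximated backward kernels at step $k$.

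Next I invoke \cref{prop:harris_inhomogeneous_contraction}. Since the grid is finite, the constants produced by \cref{prop:backward_drift_lyapunov} and \cref{prop:minorization_main} can be chosen uniformly in $k$: there exist $b>0$ and $\pmixtime\in(0,1)$ such that $\bmetric{\nu\bwdker{t_k}[t_{k+1}]}{\nu'\bwdker{t_k}[t_{k+1}]}\le \pmixtime\,\bmetric{\nu}{\nu'}$ for every probability measures $\nu,\nu'$ and every $0\le k<N$. Iterating the one-step contraction over the $N-k-1$ remaining exact kernels gives $\bmetric{\mu_k}{\mu_{k+1}}\le \pmixtime^{N-k-1}\bmetric{\pgen_k\bwdker{t_k}[t_{k+1}]}{\pgen_k\approxbwdker{t_k}[t_{k+1}]}$, and $N$ iterations bound the initial term by $\pmixtime^N \bmetric{\fwdmarg{T}}{\refmeas}$. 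Re-indexing by $k'=k+1$ reproduces the geometric weights stated in the theorem.

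To control the local term at step $k$ I insert the intermediate Euler--Maruyama kernel $\bwdkerdiscr{t_k}[t_{k+1}]$ driven by the \emph{exact} score and split $\bwdker{t_k}[t_{k+1}]-\approxbwdker{t_k}[t_{k+1}]$ into a discretization part $\bwdker{t_k}[t_{k+1}]-\bwdkerdiscr{t_k}[t_{k+1}]$ and a score-substitution part $\bwdkerdiscr{t_k}[t_{k+1}]-\approxbwdker{t_k}[t_{k+1}]$. The first is a weak Euler error of order $\Delta_k$: its analysis relies on the polynomial growth bounds of the score and of its Jacobian at intermediate times, which are inherited from \cref{assump:p0:hess} via the time-$t$ dissipativity~\eqref{eq:score_dissipativity_time_t} and the Hessian estimate \cref{cor:bound_hessian}; the weight $(1+b\lyapunov{2})$ is absorbed through Cauchy--Schwarz against the polynomial moments provided by \cref{hyp:schema-numerique_moments}. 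The second part is a pure score-approximation error: the KL divergence between the two one-step Gaussian kernels equals, up to a constant, $\tfrac{\Delta_k}{2}\normEc{\score[T-t_k][\cdot]-\scorenet[T-t_k][\cdot][\theta]}^2$, so Pinsker combined with Cauchy--Schwarz against the weighted $\rmL_2(\pgen_k)$ norm yields the $\sqrt{\Delta_k}\,\|\errscore{k}{}\|_{\rmL_2(\pgen_k)}$ scaling.

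Finally, I bound the initial discrepancy $\bmetric{\fwdmarg{T}}{\refmeas}$ via forward mixing. In the variance-preserving regime, the Ornstein--Uhlenbeck semigroup contracts KL exponentially at rate $\isvp\int_0^T\noisesch{s}\rmd s$, and Pinsker together with the Gaussian moments of $\refmeas$ and $\fwdmarg{T}$ produces the prefactor $\sqrt{\kl{\pidata}{\refmeas}}\exp(-\isvp\int_0^T\noisesch{s}\rmd s)$. In the variance-exploding regime, $\Xora_T=\Xora_0+\fwdstd{0}{T}Z$ with $Z$ standard Gaussian independent of $\Xora_0$, and a direct second-moment computation yields the $\normEc{\Xora_0}_{\rmL_2}/\sqrt{2\int_0^T\noisesch{s}\rmd s}$ prefactor. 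The main obstacle will be the uniform-in-$k$ control of the $(1+b\lyapunov{2})$-weighted local errors: the polynomial prefactors coming from the Jacobian growth in \cref{assump:p0:hess} (of order $p$) squared and multiplied by the quadratic Lyapunov weight must remain integrable under $\pgen_k$, which is precisely why \cref{hyp:schema-numerique_moments} demands $(4p+4)$-th moments along the scheme.
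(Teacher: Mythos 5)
Your proof follows essentially the same route as the paper: the identical intermediate-measure telescoping (your $\mu_k=\pgen_k\bwdker{t_k}[t_N]$ coincides, after re-indexing $k\to k-1$, with the paper's $\eta_k,\tilde\eta_k$), the same iterated Harris contraction yielding the geometric weights $\pmixtime^{N-k}$, and the same KL-based forward-mixing bound for the initialization term via weighted Pinsker. The one modest variation is in the one-step local error: you insert an intermediate Euler kernel driven by the exact score and split via a triangle inequality at the kernel level, whereas the paper runs a \emph{single} Girsanov/KL estimate between $\bwdker{t_{k-1}}[t_k]$ and $\approxbwdker{t_{k-1}}[t_k]$ and decomposes the drift mismatch under the Girsanov integral (\cref{prop:one_step_discr_error}); both routes are valid and produce the same $\Delta_k$ and $\sqrt{\Delta_k}\,\big\|\errscore{k-1}{}\big\|_{\rmL_2(\pgen_{k-1})}$ scalings, with only a harmless $\sqrt{2}$ constant discrepancy in your VE prefactor.
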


{
\begin{proof}
Throughout the proof, the time grid is fixed and we adopt the shorthand notation $t_k \equiv k$ whenever no ambiguity arises  (\eg, $\bwdker{\ell}[k]$ stands for $\bwdker{\grid[\ell]}[\grid[k]]$).
%
We first decompose the global error into an initialization component and a cumulative approximation component:
{\small
  \begin{align*}
    \rho_b(\pidata, \pgen_N) = \bmetric{\fwdmarg{T} \bwdker{T}}{\refmeas \approxbwdkercomp{0}{\gridn}}
    \leq \rho_b(\fwdmarg{T} \bwdker{T}, \refmeas \bwdker{T}) + \rho_b(\refmeas \bwdker{T}, \refmeas \approxbwdkercomp{0}{\gridn})
    \eqsp.
  \end{align*}
}

\emph{Step 1: Initialization error.}
By applying~\Cref{prop:harris_inhomogeneous_contraction} iteratively across the $N$ transitions of the discretization grid, there exist $b>0$ and 
$\pmixtime \in (0,1)$, such that
\begin{align*}
\bmetric{\fwdmarg{T} \bwdker{T}}{ \refmeas \bwdker{T}}   \leq \pmixtime^N \rho_b(\fwdmarg{T}, \refmeas) \eqsp.   
\end{align*}
The quantity $\rho_b(\fwdmarg{T}, \refmeas)$ measures the degree of mixing achieved by the forward diffusion \eqref{eq:forward_SDE} at time $T$.
It is controlled explicitly using a weighted version of Pinsker's inequality (\cref{prop:mixing_error_rho_b}), combined with an explicit estimate of the KL decay $\kl{p_T}{\pi_\infty}$ established in \cref{lem:mixing_time}.
Together, these results yield
{\small
  \begin{align*}
    \bmetric{\fwdmarg{T} \bwdker{T}}{\refmeas \approxbwdkercomp{0}{\gridn}} \leq \pmixtime^N  \Lambda(T) \Cmix{} \eqsp,
  \end{align*}
}
with $\Cmix{}$ defined in \cref{prop:mixing_error_rho_b}.

\emph{Step 2: Discretization and approximation errors.}
We now control the second error term using a telescoping argument along the grid to make appear one step kernel error, e.g. the discrepency between $\approxbwdker{k-1}[k]$ and $\bwdker{k-1}[k]$. Using the notation $\pgen_k$ as defined in \eqref{eq:generative_model}, for each $k\in\{1,\dots,N\}$ define the intermediate measures:
{\small
  \begin{align*}
    \eta_k \eqdef \pgen_{k-1}\,\bwdker{k-1}[k]\,\bwdker{N}[k]
    \eqsp,
    \quad
    \tilde\eta_k \eqdef \pgen_{k-1}\,\approxbwdker{k-1}[k]\,\bwdker{N}[k] \eqsp,
  \end{align*}
}
so that $\eta_1=\refmeas\bwdker{N}$ and $\tilde\eta_N=\refmeas\approxbwdkercomp{0}{N}$. We have,
{\small
  \begin{align*}
    \rho_b(\refmeas\bwdker{N},\,\refmeas\approxbwdkercomp{0}{N})
    \le \sum_{k=1}^N \rho_b(\eta_k,\tilde\eta_k) \eqsp.
  \end{align*}
}
Using again the forgetting property applied to the tail kernel yields $\bwdker{N}[k]$,
{\small
  \begin{align*}
      \bmetric{\refmeas \bwdker{N}}{\refmeas \approxbwdkercomp{0}{N}}
      \leq \sum_{k=1}^N \pmixtime^{N-k}\bmetric{\pgen_{k-1} \bwdker{k-1}[k]}{\pgen_{k-1} \approxbwdker{k-1}[k]} \eqsp.
  \end{align*}
}
Moreover, by definition of $\rho_b$ any $k \in \{1, \dots, N \}$ and $\x \in \rset^\xdim$ (\cref{lem:rho_b_convexity}),
{\small
  \begin{align*}
    \bmetric{\pgen_{k-1} \bwdker{k-1}[k] }{\pgen_{k-1} \approxbwdker{k-1}[k]}
    \leq \int \bmetric{\delta_\x \bwdker{k-1}[k] }{ \delta_\x \approxbwdker{k-1 }[k]}  \pgen_{k-1} ( \rmd \x ) \eqsp.  
  \end{align*}
}
The remaining task is a Dirac one-step estimate control in $\rho_b$. This control is provided by \cref{prop:one_step_discr_error},  whose proof relies on a local Girsanov argument
(\cref{lem:boundgirsanovproof,cor:boundgirsanovproof}).
Consequently, for each $k\in\{1,\dots,N\}$,
{\small
  \begin{align*}
    \bmetric{\pgen_{k-1}  \bwdker{t_{k-1}}[t_k]}{\pgen_{k-1}  \approxbwdker{t_{k-1}}[t_k]}
    \le
    \Delta_k \Cdiscr{k-1} +  \sqrt{\Delta_k} \Cnet{k-1} \normEc{\errscore{k-1}{} }_{\rmL_2(\pgen_{k-1})} \eqsp,
  \end{align*}
}
with $\Cdiscr{k-1}$ and $\Cnet{k-1}$ defined in \cref{cor:one_step_stability_pgen}. Injecting this estimate into the discounted telescoping bound completes the argument.
\end{proof}
}

\Cref{thm:global-weak-error} establishes a quantitative stability result for SGMs, showing that the reverse diffusion dynamics induces an intrinsic contraction mechanism along the sampling trajectory. The first term, decaying as $\pmixtime^N$ captures the initialization error. The second term accounts for time-discretization error and scales linearly with the local step size $\Delta_k$ while the third term quantifies the propagation of score approximation errors and scales as $\sqrt{\Delta_k} \normEc{ \errscore{k-1}{}}_{L_2(\pgen_{k-1})}$. Crucially, both contributions are geometrically discounted by the factor $\pmixtime^{N-k}$ showing that errors incurred early in the trajectory have a vanishing influence on the final distribution and reveals an intrinsic robustness mechanism of SGMs. In particular, for a uniform discretization grid (for any $k \in \{1, \dots, N \}$ let $\Delta = (t_{k} - t_{k-1}) \noisesch{T}$), the accumulated error admits, in the VP case, the form 
{
\begin{align*}
  \bmetric{\pidata}{\pgen_N} \lesssim \pmixtime^N \rme^{-T} + \frac{\Delta}{1 - \pmixtime} \sup_k \Cdiscr{k}
  + \frac{\sqrt{\Delta}}{1-\pmixtime} \sup_k \normEc{ \errscore{k-1}{}}_{L_2(\pgen_{k-1})} \eqsp.
\end{align*}}

Moreover, the stability bound is stated in the weighted total variation distance
which directly controls other probability distance. Indeed, by \cref{lem:ineq:metrics}, controls the total variation distance and the $2$-Wasserstein distance (up to a
multiplicative factors). Consequently,~\Cref{thm:global-weak-error} immediately yields the same decomposition and order in those metrics.

\section{Numerical illustrations}
\label{sec:exp}

We provide numerical illustrations of the forgetting mechanism established in
\cref{sec:forgetting} and \cref{sec:main_result}. The goal of these experiments is not to benchmark a new sampler, but rather to isolate the stability phenomenon predicted by the theory: perturbations introduced early along the reverse trajectory should have a much weaker impact on the final output than perturbations introduced later.

We first consider a controlled Gaussian mixture model (GMM) in dimension $d=50$, for which \cref{assump:p0} can be verified and the effect of perturbations can be directly visualized. The mixture has $25$ components whose means are arranged on a $5\times 5$ grid in the first two coordinates, allowing two-dimensional projections of the samples. We use the VE scheduler of \citet[Equation~5]{karras2022elucidating} with $N=100$, $\sigma_{\min}=0.002$, $\sigma_{\max}=80$, and $\rho=3$. All results are averaged over $20$ independent seeds; additional Gaussian and GMM experiments, together with implementation details and careful descriptions of the experiments, are deferred to \cref{app:experiments}.

\paragraph{Controlled perturbation experiments.}
We consider two complementary perturbation protocols to test the forgetting mechanism along the reverse trajectory. In the initialization experiment, for a perturbation time $t_{\rm bias}$, we first draw samples from the forward marginal $p_{t_{\rm bias}}$ and shift this initial cloud along a fixed direction $u_{\max}$ chosen to approximately maximize a sliced-Wasserstein discrepancy (see \cref{subapp:sensitivity_to_init}). Starting from these perturbed samples, we run the discretized reverse dynamics down to the data time and compare the final samples to $\pidata$ using the maximum sliced-Wasserstein distance. The results for this initialization perturbation are shown in \cref{fig:gmm:maxsw} (left) and \cref{fig:gmm:pert:init}. We also consider a local score perturbation experiment, where the initialization is left unchanged but the score is perturbed at a single discretization step along the same direction $u_{\max}$, with a time-dependent scaling chosen to make the perturbations comparable across noise levels. The effect of this localized score error is then measured after completing the reverse dynamics (see \cref{subapp:sensitivity_score_error}). The corresponding results are shown in \cref{fig:gmm:maxsw} (right) and \cref{fig:gmm:pert:score}. Together, the two experiments confirm the qualitative prediction of the theory: perturbations introduced early in the reverse trajectory are strongly attenuated, whereas perturbations introduced later remain visible in the final samples.


\begin{figure*}[htbp]
    \centering
    \includegraphics[width=.46\textwidth]{../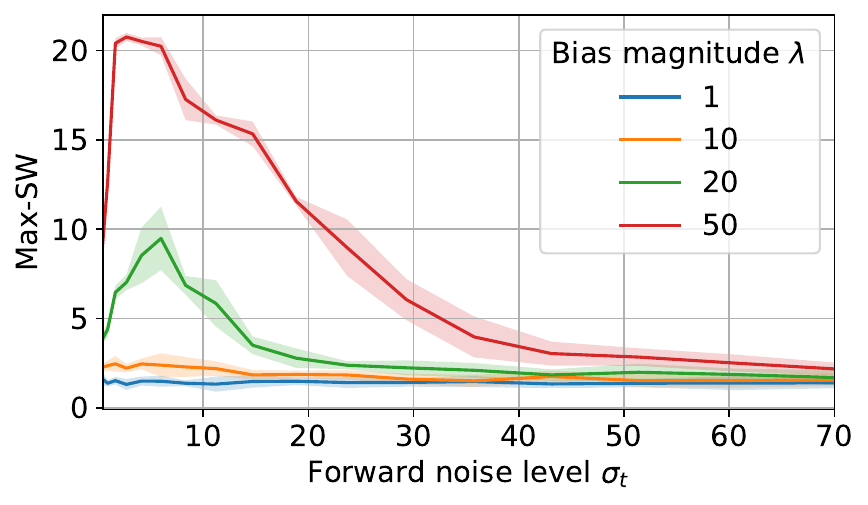}
    \hfill
    \includegraphics[width=.46\textwidth]{../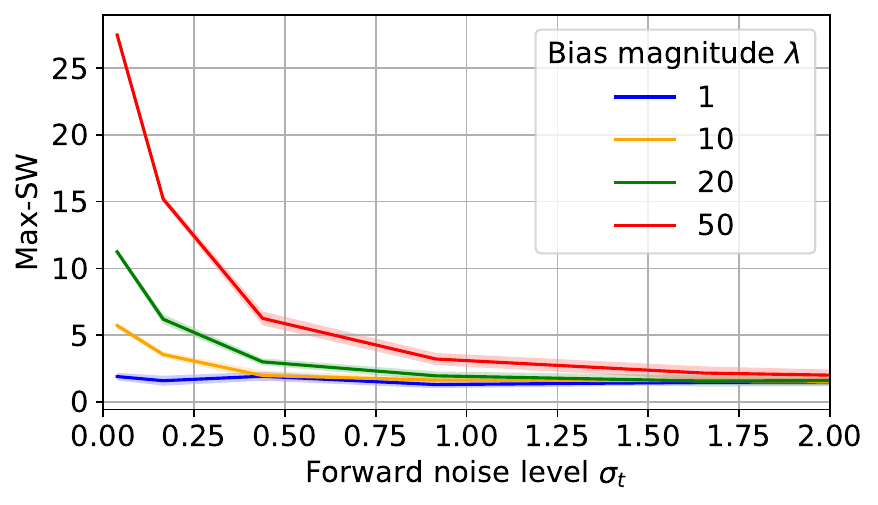}
    \caption{
    {\maxsw} as a function of the noise level and perturbation magnitude $\lambda$.
    \textbf{Left:} initialization perturbation experiment.
    \textbf{Right:} score perturbation experiment.
    We use the forward-time convention ($t=0$ corresponds to the data distribution).
    }
    \label{fig:gmm:maxsw}
    \vspace{-1.5ex}
\end{figure*}

\begin{figure*}[htbp]
    \centering
    \setlength{\tabcolsep}{2pt}
    \renewcommand{\arraystretch}{1}
        \includegraphics[width=0.19\linewidth]{../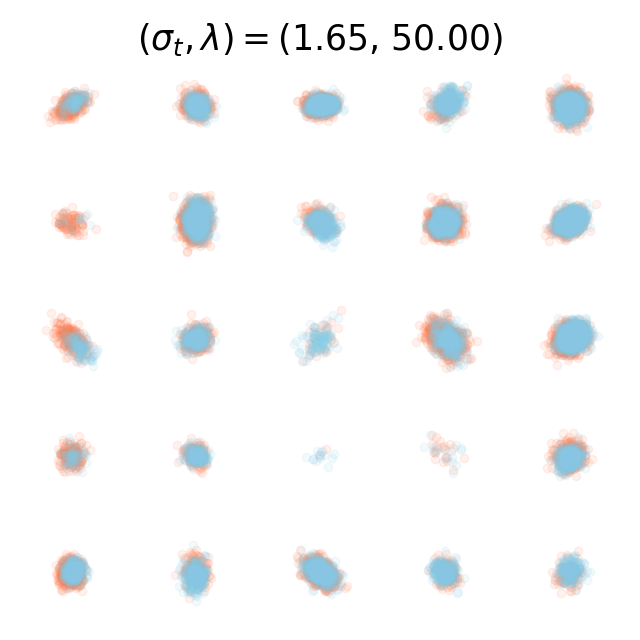}
        \includegraphics[width=0.19\linewidth]{../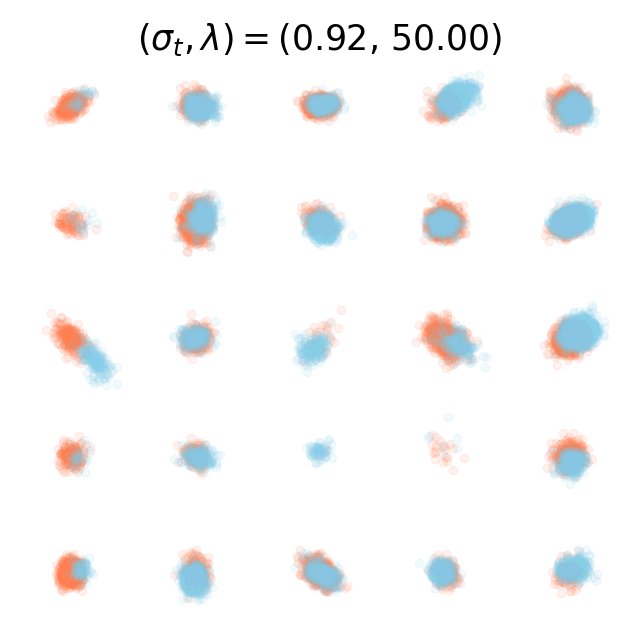}
        \includegraphics[width=0.19\linewidth]{../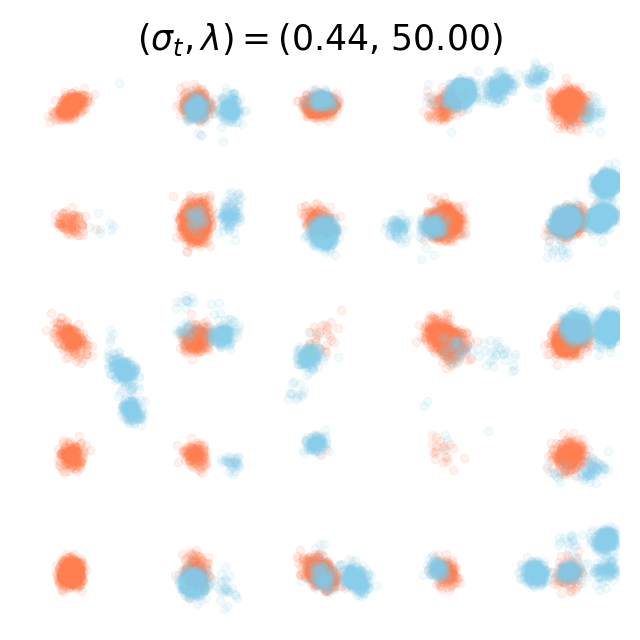}
         \includegraphics[width=0.19\linewidth]{../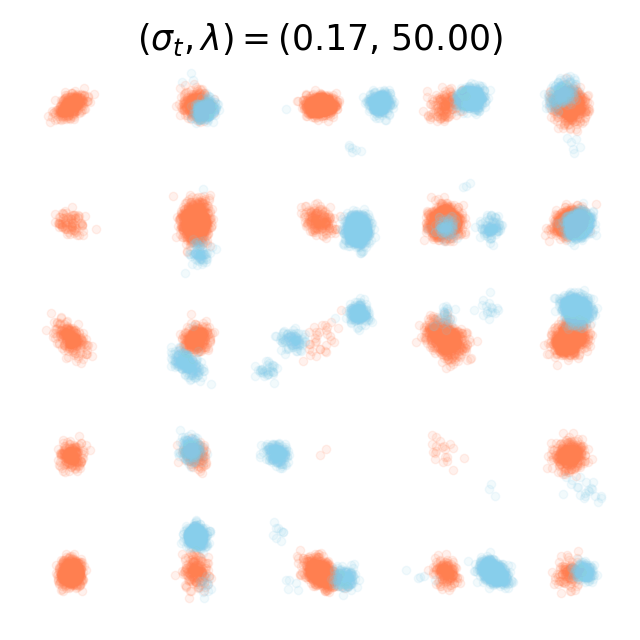}
        \includegraphics[width=0.19\linewidth]{../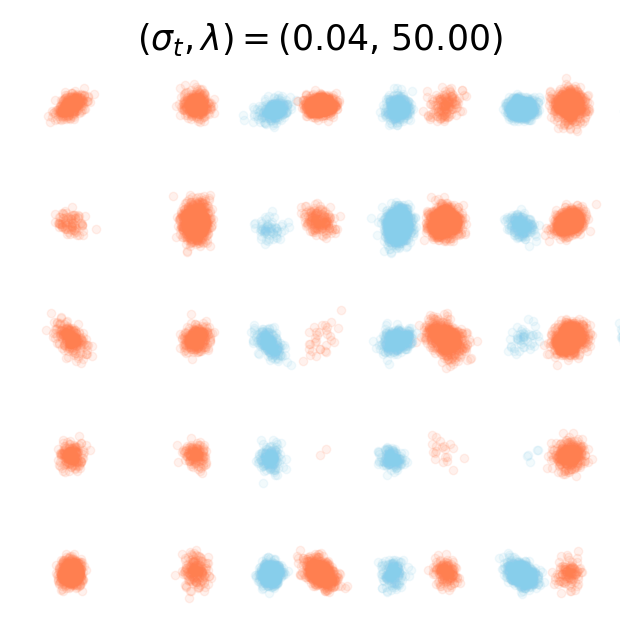}
    \caption{Local perturbation of the score in the GMM case for several noise levels $\sigma_t$ and with $\lambda=50$. Red points are samples from $\pidata$ and blue points the output of the perturbed score experiment.}
    \label{fig:gmm:pert:score}
\end{figure*}

\paragraph{Real-data illustration on CIFAR-10.}
Although the main experiments are intentionally conducted in controlled settings in order to isolate the forgetting effect from optimization and score-learning errors, we also verify that the same qualitative behavior appears on real data. We use the pretrained EDM VP model of \citet{karras2022elucidating} on CIFAR-10 with an Euler--Maruyama sampler. A perturbation direction in image space is chosen by approximately maximizing a sliced-Wasserstein discrepancy between two independently generated batches, and the denoiser is perturbed once at a prescribed sampling step. We then evaluate the resulting generations using FID and maximum sliced-Wasserstein distance over $50{,}000$ samples.

\begin{table}[htbp]
\centering
\caption{CIFAR-10 denoiser perturbation experiment with the pretrained EDM VP model of \citet{karras2022elucidating}. Small step indices correspond to early reverse times, close to Gaussian initialization, while large indices correspond to later times, closer to the data distribution.}
\label{tab:cifar-perturbation}
\begin{tabular}{c|ccccccccc}
\toprule
Step & 0 & 25 & 50 & 70 & 75 & 80 & 85 & 90 & 95 \\
\midrule
FID & 13.3 & 13.0 & 13.1 & 13.6 & 14.4 & 16.4 & 28.3 & 153 & 304 \\
{\maxsw} & 0.011 & 0.014 & 0.016 & 0.033 & 0.044 & 0.060 & 0.094 & 0.266 & 0.779 \\
\bottomrule
\end{tabular}
\end{table}

The same trend is observed: early perturbations have little impact on the final output, whereas late perturbations sharply degrade the generated samples. This is consistent with the geometric discounting predicted by our stability analysis. Additional robustness experiments with $200$ sampling steps and several perturbation magnitudes are reported in \cref{app:experiments}.
{
}
\section{Discussion}
A widely used heuristic in SGMs is that accuracy near the end of the reverse trajectory matters most. This is also reflected in practice: many implementations rely on adaptive time discretizations that allocate finer step sizes close to the data distribution, \ie, near small noise levels \citep{edmkarras}.

Our results provide a theoretical framework to understand this phenomenon. Under a dissipativity condition on the data score at time $0$ and mild polynomial growth control on its Jacobian, we show that the reverse-time diffusion satisfies a Harris-type stability property. In particular, the Markov semigroup of the backward process contracts a weighted total variation distance $\rho_b$. This property is essential to understand the robustness of SGMs. At each step, local perturbations (initialization mismatch, discretization, or score approximation error) propagate with a \emph{geometric discount factor}. As a consequence, errors incurred early along the reverse trajectory have a vanishing influence on the final distribution: the reverse diffusion sampling process \emph{forgets}. This perspective connects SGMs with classical stability tools for Markov processes and provides a principled lens to analyze how noise schedules, numerical integrators, and score errors evolve along the sampling trajectory.
\paragraph{Limitations and perspectives.}

A key limitation is that our minorization argument is localized on a small set. While this yields explicit constants, they can be conservative. In particular, enforcing a single uniform pair $(b,\bar\alpha)$ over an entire refined grid can lead to pessimistic global bounds. Our framework naturally accommodates
time-dependent metrics  but optimizing these
choices along the trajectory to obtain sharper constants is technically challenging and remains an open direction.

Moreover, in this work, we focus on the quadratic Lyapunov function $\lyapunov{2}(x)=\|x\|^2$ but other choices could have been explored. This choice allows for an explicit lower bound on the backward kernel, although alternative choices could be investigated to obtain a tighter minorization properties and therefore sharper sampling error bounds.   The choice of the Lyapunov function has an impact on the constants in the  proposed upper bounds, and investigating the link between these constants and hyperparameter tuning remains an open question for future research.

\clearpage

\section*{Acknowledgements}
The work of GC is supported by the Chaire Geolearning funded by Andra, BNP-Paribas, CCR, and SCOR Foundation.
The work of AO is supported by Hi!\ PARIS and ANR/France 2030 program (ANR-23-IACL-0005). 

\bibliographystyle{plainnat}
\bibliography{bib_ICML}

\appendix
\section*{Appendix}


\paragraph{Table of contents.}

\begin{itemize}
    \item \textbf{\cref{app:discussion_hypotheses}:} Discussion on the assumptions 
    \begin{itemize}
        \item \textbf{\cref{app:assumption_data}:} Regarding the data distribution and its associated score function.
        \item \textbf{\cref{app:regarding_score}:} Regarding the score approximation and the numerical scheme.
    \end{itemize}
    \item \textbf{\cref{app:proba_metrics}:} Probability metrics and useful bounds
    \begin{itemize}
        \item \textbf{\cref{app:proba_def}:} Definitions
        \item \textbf{\cref{app:proba_standard_ineq}:} Standard inequalities
        \item \textbf{\cref{app:proba_rhob}:} Weighted total variation: properties and bounds
    \end{itemize}
    \item \textbf{\cref{app:stability-results}:} Stability properties under Gaussian perturbations
    \begin{itemize}
        \item \textbf{\cref{app:gaussian_rpz}:} Gaussian representation
        \item \textbf{\cref{subsec:gaussian-perturbation}:} Properties of the score under Gaussian perturbation
    \end{itemize}
    \item \textbf{\cref{app:stab+lyapounov}:} Stability of the data assumptions along the diffusion flow and Lyapunov contraction
    \begin{itemize}
        \item \textbf{\cref{subapp:dissipativity}:} Propagation of the dissipativity condition
        \item \textbf{\cref{subapp:lyapunov_semigroup}:} Lyapunov contraction for the backward semigroup (\cref{prop:backward_drift_lyapunov})
        \item \textbf{\cref{subapp:growth}:} Propagation of the growth condition
    \end{itemize}
    \item \textbf{\cref{app:proofs:minorization}:} Localized Doeblin minorization condition for the backward process (Proof of \cref{prop:minorization_main})
    \item \textbf{\cref{sec:constants}:} Quantitative bounds for the Lyapunov and Harris contraction constants
    \begin{itemize}
        \item \textbf{\cref{app:explicit-lower-bound}:} Explicit lower bound on the mixing gap
    \end{itemize}
    \item \textbf{\cref{app:gaussian_framework}:} Gaussian framework: explicit backward kernel and contraction
    \begin{itemize}
    \item \textbf{\cref{subapp:closed_form_gaussian}:} Closed-form score and backward transition
        \item \textbf{\cref{subapp_rate_gaussian}:} Explicit contraction rates for the Euclidean norm
    \end{itemize}
    \item \textbf{\cref{app:general-thm:final-bound}:} Proof of the main stability bound (\cref{thm:global-weak-error})
    \begin{itemize}
        \item \textbf{\cref{subapp:initialization_error}:} Initialization error: mixing properties of the forward process
        \item \textbf{\cref{subapp:discrandapprox}:} One-step discretization and approximation error for the backward kernel
        \item \textbf{\cref{app:technical_lemmas:distances-and-inequalities}:} Technical lemmas for the main proof
    \end{itemize}
    \item \textbf{\cref{app:experiments}:} Numerical illustration of forgetting in controlled settings
    \begin{itemize}
        \item \textbf{\cref{subapp:datasets}:} Synthetic datasets
        \item \textbf{\cref{subapp:sensitivity_to_init}:} Sensitivity to initialization
        \item \textbf{\cref{subapp:sensitivity_score_error}:} Sensitivity to approximation errors
        \item \textbf{\cref{subapp:cifar10}:} Real-data illustration on CIFAR-10
    \end{itemize}
\end{itemize}

\newpage

\section{Discussion on the assumptions}
\label{app:discussion_hypotheses}

This section collects useful results derived from \cref{assump:p0} and \cref{hyp:schema-numerique_moments} that are discussed in the main paper.

\subsection{Comparison with recent literature}
\label{sec:comparison-literature}
{
The assumptions considered in this work are natural in the analysis of sampling and diffusion-based methods, and variants of them have been extensively used in the recent literature.

In particular,~\cref{assump:p0:score} requires a coercive behavior of the score function $\score[0][\x]$ outside a compact set.
This assumption is commonly referred to as a \emph{dissipativity condition} and is standard in the sampling and optimization literature \citep[see, \eg,][]{eberle2016reflection,raginsky2017non,zhang2017hitting,erdogdu2021convergence,erdogdu2022convergence,vacher2025sampling}.
It relaxes global strong convexity of the potential function $x\mapsto -\log \fwdmarg{0} (x)$ \citep{durmus2017nonasymptotic,dalalyan2019user, strasman2025an} therefore cover highly nonconvex and multimodal distributions, which fall outside the scope of works relying on global convexity of the potential. In particular, this includes mixtures of Gaussian distributions, which are shown to satisfy this condition in Section A of \citet{silveri2025beyond} and Section 3.2 of \citet{vacher2025sampling}, covering a broad and practically relevant class of target distributions.

Moreover, our assumptions do not require global Lipschitz continuity of the score, nor any one-sided Lipschitz condition; instead, \cref{assump:p0:hess} allows for polynomial growth of the Jacobian $\jscore[0][\x]$. Consequently, our framework accommodates targets with \emph{polynomially super-Gaussian tails} (\eg, $p(x)\propto \exp(-\|x\|^{\powerjscore})$ with $\powerjscore>2$), for which $\|\jscore[0][\x]\|$ typically diverges as $\|\x\|\to\infty$. These targets that encode strong confinement in some regions of the space are excluded from analyses that impose (one-sided) Lipschitz-type regularity of the score or, equivalently, uniform bounds on $\nabla^2(-\log p)$, as in \citet{chen_kl, chen2023sampling, strasman2025an, silveri2025beyond,gaoW2}. Targets with super-Gaussian tails can be interpreted as enforcing a strong form of regularization: although their support is the whole space, the rapid decay of the density concentrates almost all mass inside a bounded region, effectively mimicking compact support while preserving smoothness.
}

\subsection{Regarding the data distribution and its associated score function.}
\label{app:assumption_data}

\cref{assump:p0:score} is a dissipativity condition on the data score function $\score[0][\cdot]$; in particular, $\pidata$ has sub-Gaussian tails and finite moments of all orders (\cref{lem:pdata-sub-gaussian}). \cref{assump:p0:hess} controls the Jacobian of the score and implies a polynomial growth bound on the score itself (\cref{lem:norm-score-bound}).

\begin{lemma}
    \label[lemma]{lem:pdata-sub-gaussian}

    Suppose that~\cref{assump:p0:score} holds. Then, the data distribution $\pidata$ admits exponential moments of order $\lambda$ for $\lambda<\ctescorenorm{0}/2$, \ie,
    \begin{align}
        \PE{\pidata}{\exp\left(\lambda \normEc{\X}^2\right)} < \infty \eqsp.
    \end{align}
    This implies that $\pidata$ has sub-Gaussian tails.
\end{lemma}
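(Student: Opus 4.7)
The plan is to convert the dissipativity condition of \cref{assump:p0:score} into a quantitative Gaussian-type pointwise upper bound on the density $\fwdmarg{0}$, and then conclude the integrability claim by a direct computation in polar coordinates. Since $\score[0] = \nabla \log \fwdmarg{0}$ must be well-defined pointwise, I may assume throughout that $\fwdmarg{0}>0$ on $\rset^{\xdim}$.

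First, I would analyze the density along radial lines. Fix $\omega \in \sphere{\xdim-1}$ and set $g_\omega(r) \eqdef \log \fwdmarg{0}(r\omega)$ for $r>0$. Since $\fwdmarg{0}\in\csp{2}{\rset^{\xdim}}$, the function $g_\omega$ is $\csp{2}{(0,\infty)}$ and differentiation yields $g_\omega'(r)=\dotprod{\score[0][r\omega]}{\omega} = r^{-1}\dotprod{\score[0][r\omega]}{r\omega}$. Plugging in the dissipativity estimate of \cref{assump:p0:score} at $\x=r\omega$ gives the scalar differential inequality
\begin{equation*}
g_\omega'(r) \le -\ctescorenorm{0}\, r + \ctescoreoffset{0}/r,\qquad r>0.
\end{equation*}

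Second, I would integrate this inequality from $r=1$ to $r=R\ge 1$, obtaining $g_\omega(R)\le g_\omega(1) - \tfrac{\ctescorenorm{0}}{2}(R^2-1) + \ctescoreoffset{0}\log R$. The starting value $g_\omega(1) = \log\fwdmarg{0}(\omega)$ is a continuous function of $\omega$ on the compact sphere $\sphere{\xdim-1}$, hence uniformly bounded above. Exponentiating produces a constant $C>0$ such that $\fwdmarg{0}(\x) \le C \normEc{\x}^{\ctescoreoffset{0}}\exp\!\big(-\tfrac{\ctescorenorm{0}}{2}\normEc{\x}^2\big)$ whenever $\normEc{\x}\ge 1$; on the compact ball $\{\normEc{\x}<1\}$ the continuous density is trivially bounded.

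Third, I would split $\PE{\pidata}{\rme^{\lambda\normEc{\X}^2}}$ at the unit sphere. The inner piece contributes a finite constant, while switching to polar coordinates in the outer piece reduces the computation to an integral of the form $\int_1^\infty r^{\xdim-1+\ctescoreoffset{0}}\exp\!\big(-(\ctescorenorm{0}/2-\lambda)r^2\big)\,\rmd r$, which converges if and only if $\lambda<\ctescorenorm{0}/2$, giving the claim. The only (minor) obstacle is the apparent singularity at $r=0$ in the radial differential inequality; I sidestep it by integrating from $r=1$ and absorbing $g_\omega(1)$ uniformly via compactness of the sphere, which is the standard device in this kind of radial argument.
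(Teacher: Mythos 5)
Your proof is correct and follows a genuinely different, more elementary route than the paper's. You integrate the scalar radial differential inequality $g_\omega'(r)\le -\ctescorenorm{0}\,r + \ctescoreoffset{0}/r$ (read off directly from \cref{assump:p0:score}) outward from $r=1$ to obtain an explicit pointwise envelope $\fwdmarg{0}(\x)\le C\,\normEc{\x}^{\ctescoreoffset{0}}\exp\!\left(-\tfrac{\ctescorenorm{0}}{2}\normEc{\x}^2\right)$ for $\normEc{\x}\ge 1$, and conclude by polar coordinates. The paper instead performs a second-order Taylor expansion of $U=-\log\fwdmarg{0}$ about the origin, introduces an auxiliary parameter $\bar\lambda\in(\lambda,\ctescorenorm{0}/2)$, and invokes Lemma~2.2 of \citet{bouchut2005uniqueness} to upgrade the radial dissipativity to a Hessian lower bound $\nabla^2 U\succeq 2\bar\lambda\,\Id_\xdim$ outside a ball $\ball{0}{R_{\bar\lambda}}$, which is then fed into the Lagrange remainder. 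Your argument buys two things: it yields a sharp, closed-form Gaussian-type bound on the density at the optimal rate $\ctescorenorm{0}/2$ without the $\bar\lambda$-slack or the appeal to \citet{bouchut2005uniqueness}, and it sidesteps a subtle point in the Taylor-based proof --- the intermediate point in the Lagrange remainder lies on the segment joining $0$ to $\x$ and may therefore fall inside the ball on which the Hessian estimate is not available, so the paper's bound really only holds up to a lower-order correction that is glossed over; your one-dimensional integration along rays never encounters this issue. The only ingredients you use beyond the assumption are positivity of $\fwdmarg{0}$ (needed in any case for $\score[0]$ to make sense) and continuity on the compact unit sphere to bound $g_\omega(1)$ uniformly, so the argument is fully self-contained.
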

\begin{proof}
    Fix $\lambda < \ctescorenorm{0}/2$. Let \( U(x) = -\log \fwdmarg{0}[x] \).
    Consider the following second-order Taylor expansion:
    \begin{align*}
        U(\x) &= U(0) + \dotprod{\nabla U(0)}{\x} + \frac{1}{2} \dotprod{\x}{ \nabla^2 U(y)\, \x}\eqsp,
    \end{align*}
    for some \( y \in \{ t x : t \in [0,1] \} \). From~\cref{assump:p0:score}, we have that 
    \[
    \dotprod{\score[0][\x]}{\x}
        \le -\ctescorenorm{0} \normEc{\x}^2 + \ctescoreoffset{0} \eqsp.
    \]
    Fix $\bar{\lambda}$ such that $\lambda < \bar{\lambda} < \ctescorenorm{0}/2$.
    The previous inequality implies that there exists $R_{\bar{\lambda}}>0$ such that
    \[
    - \dotprod{\nabla U(\x)}{\x} \le -2\bar{\lambda} \normEc{\x}^2\eqsp,
    \qquad \text{ for }\x \notin \ball{0}{R_{\bar{\lambda}}},\eqsp.
    \]
    By Lemma~2.2 in \citet{bouchut2005uniqueness}, the above inequality implies that
    \[
        -\nabla^2 U(\x) \preceq -2\bar{\lambda}\eqsp \Id_{\xdim}\eqsp,
        \qquad \text{ for }\x \notin \ball{0}{R_{\bar{\lambda}}}\eqsp.
    \]
    Therefore,
    \begin{align*}
        &\int_{\rset^{\xdim}} \rme^{\lambda \normEc{\x}^2} \fwdmarg{0}[\x] \rmd \x
        \\
        &=
        \int_{\ball{0}{R_{\bar{\lambda}}}} \rme^{-\lambda \normEc{\x}^2} \fwdmarg{0}[\x] \rmd \x
        +
        \int_{\rset^{\xdim} \setminus \ball{0}{R_{\bar{\lambda}}}} \rme^{\lambda \normEc{\x}^2}
        \rme^{-U(\x)} \rmd \x 
        \\
        &\le 
        \int_{\ball{0}{R_{\bar{\lambda}}}} \rme^{-\lambda \normEc{\x}^2} \fwdmarg{0}[\x] \rmd \x
        + \int_{\rset^{\xdim} \setminus \ball{0}{R_{\bar{\lambda}}}} \rme^{\lambda \normEc{\x}^2}
            \rme^{-U(0) - \dotprod{\nabla U(0)}{\x} - \bar{\lambda} \normEc{\x}^2} \rmd \x < \infty
        \eqsp,
    \end{align*}
    which concludes the proof.
\end{proof}

\begin{lemma}
    \label[lemma]{lem:norm-score-bound}
    Suppose that~\cref{assump:p0:hess} holds. Then, there exists $\ctescorepoly \in \rsetpos$ such that 
    \begin{align}
        \normEc{\score[0][\x]} \leq \ctescorepoly (1 + \normEc{\x}^{p+1}) \eqsp,
    \end{align}
    where $\ctescorepoly = \max\left\{\sqrt{\xdim}\ctejscore{0}, \normEc{\score[0][0]}\right\}$.
\end{lemma}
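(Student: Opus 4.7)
The plan is to apply the fundamental theorem of calculus along the radial segment from the origin to $\x$ and then convert the Frobenius-norm growth bound on the Jacobian of the score into a norm bound on the score itself. Since $\fwdmarg{0} \in \csp{2}{\rset^\xdim}$ (as stated in the paragraph preceding \cref{assump:p0}), the score $\score[0] = \nabla \log \fwdmarg{0}$ is continuously differentiable on $\rset^\xdim$, so the radial line integral is well defined and yields
$$
\score[0][\x] = \score[0][0] + \int_0^1 \jscore[0][t\x]\, \x \, \rmd t \eqsp.
$$

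Next, I would take Euclidean norms, invoke the triangle inequality under the integral sign, and use the elementary matrix inequality $\|A v\| \le \|A\|_F \|v\|$ (which follows by Cauchy--Schwarz applied row by row), obtaining
$$
\normEc{\score[0][\x]} \le \normEc{\score[0][0]} + \int_0^1 \normFr{\jscore[0][t\x]} \normEc{\x} \, \rmd t \eqsp.
$$
Substituting the growth bound of \cref{assump:p0:hess}, namely $\normFr{\jscore[0][t\x]} \le \ctejscore{0}(1 + (t\normEc{\x})^p)$, and performing the explicit integration over $t \in [0,1]$ gives
$$
\normEc{\score[0][\x]} \le \normEc{\score[0][0]} + \ctejscore{0}\normEc{\x} + \frac{\ctejscore{0}}{p+1} \normEc{\x}^{p+1} \eqsp.
$$

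To conclude with the prescribed shape $\ctescorepoly(1+\normEc{\x}^{p+1})$, I would split by whether $\normEc{\x} \le 1$ or $\normEc{\x} \ge 1$: in the first regime $\normEc{\x} \le 1$ and $\normEc{\x}^{p+1} \le 1$, so everything collapses to an absolute constant; in the second, using $p+1 \ge 2$, we have $\normEc{\x} \le \normEc{\x}^{p+1}$ and the Jacobian contribution is absorbed into the $\normEc{\x}^{p+1}$ term. A $\sqrt{\xdim}$ factor can alternatively be introduced at a preliminary step by doing the argument coordinatewise (bounding each $|s_i(\x)|$ via the norm of the $i$-th row of $\jscore[0]$, which is itself dominated by $\normFr{\jscore[0]}$, then summing $d$ squared components), which is the route that matches the constant $\ctescorepoly = \max\{\sqrt{\xdim}\ctejscore{0},\normEc{\score[0][0]}\}$ claimed in the statement.

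There is no genuine obstacle here: the proof is a one-shot line-integral estimate plus a dimensional bookkeeping step. The only care required is to choose the final constant so that both the boundary term $\normEc{\score[0][0]}$ and the polynomial contribution of the Jacobian are absorbed simultaneously, and so that the (harmless) linear-in-$\normEc{\x}$ intermediate term is dominated uniformly by $1 + \normEc{\x}^{p+1}$ over the two regimes.
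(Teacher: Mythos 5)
The proposal is correct and takes essentially the same approach as the paper: a fundamental-theorem-of-calculus expansion along the radial segment from $0$ to $\x$, followed by the Frobenius-norm growth bound on $\jscore[0]$ from \cref{assump:p0:hess}. Your write-up is more explicit than the paper's one-line display and correctly identifies the row-wise coordinate argument as the source of the $\sqrt{\xdim}$ factor in the paper's stated constant.
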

\begin{proof}
    We have that
    \begin{align}
        \normEc{\score[0][\x] - \score[0][0]} \leq \int_{0}^{1}\normEc{\jscore[0][t\x] \x} \rmd t \leq \sqrt{\xdim} \ctejscore{0}\int_{0}^{1} (1 + t^{p}\normEc{\x}^{p+1}) \rmd t \eqsp.
    \end{align}
\end{proof}
\begin{corollary} \label[corollary]{cor:hess+score-bound_0}
    Suppose that~\cref{assump:p0:hess} holds. Then, there exists $\ctestable{0} \in \rsetpos$ such that
    \begin{align}
        \normFr{\jscore[0][\x] + \score[0][\x]^{\tensprod 2}} \leq \ctestable{0}\left(1 + \normEc{\x}^{2(p+1)}\right) \eqsp.
    \end{align}
\end{corollary}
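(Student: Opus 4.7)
The plan is to apply the triangle inequality for the Frobenius norm and treat the two summands separately. Namely, I would write
\begin{equation*}
\normFr{\jscore[0][\x] + \score[0][\x]^{\tensprod 2}}
\leq \normFr{\jscore[0][\x]} + \normFr{\score[0][\x]^{\tensprod 2}}\eqsp,
\end{equation*}
and then bound each term by a polynomial of degree at most $2(p+1)$ in $\|\x\|$.

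For the first term, Assumption~\ref{assump:p0:hess} gives directly the bound $\normFr{\jscore[0][\x]} \leq \ctejscore{0}(1+\normEc{\x}^{p})$. For the second term, the key observation is that $\score[0][\x]^{\tensprod 2} = \score[0][\x]\,\score[0][\x]^{\operatorname{T}}$ is a rank-one matrix, whose Frobenius norm is exactly $\normEc{\score[0][\x]}^2$. Applying \cref{lem:norm-score-bound} then yields
\begin{equation*}
\normFr{\score[0][\x]^{\tensprod 2}} = \normEc{\score[0][\x]}^2
\leq \ctescorepoly^{\,2}\bigl(1+\normEc{\x}^{p+1}\bigr)^2
\leq 2\ctescorepoly^{\,2}\bigl(1+\normEc{\x}^{2(p+1)}\bigr)\eqsp,
\end{equation*}
where the last step uses $(a+b)^2 \leq 2(a^2+b^2)$.

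To merge the two contributions into a single bound of the required form, I would use the elementary inequality $\normEc{\x}^{p}\leq 1+\normEc{\x}^{2(p+1)}$, which is valid for all $\x\in\rset^\xdim$ since $\normEc{\x}^p\leq 1$ whenever $\normEc{\x}\leq 1$ and $\normEc{\x}^p\leq \normEc{\x}^{2(p+1)}$ otherwise (as $p\leq 2(p+1)$). Combining the two estimates and setting $\ctestable{0} \eqdef 2\ctejscore{0} + 2\ctescorepoly^{\,2}$ (or any convenient such constant) gives the announced inequality. There is no real obstacle here: all ingredients, namely the rank-one identity for $vv^{\operatorname{T}}$ and the polynomial score bound of \cref{lem:norm-score-bound}, are already available, and the argument reduces to a routine polynomial comparison.
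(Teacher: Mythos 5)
Your argument is correct and follows exactly the same route as the paper's own proof: triangle inequality in Frobenius norm, the rank-one identity $\normFr{v\mt{v}}=\normEc{v}^2$, and then \cref{lem:norm-score-bound} to control $\normEc{\score[0][\x]}$. The paper states this more tersely (it omits the final polynomial comparison), but there is no difference in substance.
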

\begin{proof}
    It is enough to note that
    \begin{align}
        \normFr{\jscore[0][\x] + \score[0][\x]^{\tensprod 2}} \leq \normFr{\jscore[0][\x]} + \normFr{\score[0][\x]^{\tensprod 2}}
    \end{align}
    and $\normFr{\score[0][\x]^{\tensprod 2}} = \normEc{\score[0][\x]}^2$.
\end{proof}

\subsection{Regarding the score approximation and the numerical scheme} \label{app:regarding_score}

We identify a minimal growth condition on the learned score approximation $\scorenet$ that guarantees the Euler--type scheme defined in \eqref{eq:time_changed_euler} admits finite moments at all discretization times (\cref{lem:scheme_finite_poly_moments}). This ensures all quantities appearing in the latter error decomposition are well-defined.

\begin{lemma}
\label[lemma]{lem:scheme_finite_poly_moments}
Let $(\Xbar_{t_k}^\theta)_{0\le k\le N}$ be defined by \eqref{eq:time_changed_euler} and recall that
$\Xbar_{t_0}^\theta\sim \refmeas$ is Gaussian. Assume that there exist constants $L_\theta\ge 0$ and $r\ge 1$ such that for all $k\in\{0,\dots,N-1\}$ and all $\x\in\rset^\xdim$,
\begin{align}
\label{eq:assump_poly_growth_minimal}
\normEc{\scorenet[T-t_k][\x][\theta]} \le L_\theta \left(1+\normEc{\x}^{r}\right) \eqsp.
\end{align}
Then, for every $q\ge 1$,
\begin{align*}
\sup_{0\le k\le N}\mathbb E \normEc{\Xbar_{t_k}^\theta}^{q}<\infty \eqsp.
\end{align*}
\end{lemma}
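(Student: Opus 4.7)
The plan is to proceed by finite induction on $k\in\{0,\dots,N\}$, establishing that $\mathbb E\normEc{\Xbar_{t_k}^\theta}^q$ is finite for every $q\ge 1$; since the grid contains only $N+1$ points, the supremum in the statement then follows at once.

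For the base case, $\Xbar_{t_0}^\theta\sim\refmeas$ is a centred Gaussian in $\rset^\xdim$ and thus admits all polynomial moments. For the inductive step, I would combine the Euler recursion \eqref{eq:time_changed_euler} with the triangle inequality and the polynomial growth bound \eqref{eq:assump_poly_growth_minimal} to obtain the pointwise estimate
$$
\normEc{\Xbar_{t_{k+1}}^\theta}
\le
(1+\Delta_k\isvp)\normEc{\Xbar_{t_k}^\theta}
+2\Delta_k L_\theta\bigl(1+\normEc{\Xbar_{t_k}^\theta}^{r}\bigr)
+\sqrt{2\Delta_k}\normEc{\xi_k}\eqsp.
$$
Raising to the $q$-th power with the elementary inequality $(a+b+c)^q\le 3^{q-1}(a^q+b^q+c^q)$, then taking expectations and using that $\xi_k$ is independent of $\Xbar_{t_k}^\theta$, the estimate reduces to a finite linear combination of $\mathbb E\normEc{\Xbar_{t_k}^\theta}^{q}$, $\mathbb E\normEc{\Xbar_{t_k}^\theta}^{qr}$, and the Gaussian moment $\mathbb E\normEc{\xi_k}^{q}$. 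Each of these is finite by the inductive hypothesis (applied both at exponent $q$ and at the higher exponent $qr$) together with standard Gaussian moment bounds, which closes the induction.

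The only subtlety worth flagging is that the polynomial growth exponent $r$ compounds along the trajectory: controlling the $\rmL_q$-norm at step $k+1$ forces a control of the $\rmL_{qr}$-norm at step $k$, and iterating all the way back forces an $\rmL_{qr^{k+1}}$ control at initialization. This cascade is harmless because $\refmeas$ is Gaussian and $N$ is fixed in the statement, so every polynomial moment is available from the start; it also explains why the natural inductive hypothesis must propagate \emph{all} polynomial moments simultaneously, rather than one exponent at a time. There is no genuine obstacle in the argument: the main care is the bookkeeping of moments and the observation that the resulting constants may depend on $N$ through the $\Delta_k$ and through the compounded exponent $r^N$, but since the lemma is stated on a fixed grid this dependence is irrelevant to the conclusion.
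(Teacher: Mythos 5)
Your proposal is correct and follows essentially the same route as the paper's own proof: a finite induction on $k$ starting from the Gaussian initialization, powered by the one-step moment estimate $\mathbb{E}\bigl[\normEc{\Xbar_{t_{k+1}}^\theta}^{q}\bigr]\lesssim 1+\mathbb{E}\bigl[\normEc{\Xbar_{t_k}^\theta}^{qr}\bigr]$ and the observation that the exponent compounds harmlessly because the grid is finite and $\refmeas$ has all polynomial moments. Your explicit remark that the inductive hypothesis must carry all polynomial moments simultaneously (rather than a single fixed exponent) is the right way to make the paper's terse ``by induction over $k$'' rigorous.
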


\begin{proof}
Let $(\xi_k)_{k\ge0}$ be an i.i.d.\ sequence of standard Gaussian random vectors in $\rset^\xdim$ and write 
\begin{align*}
\Xbar_{t_{k+1}}^{\theta}
=
\Xbar_{t_k}^{\theta}
+
\Delta_k b_{\theta} \left(t_k,\Xbar_{t_k}^{\theta}\right)+ \sqrt{2\Delta\tau_k} \xi_k \eqsp,
\end{align*}
with $b_\theta(t_k,\x)\eqdef \isvp \x + 2\,\scorenet[T-t_k][\x][\theta]$. By \eqref{eq:assump_poly_growth_minimal} there exists $C_{\theta} > 0$ such that
$\normEc{b_{\theta} \left(t_k,\x \right)}\le C_{\theta}(1+\normEc{\x}^r)$ for all $k \in\{0,\dots,N-1\} $ and $\x \in \rset^\xdim$. Fix $p\ge 1$, we get that there exists $C_{\theta}$, which may change from line to line, such that
\begin{align*}
\CPE{}{\Xbar_{t_k}^\theta}{\normEc{\Xbar_{t_{k+1}}^\theta}^p}
&\le
C_{\theta}\left(
\normEc{\Xbar_{t_k}^\theta}^p
+ (\Delta\tau_k)^p \normEc{b_\theta(t_k,\Xbar_{t_k}^\theta)}^p
+ (\Delta\tau_k)^{p/2} \mathbb E \left[ \normEc{\xi_k}^p \right]
\right) \\
&\le
C_{\theta} \left(1+ \normEc{\Xbar_{t_k}^\theta}^{p r}\right)\eqsp,
\end{align*}
Since $\Xbar_{t_0}^\theta$ is Gaussian, it has moments of all orders. The previous inequality shows that for an arbitrary $p$, if $\mathbb E \normEc{\Xbar_{t_k}^\theta}^{pr}<\infty$ then $\mathbb E \normEc{\Xbar_{t_{k+1}}^\theta}^{p}<\infty$. By induction over $k=0,\dots,N-1$, we obtain that $\mathbb{E} \normEc{\Xbar_{t_k}^\theta}^{q}<\infty$. Taking the supremum over the finite set $\{0,\dots,N\}$ yields the claim.
\end{proof}

\begin{remark}
If, for each $k$, the map $\x \mapsto \scorenet[T-t_k][\x][\theta]$ is Lipschitz with constant $L_{\theta}$ and $\sup_k \normEc{\scorenet[T-t_k][0][\theta]}\le M$, then
\begin{align*}
\normEc{\scorenet[T-t_k][x][\theta]}
\le \normEc{\scorenet[T-t_k][0][\theta]} + L_{\theta} \normEc{\x}
\le M + L_{\theta} \normEc{\x},
\end{align*}
so \eqref{eq:assump_poly_growth_minimal} holds with $r=1$.
\end{remark}

\section{Probability metrics and useful bounds} \label{app:proba_metrics}

We collect here the definitions of the probability distances used throughout the paper, together with a few inequalities relating them. Some results are standard and are included for the sake of clarity. The last subsection is dedicated to properties of $\rho_b$, which are used to establish the main contraction results.

\subsection{Definitions} \label{app:proba_def}

\paragraph{Weighted total variation distance.}

For any scale parameter $b > 0$ and $\lyapunov{2}[\x] \eqdef \normEc{\x}^2$, define the weighted supremum norm
\begin{align*}
    \|\varphi\|_b
    \coloneqq \sup_{\x \in \rset^\xdim} 
    \frac{|\varphi(\x)|}{1 + b  \lyapunov{2}[\x] } \eqsp.
\end{align*}
The corresponding dual distance on probability measures \citep{HairerMattingly2008} is given, for $\mu_1,\mu_2\in\mathcal P(\rset^\xdim)$, by
\begin{align} \label{eq:def_rho_b_ftest}
  \bmetric{\mu_1}{\mu_2}
    \coloneqq 
    \sup_{\|\varphi\|_b \le 1}
    \left| \int  \varphi \rmd (\mu_1 - \mu_2) \right| \eqsp.
\end{align}
This coincides with a weighted total variation distance
\begin{align} \label{eq:def_rho_b}
    \bmetric{\mu_1}{\mu_2}
    = 
    \int_{\rset^\xdim} (1 + b \lyapunov{2}[\x] ) |\mu_1 - \mu_2|(\rmd \x) \eqsp.
\end{align}

\paragraph{Total variation distance.} The total variation distance between $\mu_1$ and $\mu_2$ is defined by
\begin{align*}
\normTV{\mu_1-\mu_2}
\eqdef \frac{1}{2}
\sup_{\|f\|_\infty\le 1}
\left|\int f\,\rmd(\mu_1-\mu_2)\right| \eqsp.
\end{align*}
In particular, if $\mu_1$ and $\mu_2$ admit respectively densities $p_1$ and $p_2$ with respect to the Lebesgue measure, then 
\begin{align*}
\normTV{\mu_1-\mu_2}
\eqdef \frac12
\int_{\rset^\xdim} \left| p_1(\x) - p_2(\x) \right| \rmd \x, \quad \x \in \rset^d \eqsp.
\end{align*}

\paragraph{Kullback--Leibler divergence.}

The Kullback--Leibler (KL) divergence of $\mu_1$ with respect to $\mu_2$ is defined as
\begin{align*}
\kl{\mu_1}{\mu_2}
\eqdef
\int_{\rset^\xdim}
\log \left(\frac{\rmd\mu_1}{\rmd\mu_2} \right) \rmd \mu_1 \eqsp,
\end{align*}
if $\mu_1\ll\mu_2$ and $\kl{\mu_1}{\mu_2}=+\infty$ otherwise. 

\paragraph{Hellinger distance.} If $\mu_1$ and $\mu_2$ admit respectively densities $p_1$ and $p_2$ with respect to the Lebesgue measure, the Hellinger distance between $\mu_1$ and $\mu_2$ is defined as
\begin{align*}
\hell(\mu_1,\mu_2)
\eqdef
\left(
\frac12
\int_{\rset^\xdim}
\bigl(\sqrt{p_1(\x)}-\sqrt{p_2(\x)}\bigr)^2 \rmd \x
\right)^{1/2}, \quad \x \in \rset^d \eqsp.
\end{align*}

\subsection{Standard inequalities} \label{app:proba_standard_ineq}
The KL divergence and the total variation distance are related by Pinsker's inequality
\begin{align} \label{eq:pinsker}
\normTV{\mu_1-\mu_2}
\le
\sqrt{\tfrac12 \kl{\mu_1}{\mu_2}} \eqsp.
\end{align}
For a proof of such inequality see for example \citet{bakry2014analysis} or \citet[Chapter~2, Distances between probability measures]{Tsybakov2009}. 
The Hellinger distance is upper bounded
by the KL divergence 
\begin{align} \label{eq:hellinger_kl}
\hell(\mu_1,\mu_2)^2
\le
\tfrac12\,\kl{\mu_1}{\mu_2} \eqsp.
\end{align}
A proof of such result can be found in \citet[Chapter~2, Distances between probability measures]{Tsybakov2009}.

\subsection{Weighted total variation: properties and bounds} \label{app:proba_rhob}

We collect practical inequalities bounding above and below the weighted total variation distance defined in \eqref{eq:def_rho_b} in terms of other probability metrics. These bounds will be used in the subsequent development. 

\begin{lemma} \label[lemma]{lem:weighted_TV_bound_Hellinger}
Let $\mu_1,\mu_2 \in \mathcal{P}(\rset^{\xdim})$ be probability measures admitting Lebesgue densities $p_1,p_2$ respectively such that
\begin{align*}
\int_{\rset^\xdim} \lyapunov{2}^2 \rmd \mu_1<\infty
\quad\text{and}\quad
\int_{\rset^\xdim} \lyapunov{2}^2 \rmd \mu_2 <\infty \eqsp.
\end{align*}
Then, 
\begin{align*}
\bmetric{\mu_1}{\mu_2}
 \le 
\sqrt{2} \hellinger{\mu_1}{\mu_2}
\left(
\sqrt{\int_{\rset^\xdim} \left(1 + b \lyapunov{2}[\x] \right)^2 p_1(\x) \rmd \x}
+
\sqrt{\int_{\rset^\xdim} \left(1 + b \lyapunov{2}[\x] \right)^2 p_2(\x)\rmd \x}
\right) \eqsp.
\end{align*}
\end{lemma}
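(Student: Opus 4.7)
The plan is to reduce the weighted total variation distance to a Cauchy--Schwarz estimate after factoring the difference of densities through square roots, which is the standard way of relating $\mathrm{L}^1$-type distances to the Hellinger distance. I would start from the identity
\begin{equation*}
p_1(\x)-p_2(\x)=\bigl(\sqrt{p_1(\x)}-\sqrt{p_2(\x)}\bigr)\bigl(\sqrt{p_1(\x)}+\sqrt{p_2(\x)}\bigr),
\end{equation*}
so that, by the triangle inequality,
\begin{equation*}
|p_1(\x)-p_2(\x)| \le \bigl|\sqrt{p_1(\x)}-\sqrt{p_2(\x)}\bigr|\sqrt{p_1(\x)} + \bigl|\sqrt{p_1(\x)}-\sqrt{p_2(\x)}\bigr|\sqrt{p_2(\x)}.
\end{equation*}
Multiplying by $1+b\lyapunov{2}(\x)$ and integrating against the Lebesgue measure yields, using the definition \eqref{eq:def_rho_b},
\begin{equation*}
\bmetric{\mu_1}{\mu_2} \le \sum_{i=1}^{2}\int_{\rset^\xdim}\bigl(1+b\lyapunov{2}(\x)\bigr)\bigl|\sqrt{p_1(\x)}-\sqrt{p_2(\x)}\bigr|\sqrt{p_i(\x)}\,\rmd\x.
\end{equation*}

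Next, I would apply the Cauchy--Schwarz inequality to each of the two integrals on the right-hand side, pairing the factor $|\sqrt{p_1}-\sqrt{p_2}|$ with $(1+b\lyapunov{2})\sqrt{p_i}$. For each $i \in \{1,2\}$, this gives
\begin{equation*}
\int_{\rset^\xdim}\bigl(1+b\lyapunov{2}(\x)\bigr)\bigl|\sqrt{p_1(\x)}-\sqrt{p_2(\x)}\bigr|\sqrt{p_i(\x)}\,\rmd\x \le \left(\int_{\rset^\xdim}\bigl(\sqrt{p_1(\x)}-\sqrt{p_2(\x)}\bigr)^{2}\rmd\x\right)^{1/2}\!\left(\int_{\rset^\xdim}\bigl(1+b\lyapunov{2}(\x)\bigr)^{2}p_i(\x)\,\rmd\x\right)^{1/2}.
\end{equation*}
The first factor equals $\sqrt{2}\,\hellinger{\mu_1}{\mu_2}$ by definition of the Hellinger distance, and summing over $i=1,2$ yields exactly the claimed inequality. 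The two weighted second moments on the right are finite by the integrability assumption $\int \lyapunov{2}^{2}\,\rmd\mu_i<\infty$ together with the elementary bound $(1+b\lyapunov{2})^{2} \le 2(1+b^{2}\lyapunov{2}^{2})$.

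There is no real obstacle here: the argument is a one-line Cauchy--Schwarz once the correct factorization of $p_1-p_2$ is chosen. The only subtle point is to split the $(\sqrt{p_1}+\sqrt{p_2})$ factor \emph{before} applying Cauchy--Schwarz (rather than after, via $(\sqrt{p_1}+\sqrt{p_2})^{2}\le 2(p_1+p_2)$ and $\sqrt{a+b}\le\sqrt{a}+\sqrt{b}$), which would introduce an extra factor of $\sqrt{2}$ and give a looser constant. With the splitting done first, the constant $\sqrt{2}$ in the statement is recovered directly from the Hellinger normalization.
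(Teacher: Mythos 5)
Your proof is correct and uses essentially the same ingredients as the paper's: the factorization $p_1-p_2=(\sqrt{p_1}-\sqrt{p_2})(\sqrt{p_1}+\sqrt{p_2})$ followed by Cauchy--Schwarz. The only difference is the order of operations: you split the sum $\sqrt{p_1}+\sqrt{p_2}$ into two terms \emph{first} and then apply Cauchy--Schwarz to each, while the paper applies Cauchy--Schwarz once with the full factor $w(\sqrt{p_1}+\sqrt{p_2})$ and then expands $\int w^2(\sqrt{p_1}+\sqrt{p_2})^2$, using Cauchy--Schwarz again on the cross term to recognize the perfect square $\bigl(\sqrt{\int w^2 p_1}+\sqrt{\int w^2 p_2}\bigr)^2$; both yield the same tight constant. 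One small mislabel: the step you call ``by the triangle inequality'' is in fact an equality, since $\sqrt{p_1}$ and $\sqrt{p_2}$ are nonnegative, so $|\sqrt{p_1}-\sqrt{p_2}|(\sqrt{p_1}+\sqrt{p_2})$ already equals $|\sqrt{p_1}-\sqrt{p_2}|\sqrt{p_1}+|\sqrt{p_1}-\sqrt{p_2}|\sqrt{p_2}$.
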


\begin{proof}
Note that
\begin{align*}
\left|p_1-p_2 \right|
=
\left|\sqrt{p_1}-\sqrt{p_2}\right| \left(\sqrt{p_1}+\sqrt{p_2}\right) \eqsp,
\end{align*}
and with $w (\x) \eqdef 1 + b \lyapunov{2}[\x]$, we have,
\begin{align*}
\bmetric{\mu_1}{\mu_2}
=
\int_{\rset^\xdim} w(\x) \left|\sqrt{p_1(\x)}-\sqrt{p_2(\x)}\right|
\left(\sqrt{p_1(\x)}+\sqrt{p_2(\x)}\right) \rmd \x.
\end{align*}
By the Cauchy--Schwarz inequlity,
\begin{align*}
    &\bmetric{\mu_1}{\mu_2}
    \\
    &\le
    \left(\int_{\rset^\xdim} \left(\sqrt{p_1(\x)}-\sqrt{p_2(\x)}\right)^2 \rmd \x \right)^{1/2}
    \left(\int_{\rset^\xdim} w(\x)^2\left(\sqrt{p_1(\x)}+\sqrt{p_2(\x)}\right)^2 \rmd \x\right)^{1/2} \\
    & = \sqrt2 \hellinger{\mu_1}{\mu_2}
    \left(\int_{\rset^\xdim} w(\x)^2\left(\sqrt{p_1(\x)}+\sqrt{p_2(\x)}\right)^2 \rmd \x\right)^{1/2} \eqsp.
\end{align*}
Then,
\begin{align*}
    \int_{\rset^\xdim} w^2(\sqrt{p_1}+\sqrt{p_2})^2\,\rmd x
    =
    \int w^2 p_1 \rmd \x
    +
    \int w^2 p_2 \rmd \x
    +
    2\int w^2\sqrt{p_1p_2} \rmd \x \eqsp,
\end{align*}
so that applying the Cauchy--Schwarz to the last term yields
\begin{align*}
    &\int_{\rset^\xdim} w^2(\sqrt{p_1}+\sqrt{p_2})^2 \rmd \x
    \\
    &\le
    \int_{\rset^\xdim} w^2 p_1 \rmd \x
    +
    \int_{\rset^\xdim} w^2 p_2 \rmd \x
    +
    2\Big(\int_{\rset^\xdim} w^2 p_1 \rmd x\Big)^{1/2}
    \Big(\int_{\rset^\xdim} w^2 p_2 \rmd \x\Big)^{1/2}\\
    &=
    \Bigg(
    \Big(\int_{\rset^\xdim} w^2 p_1\rmd \x\Big)^{1/2}
    +
    \Big(\int_{\rset^\xdim} w^2 p_2 \rmd \x\Big)^{1/2}
    \Bigg)^2 \eqsp.
\end{align*}
Therefore,
\begin{align*}
\bmetric{\mu_1}{\mu_2}
\le
\sqrt{2} \hellinger{\mu_1}{\mu_2} 
\left(
\sqrt{\int_{\rset^\xdim} w(\x)^2 p_1(\x) \rmd \x}
+
\sqrt{\int_{\rset^\xdim} w(\x)^2 p_2(x) \rmd \x}
\right) \eqsp,
\end{align*}
which concludes the proof.
\end{proof}

\begin{lemma} \label[lemma]{lem:rho_b_convexity}
Let $\mu\in\mathcal P(\rset^\xdim)$ and let
$K,\tilde K$ be two Markov kernels on $\rset^\xdim$ such that
\begin{align*}
\int_{\rset^\xdim} \normEc{\x}^2 (\mu K)(\rmd \x) < \infty
\quad\text{and}\quad
\int_{\rset^\xdim} \normEc{\x}^2 (\mu \tilde K)(\rmd \x) < \infty .
\end{align*}
Then,
\begin{align*}
\bmetric{\mu K}{\mu \tilde K}
\le
\int_{\rset^\xdim} \bmetric{\delta_\x K}{\delta_\x \tilde K} \mu(\rmd \x) \eqsp.
\end{align*}
\end{lemma}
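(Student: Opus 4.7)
The plan is to use the dual characterization of $\rho_b$ recalled in \eqref{eq:def_rho_b_ftest}, namely
$\bmetric{\nu_1}{\nu_2}=\sup_{\|\varphi\|_b\le1}\left|\int\varphi\,\rmd(\nu_1-\nu_2)\right|$,
and to reduce the bound on $\bmetric{\mu K}{\mu\tilde K}$ to a pointwise estimate integrated against $\mu$. The moment assumptions are used only to guarantee that $\mu K$, $\mu\tilde K$ have finite second moments, so that both $\bmetric{\mu K}{\mu\tilde K}$ and $\int\bmetric{\delta_\x K}{\delta_\x\tilde K}\mu(\rmd\x)$ are well-defined (the integrand being measurable as a sup over a countable dense family of test functions with respect to $\|\cdot\|_b$).

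First, I would fix any measurable test function $\varphi:\rset^\xdim\to\rset$ with $\|\varphi\|_b\le1$. By definition of $\|\cdot\|_b$, one has $|\varphi(\y)|\le 1+b\lyapunov{2}[\y]$, so $\varphi$ is integrable with respect to both $\delta_\x K$ and $\delta_\x\tilde K$ for $\mu$-almost every $\x$, thanks to the second-moment assumption. Applying Fubini to swap the integration against $\mu$ with that against the kernels,
\begin{equation*}
\int\varphi\,\rmd(\mu K-\mu\tilde K)
=\int\mu(\rmd\x)\int\varphi(\y)\bigl(K(\x,\rmd\y)-\tilde K(\x,\rmd\y)\bigr)\eqsp.
\end{equation*}
The dual definition \eqref{eq:def_rho_b_ftest} applied to the inner signed measure $\delta_\x K-\delta_\x\tilde K$ with the same test function $\varphi$ yields, pointwise in $\x$,
\begin{equation*}
\left|\int\varphi(\y)\bigl(K(\x,\rmd\y)-\tilde K(\x,\rmd\y)\bigr)\right|
\le \bmetric{\delta_\x K}{\delta_\x\tilde K}\eqsp.
\end{equation*}

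Next, I would take the absolute value outside the $\mu$-integral using the triangle inequality for integrals and combine with the previous pointwise bound to obtain
\begin{equation*}
\left|\int\varphi\,\rmd(\mu K-\mu\tilde K)\right|
\le\int\bmetric{\delta_\x K}{\delta_\x\tilde K}\,\mu(\rmd\x)\eqsp.
\end{equation*}
The right-hand side is independent of $\varphi$, so taking the supremum over all $\varphi$ with $\|\varphi\|_b\le1$ on the left-hand side yields exactly $\bmetric{\mu K}{\mu\tilde K}$ and concludes the proof. The only mildly delicate step is the measurability of the map $\x\mapsto\bmetric{\delta_\x K}{\delta_\x\tilde K}$ needed to integrate against $\mu$; this follows from writing $\rho_b$ as a supremum over a countable $\|\cdot\|_b$-dense family of continuous bounded test functions, which is standard for Polish state spaces such as $\rset^\xdim$.
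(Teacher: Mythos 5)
Your proof is correct and follows essentially the same route as the paper's: fix a test function $\varphi$ with $\|\varphi\|_b\le1$, decompose $(\mu K)(\varphi)-(\mu\tilde K)(\varphi)$ as an integral of $(\delta_\x K)(\varphi)-(\delta_\x\tilde K)(\varphi)$ against $\mu$, bound the inner difference pointwise by $\bmetric{\delta_\x K}{\delta_\x\tilde K}$ via the dual characterization \eqref{eq:def_rho_b_ftest}, and take the supremum over $\varphi$. Your remark on measurability of $\x\mapsto\bmetric{\delta_\x K}{\delta_\x\tilde K}$ is a small extra carefulness the paper leaves implicit, but the argument is the same.
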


\begin{proof}
Fix $\varphi$ such that $\|\varphi\|_b\le 1$,
\begin{align*}
\left|(\mu K)(\varphi)-(\mu \tilde K)(\varphi)\right| & = \left| \int_{\rset^\xdim} \Big( (\delta_\x K)(\varphi)-(\delta_\x \tilde K)(\varphi)\Big) \mu(\rmd \x) \right| \\
& \le \int_{\rset^\xdim} \big|(\delta_\x K)(\varphi)-(\delta_\x \tilde K)(\varphi)\big| \mu(\rmd \x) \eqsp.
\end{align*}
For each fixed $\x$, since $\|\varphi\|_b\le 1$,
\begin{align*}
\left|(\delta_\x K)(\varphi)-(\delta_x \tilde K)(\varphi)\right|
\le
\sup_{\|\psi\|_b\le 1}\left|(\delta_\x K)(\psi)-(\delta_\x \tilde K)(\psi)\right|
=
\bmetric{\delta_\x K}{\delta_\x \tilde K} \eqsp.
\end{align*}
Therefore,
\begin{align*}
\big|(\mu K)(\varphi)-(\mu\widetilde K)(\varphi)\big|
\le
\int_{\rset^\xdim} \bmetric{\delta_x K}{\delta_x \widetilde K}\,\mu(\rmd x) \eqsp.
\end{align*}
Taking the supremum over all $\varphi$ with $\|\varphi\|_b\le 1$ as in \eqref{eq:def_rho_b_ftest} gives the result.
\end{proof}

\begin{lemma}
\label[lemma]{lem:ineq:metrics}
Let $\mu_1,\mu_2\in \pset{\rset^\xdim}[2]$. Then,
\begin{align*}
   \normTV{\mu_1 -\mu_2}  \leq \frac{1}{2} \bmetric{\mu_1}{\mu_2} \eqsp,
\end{align*}
and
\begin{align*}
\wasserstein[2][2]{\mu_1}{\mu_2}   \leq \frac{2}{b} \bmetric{\mu_1}{\mu_2} \eqsp.
\end{align*}

\end{lemma}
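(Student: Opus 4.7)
The first inequality reduces to a pointwise comparison of the integrands in the definition \eqref{eq:def_rho_b} of $\bmetric{\cdot}{\cdot}$. Since $1+b\lyapunov{2}[\x] \geq 1$ for every $\x \in \rset^\xdim$ and $\normTV{\mu_1-\mu_2} = \tfrac12 |\mu_1-\mu_2|(\rset^\xdim)$ by definition of the total variation measure, I immediately get $2\normTV{\mu_1-\mu_2} \leq \int_{\rset^\xdim}(1+b\lyapunov{2}[\x])|\mu_1-\mu_2|(\rmd \x) = \bmetric{\mu_1}{\mu_2}$, which is the first claim.

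For the Wasserstein bound, the plan is to exhibit an explicit coupling of $\mu_1$ and $\mu_2$ whose quadratic transport cost is controlled by $\bmetric{\mu_1}{\mu_2}$. I would write the Jordan decomposition $\mu_1-\mu_2 = \nu_+ - \nu_-$, with $\nu_\pm$ mutually singular positive measures satisfying $\nu_+(\rset^\xdim) = \nu_-(\rset^\xdim) = T \eqdef \normTV{\mu_1-\mu_2}$; equivalently, $\nu_+ = \mu_1-m$ and $\nu_- = \mu_2-m$ with $m = \mu_1 \wedge \mu_2$. Assuming $T>0$ (otherwise $\mu_1=\mu_2$ and both sides vanish), set $\pi \eqdef (\mathrm{Id},\mathrm{Id})_\# m + T^{-1}\nu_+ \otimes \nu_-$. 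A direct marginal computation gives that the first marginal of $\pi$ equals $m + T^{-1}\nu_-(\rset^\xdim)\nu_+ = m + \nu_+ = \mu_1$, and symmetrically for the second marginal, so $\pi$ is an admissible coupling.

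Since the diagonal component of $\pi$ contributes zero to the quadratic cost, I obtain $\wasserstein[2][2]{\mu_1}{\mu_2} \leq T^{-1}\iint \normEc{\x-\y}^2 \nu_+(\rmd \x)\nu_-(\rmd \y)$. Applying $\normEc{\x-\y}^2 \leq 2\normEc{\x}^2 + 2\normEc{\y}^2$ and using $\nu_\pm(\rset^\xdim)=T$, the double integral splits into $2T\int\lyapunov{2}[\x]\nu_+(\rmd \x) + 2T\int\lyapunov{2}[\y]\nu_-(\rmd \y)$; after the prefactor $T^{-1}$ cancels and the two pieces are combined via the Jordan decomposition, this reduces to $2\int\lyapunov{2}[\x]|\mu_1-\mu_2|(\rmd \x)$. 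Finally, the pointwise bound $\lyapunov{2}[\x] \leq b^{-1}(1+b\lyapunov{2}[\x])$ yields $\wasserstein[2][2]{\mu_1}{\mu_2} \leq 2 b^{-1}\bmetric{\mu_1}{\mu_2}$, as required. There is no substantive obstacle here; the only care needed is in verifying the marginal identities for $\pi$ and in handling the trivial degenerate case $T=0$.
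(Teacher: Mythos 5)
Your argument is correct and, at its core, follows the same route as the paper: reduce the Wasserstein bound to the intermediate inequality $\wasserstein[2][2]{\mu_1}{\mu_2}\le 2\int_{\rset^\xdim}\normEc{\x}^2\,|\mu_1-\mu_2|(\rmd\x)$ and then drop the constant term via $b\normEc{\x}^2\le 1+b\normEc{\x}^2$, while the first inequality is immediate from $1\le 1+b\lyapunov{2}$. The one genuine point of divergence is in how you obtain that intermediate inequality: the paper simply cites Theorem~6.15 of Villani's \emph{Optimal Transport: Old and New}, whereas you re-derive it from scratch by building the explicit coupling $\pi=(\mathrm{Id},\mathrm{Id})_\#(\mu_1\wedge\mu_2)+T^{-1}\nu_+\otimes\nu_-$ and applying $\normEc{\x-\y}^2\le 2\normEc{\x}^2+2\normEc{\y}^2$. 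Your coupling construction is exactly the one used to prove Villani's theorem, so this is not a new idea so much as an inlined proof of the cited lemma; the benefit is a fully self-contained argument, and you are right to flag the degenerate case $T=0$, which Villani's general statement also handles trivially.
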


\begin{proof}
The first inequality follows directly from $1 \leq 1 + b \lyapunov{2}$. For the second, \citet[Theorem~6.15]{villani_optimal_2009} yields, 
\begin{align*}
    \wasserstein[2][2]{\mu_1}{\mu_2} \leq 2 \int_{\rset^d} \normEc{\x}^2 | \mu_1 - \mu_2 |  (\rmd \x) \eqsp,
\end{align*}
and
\begin{align*}
    \bmetric{\mu_1}{\mu_2}   &= \int_{\rset^d} (1+ b \left\| \x \right\|^2) | \mu_1 - \mu_2 |\rmd \x 
    \geq \int_{\rset^d} b \left\| \x \right\|^2 | \mu_1 - \mu_2 |\rmd \x \eqsp,
\end{align*}
which concludes the proof.
\end{proof}

\section{Stability properties under Gaussian perturbations}
\label{app:stability-results}

This section collects stability and regularity facts for the time-marginals of the forward diffusion, viewed as Gaussian perturbations of the data distribution.
The key structural point is that $\mathcal{L}(\Xora_t)$ is obtained from $\pidata$ by Gaussian smoothing (in VE) or by a Gaussian smoothing followed by a deterministic down-scaling (in VP). This representation yields explicit identities expressing the score $\score[t]$ and its Jacobian $\jscore[t]$ in terms of conditional expectations of $\score[0]$ and $\jscore[0]$ under the posterior $\mathcal{L}(\Xora_0\mid \Xora_t=\x)$.
These formulas are the main tool to transfer growth, integrability, and dissipativity properties from $\pidata$ to $\mathcal{L}(\Xora_t)$, which will be used later to establish contraction estimates in the metric $\rho_b$.

\subsection{Gaussian representation} \label{app:gaussian_rpz}

Following the forward SDE \eqref{eq:forward_SDE}, the marginal law of the process at any time $t$ admits an explicit Gaussian representation as a convolution of the initial distribution. This Gaussian convolution structure plays a central role in the analysis, as it allows us to transfer properties of the initial distribution to the time-marginal laws of the diffusion. We make this representation precise and derive the corresponding expressions in the following result.


\begin{lemma} \label[lemma]{lem:forward_process_law}
The solution of the forward process \eqref{eq:forward_SDE} writes as
\begin{align} \label{forward_marginal}
    \Xora_t \eqlaw \fwdmean{0}{t} \Xora_0 + \fwdstd{0}{t} G \eqsp,
\end{align}
with $\Xora_0 \sim \pidata$, $G\sim \gaussiand{0}{\Id_\xdim}$ independent of $\Xora_0$, and
\begin{align*}
    \fwdmean{0}{t}
    =
    \exp\left(- \isvp \int_0^t \noisesch{s} \rmd s\right)
    \quad \mathrm{and} \quad 
    \fwdvar{0}{t}
    = 
   \rme^{-2 \isvp \int_0^t \noisesch{s} \rmd s} \int_0^t \frac{2 \noisesch{s}}{\rme^{- 2 \isvp \int_0^s \noisesch{u} \rmd u}} \rmd s 
    \eqsp.
\end{align*}
In particular,
$\Xora_t \stackrel{\mathcal L}{=} \Xora_0 + \fwdstd{0}{t} G$ in the VE case ($\isvp=0$) and
$\Xora_t \stackrel{\mathcal L}{=} \fwdmean{0}{t} \Xora_0 + (\isvp^{-1}(1 - \fwdmean{0}{t}^2))^{1/2} \eqsp G$ in the VP case ($\isvp>0$).
Furthermore, the probability density of the distribution of $\X_s$ given $(\X_t, \X_0)$ writes
\begin{align}
    \bridge{t, 0}{s}[\x_t, \x_0][\x_s] =
    \gaussiand{
        \bridgemean{t}{s}{0}\x_0 + \bridgemean{t}{s}{t}\x_t
    }{
        \bridgevar{t}{s}\Id_d
    }[\x_s] \eqsp,
\end{align}
with $\bridgevar{t, 0}{s} \eqdef \fwdvar{s}{t} \fwdvar{0}{s} / (\fwdvar{s}{t} \fwdmean{s}{t}^2 + \fwdvar{0}{s})$ and   $\bridgemean{t, 0}{s}{t} \eqdef \sqrt{(\bridgevar{t, 0}{s}-\fwdvar{0}{s})/\fwdvar{0}{t}}$ and $\bridgemean{t, 0}{s}{0} \eqdef \fwdmean{0}{s} - \bridgemean{t, 0}{s}{t} \fwdmean{0}{t}$.
\end{lemma}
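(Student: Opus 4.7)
The plan is to solve the linear SDE by the variation-of-constants method, then to evaluate the resulting Gaussian covariance explicitly in each regime, and finally to derive the bridge law by Bayes' formula together with the Markov property of~\eqref{eq:forward_SDE}.

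For the marginal law, the first step is to introduce the integrating factor $\mu_t \eqdef \exp(\isvp\int_0^t \noisesch{s}\rmd s)$ so that Itô's formula yields $\rmd(\mu_t \Xora_t) = \mu_t\sqrt{2\noisesch{t}}\dbrown{t}$. Integrating between $0$ and $t$ gives
\[
\Xora_t = \mu_t^{-1}\Xora_0 + \mu_t^{-1}\int_0^t \mu_s\sqrt{2\noisesch{s}}\dbrown{s}\eqsp,
\]
which identifies $\Xora_t$ with the sum of the affine term $\fwdmean{0}{t}\Xora_0$, where $\fwdmean{0}{t}=\mu_t^{-1}$, and a centered Gaussian vector independent of $\Xora_0$ with covariance $\fwdvar{0}{t}\Id_\xdim$ and $\fwdvar{0}{t} = \mu_t^{-2}\int_0^t 2\noisesch{s}\mu_s^2\rmd s$ (by Itô isometry). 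Rewriting $\mu_s^2=1/\rme^{-2\isvp\int_0^s \noisesch{u}\rmd u}$ recovers the stated formula for $\fwdvar{0}{t}$.

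The specialization to the two regimes is then direct. Setting $\isvp=0$ yields $\fwdmean{0}{t}=1$ and $\fwdvar{0}{t}=2\int_0^t \noisesch{s}\rmd s$, which is the VE case. For VP, using $\tfrac{\rmd}{\rmd t}\mu_t^2 = 2\isvp\noisesch{t}\mu_t^2$ gives $\int_0^t 2\noisesch{s}\mu_s^2\rmd s = \isvp^{-1}(\mu_t^2-1)$, hence $\fwdvar{0}{t} = \isvp^{-1}(1-\fwdmean{0}{t}^2)$, in agreement with the announced expression.

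For the bridge density, the Markov property of~\eqref{eq:forward_SDE} gives
\[
\bridge{t,0}{s}[\x_t,\x_0][\x_s] \;\propto\; \bridge{0}{s}[\x_0][\x_s]\bridge{s}{t}[\x_s][\x_t]\eqsp,
\]
where both factors are Gaussian in $\x_s$ by the first part of the lemma, with parameters $(\fwdmean{s}{t}\x_s,\fwdvar{s}{t}\Id_\xdim)$ for the second factor obtained via the semigroup identities $\fwdmean{s}{t}=\fwdmean{0}{t}/\fwdmean{0}{s}$ and $\fwdvar{s}{t}=\fwdvar{0}{t}-\fwdmean{s}{t}^2\fwdvar{0}{s}$ (themselves following from $\Xora_t = \fwdmean{s}{t}\Xora_s + \fwdstd{s}{t}G'$ with $G'$ Gaussian and independent of $\Xora_s$ given $\Xora_0$). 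Completing the square in $\x_s$ then identifies a Gaussian density with the announced mean $\bridgemean{t,0}{s}{0}\x_0+\bridgemean{t,0}{s}{t}\x_t$ and covariance $\bridgevar{t,0}{s}\Id_\xdim$. The main care lies in the bookkeeping of the various means and variances across the three time indices $0$, $s$, $t$; beyond that, no conceptual difficulty arises.
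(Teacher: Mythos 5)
Your proof is correct and follows essentially the same route as the paper's: integrating factor (the paper writes $Y_t = \rme^{\isvp\int_0^t\noisesch{s}\rmd s}\Xora_t$, which is your $\mu_t\Xora_t$), variation of constants plus Itô isometry for the Gaussian covariance, specialization to $\isvp=0$ and $\isvp>0$, and then Bayes' rule with the Markov property for the bridge. The only cosmetic difference is that you spell out the semigroup identities $\fwdmean{s}{t}=\fwdmean{0}{t}/\fwdmean{0}{s}$ and $\fwdvar{s}{t}=\fwdvar{0}{t}-\fwdmean{s}{t}^2\fwdvar{0}{s}$ before completing the square, while the paper simply states the Bayes ratio $\bridge{t,0}{s}=\fwdtrans{s}{t}\fwdtrans{0}{s}/\fwdtrans{0}{t}$ and quotes the resulting Gaussian parameters without further derivation.
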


\begin{proof}
    Define $Y_t = \rme^{\isvp \int_0^t \noisesch{s} \rmd s}  \Xora_t$ such that, by Itô's lemma, 
    \begin{align*}
 \rmd Y_t = \sqrt{2 \noisesch{t}} \rme^{\isvp \int_0^t \noisesch{s} \rmd s} \dbrown{t} \eqsp.
    \end{align*}
Therefore, 
\begin{align*}
    Y_t = Y_0 + \int_0^t \sqrt{2 \noisesch{s}} \rme^{\isvp \int_0^s \noisesch{u} \rmd u} \dbrown{s} \eqsp,
\end{align*}
and
\begin{align} \label{eq:forward_deco}
    \Xora_t = \rme^{- \isvp \int_0^t \noisesch{s} \rmd s} \Xora_0 + \rme^{- \isvp \int_0^t \noisesch{s} \rmd s} \int_0^t \frac{\sqrt{2 \noisesch{s}} }{\rme^{-\isvp \int_0^s \noisesch{u} \rmd u}} \dbrown{s} \eqsp.
\end{align}
Note that when $\isvp = 0$ and $\noisesch{s} = \fwdstd{0}{s} \frac{\rmd \fwdstd{0}{s}}{\rmd s}$, $\int_0^t \sqrt{2 \noisesch{s}}  \dbrown{s}$ is a centered Gaussian vector with covariance matrix
\begin{align*}
    \Sigma_t = \int_0^t 2 \frac{\rmd \fwdstd{0}{s}}{\rmd s}\rmd s \Id_{\xdim} = \int_0^t \frac{\rmd \fwdvar{0}{s}}{\rmd s} \rmd s  \Id_{\xdim}= \fwdvar{0}{t}\Id_{\xdim} \eqsp.
\end{align*}
Hence, for $Z\sim\mathcal{N}(0,\Id_{\xdim})$  independent of $\Xora_0$, $\Xora_t \stackrel{\mathcal L}{=} \Xora_0 +  \fwdstd{0}{t} Z$. When $\isvp >0$, the stochastic part of \eqref{eq:forward_deco} is a centered Gaussian vector with covariance matrix,
\begin{align*}
    \Sigma_t
    = 
    \left(
        \rme^{- 2\isvp\int_0^t \noisesch{s} \rmd s} \int_0^t 2 \noisesch{s} \rme^{ 2 \isvp \int_0^s \noisesch{u} \rmd u} \rmd s
    \right) \eqsp \Id_{\xdim}
    & =
    \frac{1}{\isvp} \left(
        1 -  \rme^{- 2\int_0^t \isvp \noisesch{s} \rmd s}
    \right) \Id_{\xdim}
    =
    \frac{1}{\isvp} \left(
        1 - \fwdmean{0}{t}^2
    \right) \Id_{\xdim}
    \eqsp,
\end{align*}
which concludes the proof.

For the conditional distribution of $\X_s$ given $(\X_t, \X_0)$, note that the conditional distributions of $\X_s$ given $\X_0$, $\X_t$ given $\X_0$ and $\X_t$ given  $\X_s$ admit a Gaussian density with respect to the Lebesgue measure. Thus, by Bayes theorem,
\begin{align}
    \bridge{t, 0}{s}[\x_t, \x_0][\x_s] = \frac{\fwdtrans{s}{t}[\x_s][\x_t]\fwdtrans{0}{s}[\x_0][\x_s]}{\fwdtrans{0}{t}[\x_0][\x_t]} \eqsp.
\end{align}
Therefore, we obtain that
\begin{align}
    \bridge{t, 0}{s}[\x_t, \x_0][\x_s] = \gaussiand{\bridgemean{t, 0}{s}{t}\x_t + \bridgemean{t, 0}{s}{0}\x_0}{\bridgevar{t, 0}{s}\Id_{\xdim}}[\x_s]\eqsp,
\end{align}
with $\bridgevar{t, 0}{s} = \fwdvar{s}{t} \fwdvar{0}{s} / (\fwdvar{s}{t} \fwdmean{s}{t}^2 + \fwdvar{0}{s})$,  $\bridgemean{t, 0}{s}{t} = \sqrt{(\bridgevar{t, 0}{s}-\fwdvar{0}{s})/\fwdvar{0}{t}}$ and $\bridgemean{t, 0}{s}{0} = \fwdmean{0}{s} - \bridgemean{t, 0}{s}{t} \fwdmean{0}{t}$.
\end{proof}

From the Gaussian representation \eqref{forward_marginal} in the case of $\isvp>0$, we may rewrite
\[
\Xora_t \;\stackrel{\mathcal L}{=}\; \fwdmean{0}{t}\Big(\Xora_0 + \varepsilon_t\,G\Big),
\qquad 
\varepsilon_t^2 \coloneqq \frac{\fwdvar{0}{t}}{\fwdmean{0}{t}^2},
\]
where $G\sim\gaussiand{0}{\Id_\xdim}$ is independent of $\Xora_0$. In other words, up to the scaling factor $\fwdmean{0}{t}$, the marginal at time $t$ is a Gaussian perturbation of $\Xora_0$ with mean $0$ and covariance matrix $\varepsilon_t^2 \Id_\xdim = (\fwdvar{0}{t}/\fwdmean{0}{t}^2)\Id_\xdim$. Moreover, there exists a simple relation satisfied by $\Xora_0+\varepsilon G$ and $m\big(\Xora_0+\varepsilon G\big)$ for $m\in(0,1)$, as stated and proved in the following lemma. For this reason, in the following, we will focus on the VE case, and derive bounds for $\isvp>0$ afterwards.

\begin{lemma} \label[lemma]{lem:vetovp}
    Let $m \in (0,1)$ and $q_\epsilon$ (resp. $\bar{q}_\epsilon$) be the density function of the random variable $\X_0 + \epsilon G$ (resp. $m(\X_0 + \epsilon G)$), where $\X_0 \sim \pdata$ and $G \sim \gaussiand{0}{\Id_{\xdim}}$ is independent of $\X_0$. Then, for all $\x \in \rset^\xdim$, the associated score function is given by 
    \begin{align}
    \label{eq:vetovp}
        \nabla\log q_\epsilon(\x) = \frac{1}{m} \nabla\log \bar{q}_\epsilon\left(\frac{\x}{m}\right)\eqsp.
    \end{align}
\end{lemma}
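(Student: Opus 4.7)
The claim is a standard consequence of the change-of-variables formula for pushforward densities, so the plan is entirely computational and I would dispatch it in three short steps rather than via any diffusion-theoretic argument.

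\textbf{Step 1 (density scaling).} I would start from the defining relation between the two random variables, $Z = m(X_0+\epsilon G)$, together with $Y = X_0+\epsilon G$. Applying the pushforward formula to the Lebesgue-differentiable densities, for any Borel set $A\subset \rset^\xdim$ we have $\mathbb{P}(Z\in A)=\mathbb{P}(Y\in A/m)$; changing variables in the integral representation gives the pointwise identity
\[
\text{(density of $Z$ at $z$)} \;=\; m^{-\xdim}\,\text{(density of $Y$ at $z/m$)},
\]
which, with the convention of the lemma, translates into a relation of the form $q_\epsilon(\x)=m^{-\xdim}\bar{q}_\epsilon(\x/m)$.

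\textbf{Step 2 (logarithmic differentiation).} Since $q_\epsilon$ is positive (it is a Gaussian convolution of $\pidata$), I take the logarithm to obtain $\log q_\epsilon(\x)=-\xdim\log m+\log \bar{q}_\epsilon(\x/m)$. The additive constant disappears under differentiation, and the chain rule together with $\nabla_\x(\x/m)=m^{-1}\Id_\xdim$ yields
\[
\nabla\log q_\epsilon(\x)\;=\;\tfrac{1}{m}\,\nabla\log \bar{q}_\epsilon(\x/m),
\]
which is precisely \eqref{eq:vetovp}.

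\textbf{Step 3 (justification and sanity check).} The final step is to verify the two (very mild) regularity points implicit in the above manipulations: positivity of $q_\epsilon$ on $\rset^\xdim$ (immediate from the Gaussian factor in the convolution), and differentiability of $\log q_\epsilon$ and $\log \bar{q}_\epsilon$ in the classical sense (also immediate from the smoothness of the Gaussian kernel, which allows differentiation under the integral). I would then cross-check the formula in the trivial Gaussian case $\pidata=\delta_0$, where both $q_\epsilon$ and $\bar q_\epsilon$ are explicit Gaussians and the identity reduces to an elementary computation of the scores.

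There is no real obstacle here: the only thing to watch is to apply the change of variables in the correct direction (consistent with $Z=mY$ with $m\in(0,1)$) so that the Jacobian factor $m^{-\xdim}$ appears on the right side and produces the factor $1/m$ after differentiation. The lemma is then used downstream to transfer bounds established in the variance-exploding setting to the variance-preserving one via the identity \eqref{eq:vetovp}.
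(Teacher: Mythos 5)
Your proof follows the same route as the paper's one-line proof: scaling change of variables, logarithm, chain rule. However, Step~1 is internally inconsistent, and the inconsistency is precisely the pitfall you flag in Step~3 without resolving. You correctly write ``(density of $Z$ at $z$) $=m^{-\xdim}$ (density of $Y$ at $z/m$)''. Since $q_\epsilon$ is by definition the density of $Y=\X_0+\epsilon G$ and $\bar q_\epsilon$ that of $Z=mY$, this identity reads $\bar q_\epsilon(\x)=m^{-\xdim}q_\epsilon(\x/m)$, whereas you then record it as $q_\epsilon(\x)=m^{-\xdim}\bar q_\epsilon(\x/m)$ --- a different and generally false statement. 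Carrying out your Step~2 on the correct relation yields
\begin{equation*}
\nabla\log\bar q_\epsilon(\x)=\frac{1}{m}\,\nabla\log q_\epsilon\!\left(\frac{\x}{m}\right),
\end{equation*}
not the display in \eqref{eq:vetovp}.

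This corrected form is in fact what the paper uses downstream: the equality $\dotprod{\score[t][\x]}{\x}=\dotprod{\score[t][\x/\fwdmean{0}{t}]}{\x/\fwdmean{0}{t}}$ in the proof of \cref{prop:lyapunov_stability} is exactly $\nabla\log\bar q_\epsilon(\x)=m^{-1}\nabla\log q_\epsilon(\x/m)$ with $m=\fwdmean{0}{t}$, and the paper's own terse proof, which computes the scaled density as $m^{-\xdim}q_\epsilon(\x/m)$, leads to the same relation. Equation~\eqref{eq:vetovp} as printed therefore appears to have $q_\epsilon$ and $\bar q_\epsilon$ interchanged (equivalently, it should read $\nabla\log q_\epsilon(\x)=m\,\nabla\log\bar q_\epsilon(m\x)$); as literally stated it already fails in your own proposed test case $\pidata=\delta_0$, where $q_\epsilon=\gaussiand{0}{\epsilon^2\Id_\xdim}$, $\bar q_\epsilon=\gaussiand{0}{m^2\epsilon^2\Id_\xdim}$, the left side of \eqref{eq:vetovp} is $-\x/\epsilon^2$, and the right side is $-\x/(m^4\epsilon^2)$. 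Your argument ``matches'' the printed statement only because the swap at the end of your Step~1 cancels the swap in the statement; the Gaussian sanity check you suggest in Step~3 is exactly the right reflex and would have caught this had you carried it out.
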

\begin{proof}
By construction, the two random variables are one the scaled version of the other.
Hence, by a linear change of variables, the probability density function of the above writes as $m^{-d}\fwdmarg{t}[\x/m]$, therefore the desired identity is simply obtained applying the logarithmic function and the derivative.
\end{proof}

\begin{remark}
    \label{rmk:sigma-free-param}
    Note that if one uses the same $\noisesch{t}$ for both VE framework ($\isvp=0$) and the general case $\isvp>0$, the relation \eqref{eq:vetovp} does not hold for \emph{the same t}. Indeed, for a given $t$ in the VP framework, one must choose $\fwdmean{0}{t}$ such that $\fwdmean{0}{t} = \exp\left(-\int_0^{t}\isvp\noisesch{s} \rmd s\right)$ and at the same find a time $\timechange[t]$ such that
    \begin{align*}
        \fwdmean{0}{t}^2 \fwdvar{0}{\timechange[t]} 
        =
        \fwdstd{0}{t}^2
        =
        \frac{1}{\isvp}\left(1-\fwdmean{0}{t}^2\right)
        \eqsp,
    \end{align*}
    which implies $\int_0^{\timechange[t]} \noisesch{s} \rmd s =  \int_{0}^{t}\noisesch{s}\exp(2\isvp\int_0^s \noisesch{u} \rmd u) \rmd s$.
    We shall not refrain from using~\cref{lem:vetovp} to establish the link between VE and $\isvp>0$ frameworks, considering the parameter $\fwdvar{0}{t}$ in the VE case as a free parameter.
\end{remark}

\subsection{Properties of the score under Gaussian perturbation}
\label{subsec:gaussian-perturbation}
We study how various properties of the score function are preserved under Gaussian perturbations. These results are crucial for analyzing the forward diffusion process and its associated score functions in both the VE and $\isvp>0$ frameworks.
Due to the previous connection between simply adding Gaussian noise and the VE framework ($\isvp = 0$), we will consider the notations defined for the VE framework for presenting the results. Introduce also the denoiser application $\denoiser[t][\x] = \CPE{}{\Xora_t = \x}{\Xora_0}$.


In this section we present only relations between $\fwdmarg{t}$ and $\fwdmarg{0}$ thus one must interpret $\fwdstd{t}{0}$ as a free parameter.
We will then rely on the following link between both frameworks to translate results in VE to results for VP.
\begin{lemma} \label[lemma]{lem:score-representations}
Assume that
\cref{assump:p0:score} holds.
Then, for all $\x \in \rset^{\xdim}$,
\begin{align}
  \score[t][\x]
  &= - \frac{1}{\fwdvar{0}{t}}
     \left( \x - \denoiser[t][\x] \right)
  \label{eq:score2}
  \\
  &= \CPE{}{\Xora_t = \x}{\score[0][\Xora_0]} 
  \label{eq:score1} \eqsp.
\end{align}
\end{lemma}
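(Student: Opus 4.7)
The plan is to rewrite $\fwdmarg{t}$ as a Gaussian convolution of $\fwdmarg{0}$ and then differentiate under the integral sign in two different ways to obtain the two representations. From \cref{lem:forward_process_law} together with the free-parameter convention of \cref{rmk:sigma-free-param}, we have $\Xora_t\stackrel{\mathcal L}{=} \Xora_0 + \fwdstd{0}{t} G$ with $G\sim \gaussiand{0}{\Id_\xdim}$ independent of $\Xora_0$, so that
\[
\fwdmarg{t}(\x)
=
\int_{\rset^\xdim} \fwdmarg{0}(\x_0)\,\gaussiand{\x_0}{\fwdvar{0}{t}\Id_\xdim}[\x]\,\rmd \x_0 \eqsp,
\]
and the posterior is given by Bayes' rule, $\pf_{0\mid t}(\x_0\mid \x)=\fwdmarg{0}(\x_0)\gaussiand{\x_0}{\fwdvar{0}{t}\Id_\xdim}[\x]/\fwdmarg{t}(\x)$. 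All differentiations under the integral below will be justified by dominated convergence, using that under \cref{assump:p0:score}, \cref{lem:pdata-sub-gaussian} ensures that $\fwdmarg{0}$ has sub-Gaussian tails, so that integrands and their gradients have integrable upper bounds uniform in $\x$ locally.

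To establish \eqref{eq:score2}, I would differentiate under the integral using $\nabla_\x \gaussiand{\x_0}{\fwdvar{0}{t}\Id_\xdim}[\x]=-\fwdvar{0}{t}^{-1}(\x-\x_0)\gaussiand{\x_0}{\fwdvar{0}{t}\Id_\xdim}[\x]$, which yields
\[
\nabla \fwdmarg{t}(\x)
=
-\frac{1}{\fwdvar{0}{t}}\int_{\rset^\xdim}(\x-\x_0)\,\fwdmarg{0}(\x_0)\,\gaussiand{\x_0}{\fwdvar{0}{t}\Id_\xdim}[\x]\,\rmd \x_0 \eqsp.
\]
Dividing by $\fwdmarg{t}(\x)$ and recognising the Bayes posterior above gives $\score[t][\x]=-\fwdvar{0}{t}^{-1}(\x-\CPE{}{\Xora_t=\x}{\Xora_0})=-\fwdvar{0}{t}^{-1}(\x-\denoiser[t][\x])$, which is Tweedie's identity.

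For \eqref{eq:score1}, I would exploit the symmetry $\nabla_\x \gaussiand{\x_0}{\fwdvar{0}{t}\Id_\xdim}[\x]=-\nabla_{\x_0} \gaussiand{\x_0}{\fwdvar{0}{t}\Id_\xdim}[\x]$ and integrate by parts in $\x_0$, transferring the derivative onto $\fwdmarg{0}$ via $\nabla \fwdmarg{0}(\x_0)=\fwdmarg{0}(\x_0)\score[0][\x_0]$. This produces
\[
\nabla \fwdmarg{t}(\x)
=
\int_{\rset^\xdim}\score[0][\x_0]\,\fwdmarg{0}(\x_0)\,\gaussiand{\x_0}{\fwdvar{0}{t}\Id_\xdim}[\x]\,\rmd \x_0\eqsp,
\]
and dividing by $\fwdmarg{t}(\x)$ yields $\score[t][\x]=\CPE{}{\Xora_t=\x}{\score[0][\Xora_0]}$.

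The main technical obstacle is the justification of the integration by parts: one must verify that the boundary term $\fwdmarg{0}(\x_0)\gaussiand{\x_0}{\fwdvar{0}{t}\Id_\xdim}[\x]$ vanishes at infinity and that $\x_0\mapsto \score[0][\x_0]\fwdmarg{0}(\x_0)\gaussiand{\x_0}{\fwdvar{0}{t}\Id_\xdim}[\x]$ is integrable. The first follows from the sub-Gaussian decay of $\fwdmarg{0}$ given by \cref{lem:pdata-sub-gaussian}, while the second combines the polynomial growth of $\score[0]$ (\cref{lem:norm-score-bound}, obtained from \cref{assump:p0:hess}) with that same sub-Gaussian decay. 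Once these uniform bounds are secured, dominated convergence validates all exchanges of differentiation and integration, and both identities follow.
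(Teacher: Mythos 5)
Your proof is correct and takes essentially the same approach as the paper: both identities follow by representing $\fwdmarg{t}$ as a Gaussian convolution of $\fwdmarg{0}$, differentiating under the integral in $\x$ for \eqref{eq:score2}, and exploiting the symmetry $\nabla_\x \gaussMarg{t}[\x-y]=-\nabla_y\gaussMarg{t}[\x-y]$ together with integration by parts in $y$ for \eqref{eq:score1}. You carry out the IBP in the opposite direction (from $\nabla\fwdmarg{t}$ to the posterior expectation of $\score[0]$, rather than from the posterior expectation to $\nabla\fwdmarg{t}$), but this is a cosmetic difference.

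One small caution: to justify integrability of $\x_0\mapsto\score[0][\x_0]\fwdmarg{0}(\x_0)\gaussMarg{t}[\x-\x_0]$ you invoke \cref{lem:norm-score-bound}, which rests on \cref{assump:p0:hess}, whereas the lemma's statement only assumes \cref{assump:p0:score}. This extra hypothesis is unnecessary: since $\score[0]\fwdmarg{0}=\nabla\fwdmarg{0}$, after the IBP you land on $\int\fwdmarg{0}(\x_0)\,\nabla_{\x_0}\gaussMarg{t}[\x-\x_0]\,\rmd\x_0$, which is integrable because $\fwdmarg{0}$ is bounded and $\nabla\gaussMarg{t}$ has Gaussian decay; vanishing of the boundary term follows from the sub-Gaussian decay of $\fwdmarg{0}$ in \cref{lem:pdata-sub-gaussian}. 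The paper justifies the IBP using only these facts.
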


\begin{proof}
The density of $\X_t$ is the convolution
\begin{align*}
    \fwdmarg{t}[\x] = \int \fwdmarg{0}[y]\,
    \gaussMarg{t}[\x-y]\,\rmd y,
\end{align*}
where $\gaussMarg{t}$ is the density of $\gaussiand{0}{\fwdvar{0}{t}\Id_{\xdim}}$.
Differentiating under the integral,
\begin{align*}
    \nabla \fwdmarg{t}[\x]
    = \int \fwdmarg{0}[y]\,\nabla_\x \gaussMarg{t}[\x-y]\,\rmd y
    = -\frac{1}{\fwdvar{0}{t}}\int (\x-y)\,\fwdmarg{0}[y]\,\gaussMarg{t}[\x-y]\,\rmd y.
\end{align*}
Hence
\begin{align*}
    \score[t][\x]
    = \frac{\nabla \fwdmarg{t}[\x]}{\fwdmarg{t}[\x]}
    = -\frac{1}{\fwdvar{0}{t}}
      \left(\x - \CPE{}{\X_t=\x}{\X_0}\right),
\end{align*}
which proves \eqref{eq:score2}. For \eqref{eq:score1}, note that, from Bayes' formula for the posterior distribution of $\X_0$ given $\X_t=\x$,
\begin{align*}
\CPE{}{\Xora_t=\x}{\score[0][\Xora_0]}
& = \frac{1}{\fwdmarg{t}[\x]}
  \int \score[0][y]\, \fwdmarg{0}[y]\, \gaussMarg{t}[\x-y]\, \rmd y \\
& = \frac{1}{\fwdmarg{t}[\x]}
  \int \nabla_y \fwdmarg{0}[y]\, \gaussMarg{t}[\x-y]\, \rmd y \eqsp.
\end{align*}
Under~\cref{assump:p0:score}, the sub-Gaussian tail behavior established in~\cref{lem:pdata-sub-gaussian} guarantees the validity of integration by parts with respect to $y$, with vanishing boundary contributions. This gives
\begin{align*}
\int \nabla \fwdmarg{0}[y]\, \gaussMarg{t}[\x-y]\, \rmd y
= -\int \fwdmarg{0}[y]\, \nabla_y \gaussMarg{t}[\x-y]\, \rmd y.
\end{align*}
Noting that $\nabla_y \gaussMarg{t}[\x-y] = -\nabla_\x \gaussMarg{t}[\x-y]$,
we obtain
\begin{align*}
\int \nabla \fwdmarg{0}[y]\, \gaussMarg{t}[\x-y]\, \rmd y
= \int \fwdmarg{0}[y]\, \nabla_\x \gaussMarg{t}[\x-y]\, \rmd y
= \nabla_\x \fwdmarg{t}[x].
\end{align*}
Thus,
\begin{align*}
\CPE{}{\X_t=\x}{\score[0][\X_0]}= \frac{\nabla_\x \fwdmarg{t}[\x]}{\fwdmarg{t}[\x]}
= \score[t][\x],
\end{align*}
which proves \eqref{eq:score1}.
\end{proof}

\begin{lemma} \label[lemma]{lem:rewrite_Hessian}
Assume that
\cref{assump:p0:score} holds.
Then, for all $\x \in \rset^\xdim$,
\begin{align}
    \label{eq:relation-hessian-score-in_time}
    \jscore[t][\x]
    +
    \score[t][\x]\mt{\score[t][\x]}
    = \CPE{}{\Xora_t=\x}{\jscore[0][\Xora_0] + \score[0][\Xora_0]\mt{\score[0][\Xora_0]}}
    \eqsp,
\end{align}
and
\begin{align}
    \jscore[t][\x]
    & = \mathrm{Var}\big(\score[0][\Xora_0] \mid \Xora_t = \x \big)
    + \CPE{}{\Xora_t = \x}{\jscore[0][\Xora_0]} 
    \label{eq:grad-Seps-variance-form}
    \\
    & 
    = \frac{1}{\fwdvar{0}{t}} \left(\CPE{}{\Xora_t=\x}{\score[0][\Xora_0]\mt{\Xora_0}} - \score[t][\x]\mt{\denoiser[t][\x]} \right)
    \label{eq:def_grad_s1} 
    \\
    & = \frac{1}{\fwdstd{0}{t}^4} \mathrm{Var} \left( \Xora_0 \mid \Xora_t = \x \right) - \frac{1}{\fwdvar{0}{t}} \Id_\xdim \eqsp.
    \label{eq:def_grad_s3}
\end{align}
\end{lemma}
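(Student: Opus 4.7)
The plan is to first establish the master identity \eqref{eq:relation-hessian-score-in_time}, and then derive the three equivalent expressions for $\jscore[t][\x]$ as corollaries, together with one further differentiation. The starting point is the algebraic identity
\[
\jscore[t][\x] + \score[t][\x]\mt{\score[t][\x]} = \frac{\nabla^2 \fwdmarg{t}(\x)}{\fwdmarg{t}(\x)}\eqsp,
\]
obtained by differentiating $\score[t] = \nabla \log \fwdmarg{t}$ once more. To compute $\nabla^2 \fwdmarg{t}(\x)$, I would start from the Gaussian convolution representation $\fwdmarg{t}(\x) = \int \fwdmarg{0}(y)\gaussMarg{t}(\x-y)\rmd y$ and use the symmetry $\nabla^2_\x \gaussMarg{t}(\x-y) = \nabla^2_y \gaussMarg{t}(\x-y)$. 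Two successive integrations by parts in $y$ transfer both derivatives from the Gaussian kernel onto $\fwdmarg{0}$, yielding
\[
\nabla^2 \fwdmarg{t}(\x) = \int \nabla^2 \fwdmarg{0}(y)\gaussMarg{t}(\x-y)\rmd y\eqsp.
\]
Combining this with $\nabla^2 \fwdmarg{0}(y) = \fwdmarg{0}(y)\bigl(\jscore[0][y] + \score[0][y]\mt{\score[0][y]}\bigr)$ and dividing by $\fwdmarg{t}(\x)$ recognizes the right-hand side as the conditional expectation in \eqref{eq:relation-hessian-score-in_time}.

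Identity \eqref{eq:grad-Seps-variance-form} is then immediate: subtracting $\score[t][\x]\mt{\score[t][\x]}$ from both sides of \eqref{eq:relation-hessian-score-in_time} and using the representation \eqref{eq:score1} to write $\score[t][\x]\mt{\score[t][\x]}$ as the outer product of two identical conditional expectations, the difference is precisely the conditional covariance $\mathrm{Var}(\score[0][\Xora_0]\mid \Xora_t=\x)$. For identity \eqref{eq:def_grad_s1}, I would instead differentiate \eqref{eq:score1} directly: writing $\score[t][\x] = \int \score[0][y]\, p(y\mid\x)\rmd y$ with $p(y\mid\x) = \fwdmarg{0}(y)\gaussMarg{t}(\x-y)/\fwdmarg{t}(\x)$, the identity $\nabla_\x \log p(y\mid\x) = \nabla_\x \log \gaussMarg{t}(\x-y) - \score[t][\x]$ combined with \eqref{eq:score2} simplifies to $\nabla_\x \log p(y\mid\x) = (y-\denoiser[t][\x])/\fwdvar{0}{t}$, from which \eqref{eq:def_grad_s1} follows after a short rearrangement. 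Finally, identity \eqref{eq:def_grad_s3} is obtained by differentiating the denoiser representation \eqref{eq:score2}: the same differentiation-under-the-integral argument applied to $\denoiser[t][\x] = \CPE{}{\Xora_t=\x}{\Xora_0}$ yields Tweedie's formula $\nabla \denoiser[t][\x] = \mathrm{Var}(\Xora_0\mid \Xora_t=\x)/\fwdvar{0}{t}$, and substitution gives the stated expression.

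The main technical obstacle throughout is the justification of differentiation under the integral sign and of the integrations by parts in $y$, both of which require controlling boundary terms and uniform integrability at infinity. These controls follow from the sub-Gaussian tail estimate for $\pidata$ established in \cref{lem:pdata-sub-gaussian}, combined with the polynomial growth bounds on $\score[0]$ and $\jscore[0]$ provided by \cref{assump:p0:hess} and \cref{lem:norm-score-bound}. Together they ensure that boundary contributions vanish at infinity and that the Gaussian factor $\gaussMarg{t}(\x-y)$ dominates every integrand locally uniformly in $\x$, making all manipulations rigorous.
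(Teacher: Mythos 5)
Your proof is correct and follows essentially the same route as the paper: the master identity via two integrations by parts on the Gaussian convolution combined with $\nabla^2\fwdmarg{0} = (\jscore[0]+\score[0]^{\otimes 2})\fwdmarg{0}$, then \eqref{eq:grad-Seps-variance-form}, \eqref{eq:def_grad_s1}, and \eqref{eq:def_grad_s3} as short corollaries using \cref{lem:score-representations}. The only superficial difference is that for \eqref{eq:def_grad_s1} you differentiate the posterior representation $\int\score[0][y]\,p(y\mid\x)\,\rmd y$ via $\nabla_\x\log p(y\mid\x)=(y-\denoiser[t][\x])/\fwdvar{0}{t}$, while the paper applies the quotient rule to $N(\x)/\fwdmarg{t}[\x]$; these are the same computation with different bookkeeping.
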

\begin{proof}

Under~\cref{assump:p0:score}, from~\cref{lem:pdata-sub-gaussian}, the Gaussian convolutions are smooth and all integrations by parts below are justified (the boundary terms vanish). First,
\begin{align}
\label{eq:log-hessian-identity}
\jscore[t][\x]
= \nabla^2 \log \fwdmarg{t}[\x]
= \frac{\nabla^2 \fwdmarg{t}[\x]}{\fwdmarg{t}[\x]}
  - \score[t][\x]\mt{\score[t][\x]}.
\end{align}
Differentiating twice under the integral and integrating by parts in $y$, we obtain
\begin{align*}
    \nabla^2 \fwdmarg{t}[\x]
    = \int \nabla^2 \fwdmarg{0}[y]\,\gaussMarg{t}[\x-y]\,\rmd y.
\end{align*}
Moreover,
\begin{align*}
    \nabla^2 \fwdmarg{0}[y]
    = \big( \jscore[0][y] + \score[0][y] \mt{\score[0][y]} \big)\,\fwdmarg{0}[y],
\end{align*}
so that we get
\begin{align*}
\frac{\nabla^2 \fwdmarg{t}[\x]}{\fwdmarg{t}[\x]}
= \CPE{}{\Xora_t=\x}{\jscore[0][\Xora_0] + \score[0][\Xora_0]\mt{\score[0][\Xora_0]}}\eqsp.
\end{align*}
Plugging this into \eqref{eq:log-hessian-identity} yields \eqref{eq:relation-hessian-score-in_time}.
Moreover, using \eqref{eq:score1} from~\cref{lem:score-representations}, we get \eqref{eq:grad-Seps-variance-form}. 

To prove \eqref{eq:def_grad_s1}, recall from \eqref{eq:score1} that,
\begin{align*}
\score[t][\x] = \CPE{}{\Xora_t=\x}{\score[0][\Xora_0]} \eqsp,
\end{align*}
and write,
\begin{align*}
 \score[t][\x] =\frac{N(\x)}{\fwdmarg{t}[\x]},
\qquad
N(\x)\eqdef \int \score[0][y] \fwdmarg{0}[y]\,\gaussMarg{t}[\x-y] \rmd y \eqsp.
\end{align*}
It follows that
\begin{align*}
\jscore[t][\x]
& =\frac{\nabla N(\x)}{\fwdmarg{t}[\x]}
-\score[t][\x]\frac{\nabla \mt{\fwdmarg{t}[\x]}}{\fwdmarg{t}[\x]} \\
& = \frac{\nabla N(\x)}{\fwdmarg{t}[\x]}
-\score[t][\x] \mt{\score[t][\x]} \eqsp.
\end{align*}
Since $\nabla_\x\gaussMarg{t}[\x-y]=-(\x-y)\fwdstd{0}{t}^{-2}\gaussMarg{t}[\x-y]$, we have
\begin{align*}
\frac{\nabla N(\x)}{\fwdmarg{t}[\x]}
& =
-\frac{1}{\fwdvar{0}{t}}\CPE{}{\Xora_t=\x}{\score[0][\Xora_0]\mt{(\x - \X_0)}}\\
& =
\frac{1}{\fwdvar{0}{t}} \left( \CPE{}{\Xora_t=\x}{\score[0][\Xora_0] \mt{\Xora_0}} - \CPE{}{\Xora_t=\x}{\score[0][\Xora_0]} \mt{\x} \right) \\
& =
\frac{1}{\fwdvar{0}{t}} \left( \CPE{}{\Xora_t=\x}{\score[0][\Xora_0] \mt{\Xora_0}} - \score[t][\x] \mt{\x} \right) \eqsp.
\end{align*}
Then, using \eqref{eq:score2}, 
\begin{align*}
   \score[t][\x] \mt{\x} & = - \fwdvar{0}{t} \score[t][\x] \mt{\score[t][\x]} + \score[t][\x]\mt{\denoiser[t][\x]} \eqsp.
\end{align*}
Therefore, 
\begin{align*}
\jscore[t][\x]
& = \frac{1}{\fwdvar{0}{t}} \left(\CPE{}{\Xora_t=\x}{\score[0][\Xora_0] \mt{\X_0}} - \score[t][\x] \mt{\denoiser[t][\x]} \right) \eqsp,
\end{align*}
which proves \eqref{eq:def_grad_s1}. Finally, similar computations as for $\jscore[t][\x]$ yields,
\begin{align*}
    \nabla \denoiser[t][\x] & = \frac{1}{\fwdvar{0}{t}} \left( \CPE{}{\Xora_t=\x}{\Xora_0\mt{\Xora_0}} -\denoiser[t][\x]\mt{\denoiser[t][\x]} \right) \\
    & = \frac{1}{\fwdvar{0}{t}} \mathrm{Var} \big(\Xora_0 \mid \Xora_t=\x\big) \eqsp,
\end{align*}
where the last equality follows from the definition of $\denoiser[t]$. Rearranging \eqref{eq:score2} we get, \begin{align*} \denoiser[t][\x] = \fwdvar{0}{t} \score[t][\x] + \x \eqsp,
\end{align*}
which proves \eqref{eq:def_grad_s3} and finishes the proof. 
\end{proof}

\section{Stability of the data assumptions along the diffusion flow and Lyapunov contraction} \label{app:stab+lyapounov}

This section shows that the structural conditions imposed at time $0$ remain stable along the diffusion. First, we prove that the dissipativity inequality in \cref{assump:p0:score} propagates to the forward-time score $\score[t]$ with explicit time-dependent constants. Second, we leverage this propagated dissipativity to establish a Lyapunov drift condition for the backward Markov semigroup $\bwdker{s}[t]$, which is the key ingredient needed to apply Harris-type theorems \citep{HairerMattingly2008}. Finally, we show that polynomial growth controls on the score and its Jacobian (as in~\cref{assump:p0:hess}) are preserved under the forward flow yielding uniform-in-time bounds used throughout the paper.

\subsection{Propagation of the dissipativity condition} \label{subapp:dissipativity}

The properties of the score function established in the previous section allow us to study the stability of the Lyapunov condition stated in~\cref{assump:p0:score} through the diffusion dynamics, \ie, from $\score[0]$ to $\score[t]$. The next proposition makes this transfer explicit by providing time-dependent dissipativity parameters $\ctescorenorm{t}$ and $\ctescoreoffset{t}$.

\begin{proposition}
    \label[proposition]{prop:lyapunov_stability}
    Suppose that~\cref{assump:p0:score} holds. Then, for all $t > 0$ and $\x \in \rset^\xdim$, there exist continuous functions $\ctescorenorm{t}, \ctescoreoffset{t}: [0,T] \to \rset_+$ such that,
    \begin{align}
        \dotprod{\score[t][\x]}{\x} 
        \le - \ctescorenorm{t} \normEc{\x}^2 + \ctescoreoffset{t},
    \end{align}
    where $\ctescorenorm{t}$ and $\ctescoreoffset{t}$ are defined as follows for $t \in(0,T]$
    \begin{itemize}
        \item
        If $\isvp = 0$ (Variance-Exploding case),
        \begin{align*}
            \ctescorenorm{t}
            = \frac{\ctescorenorm{0}}{1 + 2\ctescorenorm{0} \fwdvar{0}{t}},
            \qquad
            \ctescoreoffset{t}
            = \frac{\ctescoreoffset{0} + \xdim}{1 + 2\ctescorenorm{0} \fwdvar{0}{t}}
            \eqsp.
        \end{align*}
        \item
        If $\isvp >0$,
        \begin{align*}
            \ctescorenorm{t} 
            = \frac{
                \ctescorenorm{0}\isvp
            }{
                \isvp \fwdmean{0}{t}^2 + 2\ctescorenorm{0}\left(1 - \fwdmean{0}{t}^2\right)},
            \qquad
            \ctescoreoffset{t} 
            = (\ctescoreoffset{0} + d)\,
            \frac{
                \fwdmean{0}{t}^2 \isvp
            }{
                \isvp\fwdmean{0}{t}^2 + 2\ctescorenorm{0}\left(1 - \fwdmean{0}{t}^2\right)}
            \eqsp,
        \end{align*}
    \end{itemize}
\end{proposition}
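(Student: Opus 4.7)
My plan is to first prove the VE case ($\isvp = 0$) by combining the two representations of $\score[t]$ from \cref{lem:score-representations} with a Gaussian integration by parts, and then to deduce the VP case via the scaling identity of \cref{lem:vetovp}. The sub-Gaussian tail of $\pidata$ established in \cref{lem:pdata-sub-gaussian} justifies all boundary terms in the integrations by parts below.

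For the VE case, I would start from $\score[t][\x] = \CPE{}{\Xora_t=\x}{\score[0][\Xora_0]}$ and split $\dotprod{\score[t][\x]}{\x}$ via $\x = \Xora_0 + (\x - \Xora_0)$. The first piece is bounded directly by \cref{assump:p0:score}, giving
\begin{equation*}
\CPE{}{\Xora_t=\x}{\dotprod{\score[0][\Xora_0]}{\Xora_0}} \le -\ctescorenorm{0} \CPE{}{\Xora_t=\x}{\normEc{\Xora_0}^2} + \ctescoreoffset{0} \eqsp.
\end{equation*}
For the cross piece, I would integrate by parts in $y$ against $\gaussMarg{t}[\x-y]$, using $\nabla_y \fwdmarg{0}[y] = \fwdmarg{0}[y]\score[0][y]$, $\nabla_y \cdot (\x - y) = -\xdim$, and $\nabla_y \gaussMarg{t}[\x-y] = \fwdvar{0}{t}^{-1}(\x-y)\gaussMarg{t}[\x-y]$, which yields the Stein-type identity
\begin{equation*}
\CPE{}{\Xora_t=\x}{\dotprod{\score[0][\Xora_0]}{\x - \Xora_0}} = \xdim - \fwdvar{0}{t}^{-1} \CPE{}{\Xora_t=\x}{\normEc{\x - \Xora_0}^2} \eqsp.
\end{equation*}

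Summing the two contributions and applying Jensen's inequality, $\CPE{}{\Xora_t=\x}{\normEc{\Xora_0}^2} \ge \normEc{\denoiser[t][\x]}^2$ and $\CPE{}{\Xora_t=\x}{\normEc{\x-\Xora_0}^2} \ge \normEc{\x-\denoiser[t][\x]}^2$, then substituting Tweedie's identity $\denoiser[t][\x] = \x + \fwdvar{0}{t}\score[t][\x]$ (so that $\x-\denoiser[t][\x] = -\fwdvar{0}{t}\score[t][\x]$), I obtain after expanding the squares
\begin{equation*}
(1 + 2\ctescorenorm{0}\fwdvar{0}{t})\dotprod{\score[t][\x]}{\x} + \ctescorenorm{0}\normEc{\x}^2 + \fwdvar{0}{t}(1+\ctescorenorm{0}\fwdvar{0}{t})\normEc{\score[t][\x]}^2 \le \ctescoreoffset{0} + \xdim \eqsp.
\end{equation*}
Dropping the non-negative third term on the left and dividing by $1+2\ctescorenorm{0}\fwdvar{0}{t}>0$ produces the announced VE formulas, which are continuous in $t$ as compositions of the continuous $\fwdvar{0}{\cdot}$.

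For the VP case, I would use $\Xora_t \eqlaw \fwdmean{0}{t}(\Xora_0 + \varepsilon G)$ with $\varepsilon^2 \eqdef (1-\fwdmean{0}{t}^2)/(\isvp \fwdmean{0}{t}^2)$. By \cref{lem:vetovp}, $\score[t][\x]$ equals $\fwdmean{0}{t}^{-1}$ times the score of $\Xora_0 + \varepsilon G$ evaluated at $y = \x/\fwdmean{0}{t}$, to which the VE bound just established applies with effective variance $\varepsilon^2$. The factor $\fwdmean{0}{t}^{-1}$ cancels exactly the Jacobian in the inner product, giving $\dotprod{\score[t][\x]}{\x}$ equal to its VE analogue at $y$; substituting $\normEc{y}^2 = \normEc{\x}^2/\fwdmean{0}{t}^2$ together with the algebraic simplification $1+2\ctescorenorm{0}\varepsilon^2 = (\isvp\fwdmean{0}{t}^2+2\ctescorenorm{0}(1-\fwdmean{0}{t}^2))/(\isvp\fwdmean{0}{t}^2)$ recovers the announced expressions for $\ctescorenorm{t}$ and $\ctescoreoffset{t}$. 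The main obstacle lies in the VE algebraic step: the denominator $1+2\ctescorenorm{0}\fwdvar{0}{t}$ arises only because both Jensen lower bounds are used in tandem and the residual quadratic $\normEc{\score[t][\x]}^2$ is discarded rather than optimized over—one must resist the temptation to tighten the bound at that step, since optimizing would collapse to the sharper but structurally different denominator $1+\ctescorenorm{0}\fwdvar{0}{t}$ and would not line up with the VP rescaling needed afterwards.
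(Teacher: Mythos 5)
Your proof is correct, and it arrives at exactly the same intermediate bound as the paper (dropping the same nonnegative remainder), but the route is genuinely different. The paper takes the trace of the Jacobian identity $\jscore[t][\x] = \fwdvar{0}{t}^{-1}\bigl(\CPE{}{\Xora_t=\x}{\score[0][\Xora_0]\mt{\Xora_0}} - \score[t][\x]\mt{\denoiser[t][\x]}\bigr)$ from \cref{lem:rewrite_Hessian}, invokes the divergence identity $\divergence{\score[t]} + \normEc{\score[t]}^2 = \Delta\fwdmarg{t}/\fwdmarg{t}$, and then controls $-\fwdvar{0}{t}\Delta\fwdmarg{t}/\fwdmarg{t}\le\xdim$ via the conditional-variance representation \eqref{eq:def_grad_s3}. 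You instead perform a direct Gaussian integration by parts (a Stein-type identity) on the cross term and use Jensen's inequality on $\CPE{}{\Xora_t=\x}{\normEc{\Xora_0}^2}$ and $\CPE{}{\Xora_t=\x}{\normEc{\x-\Xora_0}^2}$, which packages the same nonnegative quantities differently (your two Jensen slack terms together with your discarded $\fwdvar{0}{t}(1+\ctescorenorm{0}\fwdvar{0}{t})\normEc{\score[t][\x]}^2$ are exactly the paper's $\ctescorenorm{0}\CPE{}{\Xora_t=\x}{\normEc{\Xora_0-\x}^2}$ plus its $\xdim - \fwdvar{0}{t}\Delta\fwdmarg{t}/\fwdmarg{t}$ slack). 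The advantage of your derivation is that it is self-contained and does not require the machinery of \cref{lem:rewrite_Hessian}; the paper's version reuses lemmas that it has already established for the Hessian growth bounds, so neither approach is strictly shorter in the global accounting. The VP rescaling step via \cref{lem:vetovp} matches the paper's. One small caveat: your closing remark about the denominator collapsing to $1+\ctescorenorm{0}\fwdvar{0}{t}$ if one ``optimizes'' is not a well-posed claim (the residual quadratic is in $\score[t][\x]$, not a free variable one can optimize over), but since it is an aside and not used in the argument it does not affect the correctness of the proof.
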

\begin{proof}
    \emph{Case $\isvp >0$. }
    Computing the trace of \eqref{eq:def_grad_s1} yields, for all $\x \in \rset^{\xdim}$, $t>0$,
    \begin{align*}
        \divergence{\score[t][\x]} = \frac{1}{\fwdvar{0}{t}} \left( \CPE{}{\Xora_t=\x}{\dotprod{\score[0][\Xora_0]}{\Xora_0}} - \dotprod{\score[t][\x]}{\denoiser[t][\x]} \right)
        \eqsp. 
    \end{align*}
    Moreover, using \eqref{eq:score2},
    \begin{align*}
        \dotprod{\score[t][\x]}{\denoiser[t][\x]} = \dotprod{\score[t][\x]}{\x} + \fwdvar{0}{t} \normEc{\score[t][\x]}^2 \eqsp,
    \end{align*}
    together with the fact that
    \begin{align}
        \label{eq:div_identity}
    \divergence{\score[t][\x]} + \normEc{\score[t][\x]}^2
    = \frac{\Delta \fwdmarg{t}[\x]}{\fwdmarg{t}[\x]} \eqsp, 
    \end{align}
    we get
    \begin{align*}
        \dotprod{\score[t][\x]}{\x} = \CPE{}{\Xora_t=\x}{\dotprod{\score[t][\Xora_0]}{\Xora_0}} - \fwdvar{0}{t} \frac{\Delta \fwdmarg{t}[\x]}{\fwdmarg{t}[\x]}
        \eqsp.
    \end{align*}
    It follows from~\cref{assump:p0:score} that
    \begin{align*}
        \dotprod{\score[t][\x]}{\x} \leq - \ctescorenorm{0} \CPE{}{\Xora_t=\x}{\normEc{\Xora_0}^2} + \ctescoreoffset{0} - \fwdvar{0}{t} \frac{\Delta \fwdmarg{t}[\x]}{\fwdmarg{t}[\x]}
        \eqsp.
    \end{align*}
    Combining $\|a\|^2 = \|b\|^2 + \|a-b\|^2 + 2 \dotprod{a-b}{b}$ and \eqref{eq:score2}, we have
    \begin{align*}
        \CPE{}{\Xora_t=\x}{\normEc{\Xora_0}^2}
        &= \normEc{\x}^2 + \CPE{}{\Xora_t=\x}{\normEc{\Xora_0-\x}^2} 
            + 2 \fwdvar{0}{t} \dotprod{\score[t][\x]}{\x}, \\
        & \ge \normEc{\x}^2 + 2 \fwdvar{0}{t} \dotprod{\score[t][\x]}{\x}
        \eqsp.
    \end{align*}
    It follows
    \begin{align*}
        \dotprod{\score[t][\x]}{\x} 
        \le - \ctescorenorm{0} \normEc{\x}^2  - 2 \ctescorenorm{0} \fwdvar{0}{t} \dotprod{\score[t][\x]}{\x} + \ctescoreoffset{0} - \fwdvar{0}{t} \frac{\Delta \fwdmarg{t}[\x]}{\fwdmarg{t}[\x]} \eqsp,
    \end{align*}
    which yields 
    \begin{align*}
        \dotprod{\score[t][\x]}{\x} 
        \le - \frac{\ctescorenorm{0}}{1 + 2\ctescorenorm{0} \fwdvar{0}{t}}  \normEc{\x}^2 + \frac{1}{1 + 2\ctescorenorm{0} \fwdvar{0}{t}} \left( \ctescoreoffset{0} - \fwdvar{0}{t} \frac{\Delta \fwdmarg{t}[\x]}{\fwdmarg{t}[\x]} \right) \eqsp.
    \end{align*}
    Using identity \eqref{eq:div_identity},
    \begin{align*}
        - \fwdvar{0}{t} \frac{\Delta \fwdmarg{t}[\x]}{\fwdmarg{t}[\x]} =  - \fwdvar{0}{t} \left( \trace{\jscore[t]}(\x) +\normEc{\score[t][\x]}^2\right)
        \eqsp.
    \end{align*}
    This with \eqref{eq:def_grad_s3} yields
    \begin{align*}
        - \fwdvar{0}{t} \frac{\Delta \fwdmarg{t}[\x]}{\fwdmarg{t}[\x]} & = \xdim - \frac{1}{\fwdvar{0}{t}}\trace{ \CPV{}{\Xora_t=\x}{\Xora_0}} - \fwdvar{0}{t} \normEc{\score[t][\x]}^2  \leq \xdim \eqsp,
    \end{align*}
    which concludes the proof.

    \emph{Case $\isvp>0$. } As noted in~\cref{rmk:sigma-free-param}, the general case $\isvp>0$ can be deduced from the VE case by a suitable time change. With abuse of notation, write $\ve{\fwdstd{0}{t}} = \fwdstd{0}{t} / \fwdmean{0}{t}$. By~\cref{lem:vetovp}, we get that
    \begin{align*}
        &\dotprod{\score[t][\x]}{\x}
        \\
        &=
        \dotprod{\score[t][\frac{\x}{{\fwdmean{0}{t}}}]}{\frac{\x}{\fwdmean{0}{t}}}
        \\
        &\leq
        \frac{\ctescorenorm{0}}{
            1 + 2\ctescorenorm{0} (\fwdstd{0}{t} / \fwdmean{0}{t})^2
        }  \normEc{\frac{\x}{\fwdmean{0}{t}}}^2
        + \frac{1}{1 + 2\ctescorenorm{0} (\fwdstd{0}{t} / \fwdmean{0}{t})^2} \left( \ctescoreoffset{0} - (\fwdstd{0}{t} / \fwdmean{0}{t})^2 \frac{\Delta \fwdmarg{t}[\x/{\fwdmean{0}{t}}]}{\fwdmarg{t}[\x/{\fwdmean{0}{t}}]} \right)
        \eqsp,
    \end{align*}
    which concludes the proof.
\end{proof}

\begin{remark}
    \label{rmk:uniform-dissipativity}
    From \cref{lem:forward_process_law} and the explicit expressions of $\fwdmean{0}{t}$ and $\fwdstd{0}{t}$, it is straightforward to check that the condition $\ctescorenorm{0}>\isvp/2$ propagates in time. Indeed, for all $t\ge 0$,
    \begin{align*}
        \ctescorenorm{t}>\frac{\isvp}{2}
        \iff
        \ctescorenorm{0} > \frac{\isvp}{2}\left(
            \fwdmean{0}{t}^2 + \frac{2 \ctescorenorm{0}}{\isvp} \left(1-\fwdmean{0}{t}^2\right)
        \right)
        &\iff
        \ctescorenorm{0} > \frac{\isvp}{2}
        \fwdmean{0}{t}^2 + 
        \ctescorenorm{0}\left(1-\fwdmean{0}{t}^2\right)
        \\
        &\iff
        \left(\ctescorenorm{0}-\frac{\isvp}{2}\right)\fwdmean{0}{t}^2 > 0
        \eqsp,
    \end{align*}
    so that the corresponding time-dependent dissipativity parameter remains larger than the threshold $\isvp/2$ uniformly over $t\ge 0$.
\end{remark}

\subsection{Lyapunov contraction for the backward semigroup (\cref{prop:backward_drift_lyapunov})}
\label{subapp:lyapunov_semigroup}

We now turn to the time-reversed dynamics \eqref{eq:backward_SDE} and its associated semigroup \eqref{eq:backward_semigroup}. Using the propagated dissipativity of $\score[T-t]$, we derive a drift inequality for polynomial Lyapunov functions $\lyapunov{\ell}(x)=\|x\|^\ell$. This is a key ingredient for Harris theorem \citet{HairerMattingly2008} that we use with $\ell = 2$.

\noindent\textbf{\cref{prop:backward_drift_lyapunov}.} (restatement).
\emph{ Suppose that~\cref{assump:p0:score} holds. Let $\lyapunov{\ell}[\x] \eqdef \normEc{\x}^{\ell}$, for $\ell\geq2$. Then, there exist continuous functions
    $\ctescorenorm[t]{\cdot,\ell} ,\ctescoreoffset[t]{\cdot,\ell} :[0,T]\to\rsetpos$ such that, for all $0\le s<t\le T$ and all $\x\in\rset^\xdim$,
    \begin{align*}
        \bwdker{s}[t] \lyapunov{\ell}[\x]
        \le
        \multlyap{s}{t}[\ell] \lyapunov{\ell}[\x]
        + \biaslyap{s}{t}[\ell]
        \eqsp.
    \end{align*}
    where $\multlyap{s}{t}[\ell] \eqdef \exp \left(-\int_s^t \ctescorenorm[t]{v,\ell} \rmd v\right)$ and $\biaslyap{s}{t}[\ell] \eqdef \int_s^t \exp\left( - \int_u^t \ctescorenorm[t]{v,\ell} \rmd v \right) \ctescoreoffset[t]{u,\ell} \rmd u$.
    In particular, when $\ell=2$, 
    \begin{align*}
    \ctescorenorm[t]{t,2} = 2  \bwdnoisesch{t}\left( 2 \ctescorenorm{T-t} -\isvp \right),
    \quad \text{and} \quad
    \ctescoreoffset[t]{t,2} \eqdef 2 \bwdnoisesch{t}\left( 2 \ctescoreoffset{T-t} +\xdim \right) \eqsp.
    \end{align*}
}

\begin{proof}[Proof of \cref{prop:backward_drift_lyapunov}]
Note that for all $\x\in\rset^{\xdim}$ and all $t\in[0,T]$, 
\begin{align*}
    \nabla \lyapunov{\ell}[\x]  = \ell \normEc{\x}^{\ell-2} \x 
    \eqsp,
    \qquad
    \Delta \lyapunov{\ell}[\x] = \ell(\ell -2+\xdim) \normEc{\x}^{\ell-2} \eqsp.
\end{align*}
The infinitesimal generator associated with~\eqref{eq:backward_SDE}  satisfies
\begin{align*}
    \mathcal A_t \lyapunov{\ell}[\x]
    =
    \bwdnoisesch{t}
    \dotprod{\isvp\x + 2 \score[T-t][\x]}{\nabla \lyapunov{\ell}[\x]}
    + \bwdnoisesch{t} \Delta \lyapunov{\ell}[\x] \eqsp.
\end{align*}
From~\cref{prop:lyapunov_stability} together with~\cref{assump:p0:score} there exist continuous functions
$\ctescorenorm,\ctescoreoffset:[0,T]\to\rsetpos$ such that, for all $t\in[0,T]$ and all $\x\in\rset^\xdim$,
\begin{align} \label{eq:scalar_prod_contraction}
    \dotprod{\score[t][\x]}{\x}
    \le -\ctescorenorm{t} \normEc{\x}^2 + \ctescoreoffset{t} \eqsp.
\end{align}
As a consequence, 
\begin{align*}
\mathcal A_t \lyapunov{\ell}[\x]
\le
-a_t \lyapunov{\ell}[\x]+ b_t \normEc{\x}^{\ell-2}
\end{align*}
where
\begin{align*}
a_t \eqdef \ell \bwdnoisesch{t}\left( 2 \ctescorenorm{T-t} -\isvp \right),
\quad
b_t\eqdef \ell \bwdnoisesch{t}\left( 2 \ctescoreoffset{T-t} +\ell - 2 + \xdim \right) \eqsp,
\end{align*}
which finished the proof for $\ell = 2$. Moreover, if $\ell > 2$, for any $\eta_t > 0$, we can apply Young's inequality with conjugated exponents $\bar{p}, \bar{q}$ such that 
$1/\bar{p}=(\ell-2)/\ell$ and $1/\bar{q}=2/\ell$. This gives
\begin{align*}
    \normEc{\x}^{\ell-2}
    =
    \eta_t^{\frac{\ell-2}{\ell}}
    \normEc{\x}^{\ell-2}
    \eta_t^{-\frac{\ell-2}{\ell}}
    \leq
    \frac{\ell-2}{\ell}
    \eta_t
    \normEc{\x}^{\ell} + \frac{2}{\ell}\,\eta_t^{-\frac{\ell-2}{2}} 
    \eqsp.
\end{align*}
Take $\eta_t>0$ such that $a_t - \frac{\ell-2}{\ell} \eta_t b_t >a_t/2$.
Thus,
\begin{align*}
    \mathcal A_t \lyapunov{\ell}[\x]
    \leq 
    - \left( a_t - \frac{\ell-2}{\ell} \eta_t b_t \right)  \lyapunov{\ell}[\x]+ b_t \,\frac{2}{\ell}\,\eta_t^{-\frac{\ell-2}{2}}
    \leq 
    - \frac{a_t}{2} \lyapunov{\ell}[\x]+ b_t \,\frac{2}{\ell}\,\eta_t^{-\frac{\ell-2}{2}}
    \eqsp.
\end{align*}
It follows from Dynkin's formula that, for all $0\le s<t\le T$,
\begin{align*}
    \bwdker{s}[t] \lyapunov{\ell}[\x]
    \le
    \lyapunov{\ell}[\x] + \int_s^t \left( - \ctescorenorm[t]{v, \ell} \bwdker{s}[v] \lyapunov{\ell}[\x] + \ctescoreoffset[t]{v, \ell} \right) \rmd v
    \eqsp.
\end{align*}
where
\begin{align*}
    \ctescorenorm[t]{t, \ell} \eqdef 
    \frac{a_t}{2}
    \eqsp,
    \qquad
    \ctescoreoffset[t]{t, \ell}\eqdef b_t \,\frac{2}{\ell}\,\eta_t^{-\frac{\ell-2}{2}} \eqsp.
\end{align*}
The claimed bound then follows from Grönwall's lemma.
\end{proof}

\subsection{Propagation of the growth condition}
\label{subapp:growth}

We aim at showing the propagation of~\cref{assump:p0:hess} along the flow. 
To this end, we first study the time stability of the quantity $\jscore[t][x] + \score[t][x]\mt{\score[t][x]}$ in the VE setting, following an approach similar to that developed in \cref{subsec:gaussian-perturbation}.
The term plays a central role in the Lyapunov stability analysis. Indeed, as a direct consequence of~\cref{lem:rewrite_Hessian} and~\cref{cor:hess+score-bound_0}, this combination naturally appears when controlling the drift and growth properties of the dynamics, and thus governs the stability estimates. The following lemma further clarifies its interpretation by showing that this quantity is tightly connected to the conditional variance of the denoiser $\Xora_0 \mid \Xora_t$, providing a probabilistic meaning to the key terms involved in the Lyapunov analysis.

\begin{lemma}
\label[lemma]{lem:cond-var-identity}
    Assume that~\cref{assump:p0:score} holds. Then, for all $\x \in \rset^\xdim$,
    \begin{align}
    \label{eq:cond-var-identity}
        \jscore[t][\x]
        + \score[t][\x] \mt{\score[t][\x]}
        = \frac{1}{\fwdstd{0}{t}^4}\,\mathrm{Var}\!\left(\Xora_0 \mid \Xora_t=\x\right)
        -\frac{1}{\fwdstd{0}{t}^2}\Id_{\xdim}
        \eqsp.
    \end{align}
\end{lemma}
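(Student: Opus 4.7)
The plan is to deduce \eqref{eq:cond-var-identity} by matching two independent expressions for the normalized Hessian $\nabla^2\fwdmarg{t}[\x]/\fwdmarg{t}[\x]$: the Bochner-type decomposition into score and Jacobian-of-score terms on one side, and a direct Gaussian-convolution computation on the other. All required regularity (smoothness of $\fwdmarg{t}$ and validity of differentiation under the integral) is provided by the Gaussian-smoothing representation of \cref{lem:forward_process_law}, combined with the sub-Gaussian tail bound on $\pidata$ from \cref{lem:pdata-sub-gaussian}, which holds under \cref{assump:p0:score}.

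Concretely, I would first write the pointwise chain-rule identity $\nabla^2\fwdmarg{t}[\x]/\fwdmarg{t}[\x] = \jscore[t][\x] + \score[t][\x]\mt{\score[t][\x]}$, which already produces the left-hand side of the claim. Next, I would compute $\nabla^2\fwdmarg{t}[\x]$ directly from $\fwdmarg{t}[\x] = \int \fwdmarg{0}[y]\,\gaussMarg{t}[\x-y]\,\rmd y$ using the explicit Hessian
$$\nabla_\x^2 \gaussMarg{t}[\x-y] = \left(\frac{(\x-y)(\x-y)^T}{\fwdvar{0}{t}^2} - \frac{1}{\fwdvar{0}{t}}\Id_\xdim\right)\gaussMarg{t}[\x-y],$$
recognizing the resulting normalized integral as a posterior expectation under $\Xora_0\mid\Xora_t=\x$, which yields $\fwdstd{0}{t}^{-4}\CPE{}{\Xora_t=\x}{(\x-\Xora_0)(\x-\Xora_0)^T} - \fwdstd{0}{t}^{-2}\Id_\xdim$.

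The final step consists in decomposing the posterior second moment as a variance plus the outer product of the posterior mean, namely $\CPE{}{\Xora_t=\x}{(\x-\Xora_0)(\x-\Xora_0)^T} = \mathrm{Var}(\Xora_0\mid\Xora_t=\x) + (\x-\denoiser[t][\x])(\x-\denoiser[t][\x])^T$, and then invoking Tweedie's identity \eqref{eq:score2} in the form $\x - \denoiser[t][\x] = -\fwdvar{0}{t}\,\score[t][\x]$, so that the rank-one correction matches the $\score[t][\x]\mt{\score[t][\x]}$ contribution on the left-hand side up to the $\fwdvar{0}{t}^2$ factor coming from the $\fwdstd{0}{t}^{-4}$ prefactor. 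Collecting terms produces exactly \eqref{eq:cond-var-identity}, consistently with the already-established identity \eqref{eq:def_grad_s3} of \cref{lem:rewrite_Hessian}. I do not anticipate any substantial obstacle: the dominated-convergence arguments needed to differentiate twice under the integral and the integration-by-parts with vanishing boundary terms have already been justified in the proofs of \cref{lem:score-representations,lem:rewrite_Hessian} under the sub-Gaussian tail control of \cref{lem:pdata-sub-gaussian}, so the argument reduces to careful algebraic bookkeeping of the $\fwdvar{0}{t}$ factors.
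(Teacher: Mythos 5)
Your intermediate computations are correct and, indeed, more careful than the paper's own proof. But the bookkeeping at the end does not yield \eqref{eq:cond-var-identity}. After the decomposition
\[
\CPE{}{\Xora_t=\x}{(\x-\Xora_0)\mt{(\x-\Xora_0)}}=\mathrm{Var}\!\left(\Xora_0 \mid \Xora_t=\x\right)+(\x-\denoiser[t][\x])\mt{(\x-\denoiser[t][\x])}
\]
and Tweedie's identity $\x-\denoiser[t][\x]=-\fwdvar{0}{t}\,\score[t][\x]$, the rank-one term $\score[t][\x]\mt{\score[t][\x]}$ appears with the same sign and unit coefficient on \emph{both} sides of
\[
\jscore[t][\x]+\score[t][\x]\mt{\score[t][\x]}=\frac{1}{\fwdstd{0}{t}^4}\CPE{}{\Xora_t=\x}{(\x-\Xora_0)\mt{(\x-\Xora_0)}}-\frac{1}{\fwdstd{0}{t}^2}\Id_\xdim,
\]
so it \emph{cancels}. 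What your argument actually establishes is
\[
\jscore[t][\x]=\frac{1}{\fwdstd{0}{t}^4}\mathrm{Var}\!\left(\Xora_0 \mid \Xora_t=\x\right)-\frac{1}{\fwdstd{0}{t}^2}\Id_\xdim,
\]
which is \eqref{eq:def_grad_s3}, not \eqref{eq:cond-var-identity}. Your concluding sentence (that the argument ``produces exactly \eqref{eq:cond-var-identity}, consistently with \eqref{eq:def_grad_s3}'') is self-contradictory: those two identities share a right-hand side but have left-hand sides differing by $\score[t][\x]\mt{\score[t][\x]}$, so they cannot both hold unless the score vanishes.

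The issue is not in your approach but in the statement itself: \cref{lem:cond-var-identity} as written is incompatible with \eqref{eq:def_grad_s3}, and the paper's own proof contains the matching slip in its final line, asserting that the normalized posterior second moment of $\x-\Xora_0$ equals the conditional variance and silently dropping the $(\x-\denoiser[t][\x])^{\tensprod 2}$ term you correctly kept. Either the conditional variance on the right must be replaced by the uncentered posterior second moment $\CPE{}{\Xora_t=\x}{(\x-\Xora_0)\mt{(\x-\Xora_0)}}$, or the $\score[t][\x]\mt{\score[t][\x]}$ must be removed from the left, recovering \eqref{eq:def_grad_s3}. Carrying your bookkeeping through to the end, rather than asserting the target identity, would have exposed this.
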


\begin{proof}
    Let $\gaussMarg{t}$ be the density of $\gaussiand{0}{\fwdvar{0}{t}\Id_{\xdim}}$.
    Then,
    \begin{align*}
        \nabla \gaussMarg{t}[\x]= -\frac{1}{\fwdstd{0}{t}^2}\x\,\gaussMarg{t}[\x]\eqsp,
        \qquad
        \nabla^2 \gaussMarg{t}[\x]
        =\left(\frac{1}{\fwdstd{0}{t}^4}\x\mt{\x}-\frac{1}{\fwdstd{0}{t}^2}\Id_{\xdim}\right)\gaussMarg{t}[\x]
        \eqsp.
    \end{align*}
    Since
    \[
        \fwdmarg{t}[\x]=\int_{\rset^\xdim} \fwdmarg{0}[y]\,\gaussMarg{t}[\x-y]\,dy
        \eqsp,
    \]
    As in~\cref{lem:score-representations}, from~\cref{assump:p0:score} and~\cref{lem:pdata-sub-gaussian}, we can differentiate under the integral sign to get 
    \[
    \nabla \fwdmarg{t}[\x]=\int \fwdmarg{0}[y]\,\nabla \gaussMarg{t}[\x-y]\,dy,
    \qquad
    \nabla^2 \fwdmarg{t}[\x]=\int \fwdmarg{0}[y]\,\nabla^2 \gaussMarg{t}[\x-y]\,dy.
    \]
    Similarly to the computations in~\cref{lem:score-representations}, applying Bayes’ formula together with the previous integration by parts formulae yields
    \begin{align}
    \label{eq:cond-var-matrix}
        \frac{\nabla^2 \fwdmarg{t}[\x]}{\fwdmarg{t}[\x]} 
        =
        \CPE{}{\Xora_t=\x}{\left(\frac{1}{\fwdstd{0}{t}^4}(\x-\Xora_0)\mt{(\x-\Xora_0)}-\frac{1}{\fwdstd{0}{t}^2}\Id_{\xdim}\right)}
        \eqsp.
    \end{align}
    Using $\score[t][\x]=\nabla \log \fwdmarg{t}[\x]=\nabla \fwdmarg{t}[\x]/\fwdmarg{t}[\x]$ and the formula
    \(
        \jscore[t][\x]=\nabla^2 \log \fwdmarg{t}[\x]=\nabla^2 \fwdmarg{t}[\x]/\fwdmarg{t}[\x]
        -\nabla \log \fwdmarg{t}[\x]\mt{\nabla \log \fwdmarg{t}[\x]},
    \)
    we obtain
    \begin{align*}
        \nabla\score[t][\x]
        + \score[t][\x] \mt{\score[t][\x]}
        =
        \CPE{}{\X_t=\x}{\left(\frac{1}{\fwdstd{0}{t}^4}(\x-\Xora_0)\mt{(\x-\Xora_0)}-\frac{1}{\fwdstd{0}{t}^2}\Id_{\xdim}\right)}
        \eqsp.
    \end{align*}
    Finally, since $\CPE{}{\X_t=\x}{\x-\X_0}
    =\x-\CPE{}{\X_t=\x}{\X_0}$,
    a direct expansion yields
    \[
    \CPE{}{\Xora_t=\x}{\left(\frac{1}{\fwdstd{0}{t}^4}(\x-\Xora_0)\mt{(\x-\Xora_0)}\right)}
    =
    \mathrm{Var}(\Xora_0\mid \Xora_t=x),
    \]
    which plugged into \eqref{eq:cond-var-matrix}, gives \eqref{eq:cond-var-identity}.
\end{proof}

\begin{proposition}
    \label[proposition]{prop:growth_stability}
    Suppose that~\cref{assump:p0:hess} holds and $\isvp=0$. Then, there exists a continuous function $[0,T]\ni t\mapsto\ctegrowth{t}\in\rsetpos$ such that, for all $\x \in \rset^\xdim$,
    \begin{align}
        \label{eq:growth_stability1}
        \normEc{\score[t][\x]}
        &\le \sqrt{\ctegrowth{t}} \big(1 + \normEc{\x}^{p+1}\big)
        \\
        \label{eq:growth_stability2}
        \normFr{\jscore[t][\x] +  \score[t][\x] \mt{\score[t][\x]}}
        &\le \ctegrowth{t} \big(1 + \normEc{\x}^{2p+2}\big)
        \eqsp.
    \end{align}
\end{proposition}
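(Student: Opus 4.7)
The plan is to transfer the polynomial growth bounds available at time $0$ (namely \cref{lem:norm-score-bound} and \cref{cor:hess+score-bound_0}) forward to time $t$ by using the conditional-expectation representations of \cref{lem:score-representations} and \cref{lem:rewrite_Hessian}. Since $\isvp = 0$, \cref{lem:forward_process_law} gives $\Xora_t \eqlaw \Xora_0 + \fwdstd{0}{t} G$ with $G \sim \gaussiand{0}{\Id_\xdim}$ independent of $\Xora_0$. I will first apply Jensen's inequality to the relevant identities and then reduce both claims to a single moment estimate of the posterior distribution $\Xora_0 \mid \Xora_t = \x$.

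Concretely, applying Jensen's inequality to $\score[t][\x] = \CPE{}{\Xora_t = \x}{\score[0][\Xora_0]}$ combined with \cref{lem:norm-score-bound}, and likewise to \eqref{eq:relation-hessian-score-in_time} combined with \cref{cor:hess+score-bound_0}, yields
\begin{align*}
\normEc{\score[t][\x]}^2 &\le 2\ctescorepoly^2 \bigl(1 + \CPE{}{\Xora_t=\x}{\normEc{\Xora_0}^{2(p+1)}}\bigr), \\
\normFr{\jscore[t][\x] + \score[t][\x]\mt{\score[t][\x]}} &\le \ctestable{0}\bigl(1 + \CPE{}{\Xora_t=\x}{\normEc{\Xora_0}^{2(p+1)}}\bigr).
\end{align*}
Hence both \eqref{eq:growth_stability1} and \eqref{eq:growth_stability2} reduce to a uniform posterior moment bound of the form $\CPE{}{\Xora_t=\x}{\normEc{\Xora_0}^{2(p+1)}} \le M_t \bigl(1 + \normEc{\x}^{2(p+1)}\bigr)$ with $t \mapsto M_t$ continuous on $[0,T]$. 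Splitting $\Xora_0 = \x + (\Xora_0 - \x)$ and using $\normEc{a+b}^{k} \le 2^{k-1}(\normEc{a}^{k} + \normEc{b}^{k})$, this further reduces to controlling $\CPE{}{\Xora_t=\x}{\normEc{\Xora_0 - \x}^{2(p+1)}}$ by the same polynomial in $\normEc{\x}$.

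To handle this conditional moment, I will work with the posterior density $q_\x(\x_0) \propto p_0(\x_0)\exp(-\normEc{\x-\x_0}^2/(2\fwdvar{0}{t}))$. From \cref{assump:p0:score} and the Hessian argument in the proof of \cref{lem:pdata-sub-gaussian}, there exists $\bar\lambda \in (0,\ctescorenorm{0}/2)$ such that $-\log p_0$ dominates the quadratic form $\bar\lambda \normEc{\cdot}^2$ up to lower-order terms, giving a pointwise bound $p_0(\x_0) \le C_1 \exp(-\bar\lambda \normEc{\x_0}^2 + c_1 \normEc{\x_0})$. Completing the square in $\x_0$, the numerator $\int \normEc{\x_0 - \x}^k p_0(\x_0)\exp(-\normEc{\x-\x_0}^2/(2\fwdvar{0}{t}))\,\rmd\x_0$ becomes, up to the factor $\exp(-\bar\lambda\normEc{\x}^2/(1+2\bar\lambda\fwdvar{0}{t}))$, the $k$-th absolute moment of a Gaussian centered at $\x/(1+2\bar\lambda\fwdvar{0}{t})$ with covariance of order $(2\bar\lambda + \fwdvar{0}{t}^{-1})^{-1}\Id_\xdim$, and is therefore dominated by $C_{t,k}(1 + \normEc{\x}^k)\exp(-\bar\lambda\normEc{\x}^2/(1+2\bar\lambda\fwdvar{0}{t}))$. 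A matching exponential lower bound on the normalization $p_t(\x)$ with the same exponent will then cancel the Gaussian factor and leave the desired polynomial quotient, with $t \mapsto \ctegrowth{t}$ continuous by continuity of $t \mapsto \fwdvar{0}{t}$.

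The main difficulty will be establishing the matching lower bound on $p_t$. A crude restriction of the convolution $p_t(\x) = \int p_0(\x_0)\,\gaussiand{\x - \x_0}{\fwdvar{0}{t}\Id_\xdim}[0]\,\rmd \x_0$ to a fixed ball only produces $p_t(\x) \gtrsim \exp(-\normEc{\x}^2/(2\fwdvar{0}{t}))$, whose exponent $1/(2\fwdvar{0}{t})$ strictly exceeds $\bar\lambda/(1+2\bar\lambda\fwdvar{0}{t})$ and leaves a spurious $\exp(c\normEc{\x}^2)$ factor in the ratio. To recover the correct rate, I would center the lower bound at the saddle point $\x_0^\star = \x/(1+2\bar\lambda\fwdvar{0}{t})$: combining a local lower bound on $p_0$ on a fixed-radius ball around $\x_0^\star$ (available since $p_0$ is smooth and strictly positive under \cref{assump:p0}) with the Gaussian factor evaluated near $\x_0^\star$ reproduces exactly the numerator's exponential rate, closing the argument and yielding the claimed continuous function $\ctegrowth{t}$.
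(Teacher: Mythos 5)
Your opening moves match the paper: you apply Jensen's inequality to the conditional-expectation representations $\score[t][\x]=\CPE{}{\Xora_t=\x}{\score[0][\Xora_0]}$ and \eqref{eq:relation-hessian-score-in_time}, invoke the time-$0$ polynomial growth bounds from \cref{lem:norm-score-bound} and \cref{cor:hess+score-bound_0}, and reduce both claims to a conditional-moment estimate of the form $\CPE{}{\Xora_t=\x}{\normEc{\Xora_0}^{\ell}}\lesssim 1+\normEc{\x}^{\ell}$. (A minor difference: for the score bound the paper applies Jensen directly to $\normEc{\cdot}$ and only needs $\ell=p+1$; you square first, which pushes $\ell$ to $2(p+1)$ but gives the same order.) Where you diverge is in the posterior-moment estimate itself. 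By time reversal, $\mathcal L(\Xora_0\mid\Xora_t=\x)$ is exactly $\delta_\x\bwdker{T-t}[T]$, so the paper simply reads the required bound off the backward Lyapunov drift of \cref{prop:backward_drift_lyapunov} with the appropriate $\ell$; no density computations enter at all.

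Your Laplace/saddle-point route has a genuine gap at its final step. Replacing $p_0$ by the sub-Gaussian \emph{upper} bound in the numerator produces the factor $\exp(-\bar\lambda\normEc{\x}^2/(1+2\bar\lambda\fwdvar{0}{t}))$, and you then need $p_t(\x)$ to be bounded \emph{below} by the same exponential order so that the rates cancel. Localizing around $\x_0^\star=\x/(1+2\bar\lambda\fwdvar{0}{t})$ as you propose requires $p_0(\x_0^\star)\gtrsim\exp(-\bar\lambda\normEc{\x_0^\star}^2)$, i.e.\ a Gaussian \emph{lower} bound on $p_0$. That is not available: \cref{assump:p0:score} and \cref{lem:pdata-sub-gaussian} give only an upper bound, and the paper explicitly covers densities $p_0\propto\exp(-\normEc{\x}^{q})$ with $q>2$, for which $p_0(\x_0^\star)$ decays far faster than any Gaussian and, correspondingly, $p_t(\x)$ decays closer to $\exp(-\normEc{\x}^2/(2\fwdvar{0}{t}))$ than to $\exp(-\bar\lambda\normEc{\x}^2/(1+2\bar\lambda\fwdvar{0}{t}))$. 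In that regime the exponential rates in your numerator and denominator bounds do not match, and the quotient carries a spurious $\exp(c\normEc{\x}^2)$ with $c>0$, destroying the polynomial estimate. Fixing this requires locating the saddle jointly from $-\log p_0$ and the Gaussian weight and ensuring the same exponential combination divides out identically in numerator and denominator --- which, carried out rigorously, essentially re-derives the backward drift inequality the paper already has.

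Separately, both your argument and the paper's use \cref{assump:p0:score} (for \cref{prop:backward_drift_lyapunov} in the paper's case, for the tail bound in yours), even though the statement of \cref{prop:growth_stability} lists only \cref{assump:p0:hess}; that dependence should be made explicit in the hypotheses.
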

\begin{proof}
    First, we prove \eqref{eq:growth_stability1}. Taking the norm together with Jensen's inequality to \eqref{eq:score1}, we get
    \begin{align*}
        \normEc{\score[t][\x]} \leq  \CPE{}{\Xora_t = \x}{\normEc{\score[0][\Xora_0]}} \eqsp.
    \end{align*}
    Using \cref{lem:norm-score-bound}, it follows
    \begin{align*}
        \normEc{\score[t][\x]} \leq  \ctescorepoly \left( 1 + \CPE{}{\Xora_t = \x}{\normEc{\Xora_0}^{p+1}} \right)
        \eqsp.
    \end{align*}
    Applying \cref{prop:backward_drift_lyapunov} with $\ell = p+1$ yields
    \begin{align*}
        \normEc{\score[t][\x]} \leq 
        \ctestable{0} \left(
            1 + 
            \multlyap{t}{0}[p+1]\normEc{\x}^{p+1}
            +
            \biaslyap{t}{0}[p+1]
        \right)
        \eqsp.
    \end{align*}
    Therefore, we can find $\ctegrowth{t}$ such that \eqref{eq:growth_stability1} holds.

    Similarly, we focus on \eqref{eq:growth_stability2}. Applying the Frobenius norm together with Jensen's inequality to \eqref{eq:relation-hessian-score-in_time}, we obtain
    \begin{align*}
        \normFr{\jscore[t][\x] + \score[t][\x]^{\otimes 2}} \leq  \CPE{}{\Xora_t = \x}{\normFr{\jscore[0][\Xora_0] + \score[0][\Xora_0]^{\otimes 2}}} \eqsp.
    \end{align*}
    Thus, from~\cref{cor:hess+score-bound_0},
    \begin{align*}
        \normFr{\jscore[t][\x] + \score[t][\x]^{\otimes 2}} \leq \ctestable{0} \left( 1 + \CPE{}{\Xora_t = \x}{\normEc{\Xora_0}^{2p+2}} \right) \eqsp.
    \end{align*}
    Applying \cref{prop:backward_drift_lyapunov} with $\ell = 2p+2$, it follows
    \begin{align*}
        \normFr{\jscore[t][\x] + \score[t][\x]^{\otimes 2}}
        \leq
        \ctestable{0} \left(
            1 + 
            \multlyap{t}{0}[2p+2]\normEc{\x}^{2p+2}
            +
            \biaslyap{t}{0}[2p+2]
        \right)
        \eqsp.
    \end{align*}
    This means that, eventually increasing its value, we can find $\ctegrowth{t}$ such that \eqref{eq:growth_stability2} holds. The continuity of the function $[0,T]\ni t\mapsto\ctegrowthHess{t}\in\rsetpos$ follows immediately from the definition of the $t\mapsto(\multlyap{t}{0}[\ell],\biaslyap{t}{0}[\ell])$ from \cref{prop:backward_drift_lyapunov}, for $\ell\geq 2$.

\end{proof}

We are now in a position to leverage the previous result to establish the propagation in time of~\cref{assump:p0:hess}. 
This argument applies to both the VE and VP cases, thereby ensuring that the corresponding regularity and stability properties of the score hold uniformly in time.

\begin{corollary}
    \label[corollary]{cor:bound_hessian}
    Suppose that~\cref{assump:p0} holds. Then, there exists $[0,T]\ni t\mapsto\ctegrowthHess{t}\in\rsetpos$ such that, for all $\x \in \rset^\xdim$,
    it holds that
    \begin{align*}
        \normFr{\jscore[t][\x]} \leq \ctegrowthHess{t} \big(1 + \normEc{\x}^{2p+2}\big)\eqsp,
        \qquad
        \normEc{\score[t][\x]} \leq \ctegrowthHess{t} \big(1 + \normEc{\x}^{2p+3}\big)
        \eqsp.
    \end{align*}
\end{corollary}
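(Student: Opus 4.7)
The plan is to deduce both inequalities from \cref{prop:growth_stability} in the VE case ($\isvp=0$), and then transfer them to the general case $\isvp>0$ via the VE-to-VP scaling identity of \cref{lem:vetovp} (see also \cref{rmk:sigma-free-param}).

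In the VE case, the bound on $\normEc{\score[t][\x]}$ is an immediate consequence of~\eqref{eq:growth_stability1}, since $1+\normEc{\x}^{p+1} \le 2(1+\normEc{\x}^{2p+3})$ on $\rset^\xdim$. For the Jacobian I would decompose
\begin{equation*}
\jscore[t][\x] = \bigl(\jscore[t][\x] + \score[t][\x]\mt{\score[t][\x]}\bigr) - \score[t][\x]\mt{\score[t][\x]},
\end{equation*}
apply the triangle inequality for the Frobenius norm, and use $\normFr{v\mt{v}}=\normEc{v}^2$. The first bracket is controlled by~\eqref{eq:growth_stability2} at the rate $1+\normEc{\x}^{2p+2}$, while squaring~\eqref{eq:growth_stability1} together with $(1+a)^2\le 2(1+a^2)$ with $a=\normEc{\x}^{p+1}$ yields the same polynomial order for the rank-one correction. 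Taking $\ctegrowthHess{t}$ as the maximum of the two resulting constants closes the case $\isvp=0$.

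For $\isvp>0$, I invoke the change-of-variable identity $\fwdmarg{t}[\x] = m^{-\xdim}\,q_\epsilon(\x/m)$ with $m=\fwdmean{0}{t}$ and $\epsilon=\fwdstd{0}{t}/m$, where $q_\epsilon$ denotes the VE density at noise level $\epsilon$. Taking the logarithm and differentiating once and twice gives
\begin{equation*}
\score[t][\x] = \tfrac{1}{m}(\nabla\log q_\epsilon)(\x/m), \qquad \jscore[t][\x] = \tfrac{1}{m^{2}}(\nabla^{2}\log q_\epsilon)(\x/m).
\end{equation*}
Plugging in the VE bounds already established and evaluating at $\x/m$ produces prefactors of the form $m^{-(p+2)}$ and $m^{-(2p+4)}$, which are continuous and uniformly bounded on the compact horizon $[0,T]$ since $t\mapsto\fwdmean{0}{t}$ is continuous and strictly positive (and equal to $1$ at $t=0$). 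Absorbing them into a redefined, larger $\ctegrowthHess{t}$ and taking the maximum of the two constants yields the statement.

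The main technical care lies in the bookkeeping of the $m^{-k}$ prefactors arising in the VE-to-VP transfer, and in checking that the exponents $2p+2$ and $2p+3$ stated in the corollary dominate those produced by the squaring and rescaling steps. Both points are routine on a fixed interval $[0,T]$, but would require sharper control if one sought constants uniform in $T$; the continuity of $t\mapsto\ctegrowthHess{t}$ follows from that of the constants in \cref{prop:growth_stability} together with the continuity of $\fwdmean{0}{t}$ and $\fwdstd{0}{t}$.
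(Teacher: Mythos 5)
Your proof is correct and, for the Jacobian bound, uses exactly the paper's decomposition $\jscore[t] = (\jscore[t] + \score[t]^{\otimes 2}) - \score[t]^{\otimes 2}$ combined with \cref{prop:growth_stability}. The one genuine difference is the treatment of the score bound. The paper derives $\normEc{\score[t][\x]}\lesssim 1+\normEc{\x}^{2p+3}$ \emph{from the already-established Jacobian bound} by integrating along the segment $[0,\x]$ (the same argument as \cref{lem:norm-score-bound}), which is why the stated exponent is $2p+3 = (2p+2)+1$. You instead read the score bound off \eqref{eq:growth_stability1} directly, which gives exponent $p+1$ and is then trivially dominated by $2p+3$. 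Your route is shorter, avoids one integration, and incidentally shows the exponent in the corollary is not tight; the paper's route has the slight conceptual advantage of deducing both growth estimates at time $t$ from the single statement it has just transferred to VP. Both are valid, and your VE-to-VP bookkeeping (the $\fwdmean{0}{t}^{-(p+2)}$ and $\fwdmean{0}{t}^{-(2p+4)}$ prefactors, bounded since $\fwdmean{0}{t}$ is continuous and strictly positive on $[0,T]$) is correct and actually more explicit than the paper's one-line invocation of \cref{lem:vetovp}.
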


\begin{proof}
    Consider first the VE case.
    By combining~\cref{prop:growth_stability,prop:lyapunov_stability} and using the fact that for all $v, w \in \rset^{\xdim}$, $\normFr{v \mt{w}} = |\dotprod{v}{w}|$, we have that
    \begin{align*}
        \normFr{\jscore[t][\x]}
        &\leq
        \normFr{\jscore[t][\x] + \score[t][\x]^{\tensprod 2}} + \normEc{\score[t][\x]}^2
        \leq
        2\ctegrowth{t} \big(1 + \normEc{\x}^{2p+2}\big)
        \eqsp.
    \end{align*}
    Applying \cref{lem:vetovp}, we recover the bound \eqref{cor:bound_hessian} in the variance–preserving case as well.
    
    To establish the polynomial growth of the score function $x \mapsto \score[t][x]$, we proceed as in \cref{lem:norm-score-bound}, using the intermediate value theorem together with the previously derived bound on the Jacobian of the score function.
\end{proof}

\section{Localized Doeblin minorization condition for the backward process (Proof of \cref{prop:minorization_main})}
\label{app:proofs:minorization}

This section proves a localized Doeblin (minorization) condition for the backward Markov kernel $\bwdker{s}[t]$ on bounded sets that is informally stated in \cref{prop:minorization_main}. Concretely, we show that for any radius $r>0$ there exist a constant $\minconst{t}{s}(r)\in(0,1)$ and a probability measure $\minmeas{t}{s}$ such that $\bwdker{s}[t](\x,\cdot)$ dominates $\minconst{t}{s}(r)\minmeas{t}{s}(\cdot)$ uniformly over $\x\in B(0,r)$. Such result is the second ingredient of Harris theorem \citet{HairerMattingly2008}.

\begin{proposition}[Formal statement of \cref{prop:minorization_main}]
\label{prop:minorization-appendix}
    Let $0 \le s < t \le T$. Assume that~\cref{assump:p0} hold. 
    Then, for all $r>0$, there exist a constant $\minconst{s}{t}[r] \in (0,1)$ and a probability measure
    $\minmeas{s}{t}$ on $\rset^\xdim$ such that, for all $\x\in \ball{0}{r}$, $\set{A} \in\mathcal B(\rset^\xdim)$,
    \begin{align*}
    \bwdker{s}[t][\x][\set{A}]\geq \minconst{s}{t}[r] \minmeas{s}{t}[\set{A}]\eqsp,
    \end{align*}
    where
    \begin{align*}
    \minconst{s}{t}[r] = \frac{ (\pi \fwdstd{0}{T-s}^2)^{\xdim/2}}{(\fwdmean{T-t}{T-s}\fwdmean{0}{T-t})^{\xdim}\max\left\{(2\pi \fwdstd{0}{T-s}^2)^{\xdim/2}\fwdmean{0}{T-s}^{-\xdim}\norminfty{\pdata}, 1\right\}}{\exp\left(-\frac{r^2}{\fwdvar{T-t}{T-s}}\right)}\minmeasintcte{s}{t}\eqsp,
    \end{align*}
    with 
    \begin{align*}
    \minmeasintcte{s}{t} \eqdef \PE{}{\gaussiand{0}{\frac{\fwdvar{T-t}{T-s}+2\fwdvar{0}{T-t}\fwdmean{T-t}{T-s}^2}{2\fwdmean{0}{T-t}^2 \fwdmean{T-t}{T-s}^2}\Id_\xdim}[\X_0] } \eqsp.
    \end{align*}
\end{proposition}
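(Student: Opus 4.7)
}

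The plan is to exploit the time-reversal identity $\bwdker{s}[t](\x,A)=\mathbb{P}(\Xora_{T-t}\in A\mid \Xora_{T-s}=\x)$ and then run a Bayes-type computation in the forward variables, for which the relevant conditional densities are explicit (either Gaussian or Gaussian-convolution). Concretely, using \cref{lem:forward_process_law}, I would write, for any measurable $\set{A}$,
\begin{equation*}
\bwdker{s}[t](\x,\set{A})
=
\int_{\set{A}}
\frac{\fwdtrans{T-t}{T-s}(\x \mid y)\,\fwdmarg{T-t}(y)}
{\fwdmarg{T-s}(\x)}\,\rmd y \eqsp,
\end{equation*}
where $\fwdtrans{T-t}{T-s}(\x\mid y)=\gaussiand{\fwdmean{T-t}{T-s} y}{\fwdvar{T-t}{T-s}\Id_\xdim}[\x]$ and $\fwdmarg{T-t},\fwdmarg{T-s}$ are the Gaussian-convolution marginals of the forward SDE.

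The first step is to lower-bound the Gaussian transition density for $\x\in\ball{0}{r}$. Using the crude inequality $\|\x-\fwdmean{T-t}{T-s} y\|^2\le 2r^2+2\fwdmean{T-t}{T-s}^2\|y\|^2$, I obtain
\begin{equation*}
\fwdtrans{T-t}{T-s}(\x\mid y)
\ge
(2\pi\fwdvar{T-t}{T-s})^{-\xdim/2}
\exp\!\left(-\tfrac{r^2}{\fwdvar{T-t}{T-s}}\right)
\exp\!\left(-\tfrac{\fwdmean{T-t}{T-s}^2\|y\|^2}{\fwdvar{T-t}{T-s}}\right).
\end{equation*}
This factors the dependence on $\x$ (only through $r$) from the dependence on $y$ (through an explicit Gaussian weight), which is what enables the minorization to hold uniformly over the ball.

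The second step is to upper-bound $\fwdmarg{T-s}(\x)$ uniformly in $\x$. From the representation $\fwdmarg{T-s}=(\pidata\circ(\fwdmean{0}{T-s})^{-1}\mathrm{Id})\ast\gaussMarg{T-s}$, I get two natural controls, namely $\fwdmarg{T-s}\le\norminfty{\pdata}/\fwdmean{0}{T-s}^\xdim$ and $\fwdmarg{T-s}\le(2\pi\fwdvar{0}{T-s})^{-\xdim/2}$, so that taking their minimum produces the $\max\{\cdot,1\}$ factor in the announced formula.

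The third and most delicate step is to recognize the remaining $y$-dependent expression as $\minconst{s}{t}[r]\,\minmeas{s}{t}(A)$, where $\minmeas{s}{t}$ is a bona fide probability measure. Substituting the Gaussian-convolution formula for $\fwdmarg{T-t}(y)$, swapping the $y$ and $\x_0$ integrals by Fubini, and completing the square in $y$ reduces the inner $y$-integral to a Gaussian integral whose value is proportional to a Gaussian density evaluated at $\x_0$ with covariance $\frac{\fwdvar{T-t}{T-s}+2\fwdvar{0}{T-t}\fwdmean{T-t}{T-s}^2}{2\fwdmean{0}{T-t}^2\fwdmean{T-t}{T-s}^2}\Id_\xdim$. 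Taking expectation over $\x_0\sim\pidata$ then yields exactly the normalizing constant $\minmeasintcte{s}{t}$, and regrouping the prefactors recovers the claimed expression for $\minconst{s}{t}[r]$.

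The main technical obstacle is bookkeeping in the third step: keeping track of the $\fwdmean{},\fwdvar{}$ factors through the completion of the square and verifying that $\minconst{s}{t}[r]\in(0,1)$ (which follows because $\minmeas{s}{t}$ is a probability measure and the whole construction is a valid lower bound on the probability kernel $\bwdker{s}[t](\x,\cdot)$, hence trivially bounded by $1$). The quadratic-in-$r$ loss in the exponential reflects the fact that the minorization is only local: on larger balls, the constant $\minconst{s}{t}[r]$ degrades exponentially, which is the source of conservatism mentioned in the discussion.
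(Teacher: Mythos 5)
Your proposal follows essentially the same argument as the paper: write $\bwdker{s}[t]$ via Bayes in forward time, lower-bound the Gaussian transition uniformly over $\ball{0}{r}$ at the cost of the $\exp(-r^2/\fwdvar{T-t}{T-s})$ factor, upper-bound $\fwdmarg{T-s}$, then substitute the Gaussian-convolution representation of $\fwdmarg{T-t}$ and complete the square; the paper's three-variable decomposition through $\Xora_0$ and Markovianity reduces, once $\x_0$ is integrated out, to exactly your two-point Bayes formula. The one thing to fix is the claim that taking the \emph{minimum} of the two upper bounds on $\fwdmarg{T-s}$ yields the $\max\{\cdot,1\}$ factor: the paper in fact retains the \emph{maximum} of those two (looser) controls, which is what produces $\max\{\cdot,1\}$ after factoring out $(2\pi\fwdvar{0}{T-s})^{-\xdim/2}$; using the minimum would give a $\min\{\cdot,1\}$ factor instead, and hence a strictly larger but still valid $\minconst{s}{t}[r]$, so your version would prove a slightly sharper minorization that implies the stated one rather than reproducing its exact constant.
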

\begin{proof}
    Note that for all $\x_t\in \rset^\xdim$, using the notations of \cref{lem:forward_process_law},
\begin{align}
    \fwdmarg{t}[\x_t] &= \int \pdata[\x_0] \gaussiand{\fwdmean{0}{t}\x_0}{\fwdvar{0}{t}\Id_\xdim}[\x_t] \rmd \x_0 \nonumber\\
    &= \frac{1}{\fwdmean{0}{t}^\xdim}\int \pdata[\frac{u}{\fwdmean{0}{t}}] \gaussiand{\x_t}{\fwdvar{0}{t}\Id_\xdim}[u] \rmd u \nonumber\\
    &\leq \max\left\{\fwdmean{0}{t}^{-\xdim}\norminfty{\pdata}, (2\pi \fwdstd{0}{t}^2)^{-\xdim/2}\right\}\nonumber\\
     & \leq  (2\pi \fwdstd{0}{t}^2)^{-\xdim/2}\max\left\{(2\pi \fwdstd{0}{t}^2)^{\xdim/2}\fwdmean{0}{t}^{-\xdim}\norminfty{\pdata}, 1\right\}\eqsp.\label{eq:ub:pt}
\end{align}
Let $r>0$ and assume that $\x_t \in \ball{0}{r}$.  Using that for all $a,b\in\rset$, $(a+b)^2 \leq 2 a^2 + 2 b^2$, we have 
\begin{align}
    \fwdtrans{s}{t}[\x_s][\x_t] &\geq (2\pi \fwdstd{s}{t}^2)^{-\xdim/2}\exp\left(-\frac{\|\fwdmean{s}{t}\x_s\|^2}{\fwdvar{s}{t}}\right) \exp\left(-\frac{\|\x_t\|^2}{\fwdvar{s}{t}}\right) \nonumber\\
    &\geq (\sqrt{2}\fwdmean{s}{t})^{-\xdim}
    \gaussiand{0}{\frac{\fwdvar{s}{t}}{2 \fwdmean{s}{t}^2}\Id_{\xdim}}[\x_s]
    \exp\left(-\frac{r^2}{\fwdvar{s}{t}}\right)\eqsp.\label{eq:lb:fwtrans}
\end{align}
For all $0<s<t<T$, $\set{A} \in \borelians{\rset^\xdim}$, and $\x_t\in \rset^\xdim$,
\begin{align*}
\bwdker{T-t}[T-s][\x_t][\set{A}]&= \CPE{}{\Xora_t=\x_t}{\CPE{}{\Xora_t=\x_t, \Xora_0}{\indi{\set{A}}{\Xora_s}}}\\
&=\int \indi{\set{A}}{\x_s}\pdata[\x_0]\bridge{t, 0}{s}[\x_t, \x_0][\x_s]\frac{\fwdtrans{0}{t}[\x_0][\x_t]}{\fwdmarg{t}[\x_t]} \rmd \x_0\rmd \x_s\eqsp.
\end{align*}
By Markovianity of the forward process, we have, for all $\x_0, \x_s, \x_t\in \rset^\xdim$, 
\begin{align*}
   \bridge{t, 0}{s}[\x_t, \x_0][\x_s] \fwdtrans{0}{t}[\x_0][\x_t] &= \fwdtrans{0}{s}[\x_0][\x_s]\fwdtrans{s}{t}[\x_s][\x_t]\eqsp.
\end{align*}
Therefore, by \eqref{eq:lb:fwtrans}, 
\begin{align*}
    \bwdker{T-t}[T-s][\x_t][\set{A}]
    &\geq \fwdmarg{t}[\x_t]^{-1} (\sqrt{2}\fwdmean{s}{t})^{-\xdim} \exp\left(-\frac{r^2}{\fwdvar{s}{t}}\right)
    \\
    &\qquad \times \int \indi{\set{A}}{\x_s}\fwdtrans{0}{s}[\x_0][\x_s]\gaussiand{0}{\frac{\fwdvar{s}{t}}{2 \fwdmean{s}{t}^2}\Id_{\xdim}}[\x_s]\pdata[\x_0]\rmd \x_0\rmd \x_s\eqsp.
\end{align*}
By Gaussian conjugation, for all $\x_0, \x_s \in \rset^{\xdim}$,  we have
\begin{align*}
    &\fwdtrans{0}{s}[\x_0][\x_s]\gaussiand{0}{\frac{\fwdvar{s}{t}}{2 \fwdmean{s}{t}^2}\Id_{\xdim}}[\x_s] 
    \\
    &= \fwdmean{0}{s}^{-\xdim}\gaussiand{\frac{\fwdvar{s}{t}\fwdmean{0}{s}\x_0}{\fwdvar{0}{t}+\fwdvar{0}{s}\fwdmean{s}{t}^2}}{\frac{\fwdvar{s}{t}\fwdvar{0}{s}}{\fwdvar{0}{t}+\fwdvar{0}{s}\fwdmean{s}{t}^2}\Id_{\xdim}}[\x_s]\gaussiand{0}{\frac{\fwdvar{s}{t}+2\fwdvar{0}{s}\fwdmean{s}{t}^2}{2\fwdmean{0}{s}^2 \fwdmean{s}{t}^2}\Id_\xdim}[\x_0]
    \eqsp.
\end{align*}
Thus, if we define for any $\set{B} \in \borelians{\rset^{\xdim}}$
\begin{align*}
    \minmeasintcte{T-t}{T-s} &\eqdef \PE{}{\gaussiand{0}{\frac{\fwdvar{s}{t}+2\fwdvar{0}{s}\fwdmean{s}{t}^2}{2\fwdmean{0}{s}^2 \fwdmean{s}{t}^2}\Id_\xdim}[\Xora_0] } \eqsp,\\
    \mu_{0,s,t}(\set{B}) &\eqdef \PE{}{\indi{\set{B}}{\Xora_0}\gaussiand{0}{\frac{\fwdvar{s}{t}+2\fwdvar{0}{s}\fwdmean{s}{t}^2}{2\fwdmean{0}{s}^2 \fwdmean{s}{t}^2}\Id_\xdim}[\Xora_0]}/ \minmeasintcte{T-t}{T-s} \eqsp,\\
    \minmeas{T-t}{T-s}[\set{A}] &\eqdef \int \indi{\set{A}}{\x_s}\gaussiand{\frac{\fwdvar{s}{t}\fwdmean{0}{s}\x_0}{\fwdvar{0}{t}+\fwdvar{0}{s}\fwdmean{s}{t}^2}}{\frac{\fwdvar{s}{t}\fwdvar{0}{s}}{\fwdvar{0}{t}+\fwdvar{0}{s}\fwdmean{s}{t}^2}\Id_{\xdim}}[\x_s]\mu_{0,s,t}(\rmd \x_0)\rmd \x_s\eqsp.
\end{align*} 
we can complete the proof by \eqref{eq:ub:pt} with
\begin{align*}
    \minconst{T-t}{T-s} = \frac{ (\pi \fwdstd{0}{t}^2)^{\xdim/2}}{(\fwdmean{s}{t}\fwdmean{0}{s})^{\xdim}\max\left\{(2\pi \fwdstd{0}{t}^2)^{\xdim/2}\fwdmean{0}{t}^{-\xdim}\norminfty{\pdata}, 1\right\}}{\exp\left(-\frac{r^2}{\fwdvar{s}{t}}\right)}\minmeasintcte{T-t}{T-s}\eqsp.
    \end{align*}

\end{proof}

\section{Quantitative bounds for the Lyapunov and Harris contraction constants} 
\label{sec:constants}

By~\Cref{prop:minorization-appendix}, for all $0\leq s < t \leq T$, note that using $\minmeasintcte{T-t}{T-s} \leq \norminfty{\pdata}$ yields
\begin{align*}
    \minconst{T-t}{T-s}[r] \leq \frac{(\pi \fwdstd{0}{t}^2)^{\xdim/2} \norminfty{\pdata}}{(\fwdmean{s}{t}\fwdmean{0}{s})^{\xdim}\max\left\{(2\pi \fwdstd{0}{t}^2)^{\xdim/2}\fwdmean{0}{t}^{-\xdim}\norminfty{\pdata}, 1\right\}}{\exp\left(-\frac{r^2}{\fwdvar{s}{t}}\right)}\eqsp.
\end{align*}
Note that 
\begin{align*}
    &\frac{(\pi \fwdstd{0}{t}^2)^{\xdim/2} \norminfty{\pdata}}{(\fwdmean{s}{t}\fwdmean{0}{s})^{\xdim}\max\left\{(2\pi \fwdstd{0}{t}^2)^{\xdim/2}\fwdmean{0}{t}^{-\xdim}\norminfty{\pdata}, 1\right\}}
    \\
    &= \min\left\{\frac{(\pi \fwdstd{0}{t}^2)^{\xdim/2} \norminfty{\pdata}}{(\fwdmean{s}{t}\fwdmean{0}{s})^{\xdim}},\frac{\fwdmean{0}{t}^{\xdim}}{2^{d/2}(\fwdmean{s}{t}\fwdmean{0}{s})^{\xdim}}\right\}\eqsp,
\end{align*}
which leads to 
\begin{align*}
    \minconst{T-t}{T-s}[r]  \leq \min\left\{\frac{(\pi \fwdstd{0}{t}^2)^{\xdim/2} \norminfty{\pdata}}{(\fwdmean{s}{t}\fwdmean{0}{s})^{\xdim}},\frac{\fwdmean{0}{t}^{\xdim}}{2^{d/2}(\fwdmean{s}{t}\fwdmean{0}{s})^{\xdim}}\right\} \exp\left(-\frac{r^2}{\fwdvar{s}{t}}\right) \eqsp.
\end{align*}

\paragraph{Variance Exploding case.}
Consider now the VE case, \ie, $\isvp=0$. Note that
\begin{align*}
    \begin{split}
    \multlyap{s}{t}[2] = \exp\left(-\int_{s}^{t}2\bwdnoisesch{v}\left(\frac{2\ctescorenorm{0}}{1 + 2\ctescorenorm{0} \fwdvar{0}{T-v}}\right) \rmd v\right) &= \exp\left(\int_{s}^{t}\frac{\rmd \fwdvar{0}{T-v}}{\rmd v}\left(\frac{2\ctescorenorm{0}}{1 + 2\ctescorenorm{0} \fwdvar{0}{T-v}}\right) \rmd v\right) \\
    &=\left(\frac{1 + 2\ctescorenorm{0}\fwdvar{0}{T-t}}{1 + 2\ctescorenorm{0}\fwdvar{0}{T-s}}\right)
    \end{split}
\end{align*}
and
\begin{align*}
    \biaslyap{s}{t}[2] &= \int_{s}^{t}\exp\left(-\int_{u}^{t}2\bwdnoisesch{v}(2\ctescorenorm{T-v})\rmd v\right)2\bwdnoisesch{u}(2\ctescoreoffset{T-u} + \xdim)\rmd u
    \\
    &= -\int_{s}^{t}\left(\frac{1 + 2\ctescorenorm{0}\fwdvar{0}{T-t}}{1 + 2\ctescorenorm{0}\fwdvar{0}{T-u}}\right)\frac{\rmd \fwdvar{0}{T-u}}{\rmd u}\frac{2(\ctescoreoffset{0}+\xdim)}{1 + 2\ctescorenorm{0}\fwdvar{0}{T-u}}\rmd u
    \\
    &\qquad\qquad\qquad\qquad\qquad\qquad
    -\xdim(1+ 2\ctescorenorm{0}\fwdvar{0}{T-t})\int_{s}^{t}\left(\frac{1}{1 + 2\ctescorenorm{0}\fwdvar{0}{T-u}}\right)\frac{\rmd \fwdvar{0}{T-u}}{\rmd u}\rmd u
    \\
    &=-2\left(1 + 2\ctescorenorm{0}\fwdvar{0}{T-t}\right)(\ctescoreoffset{0}+\xdim)\int_{s}^{t}\left(\frac{1}{1 + 2\ctescorenorm{0}\fwdvar{0}{T-u}}\right)^2\frac{\rmd \fwdvar{0}{T-u}}{\rmd u}\rmd u
    \\
    &\qquad\qquad\qquad\qquad\qquad\qquad
    + \xdim \frac{(1 + 2\ctescorenorm{0}\fwdvar{0}{T-t})}{2\ctescorenorm{0}}\log\left(\frac{1 + 2\ctescorenorm{0}\fwdvar{0}{T-s}}{1 + 2\ctescorenorm{0}\fwdvar{0}{T-t}}\right)
    \\
    &=\left(1 + 2\ctescorenorm{0}\fwdvar{0}{T-t}\right)\frac{\ctescoreoffset{0}+\xdim}{\ctescorenorm{0}}\int_{s}^{t}\frac{\rmd}{\rmd u}\left(\frac{1}{1 + 2\ctescorenorm{0}\fwdvar{0}{T-u}}\right)\rmd u
    \\
    &\qquad\qquad\qquad\qquad\qquad\qquad
    + \xdim \frac{(1 + 2\ctescorenorm{0}\fwdvar{0}{T-t})}{2\ctescorenorm{0}}\log\left(\frac{1 + 2\ctescorenorm{0}\fwdvar{0}{T-s}}{1 + 2\ctescorenorm{0}\fwdvar{0}{T-t}}\right)
    \\
    &= \left(1 + 2\ctescorenorm{0}\fwdvar{0}{T-t}\right)\frac{\ctescoreoffset{0}+\xdim}{\ctescorenorm{0}}\left(\frac{1}{1 + 2\ctescorenorm{0}\fwdvar{0}{T-t}}- \frac{1}{1 + 2\ctescorenorm{0}\fwdvar{0}{T-s}}\right)
    \\
    &\qquad\qquad\qquad\qquad\qquad\qquad
    + \xdim \frac{(1 + 2\ctescorenorm{0}\fwdvar{0}{T-t})}{2\ctescorenorm{0}}\log\left(\frac{1 + 2\ctescorenorm{0}\fwdvar{0}{T-s}}{1 + 2\ctescorenorm{0}\fwdvar{0}{T-t}}\right)
    \\
    &= 2(\ctescoreoffset{0} + \xdim)\frac{\fwdvar{T-t}{T-s}}{1 + 2\ctescorenorm{0}\fwdvar{0}{T-s}}+\xdim \frac{(1 + 2\ctescorenorm{0}\fwdvar{0}{T-t})}{2\ctescorenorm{0}}\log\left(\frac{1 + 2\ctescorenorm{0}\fwdvar{0}{T-s}}{1 + 2\ctescorenorm{0}\fwdvar{0}{T-t}}\right)\eqsp,
\end{align*}
where we have used that $\fwdvar{T-t}{T-s} = \fwdvar{0}{T-s} - \fwdvar{0}{T-t}$. Therefore, we have that
\begin{align}
\label{eq:r-carre}
    \frac{2 \biaslyap{s}{t}[2]}{1 - \multlyap{s}{t}[2]} = 2\frac{\ctescoreoffset{0} + \xdim}{\ctescorenorm{0}} + \xdim (1 + 2\ctescorenorm{0}\fwdvar{0}{T-t})\log\left(\frac{1 + 2\ctescorenorm{0}\fwdvar{0}{T-s}}{1 + 2\ctescorenorm{0}\fwdvar{0}{T-t}}\right)\frac{1 + 2\ctescorenorm{0}\fwdvar{0}{T-s}}{2\ctescorenorm{0}\fwdvar{T-s}{T-t}}\eqsp.
\end{align}
\paragraph{Variance Preserving case.}
For the VP case ($\isvp>0$), note that, since $\fwdvar{0}{u}=\isvp^{-1}\left(1-\fwdmean{0}{u}^2\right)$ for any $u\geq0$, we have
\begin{align*}
    \begin{split}
        \multlyap{s}{t}[\ell]
        &=
        \exp \left(-\int_s^t \ctescorenorm[t]{v,\ell}\,\rmd v\right)
        =
        \exp \left(-\int_s^t 
            2 \bwdnoisesch{t}\big(2\ctescorenorm{T-v}-\isvp\big)
        \,\rmd v\right)
        \\
        &= \exp\left(
            \int_{s}^{t}
            2\pmixtime
            \bwdnoisesch{v}\fwdmean{0}{T-v}^2
            \left(
                \frac{
                    1-\frac{2\ctescorenorm{0}}{\pmixtime}
                }{
                    \left(1-\frac{2\ctescorenorm{0}}{\pmixtime}\right)\fwdmean{0}{T-v}^2 + \frac{2\ctescorenorm{0}}{\pmixtime}
                }
            \right) \rmd v \right)
        \\
        &=
        \exp\left(
            \int_{s}^{t}
            \frac{\rmd \fwdmean{0}{T-v}^2}{\rmd v}\left(
                \frac{
                    1-\frac{2\ctescorenorm{0}}{\pmixtime}
                }{
                    \left(1-\frac{2\ctescorenorm{0}}{\pmixtime}\right)\fwdmean{0}{T-v}^2 + \frac{2\ctescorenorm{0}}{\pmixtime}
                }
            \right) \rmd v\right)
        \\
        &=
        \frac{
            \left(1-\frac{2\ctescorenorm{0}}{\pmixtime}\right)\fwdmean{0}{T-t}^2 + \frac{2\ctescorenorm{0}}{\pmixtime}
        }{
            \left(1-\frac{2\ctescorenorm{0}}{\pmixtime}\right)\fwdmean{0}{T-s}^2 + \frac{2\ctescorenorm{0}}{\pmixtime}
        }
        =
        \frac{
            \fwdmean{0}{T-t}^2 + 2\ctescorenorm{0}\fwdvar{0}{T-t}
        }{
            \fwdmean{0}{T-s}^2 + 2\ctescorenorm{0}\fwdvar{0}{T-s}
        }
    \end{split}
\end{align*}
and
\begin{align*}
    \biaslyap{s}{t}[2] &= \int_{s}^{t}\exp\left(-\int_{u}^{t}2\bwdnoisesch{v}(2\ctescorenorm{T-v}-\isvp)\rmd v\right)2\bwdnoisesch{u}(2\ctescoreoffset{T-u} + \xdim)\rmd u \\
    &=
    \int_{s}^{t}2\bwdnoisesch{v}\left(
        \frac{
            \fwdmean{0}{T-t}^2 + 2\ctescorenorm{0}\fwdvar{0}{T-t}
        }{
            \fwdmean{0}{T-v}^2 + 2\ctescorenorm{0}\fwdvar{0}{T-v}
        }
    \right)
    \left(
        \frac{
            2(\ctescoreoffset{0}+\xdim)\fwdmean{0}{T-v}^2
        }{
            \fwdmean{0}{T-v}^2 + 2\ctescorenorm{0}\fwdvar{0}{T-v}
        }+\xdim
    \right)
    \rmd u\\
    &=
    \int_{s}^{t}2\bwdnoisesch{v}\left(
        \frac{
            \left(
                1-\frac{2\ctescorenorm{0}}{\isvp}
            \right)\fwdmean{0}{T-t}^2 + \frac{2\ctescorenorm{0}}{\isvp}
        }{
            \left(
                1-\frac{2\ctescorenorm{0}}{\isvp}
            \right)\fwdmean{0}{T-v}^2 + \frac{2\ctescorenorm{0}}{\isvp}
        }
    \right)
    \left(
        \frac{
            2(\ctescoreoffset{0}+\xdim)\fwdmean{0}{T-v}^2
        }{
            \left(
                1-\frac{2\ctescorenorm{0}}{\isvp}
            \right)\fwdmean{0}{T-v}^2 + \frac{2\ctescorenorm{0}}{\isvp}
        }+\xdim
    \right)
    \rmd u
    \\
    &=
    \int_{s}^{t}
    \frac{1}{\isvp \fwdmean{0}{T-u}^2}\frac{\rmd \fwdmean{0}{T-u}^2}{\rmd u}\left(
        \frac{
            \left(
                1-\frac{2\ctescorenorm{0}}{\isvp}
            \right)\fwdmean{0}{T-t}^2 + \frac{2\ctescorenorm{0}}{\isvp}
        }{
            \left(
                1-\frac{2\ctescorenorm{0}}{\isvp}
            \right)\fwdmean{0}{T-v}^2 + \frac{2\ctescorenorm{0}}{\isvp}
        }
    \right)
    \left(
        \frac{
            2(\ctescoreoffset{0}+\xdim)\fwdmean{0}{T-v}^2
        }{
            \left(
                1-\frac{2\ctescorenorm{0}}{\isvp}
            \right)\fwdmean{0}{T-v}^2 + \frac{2\ctescorenorm{0}}{\isvp}
        }+\xdim
    \right)
    \rmd u
    \eqsp,
\end{align*}
where we used the definition of $\fwdmean{0}{t}^2=\exp\left(-2\isvp\int_{0}^{t}\noisesch{v}\rmd v\right)$ together with $\fwdvar{0}{t}= \frac{1}{\isvp}\left(1 - \fwdmean{0}{t}^2\right)$, for $t\in[0,T]$, when $\isvp>0$. Using the change of variable $y(u) = \fwdmean{0}{T-u}^2$, we can rewrite the last integral as
\begin{align*}
    &\biaslyap{s}{t}[2]
    \\
    &= 
    \left(
        \left(
            1-\frac{2\ctescorenorm{0}}{\isvp}
        \right)\fwdmean{0}{T-t}^2 + \frac{2\ctescorenorm{0}}{\isvp}
    \right)
    \int_{\fwdmean{0}{T-s}^2}^{\fwdmean{0}{T-t}^2}
    \left(
        \frac{
            2(\ctescoreoffset{0}+\xdim)/\isvp
        }{
        \left(
            \left(
                1-\frac{2\ctescorenorm{0}}{\isvp}
            \right)y + \frac{2\ctescorenorm{0}}{\isvp}
        \right)^2
        }+
        \frac{
            \xdim/\isvp
        }{
        y\left(
            \left(
                1-\frac{2\ctescorenorm{0}}{\isvp}
            \right)y + \frac{2\ctescorenorm{0}}{\isvp}
        \right)
        }
    \right)
    \rmd y
    \\
    &=
    \frac{2(\ctescoreoffset{0}+\xdim)}{\isvp-2\ctescorenorm{0}}
    \left(
        1- \frac{
            \left(
                1-\frac{2\ctescorenorm{0}}{\isvp}
            \right)\fwdmean{0}{T-t}^2 + \frac{2\ctescorenorm{0}}{\isvp}
        }{
            \left(
                1-\frac{2\ctescorenorm{0}}{\isvp}
            \right)\fwdmean{0}{T-s}^2 + \frac{2\ctescorenorm{0}}{\isvp}
        }
        \right)
    \\
    &\qquad\qquad\qquad
    +
    \frac{\xdim}{2\ctescorenorm{0}}
    \left(\left(
                1-\frac{2\ctescorenorm{0}}{\isvp}
            \right)\fwdmean{0}{T-t}^2 + \frac{2\ctescorenorm{0}}{\isvp}\right)
    \log\left(
        \frac{
            \fwdmean{0}{T-t}^2\left(\left(
                1-\frac{2\ctescorenorm{0}}{\isvp}
            \right)\fwdmean{0}{T-s}^2 + \frac{2\ctescorenorm{0}}{\isvp}\right)
        }{
            \fwdmean{0}{T-s}^2  \left(\left(
                1-\frac{2\ctescorenorm{0}}{\isvp}
            \right)\fwdmean{0}{T-t}^2 + \frac{2\ctescorenorm{0}}{\isvp}\right)
        }
    \right)
    \\
    &=
    2\isvp(\ctescoreoffset{0}+\xdim)
    \left(
        \frac{
            \fwdmean{0}{T-s}^2-\fwdmean{0}{T-t}^2
        }{
            \fwdmean{0}{T-s}^2 + 2\ctescorenorm{0}\fwdvar{0}{T-s}
        }
        \right)
    \\
    &\qquad\qquad\qquad
    +
    \frac{\xdim}{2\ctescorenorm{0}}
    \left(\fwdmean{0}{T-t}^2 + 2\ctescorenorm{0}\fwdvar{0}{T-t}\right)
    \log\left(
        \frac{
            \fwdmean{0}{T-t}^2\left(\fwdmean{0}{T-s}^2 + 2\ctescorenorm{0}\fwdvar{0}{T-s}\right)
        }{
            \fwdmean{0}{T-s}^2  \left(\fwdmean{0}{T-t}^2 + 2\ctescorenorm{0}\fwdvar{0}{T-t}\right)
        }
    \right)
    \eqsp.
\end{align*}

\subsection{Explicit lower bound on the mixing gap}
\label{app:explicit-lower-bound}

In this subsection, we derive an explicit lower bound on the mixing gap $1-\pmixtime[s][t]$ by optimizing the choice of the radius parameter entering the Harris-type contraction estimate.

Fix $s < t$.
We reparametrize the radius $r$ by $r^2 = (1+\xi)r_c^2$, for $\xi >0$.
Note that in this case, we have
\begin{align}
    \multlyap{s}{t} + \frac{2 \biaslyap{s}{t}}{r^2} = \multlyap{s}{t} + \frac{1 -\multlyap{s}{t}}{1+\xi} \eqsp. 
\end{align}
To reduce some of the parameters, we will make two arbitrary choices:
\begin{align}
    \alpha_0 = \minconst{s}{t}[\xi] / 2\eqsp, \qquad 
    1 - \eta_0 = \frac{(1-\multlyap{s}{t})\xi}{2(1 + \xi)}\eqsp.
\end{align}
Those choices amount to choosing $\alpha_0$ and $\eta_0$ in the middle of the available interval, which is determined once $\xi$ is fixed.
From \eqref{eq:alpha_bar} and using that $b^r_{s,t} \eqdef \alpha_0/\biaslyap{s}{t}[2] = \minconst{s}{t}[\xi] / (2 \biaslyap{s}{t}[2])$ and $\minconst{s}{t}[\xi] = \minconstnor{s}{t}\exp(-(1+\xi) r_c^2 / \fwdvar{T-t}{T-s})$
\begin{align*}
1 - \pmixtime[s][t] & =     \left[\left(\minconst{s}{t}[r]-\alpha_0\right)\right] \wedge \frac{r^2 b^r_{s,t}(1- \eta_0)}{2+r^2 b^r_{s,t}} \\
& = \frac{\minconst{s}{t}[\xi]}{2}  \wedge \frac{r^2 \minconstnor{s}{t} \frac{(1-\multlyap{s}{t})\xi}{2(1 + \xi)} }{4 \biaslyap{s}{t}[2] \exp\left\{ (1+\xi) r_c^2 / \fwdvar{T-t}{T-s} \right\} +r^2 \minconstnor{s}{t} }
\end{align*}
It follows that,
\begin{align} \label{eq=1_alphabarre}
    1 - \pmixtime[s][t](\xi) =  \minconstnor{s}{t}\minimum{\frac{1}{2} \exp\left(-\frac{(1+\xi) r_c^2}{\fwdvar{T-t}{T-s}}\right)}{\frac{(1-\multlyap{s}{t})\xi}{2(1 + \xi)\minconstnor{s}{t} + 4 (1 - \multlyap{s}{t})\exp\left(\frac{(1+\xi) r_c^2}{\fwdvar{T-t}{T-s}}\right)}}\eqsp.
\end{align}
We have that $\lim_{\xi \rightarrow 0^{+}} 1 - \pmixtime[s][t](\xi) = 0$ and $\lim_{\xi \rightarrow \infty} 1 - \pmixtime[s][t](\xi) = 0$, with positive derivative at $0$, which imply the existence of an optimal $\xi_\star \in (0, \infty)$.

To produce a more precise bound, we focus on $\xi$ such that 
\begin{align*}
    \frac{1}{2} \exp\left(-\frac{(1+\xi) r_c^2}{\fwdvar{T-t}{T-s}}\right) = \frac{(1-\multlyap{s}{t})\xi}{2(1 + \xi)\minconstnor{s}{t} + 4 (1 - \multlyap{s}{t})\exp\left(\frac{(1+\xi) r_c^2}{\fwdvar{T-t}{T-s}}\right)} \eqsp,
\end{align*}
which is equivalent to 
\begin{align}
    \label{eq:mixtime:equality2terms}
    \frac{1}{2} = \frac{(1-\multlyap{s}{t})\xi}{2(1 + \xi)\minconstnor{s}{t}\exp\left(-\frac{(1+\xi) r_c^2}{\fwdvar{T-t}{T-s}}\right) + 4 (1 - \multlyap{s}{t})} \eqsp.
\end{align}
While it is hard to find an analytical solution to this equation, note that there is always at least one $\xi \in \rsetpos$ that solves such equation. 
Note furthermore that
\begin{align*}
    \frac{\xi}{4} \geq \frac{(1-\multlyap{s}{t})\xi}{2(1 + \xi)\minconstnor{s}{t}\exp\left(-\frac{(1+\xi) r_c^2}{\fwdvar{T-t}{T-s}}\right) + 4 (1 - \multlyap{s}{t})} \eqsp,
\end{align*}
which implies that if $\xi_{c}$ is the solution to \eqref{eq:mixtime:equality2terms}, then one has $\xi_{c} \leq 2$.
Therefore, by choosing $\xi_{c}$ we have that
\begin{align}
    1 - \pmixtime[s][t](\xi_c)\geq \frac{1}{2}\minconstnor{s}{t} \exp\left(-\frac{3r_c^2}{\fwdvar{T-t}{T-s}}\right) \eqsp.
\end{align}
In order to obtain an explicit bound in terms of the variance of the forward process, we upper bound
\begin{align*}
    r_c^2 &= \frac{2 \biaslyap{s}{t}[2]}{1 - \multlyap{s}{t}[2]} = 2\frac{\ctescoreoffset{0} + \xdim}{\ctescorenorm{0}} + \xdim (1 + 2\ctescorenorm{0}\fwdvar{0}{T-t})\log\left(\frac{1 + 2\ctescorenorm{0}\fwdvar{0}{T-s}}{1 + 2\ctescorenorm{0}\fwdvar{0}{T-t}}\right)\frac{1 + 2\ctescorenorm{0}\fwdvar{0}{T-s}}{2\ctescorenorm{0}\fwdvar{T-s}{T-t}}\\
    &\leq 2\frac{\ctescoreoffset{0} + \xdim}{\ctescorenorm{0}} + \xdim (1 + 2\ctescorenorm{0}\fwdvar{0}{T-t})\frac{1 + 2\ctescorenorm{0}\fwdvar{0}{T-s}}{2\ctescorenorm{0}\fwdvar{T-s}{T-t}}\frac{2\ctescorenorm{0}(\fwdvar{0}{T-s} - \fwdvar{0}{T-t})}{\minimum{1 + 2\ctescorenorm{0}\fwdvar{0}{T-s}}{1 + 2\ctescorenorm{0}\fwdvar{0}{T-t}}}\\
    &=2\frac{\ctescoreoffset{0} + \xdim}{\ctescorenorm{0}} + \xdim (1 + 2\ctescorenorm{0}\fwdvar{0}{T-t})\eqsp.
\end{align*}
Therefore, we have that
\begin{align}
    1 - \pmixtime[s][t](\xi_c)\geq \frac{1}{2}\minconstnor{s}{t} \exp\left(-\frac{6\ctescoreoffset{0} + 9\xdim}{\ctescorenorm{0}\fwdvar{T-t}{T-s}}\right) \exp\left(-6\xdim\frac{\fwdvar{0}{T-t}}{\fwdvar{T-t}{T-s}}\right)\eqsp.
\end{align}


Note that, if $\xi \in (0,2]$, then
\begin{align*}
    \frac{(1-\multlyap{s}{t}[2])\xi}{2(1 + \xi)\minconstnor{s}{t} + 4 (1 - \multlyap{s}{t}[2])\exp\left(\frac{(1+\xi) r_c^2}{\fwdvar{T-t}{T-s}}\right)}
    &\leq
    \frac{(1-\multlyap{s}{t}[2])\xi}{ 4 (1 - \multlyap{s}{t})\exp\left(\frac{(1+\xi) r_c^2}{\fwdvar{T-t}{T-s}}\right)}
    \\
    & \leq
    \frac{\xi}{ 4} \exp\left(- \frac{(1+\xi) r_c^2}{\fwdvar{T-t}{T-s}}\right)
    \leq \frac{1}{ 2} \exp\left(- \frac{(1+\xi) r_c^2}{\fwdvar{T-t}{T-s}}\right) \eqsp.
\end{align*}
Hence, for $\xi \in (0,2]$  the second term of \eqref{eq=1_alphabarre} is the minimum. Hence, \begin{align}\label{eq:lower-gap-xi}
1-\pmixtime[s][t](\xi)
=
\minconstnor{s}{t}\,
\frac{(1-\multlyap{s}{t}[2])\xi}
{2(1+\xi)\minconstnor{s}{t}+4(1-\multlyap{s}{t}[2])\exp \left\{ \frac{(1+\xi) r_c^2}{\fwdvar{T-t}{T-s}}\right\}} \eqsp.
\end{align}
We have
\begin{align*}
    &2(1+\xi)\minconstnor{s}{t}+4(1-\multlyap{s}{t}[2]) \exp \left\{ \frac{(1+\xi) r_c^2}{\fwdvar{T-t}{T-s}}\right\}
    \\
    &\leq
    \left(2(1+\xi)\minconstnor{s}{t}+4(1-\multlyap{s}{t}[2])\right)\exp \left\{ \frac{(1+\xi) r_c^2}{\fwdvar{T-t}{T-s}}\right\}
    \eqsp,
\end{align*}
and therefore
\begin{align*} 
    1-\pmixtime[s][t](\xi)
    \ge
    \minconstnor{s}{t}
    \frac{(1-\multlyap{s}{t}[2])\xi}{2(1+\xi)\minconstnor{s}{t}+4(1-\multlyap{s}{t}[2])}
    \exp \left\{-\frac{(1+\xi) r_c^2}{\fwdvar{T-t}{T-s}}\right\} \eqsp,
    \qquad \text{ for }\xi\in(0,2]\eqsp.
\end{align*}
In particular, choosing $\xi=2$ yields 
\begin{align*}
    1-\pmixtime
    \ge
    \frac{(1-\multlyap{s}{t}[2]) \minconstnor{s}{t}}{3\minconstnor{s}{t}+2(1-\multlyap{s}{t}[2])} 
    \exp \left\{-\frac{3 r_c^2}{\fwdvar{T-t}{T-s}}\right\} \eqsp.
\end{align*}

\section{Gaussian framework: explicit backward kernel and contraction}
\label{app:gaussian_framework}

This section provides a closed-form analysis of the backward dynamics in the Gaussian setting. When the data distribution is Gaussian, the forward marginals remain Gaussian and the score admits an explicit linear form. This makes the backward SDE linear with time-dependent coefficients, yielding an explicit representation of both the backward flow and its transition probabilities.

\subsection{Closed-form score and backward transition}
\label{subapp:closed_form_gaussian}

\begin{lemma}\label[lemma]{lem:exactscore}
Consider the forward diffusion \eqref{eq:forward_SDE} and assume
$\pidata = \gaussiand{\mu}{\Sigma}$, with $\mu\in\rset^\xdim$ and
$\Sigma\in\rset^{\xdim\times\xdim}$ symmetric positive definite. Then, for every $t\in[0,T]$, using the notations of \cref{lem:forward_process_law}, the score function is given by
\begin{align} \label{eq:score_function_gaussian}
\score[t][\x]
= -\Sigma_t^{-1}\big(\x-\fwdmean{0}{t}\mu\big)\eqsp,
\end{align}
with
\begin{align*} \Sigma_t \eqdef \fwdmean{0}{t}^2 \Sigma + \fwdvar{0}{t} \Id_d \eqsp. \end{align*}
Moreover, 
\begin{align}\label{eq:Sigma_derivative}
\partial_t \Sigma_{T-t}
=
2\noisesch{T-t}\left(
\isvp \fwdmean{0}{T-t}^{2}\Sigma
-
\left(1-\isvp \fwdvar{0}{T-t}\right)\Id_\xdim
\right)\eqsp.
\end{align}
\end{lemma}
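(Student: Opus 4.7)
The plan is to exploit the Gaussian representation of the forward marginal provided by \cref{lem:forward_process_law} together with the well-known score formula for a multivariate Gaussian, and then obtain \eqref{eq:Sigma_derivative} from the ODEs satisfied by $\fwdmean{0}{t}$ and $\fwdvar{0}{t}$.

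First, I would invoke \cref{lem:forward_process_law} with $\Xora_0\sim\gaussiand{\mu}{\Sigma}$ independent of $G\sim\gaussiand{0}{\Id_\xdim}$, to conclude that
$$
\Xora_t \stackrel{\mathcal L}{=} \fwdmean{0}{t}\Xora_0+\fwdstd{0}{t}G
\sim
\gaussiand{\fwdmean{0}{t}\mu}{\fwdmean{0}{t}^2\Sigma+\fwdvar{0}{t}\Id_\xdim}
=
\gaussiand{\fwdmean{0}{t}\mu}{\Sigma_t} \eqsp.
$$
Hence $\fwdmarg{t}$ is Gaussian with positive-definite covariance (since $\Sigma\succ 0$ and $\fwdvar{0}{t}\ge 0$), and differentiating $\log\fwdmarg{t}(\x)=-\tfrac12\,(\x-\fwdmean{0}{t}\mu)^{\mathrm T}\Sigma_t^{-1}(\x-\fwdmean{0}{t}\mu)+\mathrm{cst}$ with respect to $\x$ yields \eqref{eq:score_function_gaussian}.

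For the derivative \eqref{eq:Sigma_derivative}, I would first differentiate in $s$ the identity $\Sigma_s=\fwdmean{0}{s}^2\Sigma+\fwdvar{0}{s}\Id_\xdim$ using the elementary ODEs
$$
\partial_s \fwdmean{0}{s}^2 = -2\isvp\,\noisesch{s}\,\fwdmean{0}{s}^2,\qquad
\partial_s \fwdvar{0}{s} = -2\isvp\,\noisesch{s}\,\fwdvar{0}{s}+2\noisesch{s} \eqsp,
$$
which follow directly from the explicit expressions of $\fwdmean{0}{s}$ and $\fwdvar{0}{s}$ in \cref{lem:forward_process_law} (and in fact also encode the fact that $\fwdmarg{s}$ solves the associated Fokker--Planck equation). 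This gives
$$
\partial_s \Sigma_s = -2\isvp\,\noisesch{s}\,\Sigma_s + 2\noisesch{s}\,\Id_\xdim\eqsp.
$$
Evaluating at $s=T-t$ and applying the chain rule $\partial_t \Sigma_{T-t}=-(\partial_s\Sigma_s)\big|_{s=T-t}$ produces
$$
\partial_t \Sigma_{T-t}
=
2\noisesch{T-t}\bigl(\isvp\,\Sigma_{T-t}-\Id_\xdim\bigr)
=
2\noisesch{T-t}\bigl(\isvp\,\fwdmean{0}{T-t}^2\Sigma-(1-\isvp\,\fwdvar{0}{T-t})\Id_\xdim\bigr) \eqsp,
$$
which is exactly \eqref{eq:Sigma_derivative}.

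There is no real obstacle here: the statement is essentially a direct computation once the Gaussianity of $\Xora_t$ is recorded. The only mild care is in the sign bookkeeping produced by the time reversal $s\mapsto T-t$ and in recognizing that both $\isvp=0$ (VE) and $\isvp>0$ (VP) are captured by the same formula, since in the VE case the ODEs above reduce to $\partial_s\fwdmean{0}{s}^2=0$ and $\partial_s\fwdvar{0}{s}=2\noisesch{s}$, consistent with the stated expression.
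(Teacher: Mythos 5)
Your proof is correct and follows essentially the same route as the paper: both invoke \cref{lem:forward_process_law} to identify $\fwdmarg{t}=\gaussiand{\fwdmean{0}{t}\mu}{\Sigma_t}$, read off the Gaussian score, and then differentiate $\Sigma_t$ using the ODEs $\partial_t\fwdmean{0}{t}^2=-2\isvp\noisesch{t}\fwdmean{0}{t}^2$ and $\partial_t\fwdvar{0}{t}=2\noisesch{t}(1-\isvp\fwdvar{0}{t})$ before applying the chain rule at $s=T-t$. The only cosmetic difference is that you package the intermediate step as $\partial_s\Sigma_s=-2\isvp\noisesch{s}\Sigma_s+2\noisesch{s}\Id_\xdim$, which is equivalent.
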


\begin{proof}
Using \cref{lem:forward_process_law}
\begin{align*} 
    \Xora_t \eqlaw \fwdmean{0}{t} \Xora_0 + \fwdstd{0}{t} G \eqsp,
\end{align*}
with $\Xora_0 \sim \gaussiand{\mu}{\Sigma}$, $\Xora_t$ is Gaussian with $\fwdmean{0}{t}\mu$ and covariance is
$\fwdmean{0}{t}^{2}\Sigma+\fwdvar{0}{t}\Id_\xdim$. The score of a Gaussian $\gaussiand{m}{C}$ is $- C^{-1}(\x-m)$, giving the
claimed expression for \eqref{eq:score_function_gaussian}. It remains to compute the time derivative. We have
\begin{align*}
\partial_t \fwdmean{0}{t}=-\isvp\noisesch{t}\fwdmean{0}{t}
\quad\Longrightarrow\quad
\partial_t \fwdmean{0}{t}^{2}=-2\isvp\noisesch{t}\fwdmean{0}{t}^{2} \eqsp,
\end{align*}
and
\begin{align*}
\partial_t \fwdvar{0}{t}=2\noisesch{t}-2\isvp\noisesch{t}\fwdvar{0}{t}
=2\noisesch{t}(1-\isvp\fwdvar{0}{t}) \eqsp.
\end{align*}
 Differentiating
$\Sigma_t=\fwdmean{0}{t}^{\,2}\Sigma+\fwdvar{0}{t}\Id_\xdim$ and using
$\partial_t  \Sigma_{T-t}=-\left.\frac{\rmd}{\rmd s}\Sigma_s\right|_{s=T-t}$
gives \eqref{eq:Sigma_derivative}.
\end{proof}
\begin{remark} \label[remark]{req:commute}
    For any $s,t \in [0,T]$ the matrices $\Sigma_s$ and $\Sigma_t$ commute.
\end{remark}

\begin{lemma}\label[lemma]{lem:backward_flow}
Assume that $\pidata=\gaussiand{\mu}{\Sigma}$. Then, for all $t\in[0,T]$,
\begin{align*}
\Xola_t
&=
\rme^{-\isvp \int_0^t \bwdnoisesch{s}\,\rmd s}
\Sigma_{T-t}\Sigma_T^{-1}\,\Xola_0
\\
&\quad
+\rme^{-\isvp \int_0^t \bwdnoisesch{s} \rmd s}
\Sigma_{T-t}\int_0^t
\rme^{\isvp \int_0^s \bwdnoisesch{u} \rmd u}
\sqrt{2\bwdnoisesch{s}} 
\Sigma_{T-s}^{-1} \dbrown{s}
\nonumber\\
&\quad
+2 \left( \rme^{-\isvp \int_0^t \bwdnoisesch{s} \rmd s}
\Sigma_{T-t}\int_0^t
\bwdnoisesch{s}
\rme^{\isvp \int_0^s \bwdnoisesch{u} \rmd u}
\Sigma_{T-s}^{-2}
\fwdmean{0}{T-s} \rmd s \right)\mu
\eqsp,
\end{align*}
where $\Sigma_t=\fwdmean{0}{t}^2\Sigma+\fwdvar{0}{t}\Id_\xdim$ (as in~\Cref{lem:exactscore}).
\end{lemma}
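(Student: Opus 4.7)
The plan is to substitute the explicit Gaussian score into \eqref{eq:backward_SDE}, recognize the resulting equation as a linear inhomogeneous SDE with time-dependent coefficients, and solve it by variation of constants. Using \cref{lem:exactscore} with $t$ replaced by $T-t$ one rewrites the backward drift as
\begin{equation*}
    \isvp \bwdnoisesch{t}\Xola_t + 2\bwdnoisesch{t}\,\score[T-t][\Xola_t]
    = A(t)\,\Xola_t + c(t),
\end{equation*}
with the matrix coefficient $A(t)\eqdef \isvp \bwdnoisesch{t}\Id_\xdim - 2\bwdnoisesch{t}\Sigma_{T-t}^{-1}$ and the affine term $c(t)\eqdef 2\bwdnoisesch{t}\fwdmean{0}{T-t}\Sigma_{T-t}^{-1}\mu$. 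The backward SDE thus reduces to the linear system $\rmd \Xola_t = (A(t)\Xola_t + c(t))\rmd t + \sqrt{2\bwdnoisesch{t}}\,\dbrown{t}$.

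The next step is to construct the fundamental matrix $\Phi_t$ solving $\dot{\Phi}_t = A(t)\Phi_t$ with $\Phi_0 = \Id_\xdim$. I will verify directly that the candidate
\begin{equation*}
    \Phi_t \eqdef \rme^{-\isvp\int_0^t \bwdnoisesch{s}\rmd s}\,\Sigma_{T-t}\Sigma_T^{-1}
\end{equation*}
solves this ODE. Differentiating and using \eqref{eq:Sigma_derivative} one obtains $\partial_t \Sigma_{T-t} = 2\bwdnoisesch{t}(\isvp \Sigma_{T-t}-\Id_\xdim)$, which plugged into the product rule gives
\begin{equation*}
    \dot{\Phi}_t = \isvp\bwdnoisesch{t}\Phi_t - 2\bwdnoisesch{t}\,\rme^{-\isvp\int_0^t \bwdnoisesch{s}\rmd s}\Sigma_T^{-1}.
\end{equation*}
The key point here is \cref{req:commute}: since $\Sigma_{T-t}$ and $\Sigma_T$ commute, one may rewrite $\rme^{-\isvp\int_0^t \bwdnoisesch{s}\rmd s}\Sigma_T^{-1} = \Sigma_{T-t}^{-1}\Phi_t$, which yields $\dot{\Phi}_t = A(t)\Phi_t$ and $\Phi_0 = \Id_\xdim$ as required. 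By the same commutativity, $\Phi_s^{-1} = \rme^{\isvp\int_0^s \bwdnoisesch{u}\rmd u}\Sigma_T\Sigma_{T-s}^{-1}$, so that
\begin{equation*}
    \Phi_t\Phi_s^{-1} = \rme^{-\isvp\int_s^t \bwdnoisesch{u}\rmd u}\,\Sigma_{T-t}\Sigma_{T-s}^{-1}.
\end{equation*}

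Finally, the variation-of-constants formula for linear SDEs gives
\begin{equation*}
    \Xola_t = \Phi_t\Xola_0 + \int_0^t \Phi_t\Phi_s^{-1}c(s)\,\rmd s + \int_0^t \Phi_t\Phi_s^{-1}\sqrt{2\bwdnoisesch{s}}\,\dbrown{s}.
\end{equation*}
Plugging in the expressions for $\Phi_t\Phi_s^{-1}$ and $c(s)$ and factoring out $\rme^{-\isvp\int_0^t \bwdnoisesch{s}\rmd s}\Sigma_{T-t}$ from both integrals yields the announced formula. The only non-routine step is the ODE verification for $\Phi_t$, and the crux of that calculation is the cancellation $\isvp\Sigma_{T-t}-\Id_\xdim$ coming from \eqref{eq:Sigma_derivative} combined with the commutativity of the covariances $\{\Sigma_s\}_{s\in[0,T]}$; once these are in hand, the remaining computations are purely algebraic.
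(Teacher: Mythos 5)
Your proposal is correct and takes essentially the same approach as the paper's proof: both reduce the backward SDE to a linear inhomogeneous SDE using the explicit Gaussian score and solve it by an integrating-factor argument. The paper applies Itô's formula directly to $Z_t=\Sigma_{T-t}^{-1}\rme^{\isvp\int_0^t\bwdnoisesch{s}\rmd s}\Xola_t$, which is (up to the constant factor $\Sigma_T^{-1}$) exactly $\Phi_t^{-1}\Xola_t$ for your fundamental matrix $\Phi_t$, so the variation-of-constants presentation you use is just a repackaging of the same computation; note also that the appeals to \cref{req:commute} are not actually needed at the two places you invoke it, since the relevant simplifications ($\Sigma_{T-t}^{-1}\Sigma_{T-t}=\Id_\xdim$ and $\Sigma_T^{-1}\Sigma_T=\Id_\xdim$) hold without commutativity.
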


\begin{proof}
By~\Cref{lem:exactscore}, for $u\in[0,T]$,
$\score[u][\x]=-\Sigma_u^{-1}(\x-\fwdmean{0}{u}\mu)$.
Plugging $u=T-t$ into \eqref{eq:backward_SDE} gives
\begin{align*}
\rmd \Xola_t
=
\left(
\isvp\bwdnoisesch{t}\Xola_t
-2\bwdnoisesch{t}\Sigma_{T-t}^{-1}\left(\Xola_t-\fwdmean{0}{T-t}\mu\right)
\right)\rmd t
+\sqrt{2\bwdnoisesch{t}} \dbrown{t}\eqsp.
\end{align*}
Define
\begin{align*}
Z_t \eqdef \Sigma_{T-t}^{-1} 
\rme^{\isvp\int_0^t \bwdnoisesch{s} \rmd s} \Xola_t\eqsp.
\end{align*}
Using It\^o's formula and \eqref{eq:Sigma_derivative} we obtain
\begin{align*}
\rmd Z_t
=
2\bwdnoisesch{t} 
\rme^{\isvp\int_0^t \bwdnoisesch{s} \rmd s}
\Sigma_{T-t}^{-2} \fwdmean{0}{T-t} \mu \rmd t
+
\rme^{\isvp\int_0^t \bwdnoisesch{s} \rmd s}
\sqrt{2\bwdnoisesch{t}}
\Sigma_{T-t}^{-1} \dbrown{t}\eqsp.
\end{align*}
Integrating from $0$ to $t$ yields
\begin{align*}
Z_t
=
\Sigma_T^{-1}\Xola_0
+
2\int_0^t
\bwdnoisesch{s}
\rme^{\isvp\int_0^s \bwdnoisesch{u}\,\rmd u}
\Sigma_{T-s}^{-2}
\fwdmean{0}{T-s} \mu\rmd s
+
\int_0^t
\rme^{\isvp\int_0^s \bwdnoisesch{u} \rmd u} 
\sqrt{2\bwdnoisesch{s}}
\Sigma_{T-s}^{-1} \dbrown{s}\eqsp,
\end{align*}
and multiplying by $\rme^{-\isvp\int_0^t \bwdnoisesch{s}\,\rmd s}\Sigma_{T-t}$
gives the desired expression for $\Xola_t$.
\end{proof}

\begin{corollary}
\label[corollary]{cor:gaussian_forgetting}
Assume that $\pidata=\gaussiand{\mu}{\Sigma}$ and let $0\le s<t\le T$.
Then, the conditional distribution of $\Xola_t$ given $\Xola_s=\x$ is Gaussian with
\begin{align*}
\CPE{}{\Xola_s=\x}{\Xola_t}
&=
\rme^{-\isvp \int_s^t \bwdnoisesch{u}\,\rmd u} 
\Sigma_{T-t}\Sigma_{T-s}^{-1} x \\
&\quad
+2\left(
\rme^{-\isvp \int_s^t \bwdnoisesch{u} \rmd u}
\Sigma_{T-t}\int_s^t
\bwdnoisesch{r}
\rme^{\isvp \int_s^r \bwdnoisesch{u} \rmd u}
\Sigma_{T-r}^{-2}
\fwdmean{0}{T-r} \rmd r
\right)\mu
\eqsp,
\\[2mm]
\CPV{}{\Xola_s=\x}{\Xola_t}
&=
\Sigma_{T-t}
-
\rme^{-2\isvp \int_s^t \bwdnoisesch{u} \rmd u}
\Sigma_{T-t}\Sigma_{T-s}^{-1}\Sigma_{T-t}
\eqsp,
\end{align*}
in particular the conditional covariance does not depend on $\x$.
\end{corollary}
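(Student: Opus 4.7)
The plan is to reduce everything to a direct application of \cref{lem:backward_flow} combined with standard linear SDE manipulations, and then to use the commutation structure noted in \cref{req:commute} to close the variance computation in closed form.

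First, I observe that the backward SDE \eqref{eq:backward_SDE} is linear in $\Xola$ with time-dependent coefficients. Hence its flow from any initial time $s$ with deterministic initial condition $\x$ admits exactly the same representation as in \cref{lem:backward_flow}, up to shifting the integration bounds. Repeating the change of variables $Z_r \eqdef \Sigma_{T-r}^{-1}\rme^{\isvp \int_s^r \bwdnoisesch{u}\rmd u}\Xola_r$ for $r\in[s,t]$ and applying It\^o's formula with \eqref{eq:Sigma_derivative}, I get, conditionally on $\Xola_s=\x$,
\begin{align*}
\Xola_t
&=
\rme^{-\isvp \int_s^t \bwdnoisesch{u}\rmd u}
\Sigma_{T-t}\Sigma_{T-s}^{-1}\x \\
&\quad
+ 2\Bigl(
\rme^{-\isvp \int_s^t \bwdnoisesch{u}\rmd u}\Sigma_{T-t}
\int_s^t \bwdnoisesch{r}\rme^{\isvp \int_s^r \bwdnoisesch{u}\rmd u}
\Sigma_{T-r}^{-2}\fwdmean{0}{T-r}\rmd r
\Bigr)\mu \\
&\quad
+ \rme^{-\isvp \int_s^t \bwdnoisesch{u}\rmd u}\Sigma_{T-t}
\int_s^t \rme^{\isvp \int_s^r \bwdnoisesch{u}\rmd u}\sqrt{2\bwdnoisesch{r}}\Sigma_{T-r}^{-1}\dbrown{r}.
\end{align*}
Since the Wiener integral in the last line is centered and Gaussian, $\Xola_t\mid\Xola_s=\x$ is Gaussian, and the conditional expectation formula follows immediately by removing the stochastic integral.

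The nontrivial step is the conditional variance. By It\^o's isometry (and using that all matrices $\Sigma_{T-r}$ are symmetric and mutually commuting by \cref{req:commute}, so they also commute with $\Sigma_{T-r}^{-1}$ and $\Sigma_{T-t}$), the covariance of the stochastic integral equals
\begin{equation*}
\rme^{-2\isvp\int_s^t \bwdnoisesch{u}\rmd u}\Sigma_{T-t}
\left(\int_s^t 2\bwdnoisesch{r}\rme^{2\isvp\int_s^r \bwdnoisesch{u}\rmd u}\Sigma_{T-r}^{-2}\rmd r\right)
\Sigma_{T-t}.
\end{equation*}
To evaluate this integral, I compute the $r$-derivative of $r\mapsto \rme^{2\isvp\int_s^r\bwdnoisesch{u}\rmd u}\Sigma_{T-r}^{-1}$. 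Using \eqref{eq:Sigma_derivative} and the commutativity of $\Sigma_{T-r}^{-1}$ with $\partial_r \Sigma_{T-r}$, the key identity to establish is
\begin{equation*}
2\isvp\bwdnoisesch{r}\Sigma_{T-r}-\partial_r\Sigma_{T-r}=2\bwdnoisesch{r}\Id_\xdim,
\end{equation*}
which is an immediate algebraic cancellation between the $\fwdmean{0}{T-r}^2\Sigma$ and $\fwdvar{0}{T-r}\Id_\xdim$ contributions. This yields the clean antiderivative
\begin{equation*}
\frac{\rmd}{\rmd r}\Bigl(\rme^{2\isvp\int_s^r\bwdnoisesch{u}\rmd u}\Sigma_{T-r}^{-1}\Bigr)
=
2\bwdnoisesch{r}\rme^{2\isvp\int_s^r\bwdnoisesch{u}\rmd u}\Sigma_{T-r}^{-2}.
\end{equation*}
Integrating between $s$ and $t$ and plugging back, the covariance simplifies to
$\Sigma_{T-t}-\rme^{-2\isvp\int_s^t\bwdnoisesch{u}\rmd u}\Sigma_{T-t}\Sigma_{T-s}^{-1}\Sigma_{T-t}$, which is the announced expression, and is indeed independent of $\x$ since the stochastic integral does not depend on the initial condition.

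The main obstacle I anticipate is not the SDE manipulation itself but finding the correct antiderivative that turns the variance integral into a closed-form expression; without the cancellation $2\isvp\bwdnoisesch{r}\Sigma_{T-r}-\partial_r\Sigma_{T-r}=2\bwdnoisesch{r}\Id_\xdim$, the formula would remain implicit. Once this identity is isolated, everything else reduces to bookkeeping enabled by the commutativity of the family $(\Sigma_{T-r})_{r\in[s,t]}$.
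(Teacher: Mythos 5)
Your proof is correct and follows the same linear-SDE/change-of-variables strategy as the paper, with one cosmetic variation: you restart the It\^o change of variables at time $s$ and integrate over $[s,t]$, whereas the paper applies \cref{lem:backward_flow} at times $s$ and $t$ and substitutes out $\Xola_0$ via the identity relating $\Sigma_{T-s}^{-1}\Xola_s$ to the accumulated integrals on $[0,s]$; both yield the same decomposition of $\Xola_t$ into a linear term in $\x$, a deterministic $\mu$-shift, and a centered Gaussian stochastic integral, hence the same conditional mean. Where you add value is the conditional variance: the paper merely invokes ``It\^o's isometry and the same calculation as in Lemma~\ref{lem:backward_flow}'' without displaying the closed-form evaluation of the resulting matrix integral, whereas you explicitly isolate the cancellation $2\isvp\bwdnoisesch{r}\Sigma_{T-r}-\partial_r\Sigma_{T-r}=2\bwdnoisesch{r}\Id_\xdim$ (which indeed holds by \eqref{eq:Sigma_derivative} and the definition $\Sigma_{T-r}=\fwdmean{0}{T-r}^2\Sigma+\fwdvar{0}{T-r}\Id_\xdim$) and use it to produce the antiderivative $\rme^{2\isvp\int_s^r\bwdnoisesch{u}\rmd u}\Sigma_{T-r}^{-1}$, reducing the It\^o-isometry integral to its boundary values and yielding the stated covariance directly. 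This fills in a step the paper leaves to the reader, and your appeal to \cref{req:commute} to justify freely rearranging $\Sigma_{T-t}$, $\Sigma_{T-r}^{-1}$ and $\partial_r\Sigma_{T-r}$ is appropriate since all of these are polynomials in $\Sigma$.
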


\begin{proof}
Starting from~\Cref{lem:backward_flow} (applied at times $t$ and $s$) and using
the identity
\begin{align*}
    &\Sigma_T^{-1}\Xola_0
    +
    2\int_0^{s}
    \bwdnoisesch{r}
    \rme^{\isvp \int_0^{r} \bwdnoisesch{u}\,\rmd u}
    \Sigma_{T-r}^{-2}
    \fwdmean{0}{T-r} \mu\rmd r
    +
    \int_0^{s}
    \rme^{\isvp \int_0^{r} \bwdnoisesch{u}\,\rmd u}
    \sqrt{2\bwdnoisesch{r}}
    \Sigma_{T-r}^{-1}\,\dbrown{r}
    \\
    &=
    \rme^{\isvp \int_0^{s} \bwdnoisesch{u}\rmd u}
    \Sigma_{T-s}^{-1}\Xola_s
    \eqsp,
\end{align*}
one obtains a decomposition of $\Xola_t$ given $\Xola_s$ into: a linear term in $\Xola_s$,
a stochastic integral over $[s,t]$, and a deterministic $\mu$-shift over $[s,t]$. Taking the expectation and using that the stochastic part has mean $0$ yields the conditional mean. The conditional covariance is given by It\^o's isometry and the same calculation as in \cref{lem:backward_flow} (now on
$[s,t]$ instead of $[0,t]$).
\end{proof}

\subsection{Explicit contraction rates for the Euclidean norm}
\label{subapp_rate_gaussian}

We exploit this structure of the transitions derived previously to obtain explicit contraction bounds: first in Euclidean operator norm (through $\|A_{s:t}\|$), and, as a consequence, in $\mathcal W_2$ distance.

\begin{lemma} \label[lemma]{lem:gaussian_contraction_euclidean}
Following \cref{cor:gaussian_forgetting}, in the Gaussian case, the conditional mean satisfies for any $s < t$
\begin{align*}
\CPE{}{\Xola_s =\x}{\Xola_t}
=
A_{s:t} \x + b_{s:t} \eqsp,
\qquad
A_{s:t}
\eqdef
\rme^{-\isvp \int_{s}^{t} \bwdnoisesch{u} \rmd u} 
\Sigma_{T-t} \Sigma_{T-s}^{-1} \eqsp,
\end{align*}
for some deterministic offset $b_{s:t}$ (explicit in \cref{cor:gaussian_forgetting}). In particular,
the dependence on the initial condition over one step is entirely carried by the matrix $A_{s:t}$, which is a strict contraction $\normEc{A_{s:t}} < 1$ for the Euclidean (operator) norm in the VE case and whenever $\isvp^{-1} \ge \lambda_{\max}(\Sigma)$ in the VP case.
\end{lemma}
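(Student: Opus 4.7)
The plan is to reduce everything to a spectral computation on the eigenvalues of $\Sigma$. By \cref{cor:gaussian_forgetting}, the conditional mean is already affine in $\x$, so the dependence on the initial condition is carried entirely by $A_{s:t}$ and forgetting in Euclidean norm follows from $\normEc{A_{s:t}}<1$. Since $\Sigma_u=\fwdmean{0}{u}^2\Sigma+\fwdvar{0}{u}\Id_\xdim$ for every $u$, both $\Sigma_{T-t}$ and $\Sigma_{T-s}$ are polynomials in $\Sigma$ (\cref{req:commute}); I would diagonalize them simultaneously in the eigenbasis of $\Sigma$ to obtain
$$
\normEc{A_{s:t}}
=\max_{\lambda\in\mathrm{Sp}(\Sigma)}
\rme^{-\isvp\int_s^t\bwdnoisesch{u}\rmd u}\,
\frac{\fwdmean{0}{T-t}^2\lambda+\fwdvar{0}{T-t}}{\fwdmean{0}{T-s}^2\lambda+\fwdvar{0}{T-s}}\eqsp.
$$
It then suffices to control this scalar ratio in each regime.

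In the VE case ($\isvp=0$), the exponential prefactor equals $1$ and $\fwdmean{0}{u}\equiv 1$, so the ratio collapses to $(\lambda+\fwdvar{0}{T-t})/(\lambda+\fwdvar{0}{T-s})$. Because $u\mapsto\fwdvar{0}{u}$ is strictly increasing (under the natural non-degeneracy of $\noisesch{}$) and $T-t<T-s$, this ratio lies in $(0,1)$ for every $\lambda\ge 0$, yielding the VE contraction with no further work.

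For the VP case I would use two identities that follow directly from \cref{lem:forward_process_law}: (i) $\fwdvar{0}{u}=\isvp^{-1}\bigl(1-\fwdmean{0}{u}^2\bigr)$, and (ii) $\rme^{-\isvp\int_s^t\bwdnoisesch{u}\rmd u}=\fwdmean{0}{T-s}/\fwdmean{0}{T-t}$, the latter obtained from the change of variables $v=T-u$ combined with $\bwdnoisesch{u}=\noisesch{T-u}$. Setting $u=\fwdmean{0}{T-t}$ and $v=\fwdmean{0}{T-s}$ with $0<v<u\le 1$, the inequality $\psi(\lambda)<1$ becomes
$$
v\bigl(u^2\lambda+\isvp^{-1}(1-u^2)\bigr)
<
u\bigl(v^2\lambda+\isvp^{-1}(1-v^2)\bigr)\eqsp,
$$
which I expect to factor cleanly as $(u-v)\bigl(\isvp^{-1}(1+uv)-uv\,\lambda\bigr)>0$. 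Dividing by $u-v>0$ reduces the contraction condition to $\lambda<\isvp^{-1}\bigl(1+(uv)^{-1}\bigr)$; since $uv\le 1$, the right-hand side is bounded below by $\isvp^{-1}$, so the standing assumption $\isvp^{-1}\ge\lambda_{\max}(\Sigma)$ forces the bound for every eigenvalue $\lambda$ of $\Sigma$.

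The only delicate point I anticipate is the algebraic factorization $(u-v)\bigl(\isvp^{-1}(1+uv)-uv\,\lambda\bigr)$ in the VP regime, which is what makes the threshold $\isvp^{-1}\ge\lambda_{\max}(\Sigma)$ tight and matches the form of \cref{assump:p0:score} through the condition $\ctescorenorm{0}>\isvp/2$. Once this factorization is in place, simultaneous diagonalization and the monotonicity of $\fwdvar{0}{\cdot}$ handle the remainder.
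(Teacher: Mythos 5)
Your proof is correct and takes the same overall route as the paper's: reduce $\normEc{A_{s:t}}$ to the scalar ratio $\rme^{-\isvp\int_s^t\bwdnoisesch{u}\,\rmd u}\cdot\frac{\fwdmean{0}{T-t}^2\lambda+\fwdvar{0}{T-t}}{\fwdmean{0}{T-s}^2\lambda+\fwdvar{0}{T-s}}$ by simultaneous diagonalization (justified by $\Sigma_u=\fwdmean{0}{u}^2\Sigma+\fwdvar{0}{u}\Id_\xdim$ and \cref{req:commute}), then control the ratio eigenvalue by eigenvalue. The VE case is handled identically.

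Where you depart from the paper is the VP scalar argument. The paper absorbs $\fwdvar{0}{u}=\isvp^{-1}(1-\fwdmean{0}{u}^2)$ into the ratio and then does a sign comparison: the ratio equals $\bigl(\isvp^{-1}+u^2(\lambda-\isvp^{-1})\bigr)\big/\bigl(\isvp^{-1}+v^2(\lambda-\isvp^{-1})\bigr)$ with $u=\fwdmean{0}{T-t}>v=\fwdmean{0}{T-s}$, and since $\lambda-\isvp^{-1}\le0$ the ratio is $\le1$; strictness then comes from the prefactor $\rme^{-\isvp\int_s^t\bwdnoisesch{u}\,\rmd u}<1$. You instead convert the prefactor into $v/u$ via the identity $\rme^{-\isvp\int_s^t\bwdnoisesch{u}\,\rmd u}=\fwdmean{0}{T-s}/\fwdmean{0}{T-t}$ (which is correct: change variables $w=T-u$ and use $\fwdmean{0}{t}=\exp(-\isvp\int_0^t\noisesch{s}\,\rmd s)$), multiply through, and factor the difference as $(u-v)\bigl(\isvp^{-1}(1+uv)-uv\lambda\bigr)$. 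I checked the algebra — the factorization is exact, and it gives the equivalent contraction condition $\lambda<\isvp^{-1}(1+(uv)^{-1})$. This is a genuinely nicer way to see the statement: it handles the boundary case $\lambda_{\max}=\isvp^{-1}$ cleanly (the paper's bullet "ratio $<1$" is a slight overstatement at that boundary, where the ratio equals $1$ and strictness is rescued only by the prefactor, which the paper leaves implicit), and it exposes the exact per-step threshold. One small correction: since $uv\le1$ the right-hand side $\isvp^{-1}(1+(uv)^{-1})$ is bounded below by $2\isvp^{-1}$, not by $\isvp^{-1}$, so the stated condition $\lambda_{\max}(\Sigma)\le\isvp^{-1}$ is sufficient but \emph{not} tight for one-step contraction — your factorization actually proves the stronger result that $\lambda_{\max}(\Sigma)<2\isvp^{-1}$ suffices. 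Your closing remark that the factorization "makes the threshold $\isvp^{-1}\ge\lambda_{\max}(\Sigma)$ tight" should therefore be softened; what it makes tight is the condition $\lambda<\isvp^{-1}(1+(uv)^{-1})$.
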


\begin{proof}
Recall from \cref{lem:exactscore} that
\begin{align*}
\Sigma_t=\fwdmean{0}{t}^2\Sigma+\fwdvar{0}{t}\Id_\xdim.
\end{align*}
Let $\lambda_1,\dots,\lambda_\xdim$ be the eigenvalues of $\Sigma$. Then, 
\begin{align*}
\normEc{A_{s:t}} 
&=
\rme^{-\isvp \int_{s}^{t} \bwdnoisesch{u}\,\rmd u}
\max_{1\le i\le \xdim}
\frac{\fwdmean{0}{T-t}^{2}\lambda_i+\fwdvar{0}{T-t}}
{\fwdmean{0}{T-s}^{2}\lambda_i+\fwdvar{0}{T-s}}
\eqsp.
\end{align*}
\begin{itemize}
\item \textbf{VE case} ($\isvp=0$). By definition, $\fwdmean{0}{t}\equiv 1$ and $\fwdvar{0}{t}$ is strictly increasing in $t$. Since $T-t<T-s$, we have $\fwdvar{0}{T-t}<\fwdvar{0}{T-s}$ and therefore for every $i$,
\begin{align*}
\frac{\lambda_i+\fwdvar{0}{T-t}}{\lambda_i+\fwdvar{0}{T-s}}<1,
\end{align*}
which implies $\normEc{A_{s:t}}<1$. Hence the one--step conditional mean is a strict contraction in Euclidean norm.
\item \textbf{VP case} ($\isvp>0$). The prefactor
$\rme^{-\isvp \int_{t_k}^{t_{k+1}} \bwdnoisesch{u}\,\rmd u}<1$ is always contractive. Moreover, in the VP case, for any $s \in [0,T]$ $\fwdvar{0}{s} = \isvp^{-1} \left( 1 - \fwdmean{0}{s}^{2} \right)$, hence, for any $1\le i\le \xdim$,
\begin{align*}
\frac{\fwdmean{0}{T-t}^{2}\lambda_i+\fwdvar{0}{T-t}}
{\fwdmean{0}{T-s}^{2}\lambda_i+\fwdvar{0}{T-s}} = \frac{\isvp^{-1} + \fwdmean{0}{T-t}^{2} \left( \lambda_i - \isvp^{-1} \right) }{\isvp^{-1} + \fwdmean{0}{T-s}^{2} \left( \lambda_i - \isvp^{-1} \right)}
\eqsp.
\end{align*}
In particular, using that $\fwdmean{0}{t}$ is strictly decreasing in $t$,
\begin{itemize}
    \item if $\lambda_i \le \isvp^{-1}$, then
    \begin{align*}
    \frac{\fwdmean{0}{T-t}^{2}\lambda_i+\fwdvar{0}{T-t}}
    {\fwdmean{0}{T-s}^{2}\lambda_i+\fwdvar{0}{T-s}}
    < 1,
    \end{align*}
    so the covariance ratio is contractive along this direction. Moreover, if $\lambda_{\max} \le \isvp^{-1}$, then for all $i$, $\lambda_i \le \isvp^{-1}$.
    \item if $\lambda_i \ge \isvp^{-1}$, then the ratio is maybe be larger and and this eigendirection may expand if not compensated by the prefactor term.
\end{itemize}
It follows that the one--step conditional mean in the VP case is also a strict contraction in Euclidean norm when $\lambda_{\max} <\isvp^{-1}$.
\end{itemize}
\end{proof}

\begin{lemma}\label[lemma]{lem:gaussian_W2_dirac}
Assume that $\pidata=\gaussiand{\mu}{\Sigma}$ and let $0\le s<t\le T$. Then, for any $\x,\x'\in\rset^\xdim$,
\begin{align*}
\wasserstein[2][]{\delta_\x \bwdker{s}[t]}{\delta_{\x'} \bwdker{s}[t]}
\le
\normEc{\rme^{-\isvp \int_{s}^{t} \bwdnoisesch{u} \rmd u} 
\Sigma_{T-t}\Sigma_{T-s}^{-1}} \normEc{\x-\x'}\eqsp.
\end{align*}
\end{lemma}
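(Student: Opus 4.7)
The plan is to exploit the explicit Gaussian description of $\delta_\x \bwdker{s}[t]$ provided by \cref{cor:gaussian_forgetting}, together with the well-known fact that the $\mathcal W_2$ distance between two Gaussians with a common covariance reduces to the Euclidean distance between their means. This bypasses any need for the Bures--Wasserstein formula.

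First, I would invoke \cref{cor:gaussian_forgetting} to identify the two laws: since the backward SDE has linear drift in the Gaussian case (see \cref{lem:exactscore}), $\delta_\x \bwdker{s}[t]$ is the Gaussian with mean $A_{s:t}\x + b_{s:t}$ (where $A_{s:t} = \rme^{-\isvp \int_s^t \bwdnoisesch{u} \rmd u} \Sigma_{T-t}\Sigma_{T-s}^{-1}$ and $b_{s:t}$ is the $\mu$-dependent offset from \cref{cor:gaussian_forgetting}) and covariance $C_{s:t} \eqdef \Sigma_{T-t} - \rme^{-2\isvp \int_s^t \bwdnoisesch{u} \rmd u}\Sigma_{T-t}\Sigma_{T-s}^{-1}\Sigma_{T-t}$. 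The key structural fact is that $C_{s:t}$ does not depend on the starting point, so $\delta_\x \bwdker{s}[t]$ and $\delta_{\x'}\bwdker{s}[t]$ share the same covariance $C_{s:t}$ and differ only by a translation $A_{s:t}(\x-\x')$ in their means.

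Second, I would exhibit an explicit coupling realizing the bound. Let $Z\sim\gaussiand{0}{C_{s:t}}$ and set $U \eqdef A_{s:t}\x + b_{s:t} + Z$, $V \eqdef A_{s:t}\x' + b_{s:t} + Z$. By construction, $U\sim \delta_\x \bwdker{s}[t]$, $V\sim\delta_{\x'}\bwdker{s}[t]$, and
\begin{equation*}
\mathbb E\left[\normEc{U-V}^2\right]
= \normEc{A_{s:t}(\x-\x')}^2
\le \normEc{A_{s:t}}^2 \normEc{\x-\x'}^2\eqsp.
\end{equation*}
Taking square roots and using the definition of $\mathcal W_2$ as an infimum over couplings yields the announced estimate.

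There is essentially no obstacle here: the proof is a direct consequence of the shared-covariance property (which is itself a byproduct of the linearity of the backward drift in the Gaussian regime). One minor point worth checking is that $C_{s:t}$ is indeed positive semi-definite so that $Z$ is well defined; this follows from \cref{req:commute}, since $\Sigma_{T-t}$ and $\Sigma_{T-s}$ commute and have a joint eigenbasis, in which $C_{s:t}$ is diagonal with non-negative entries by the same monotonicity computation used in \cref{lem:gaussian_contraction_euclidean}. No Bures--Wasserstein computation and no further assumption beyond $\pidata=\gaussiand{\mu}{\Sigma}$ are required.
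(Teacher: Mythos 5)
Your proof is correct and rests on exactly the same key observation as the paper's: by \cref{cor:gaussian_forgetting}, the one-step backward kernel started from $\x$ is Gaussian with an $\x$-independent covariance $C_{s:t}$ and mean $A_{s:t}\x+b_{s:t}$, so the two laws differ only by the translation $A_{s:t}(\x-\x')$. The paper then invokes the Bures formula (which, for Gaussians sharing a covariance, reduces to the Euclidean distance between the means) and bounds by the operator norm; you instead exhibit the translation coupling $U=A_{s:t}\x+b_{s:t}+Z$, $V=A_{s:t}\x'+b_{s:t}+Z$ and bound $\mathbb E[\|U-V\|^2]^{1/2}$ directly. These are two ways of saying the same thing — your coupling is in fact the optimal one for this pair, so the inequality you derive is the same equality-then-bound chain as in the paper, just argued without naming the Bures metric. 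This is slightly more self-contained but not a substantively different route.

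One small remark: you flag that $C_{s:t}$ should be positive semi-definite and appeal to \cref{req:commute} and ``the same monotonicity computation'' as in \cref{lem:gaussian_contraction_euclidean}. That computation controls the eigenvalues of $A_{s:t}$ (a single factor of $\rme^{-\isvp\int_s^t\bwdnoisesch{u}\rmd u}$), whereas the p.s.d.\ condition for $C_{s:t}$ involves $\rme^{-2\isvp\int_s^t\bwdnoisesch{u}\rmd u}\,\sigma_i(\Sigma_{T-t})/\sigma_i(\Sigma_{T-s})\le 1$, which is a different (indeed weaker, for $\isvp>0$) inequality, so the citation is slightly off. The cleanest justification is that $C_{s:t}$ is by construction the conditional covariance $\CPV{}{\Xola_s=\x}{\Xola_t}$ obtained from Itô's isometry in \cref{cor:gaussian_forgetting}, hence automatically positive semi-definite; no separate eigenvalue check is needed.
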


\begin{proof}
In the Gaussian case, for any $0\le s<t\le T$ the conditional law of $\Xola_t$ given $\Xola_s=\x$ is Gaussian with covariance independent of $\x$ (see \cref{cor:gaussian_forgetting}) and conditional mean
\begin{align*}
\CPE{}{\Xola_s =\x}{\Xola_t}
=
A_{s:t} \x + b_{s:t} \eqsp,
\qquad
A_{s:t}
\eqdef
\rme^{-\isvp \int_{s}^{t} \bwdnoisesch{u} \rmd u} 
\Sigma_{T-t} \Sigma_{T-s}^{-1} \eqsp.
\end{align*}
As a consequence, using the closed-form formula for Gaussian random variables (Bures metric)
\begin{align*}
\wasserstein[2][]{\delta_\x \bwdker{s}[t]}{\delta_{\x'} \bwdker{s}[t]}
& =
\normEc{
\rme^{-\isvp \int_s^t \bwdnoisesch{u}\,\rmd u}
\Sigma_{T-t}\Sigma_{T-s}^{-1}(\x-\x')
} \\
&\le
\normEc{
\rme^{-\isvp \int_s^t \bwdnoisesch{u}\,\rmd u}
\Sigma_{T-t}\Sigma_{T-s}^{-1}
} \normEc{\x-\x'}\eqsp.
\end{align*}
\end{proof}

\section{Proof of the main stability bound (\cref{thm:global-weak-error})}
\label{app:general-thm:final-bound}

A proof sketch is provided in the main text. This appendix completes the proof. We collect the intermediate estimates used in the proof sketch from the main text.



\subsection{Initialization error: mixing properties of the forward process} \label{subapp:initialization_error}

We bound the discrepancy between the terminal forward law $p_T$ and the reference Gaussian measure $\pi_\infty$ used to initialize the backward chain. 
In the VE case, the KL divergence to the Gaussian reference is controlled via a convexity argument (\cref{lem:mixing_time}). In the VP case, KL contracts exponentially fast along the Ornstein--Uhlenbeck flow (\cref{lem:mixing_time}). These KL bounds are then converted into weighted total variation bounds through Pinsker/Hellinger estimates and moment controls.

\begin{lemma} \label[lemma]{lem:mixing_time}
    Let $p_T$ denote the law of $\Xora_T$ and suppose that \cref{assump:p0:score} holds. 
    \begin{itemize}
        \item If $\isvp = 0$ (Variance Exploding) and $ \refmeas = 
        \gaussiand{0}{(2 \int_0^T \noisesch{s} \, \rmd s)\Id_\xdim}$, then
        \begin{align} \label{eq_mixing_VE}
                \kl{p_T}{\refmeas}
        \leq \frac{\mathbb E \left[ \normEc{\Xora_0}^2 \right] }{4 \int_0^T \noisesch{s}  \rmd s}
        \eqsp.
        \end{align}
        \item If $\isvp > 0$ (Variance Preserving) and $\refmeas = 
        \gaussiand{0}{\isvp^{-1} \Id_\xdim}$, then
        \begin{align} \label{eq_mixing_VP}
        \kl{p_T}{\refmeas}
        \leq \kl{\pi_{\rm data} }{\refmeas} \rme^{-2 \isvp \int_0^T \noisesch{s}  \rmd s}
        \eqsp.
        \end{align}
    \end{itemize}
\end{lemma}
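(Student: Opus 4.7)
Both bounds are consequences of the explicit Gaussian representation of the forward marginal from \cref{lem:forward_process_law}. The VE bound follows from a convexity argument, while the VP bound follows from an exponential decay of Kullback--Leibler divergence along an Ornstein--Uhlenbeck flow, obtained after a deterministic time change.

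\textbf{VE case.} By \cref{lem:forward_process_law}, $\Xora_T \stackrel{\mathcal L}{=} \Xora_0 + \fwdstd{0}{T} G$ with $G \sim \gaussiand{0}{\Id_{\xdim}}$ independent of $\Xora_0 \sim \pidata$, so the law of $\Xora_T$ is the mixture
$$
p_T = \int \gaussiand{\x_0}{\fwdvar{0}{T}\Id_\xdim}\,\pidata(\rmd \x_0),
$$
with $\fwdvar{0}{T} = 2\int_0^T \noisesch{s}\rmd s$. The reference $\refmeas = \gaussiand{0}{\fwdvar{0}{T}\Id_\xdim}$ can similarly be written as $\gaussiand{0}{\fwdvar{0}{T}\Id_\xdim}$ sampled from the Dirac at $0$. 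I would then invoke joint convexity of the Kullback--Leibler divergence together with the closed-form KL between two Gaussians with the same covariance, giving
$$
\kl{p_T}{\refmeas}
\le \int \kl{\gaussiand{\x_0}{\fwdvar{0}{T}\Id_\xdim}}{\gaussiand{0}{\fwdvar{0}{T}\Id_\xdim}}\,\pidata(\rmd \x_0)
= \int \frac{\normEc{\x_0}^2}{2 \fwdvar{0}{T}}\,\pidata(\rmd \x_0),
$$
which yields \eqref{eq_mixing_VE} after substituting $\fwdvar{0}{T} = 2\int_0^T \noisesch{s}\rmd s$.

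\textbf{VP case.} Here the forward SDE \eqref{eq:forward_SDE} is an Ornstein--Uhlenbeck process with time-dependent rate $\noisesch{t}$. The key idea is to absorb the time-dependence via the deterministic time change $u(t) \eqdef \int_0^t \noisesch{s}\rmd s$, and set $Y_u \eqdef \Xora_{t(u)}$, so that $Y$ solves the homogeneous OU equation
$$
\rmd Y_u = -\isvp\,Y_u\,\rmd u + \sqrt{2}\,\rmd \tilde B_u,
$$
for some Brownian motion $\tilde B$. Its invariant measure is precisely $\refmeas = \gaussiand{0}{\isvp^{-1}\Id_\xdim}$, which satisfies a log-Sobolev inequality with constant $\isvp$. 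Writing $q_u$ for the law of $Y_u$, the standard de~Bruijn identity together with LSI yields
$$
\frac{\rmd}{\rmd u} \kl{q_u}{\refmeas} = -\operatorname{I}(q_u \,\|\, \refmeas) \le -2\isvp\,\kl{q_u}{\refmeas},
$$
so by Gr\"onwall $\kl{q_u}{\refmeas} \le \rme^{-2\isvp u}\,\kl{q_0}{\refmeas}$. Reverting to the original time variable, $q_{u(T)} = p_T$ and $q_0 = \pidata$, which gives \eqref{eq_mixing_VP}.

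\textbf{Main difficulty.} The computations themselves are standard once the right reduction is made; the only subtle point is handling the time-inhomogeneity of the VP schedule cleanly. The time change makes this transparent by reducing to a homogeneous OU process whose exponential ergodicity in KL is classical. A minor verification is that \cref{assump:p0:score}, via \cref{lem:pdata-sub-gaussian}, guarantees that $\kl{\pidata}{\refmeas}$ and the second moment $\mathbb{E}[\normEc{\Xora_0}^2]$ are finite, so both right-hand sides are well-defined.
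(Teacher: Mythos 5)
Your proof is correct, and the VE bound is obtained exactly as in the paper: write $p_T$ as a Gaussian mixture, use convexity of $\kl{\cdot}{\refmeas}$ in the first argument, and apply the closed-form KL between two Gaussians with the same covariance. For the VP bound the paper does not spell out the argument but instead cites Lemma~B.1 of \citet{strasman2025an} ``after a suitable time change and an adjustment of the invariant measure''; what you write is precisely the content behind that citation: the time change $u(t)=\int_0^t\noisesch{s}\rmd s$ reduces the forward dynamics to a homogeneous Ornstein--Uhlenbeck process with invariant measure $\gaussiand{0}{\isvp^{-1}\Id_\xdim}$, which satisfies a log-Sobolev inequality, and de~Bruijn plus Gr\"onwall then give the stated exponential KL decay. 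So the two proofs take the same route; yours is simply self-contained rather than by reference. Two small remarks: (i) the de~Bruijn identity and the LSI argument implicitly require enough regularity and integrability of $q_u$ along the flow — this is fine here thanks to \cref{lem:pdata-sub-gaussian} and Gaussian smoothing, but it is worth flagging; (ii) you should, as the paper does, note explicitly that $\kl{\pidata}{\refmeas}<\infty$ under \cref{assump:p0:score}, since otherwise the right-hand side of \eqref{eq_mixing_VP} is vacuous.
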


\begin{proof}
    In the VE case, recall that, for $\x \in \rset^d$,
    \begin{align*} p_T(\x) = \int_{\rset^d} \fwdmarg{0}(y) (2 \pi \fwdvar{0}{T} )^{-d/2} \exp \left\{- \frac{\normEc{\x-y}^2}{2 \fwdvar{0}{T}} \right\} \rmd y \eqsp.
    \end{align*}
    Using convexity of the Kullback-Leibler in its first argument yields
    \begin{align*}
     \kl{p_T}{\refmeas}
        \leq \int_{\rset^d} \fwdmarg{0} (y) \kl{\gaussiand{y}{(2 \int_0^T \noisesch{s} \, \rmd s)\Id_\xdim}}{\gaussiand{0}{(2 \int_0^T \noisesch{s}  \rmd s)\Id_\xdim}} \rmd y \eqsp.
    \end{align*}
    Moreover,
    \begin{align*}
    \kl{\gaussiand{y}{(2 \int_0^T \noisesch{s} \, \rmd s)\Id_\xdim}}{\gaussiand{0}{(2 \int_0^T \noisesch{s} \, \rmd s)\Id_\xdim}} = \frac{\left\| y \right\|^2}{4 \int_0^T \noisesch{s}  \rmd s} \eqsp,
    \end{align*}
    which proves \eqref{eq_mixing_VE}. In the VP case, note that it follows from \cref{assump:p0:score} together with \cref{lem:pdata-sub-gaussian} that
    \begin{align*} 
    \kl{\pi_{\rm data}}{\pi_{\infty}} \leq \log \left\| \fwdmarg{0} \right\|_{\infty} + \frac{1}{2 \isvp^{-1}} \mathbb E \left[ \normEc{\Xora_0}^2 \right] + \frac{d}{2} \log \left( 2 \pi \isvp^{-1} \right) < \infty.
    \end{align*}
    It follows from Lemma B.1 of \citet{strasman2025an}, after a suitable time change and an adjustment of the invariant measure, that \eqref{eq_mixing_VP} holds.
\end{proof}

\begin{proposition} \label[proposition]{prop:mixing_error_rho_b}
Suppose that \cref{assump:p0:score} holds and let $\Xora_{\infty} \sim \pi_\infty$. Then, the mixing time for SDE \eqref{eq:forward_SDE} in the weighted total variation distance is upper bounded by
\begin{align*} \rho_b(p_T, \refmeas) \leq \left(\frac{1}{\sqrt 2} + b\sqrt{2 \left(\mathbb{E}\left[V^2(\Xora_T) \right]+\mathbb{E} \left[V^2(\Xora_{\infty}) \right]\right) }\right) \frac{ \normEc{\Xora_0}_{L_2}  }{ 2 \sqrt{\int_0^T \noisesch{s}  \rmd s}} \eqsp,\end{align*}
in the VE case ($\alpha = 0$), and by
\begin{align*} \rho_b(p_T, \refmeas) \leq \left(\frac{1}{\sqrt 2} + b\sqrt{2 \left(\mathbb{E}\left[V^2(\Xora_T) \right]+\mathbb{E} \left[V^2(\Xora_{\infty}) \right]\right) }\right)\sqrt{\kl{\pi_{\rm data} }{\refmeas}} \rme^{- \isvp \int_0^T \noisesch{s}  \rmd s} \eqsp,\end{align*}
in the VP case ($ \alpha > 0$).
\end{proposition}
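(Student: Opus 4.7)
The plan is to combine three ingredients already prepared in the paper: the Hellinger-weighted total-variation inequality of Lemma~\ref{lem:weighted_TV_bound_Hellinger}, the Hellinger--KL inequality~\eqref{eq:hellinger_kl}, and the KL decay estimates for $\fwdmarg{T}$ given in Lemma~\ref{lem:mixing_time}. The idea is to first bound $\bmetric{\fwdmarg{T}}{\refmeas}$ in terms of $\hellinger{\fwdmarg{T}}{\refmeas}$ and moment integrals of the weight $1+b\lyapunov{2}$, then upgrade Hellinger to KL, and finally plug in the two regime-specific KL bounds.

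First, I would apply Lemma~\ref{lem:weighted_TV_bound_Hellinger} with $\mu_1=\fwdmarg{T}$ and $\mu_2=\refmeas$. The finiteness of second-order moments of $\lyapunov{2}$ under $\fwdmarg{T}$ that the lemma requires follows from \cref{lem:pdata-sub-gaussian} combined with the Gaussian convolution representation of Lemma~\ref{lem:forward_process_law}, while finiteness under $\refmeas$ is immediate since $\refmeas$ is Gaussian. This gives
\[
\bmetric{\fwdmarg{T}}{\refmeas}
\le
\sqrt{2}\, \hellinger{\fwdmarg{T}}{\refmeas}
\left(
\sqrt{\mathbb{E}[(1+b\lyapunov{2}[\Xora_T])^2]}
+
\sqrt{\mathbb{E}[(1+b\lyapunov{2}[\Xora_\infty])^2]}
\right).
\]
I would then bound each integrand pointwise by $(1+b\lyapunov{2})^2\le 2+2b^2\lyapunov{2}^2$, apply $\sqrt{a+b}\le\sqrt{a}+\sqrt{b}$ inside each square root, and merge the two terms via $\sqrt{a}+\sqrt{b}\le \sqrt{2(a+b)}$. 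A careful ordering of these elementary steps produces an expression of the form (numerical constant) plus $b\sqrt{2(\mathbb{E}[V^{2}(\Xora_T)]+\mathbb{E}[V^{2}(\Xora_\infty)])}$ multiplied by $\hellinger{\fwdmarg{T}}{\refmeas}$.

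Second, I would upgrade Hellinger to KL using \eqref{eq:hellinger_kl}, namely $\hellinger{\fwdmarg{T}}{\refmeas}\le \sqrt{\kl{\fwdmarg{T}}{\refmeas}/2}$; the $\sqrt{2}$ from Lemma~\ref{lem:weighted_TV_bound_Hellinger} and the $1/\sqrt{2}$ from this inequality combine to produce the $\tfrac{1}{\sqrt 2}$ prefactor announced in the statement. Finally, I would substitute the KL bounds from Lemma~\ref{lem:mixing_time}: in the VE case, $\sqrt{\kl{\fwdmarg{T}}{\refmeas}}\le \normEc{\Xora_0}_{L_2}/(2\sqrt{\int_0^T\noisesch{s}\,\rmd s})$, while in the VP case, $\sqrt{\kl{\fwdmarg{T}}{\refmeas}}\le \sqrt{\kl{\pidata}{\refmeas}}\,\rme^{-\isvp\int_0^T\noisesch{s}\,\rmd s}$. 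Inserting each of these into the prefactor obtained above yields the two announced bounds.

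The main obstacle is not conceptual but purely an exercise in bookkeeping the elementary inequalities $(a+b)^2\le 2(a^2+b^2)$, $\sqrt{a+b}\le\sqrt{a}+\sqrt{b}$, and $\sqrt{a}+\sqrt{b}\le\sqrt{2(a+b)}$ in the right order so that the final prefactor separates cleanly as $\frac{1}{\sqrt 2}+b\sqrt{2(\mathbb{E}[V^{2}(\Xora_T)]+\mathbb{E}[V^{2}(\Xora_\infty)])}$. An alternative that may simplify this step is to split $\bmetric{\fwdmarg{T}}{\refmeas}$ into a pure total-variation part and a $\lyapunov{2}$-weighted part, treating the first directly via Pinsker's inequality~\eqref{eq:pinsker} and the second via Cauchy--Schwarz combined with~\eqref{eq:hellinger_kl}; both routes lead to the same product structure $(\text{const}+b\sqrt{2(\cdot)})\sqrt{\kl{\fwdmarg{T}}{\refmeas}}$ and differ only in numerical constants.
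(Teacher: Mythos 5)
Your primary route through Lemma~\ref{lem:weighted_TV_bound_Hellinger} does not reproduce the stated prefactor $\tfrac{1}{\sqrt 2}$, and the claim that ``the $\sqrt 2$ from Lemma~\ref{lem:weighted_TV_bound_Hellinger} and the $1/\sqrt 2$ from~\eqref{eq:hellinger_kl} combine to produce $\tfrac1{\sqrt2}$'' is arithmetically wrong: $\sqrt 2 \cdot \tfrac1{\sqrt2} = 1$. Tracking the constants carefully along that route, one bounds $\sqrt{\mathbb{E}[(1+b\lyapunov{2})^2]} \le 1 + b\sqrt{\mathbb{E}[\lyapunov{2}^2]}$ for each of $\fwdmarg{T}$ and $\refmeas$, then uses $\sqrt a + \sqrt b \le \sqrt{2(a+b)}$, and finally $\sqrt 2 \cdot \hell \le \sqrt{\kl{\cdot}{\cdot}}$; this yields $\rho_b \le \big(2 + b\sqrt{2(\mathbb{E}[V^2(\Xora_T)] + \mathbb{E}[V^2(\Xora_\infty)])}\big)\sqrt{\kl{\fwdmarg{T}}{\refmeas}}$. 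The $b$-term matches, but the additive constant comes out as $2$, not $\tfrac1{\sqrt2}$ — weaker by a factor of $2\sqrt 2$. The reason is structural: Lemma~\ref{lem:weighted_TV_bound_Hellinger} applies Cauchy--Schwarz with the full weight $1+b\lyapunov{2}$ in one piece, so the TV-size part of the error is forced through the Hellinger route, which is strictly less sharp than Pinsker's inequality for that part.

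The ``alternative'' you mention only in passing is in fact the paper's actual proof, and it is what produces the stated constants. One writes $\rho_b(p_T,\refmeas) = \int |p_T - p_\infty| + b\int \lyapunov{2}|p_T - p_\infty|$, applies Pinsker~\eqref{eq:pinsker} to the first term to obtain $\sqrt{\tfrac12 \kl{\fwdmarg{T}}{\refmeas}}$ (giving the $\tfrac1{\sqrt 2}$), and applies Cauchy--Schwarz plus the bounds $(\sqrt{p_T}+\sqrt{p_\infty})^2\le 2(p_T+p_\infty)$ and $\int(\sqrt{p_T}-\sqrt{p_\infty})^2 = 2\hell^2 \le \kl{\fwdmarg{T}}{\refmeas}$ to the second. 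The two routes have the same product structure but are not interchangeable if the goal is the stated inequality. You should lead with the split decomposition, not the all-in-one lemma, and drop the erroneous $\sqrt 2 \cdot 1/\sqrt 2$ bookkeeping.
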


\begin{proof}
Using that the both distributions are absolutely continuous with respect to the Lebesgue measure:
\begin{align*}
    \rho_b (p_T, \refmeas ) & = \int (1 + b V(\x)) | p_T(\x) -\fwdmarg{\infty}(\x) | \rmd \x \\
    & = \normTV{p_T -\fwdmarg{\infty}} + b \int V(\x) | p_T(\x) -\fwdmarg{\infty}(\x) | \rmd \x \eqsp.
\end{align*}
The left hand-side is controlled using Pinsker's inequality \eqref{eq:pinsker},
\begin{align*} \normTV{p_T -\fwdmarg{\infty}} \leq \sqrt{ \frac12 \kl{\fwdmarg{T}}{\refmeas}} \end{align*}
It remains to control the right-hand side. Write
\begin{align*}
    |p_T-p_\infty|
    =
    |\sqrt{p_T}-\sqrt{p_\infty}|\,(\sqrt{p_T}+\sqrt{p_\infty}).
\end{align*}
By Cauchy--Schwarz,
\begin{align*}
    &\int V(\x)|p_T(\x)-p_\infty(\x)| \rmd \x
    \\
    &\le
    \left(\int V^2(\x) \left( \sqrt{p_T(\x)}+\sqrt{p_\infty(\x)} \right)^2 \rmd \x \right)^{1/2}
    \left(\int \left( \sqrt{p_T(\x)}-\sqrt{p_\infty(\x)} \right)^2 \rmd \x\right)^{1/2}.
\end{align*}
On one hand, using $(\sqrt{a}+\sqrt{b})^2\le 2(a+b)$,
\begin{align*}
\int V^2(\x)(\sqrt{p_T(\x)}+\sqrt{p_\infty(\x)})^2 \rmd \x
& \leq
2\int V^2(\x)(p_T(\x)+p_\infty(\x)) \rmd \x \\
& =
2\left(\mathbb{E}\left[V^2(\Xora_T) \right]+\mathbb{E} \left[V^2(\Xora_{\infty}) \right]\right) \eqsp,
\end{align*}
with $\Xora_{\infty} \sim \pi_\infty$. On the other hand, using inequality \eqref{eq:hellinger_kl}
\begin{align*}
\int \left( \sqrt{p_T(\x)}-\sqrt{p_\infty(\x)} \right)^2 \rmd x = 2 \hell(\mu_1,\mu_2)^2 \leq \kl{p_T}{\refmeas} \eqsp.
\end{align*}
It follows that, 
\begin{align*}
\rho_b(p_T,\pi_\infty)
& \le \left(\frac{1}{\sqrt 2} + b\sqrt{2 \left(\mathbb{E}\left[V^2(\Xora_T) \right]+\mathbb{E} \left[V^2(\Xora_{\infty}) \right]\right) }\right)\sqrt{\kl{\fwdmarg{T}}{\refmeas}} \eqsp,
\end{align*}
since $\mathbb{E} \left[V^2(\Xora_T)\right] < \infty$ thanks to \cref{lem:pdata-sub-gaussian}
and recall that $\Xora_\infty \sim \pi_{\infty}$ is a Gaussian measure so that $\mathbb{E} \left[V^2(\Xora_\infty)\right]<\infty$. Finally, conclusion follows from~\cref{lem:mixing_time}.
\end{proof}

\subsection{One-step discretization and approximation error for the backward kernel} \label{subapp:discrandapprox}

We quantify the one-step discrepancy between the exact backward transition $\bwdker{s}[t]$ and its learned/discretized counterpart $\approxbwdker{s}[t]$. 
The bound is expressed in a weighted total variation distance and splits into two components: 
a discretization/freezing error (due to keeping the drift frozen over $[s,t]$) and a score approximation error.  The proof proceeds by controlling a local KL divergence via a Girsanov argument and converting it to a weighted bound through Hellinger/Pinsker inequalities and Lyapunov moment estimates. In this section we use the shorthand notations
\begin{align*}
\maxmultlyap{s}{t}[\ell]
\eqdef \sup_{u\in[s,t]} \multlyap{s}{u}[\ell],
\qquad
\maxbiaslyap{s}{t}[\ell]
\eqdef
\sup_{u\in[s,t]} \biaslyap{s}{u}[\ell],
\qquad
\maxctegrowthHess{t,s}
\eqdef
\sup_{u\in[s,t]} \ctegrowthHess{T-u}.
\end{align*}
for the constants of \cref{prop:backward_drift_lyapunov} and \cref{cor:bound_hessian},

\begin{proposition} \label[proposition]{prop:one_step_discr_error}
Let $0 \leq s < t \leq T$ such that $s < t$ and $\x \in \rset^\xdim$. Suppose that \cref{assump:p0}  hold. Set $\Delta \eqdef \int_{s}^{t}\bwdnoisesch{u} \rmd u > 0$ and
\begin{align*}
\errscore{s}{\x} \eqdef 
    \score[T-s][\x] - \scorenet[T-s][\x]
 \eqsp.
\end{align*}
Then,
\begin{align*}
\bmetric{\delta_{\x}\bwdker{s}[t]}{\delta_{\x}\approxbwdker{s}[t]}
&\le
\sqrt{\Delta^2\big(\cstediscrgamma + \cstediscrgammax \normEc{\x}^{4p+4}\big)
    + 3\Delta \normEc{\errscore{s}{\x}}^2 } H_s(\x)
\\
&\le
\Delta \sqrt{\cstediscrgamma + \cstediscrgammax \normEc{\x}^{4p+4}} H_{s:t}(\x)
+ \sqrt{3\Delta} \normEc{\errscore{s}{\x}} H_{s:t}(\x)
\eqsp,
\end{align*}
where
\begin{align*}
    \cstediscrgamma \eqdef
    \tfrac34 \Delta \isvp^4 \left(\maxmultlyap{s}{t}[2] + \maxbiaslyap{s}{t}[2] \right)
    +3\left(\isvp^2\xdim + 8 \maxctegrowthHess{t,s}^2\left(1+\maxbiaslyap{s}{t}[4p+4]\right)\right) \eqsp,
\end{align*}

\begin{align*}
    \cstediscrgammax \eqdef
    24\maxctegrowthHess{t,s}^2 \maxmultlyap{s}{t}[4p+4]
    + \tfrac34 \Delta \isvp^4 \maxmultlyap{s}{t}[2] \eqsp,
\end{align*}
and 
\begin{align*}
H_{s:t}(\x) \eqdef \left( \sqrt{\bwdker{s}[t]( 1 + b \lyapunov{2}(\x))^2} + \sqrt{\approxbwdker{s}[t]( 1 + b \lyapunov{2}(\x))^2} \right) \eqsp.
\end{align*}
\end{proposition}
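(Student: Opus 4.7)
The plan is to reduce the weighted total variation distance to a Hellinger distance, then to a Kullback--Leibler divergence between the two path-space laws on $[s,t]$, and finally to control the latter via a local Girsanov argument applied to the exact backward SDE versus its frozen-drift Euler counterpart.

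I would begin by applying \cref{lem:weighted_TV_bound_Hellinger} to $\mu_1=\delta_x \bwdker{s}[t]$ and $\mu_2=\delta_x\approxbwdker{s}[t]$, which bounds $\rho_b$ by $\sqrt 2\,\hell(\mu_1,\mu_2) H_{s:t}(x)$. Combining this with the Hellinger--KL inequality \eqref{eq:hellinger_kl} and the data processing inequality reduces the problem to bounding the path-space KL. The kernel $\approxbwdker{s}[t]$ is the marginal at time $t$ of the SDE on $[s,t]$ with drift $b^{\mathrm{app}}_u=\alpha\bwdnoisesch u x+2\bwdnoisesch u \scorenet[T-s][x][\theta]$ frozen at $s$, so a standard Girsanov computation gives
\begin{equation*}
\kl{\delta_x \bwdker{s}[t]}{\delta_x\approxbwdker{s}[t]}
\le \tfrac14 \int_s^t \frac{1}{\bwdnoisesch u}\,\mathbb{E}_x\!\left\|b^{\mathrm{exa}}_u(\Xola_u)-b^{\mathrm{app}}_u\right\|^2 du,
\end{equation*}
under the law of the exact reverse diffusion started at $x$, with $b^{\mathrm{exa}}_u(y)=\alpha\bwdnoisesch u y+2\bwdnoisesch u \score[T-u][y]$.

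The heart of the argument is the decomposition
\begin{equation*}
b^{\mathrm{exa}}_u(\Xola_u)-b^{\mathrm{app}}_u
=\alpha\bwdnoisesch u(\Xola_u-x)
+2\bwdnoisesch u\bigl(\score[T-u][\Xola_u]-\score[T-s][x]\bigr)
+2\bwdnoisesch u\,\errscore{s}{x},
\end{equation*}
followed by $\|a+b+c\|^2\le 3(\|a\|^2+\|b\|^2+\|c\|^2)$. The third term integrates to exactly $3\Delta\|\errscore{s}{x}\|^2$. For the first term, I would use Itô's formula, the propagated dissipativity \eqref{eq:score_dissipativity_time_t}, and the Lyapunov drift of \cref{prop:backward_drift_lyapunov} with $\ell=2$ to obtain $\mathbb{E}_x\|\Xola_u-x\|^2\lesssim \Delta(\maxmultlyap{s}{t}[2]\|x\|^2+\maxbiaslyap{s}{t}[2])+\Delta d$, which after the $(\alpha\bwdnoisesch u)^2$ weight produces the $\tfrac34\Delta\alpha^4(\maxmultlyap{s}{t}[2]+\maxbiaslyap{s}{t}[2])$ and $\alpha^2 d$ contributions. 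For the second term I would further split $\score[T-u][\Xola_u]-\score[T-s][x]=(\score[T-u][\Xola_u]-\score[T-u][x])+(\score[T-u][x]-\score[T-s][x])$: the spatial increment is handled by the fundamental theorem of calculus and the polynomial Jacobian bound of \cref{cor:bound_hessian}, generating the $8\maxctegrowthHess{t,s}^2$ factor together with the $\|\x\|^{4p+4}$ growth, while the temporal increment is handled by invoking the PDE satisfied by $t\mapsto\score[t]$ (obtained from Fokker--Planck) and the same polynomial growth bounds. Uniform-in-$u\in[s,t]$ moment controls in $\lyapunov{4p+4}$ via \cref{prop:backward_drift_lyapunov} with $\ell=4p+4$ yield the $\maxmultlyap{s}{t}[4p+4]$ and $\maxbiaslyap{s}{t}[4p+4]$ factors.

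Assembling these pieces produces the $\Delta^2$-coefficient bound
\begin{equation*}
\kl{\delta_x \bwdker{s}[t]}{\delta_x\approxbwdker{s}[t]}
\le \tfrac12\bigl[\Delta^2(\cstediscrgamma+\cstediscrgammax\|\x\|^{4p+4})+3\Delta\|\errscore{s}{x}\|^2\bigr],
\end{equation*}
and plugging this into the Hellinger bound yields the first inequality of the statement. The second inequality follows from the elementary $\sqrt{a+b}\le\sqrt a+\sqrt b$. The hard part will be the uniform-in-time control of the temporal regularity of the score along the backward diffusion: because $\jscore[t]$ is only polynomially bounded, the temporal increment $\score[T-u][x]-\score[T-s][x]$ must be expanded via the Fokker--Planck identity for $\partial_t\score[t]$, and the resulting quadratic moments must be absorbed into $\maxbiaslyap{s}{t}[4p+4]$ via a careful Gronwall bookkeeping to match the stated $\cstediscrgamma,\cstediscrgammax$.
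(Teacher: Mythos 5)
Your opening chain — $\rho_b \to$ Hellinger via \cref{lem:weighted_TV_bound_Hellinger}, then Hellinger $\to$ KL via \eqref{eq:hellinger_kl}, then KL on kernels $\to$ KL on path space via data processing, then Girsanov — is exactly the paper's Step~1, and your handling of the score--approximation term $\errscore{s}{\x}$ and of $\isvp(\Xola_u-\x)$ via the quadratic Lyapunov drift matches the paper's Step~3. The divergence is in how you treat $\score[T-u][\Xola_u]-\score[T-s][\x]$, and this is where a genuine gap appears.

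You propose to split this difference into a spatial increment $\score[T-u][\Xola_u]-\score[T-u][\x]$ (handled by FTC and \cref{cor:bound_hessian}) and a temporal increment $\score[T-u][\x]-\score[T-s][\x]$ (handled by the Fokker--Planck identity for $\partial_t\score[t]$). The temporal increment is where this breaks: differentiating the Fokker--Planck identity \eqref{eq:fokkerplank:logpt} in $\x$ gives a term $\noisesch{t}\,\nabla\Delta\log\fwdmarg{t}$, which is a \emph{third} spatial derivative of $\log\fwdmarg{t}$. \cref{assump:p0} and its propagated consequences (\cref{prop:growth_stability}, \cref{cor:bound_hessian}) only control the score and its Jacobian; there is no bound on $\nabla\jscore[t]$ anywhere in the framework, and the stated constants $\cstediscrgamma,\cstediscrgammax$ contain no such quantity. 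Your ``careful Gronwall bookkeeping'' closing remark presumes the missing control exists, but it does not, so this route cannot be completed under the paper's hypotheses.

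The paper avoids this obstacle by \emph{not} separating temporal and spatial increments. Instead it introduces $\Y_u^\x \eqdef \score[T-u][\Xola_u^\x]$, derives its SDE (\cref{score differential form}) $\rmd\Y_u=-\isvp\bwdnoisesch{u}\Y_u\,\rmd u + \sqrt{2\bwdnoisesch{u}}\,\jscore[T-u][\Xola_u]\,\rmd B_u$, and then adds $\isvp\times$(the integrated backward SDE for $\Xola$) to $2\times$(the integrated $\Y$-SDE). Two cancellations occur: inside the derivation of the $\Y$-SDE, the third-derivative term $\nabla\Delta\log\fwdmarg{T-u}$ arising from Fokker--Planck cancels against the It\^o second-order correction $\Delta\nabla\log\fwdmarg{T-u}$; and in the subsequent combination, the $2\isvp\bwdnoisesch{u}\Y_u$ drift from the backward SDE for $\Xola$ cancels the $-2\isvp\bwdnoisesch{u}\Y_u$ drift from the $\Y$-SDE. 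What survives is $\isvp^2\int_s^t\bwdnoisesch{u}\Xola_u\,\rmd u$ plus a martingale whose quadratic variation is governed by $\|\jscore[T-u][\Xola_u]\|_{\mathrm F}^2$, i.e.\ only second-derivative information. This is precisely why $\maxctegrowthHess{t,s}$ (from \cref{cor:bound_hessian}) and the $\ell=4p+4$ moment bounds of \cref{prop:backward_drift_lyapunov} suffice, and why the paper's constants have the form they do. To complete your argument you would need to replace your temporal/spatial split with this combined SDE identity.
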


\begin{proof}
Using~\Cref{lem:weighted_TV_bound_Hellinger} and inequality \eqref{eq:hellinger_kl} 
relating the Hellinger distance and KL divergence, we obtain for $\x\in\rset^\xdim$,
\begin{align*}
    &\bmetric{\delta_\x \bwdker{s}[t]}{ \delta_\x \approxbwdker{s}[t]}
    \\
    &\le
    \sqrt{2} \hellinger{\delta_\x \bwdker{s}[t]}{ \delta_\x \approxbwdker{s}[t]}
    \\
    &\qquad\times
    \left(
    \CPE{}{\Xola_{s}=\x}{
        \left(1+b\lyapunov{2}[\Xola_{t}]\right)^2
    }^{1/2}
    +
    \CPE{}{\bar \X^\theta_{s}=\x}{
        \left(1+b\lyapunov{2}[\Xbar_{t}^{\theta}]\right)^2
    }^{1/2}
    \right)
    \\
    &\le
    \sqrt{ \kl{\delta_\x \bwdker{s}[t]}{ \delta_\x \approxbwdker{s}[t]}}
    \\
    &\qquad\times
    \left(
    \CPE{}{\Xola_{s}=\x}{
        \left(1+b\lyapunov{2}[\Xola_{t}]\right)^2
    }^{1/2}
    +
    \CPE{}{\bar \X^\theta_{s}=\x}{
        \left(1+b\lyapunov{2}[\Xbar_{t}^{\theta}]\right)^2
    }^{1/2}
    \right) \eqsp.
\end{align*}

Since $(1+b\lyapunov{2})^2\leq 2(1+b^2\lyapunov{2}^2)$, applying \cref{prop:backward_drift_lyapunov} for $\lyapunov{4}$, we have that
$\CPE{}{\Xola_{s}=\x}{\lyapunov{2}[\Xola_{t}]^2}<\infty$. Moreover, since $\normEc{\scorenet[s][\x]}$ is deterministic then $\CPE{}{\bar\X^{\theta}_{s}=\x}{\lyapunov{2}[\bar\X^{\theta}_{t}]^2} < \infty$.

\smallskip
\noindent\textbf{Step 1: KL bound via $L_2$ drift bound.}
Consider the continuous-time interpolation $(\bar\X_u^\theta)_{u\in[s,t]}$ of the Markov chain
\eqref{eq:time_changed_euler}, defined for $u \in [s, t]$, as 
\begin{align} \label{eq:euler_approx_score_continuous}
\bar \X_u^\theta
= \x + 
\int_{s}^{u} \bwdnoisesch{\ell}\rmd \ell \left( \isvp  \x
+ 2\, \scorenet[T-s][\x][\theta] \right)
+ \int_{s}^{u}\sqrt{2 \bwdnoisesch{\ell}} \dbrown{\ell} \eqsp,
\end{align}
Applying \cref{lem:boundgirsanovproof} and \cref{cor:boundgirsanovproof}, together with the data processing inequality \citep[Lemma~1.6]{Nutz2021EntropicOT}, the Kullback-Leibler divergence between the one--step kernels can be bounded in terms of the $L^2$ drift mismatch between \eqref{eq:backward_SDE} and \eqref{eq:euler_approx_score_continuous}:
\begin{align} \label{eq:girsanov_local}
    \begin{split}
        &\kl{
        \delta_\x \bwdker{s}[t]}{
        \delta_\x \approxbwdker{s}[t]
        }
        \\
        &\le
        \frac{1}{4}\int_{s}^{t}
        \bwdnoisesch{u}
        \CPE{}{\Xola_{s}= \x}{
            \normEc{\isvp \left( \Xola_u - \x \right) + 2 \left( \score[T-u][\Xola_u] - \scorenet[T-s][\x] \right) }^2
        }
        \rmd u \eqsp.
    \end{split}
\end{align}
\smallskip
\noindent\textbf{Step 2: error decomposition.}
The score mismatch decomposes as the sum of a freezing error and a score approximation error that writes as
\begin{align*}
    & \score[T-u][\Xola_u] - \scorenet[T-s][\x]
    \\
    & = 
    \bigg(
        \score[T-u][\Xola_u] -  \score[T-s][\x]
    \bigg)
    +
    \bigg(
        \score[T-s][\x]- \scorenet[T-s][\x]
    \bigg)
    \\
    & = \Y_u^{\x} - \Y_{s}^{\x}  + \errscore{s}{\x} \eqsp,
\end{align*}
where we used the notation of~\cref{score differential form} \ie, $\Y_u^\x \eqdef \score[T-u][\Xola_{u}]$ for $u \in [s, t]$.
From~\cref{score differential form}, with $\Z_u^\x=\jscore[T-u][\Xola_u^\x]$ for $u \in [s, t]$, we have that 
\begin{align*}
    \Y_t^\x - \Y_{s}^\x = - \isvp \int_{s}^{t} \bwdnoisesch{u} \Y_u^{\x} \rmd u + \int_{s}^{t}\sqrt{2 \bwdnoisesch{u} } \Z_u^\x \dbrown{u} \eqsp.
\end{align*}
By definition of the backward process we have
\begin{align*}
    \Xola_t^\x - \x = \int_{s}^{t} \left( \isvp  \bwdnoisesch{u}   \Xola_u^\x  + 2 \bwdnoisesch{u} \Y_u^{\x} \right) \rmd u +  \int_{s}^{t} \sqrt{2 \bwdnoisesch{u}} \dbrown{u}  \eqsp.
\end{align*}
Combining these cancels the $\Y$-integrals and yields the following identity
\begin{align*}
\isvp(\Xola_t^\x-\x) + 2(\Y_t^\x-\Y_{s}^\x)
=
\isvp^2\int_{s}^t \bwdnoisesch{u}\,\Xola_u^\x\,\rmd u
+\int_{s}^t \sqrt{2\bwdnoisesch{u}}\big(\isvp \Id_\xdim + 2\Z_u^\x\big) \dbrown{u} \eqsp.
\end{align*}

\smallskip
\noindent\textbf{Step 3: bounding the conditional error terms.} Plugging the above decomposition in \eqref{eq:girsanov_local} and using the triangle inequality yields
\begin{align*}
    &\CPE{}{\Xola_{s}= \x}{
        \normEc{\isvp \left( \Xola_t - \x \right) + 2 \left( \score[T-t][\Xola_t] - \scorenet[T-s][\x] \right)}^2
        }
    \\
    &\leq 3
    \CPE{}{\Xola_{s}  = \x }{
        \normEc{\isvp^2 \int_{s}^{t} \bwdnoisesch{u} \Xola_u \rmd u}^2
    }
    \\
    &\qquad
    + 3 \CPE{}{\Xola_s  = \x}{\normEc{\int_{s}^{t}\sqrt{2 \bwdnoisesch{u}} \left( \isvp \Id_\xdim + 2 \Z_{u}^{\x} \right) \dbrown{u}}^2}+ 12 \normEc{\errscore{s}{\x}}^2 \eqsp.
\end{align*}
Let $\nu(\rmd u) \eqdef (\bwdnoisesch{u}/\Delta) \rmd u$ be a probability measure over $[s,t]$. Then, using Jensen's inequality
\begin{align*}
    \CPE{}{\Xola_{s} =\x }{
        \normEc{\int_{s}^{t} \bwdnoisesch{u} \Xola_u \rmd u}^2}
        &  = \Delta^2
        \CPE{}{\Xola_{s}= \x }{
        \normEc{ \int_{s}^{t}  \Xola_u \nu (\rmd u) }^2} \\
        & \le \Delta^2
         \CPE{}{\Xola_{s} = \x }{
         \int_{s}^{t}   \normEc{\Xola_u}^2 \nu (\rmd u) }
         \\
         &
          = \Delta
         \CPE{}{\Xola_{s} = \x }{
         \int_{s}^{t} \bwdnoisesch{u}  \normEc{\Xola_u}^2 \rmd u } \eqsp.
\end{align*}
By \cref{prop:backward_drift_lyapunov},
\begin{align*}
    \CPE{}{\Xola_s =\x}{\normEc{\Xola_u}^2}
    \le \multlyap{s}{u}[2]\|\x\|^2 + \biaslyap{s}{u}[2]
    \eqsp.
\end{align*}
Hence, 
\begin{align} \label{eq:control_linear_drift}
    \CPE{}{\Xola_{s} =\x }{
        \normEc{\isvp^2 \int_{s}^{t} \bwdnoisesch{u} \Xola_u \rmd u}^2}
        &\leq \isvp^{4}\Delta
         \CPE{}{\Xola_{s}=\x }{
         \int_{s}^{t} \bwdnoisesch{u}  \normEc{\Xola_u}^2 \rmd u } \notag \\ 
         & \leq \isvp^{4}\Delta \left( \normEc{\x}^2 \int_s^t \bwdnoisesch{u} \multlyap{s}{u}[2] \rmd u + \int_s^t \bwdnoisesch{u} \biaslyap{s}{u}[2] \rmd u \right) \notag \\
        &\le \isvp^4 \Delta^2 \left( \maxmultlyap{s}{t}[2] \normEc{\x}^2 + \maxbiaslyap{s}{t}[2] \right) \eqsp.
\end{align}
By Itô's isometry,
\begin{align*}
   &\CPE{}{\Xola_{s}=\x}{ \normEc{\int_{s}^{t}\sqrt{2 \bwdnoisesch{u}} \left( \isvp \Id_\xdim + 2 \Z_u^\x \right) \dbrown{u}}^2}
    = 2 \int_{s}^t \bwdnoisesch{
    u} \CPE{}{\Xola_{s}=\x}{\normFr{ \isvp \Id_\xdim + 2 \Z_u^\x }^2}\rmd u\\
    &\qquad \le 4 \isvp^2 \xdim \Delta + 16 \int_{s}^t \bwdnoisesch{u} \CPE{}{\Xola_s=\x}{\normFr{\Z_u^\x  }^2} \rmd u \eqsp.
\end{align*}
Since $\Z_u^\x \eqdef \jscore[T-u][\Xola_u]$, \cref{cor:bound_hessian} applied at time $T-u$ and point $\Xola_u$ yields
\begin{align*}
\normFr{\Z_u^\x}
=
\normFr{\jscore[T-u][\Xola_u]}
\le
\ctegrowthHess{T-u}\big(1+ \normEc{\Xola_u}^{2p+2}\big)\eqsp.
\end{align*}
Squaring and using $(1+a)^2\le 2(1+a^2)$, we obtain
\begin{align*}
\CPE{}{\Xola_s=\x}{\normFr{\Z_u^\x}^2}
&\le
2\,\ctegrowthHess{T-u}^2\Big(1+\CPE{}{\Xola_s=\x}{\normEc{\Xola_u}^{4p+4}}\Big)\eqsp.
\end{align*}
Therefore,
\begin{align*}
\int_s^t \bwdnoisesch{u} 
\CPE{}{\Xola_s=\x}{\normFr{\Z_u^\x}^2} \rmd u
\le
2\int_s^t \bwdnoisesch{u} \ctegrowthHess{T-u}^2
\Big(1+\CPE{}{\Xola_s=\x}{\normEc{\Xola_u}^{4p+4}}\Big) \rmd u \eqsp.
\end{align*}

Finally, applying \cref{prop:backward_drift_lyapunov} with $\ell=4p+4$ gives, for all $u\in[s,t]$,
\begin{align*}
\CPE{}{\Xola_s=\x}{\normEc{\Xola_u}^{4p+4}}
\le
\multlyap{s}{u}[4p+4]\normEc{\x}^{4p+4}+\biaslyap{s}{u}[4p+4]\eqsp,
\end{align*}
and thus

\begin{align*}
\int_s^t \bwdnoisesch{u}
\CPE{}{\Xola_s=\x}{\normFr{\Z_u^\x}^2} \rmd u
& \le
2\int_s^t \bwdnoisesch{u} \ctegrowthHess{T-u}^2
\Big(
1+\multlyap{s}{u}[(4p+4)] \normEc{\x}^{4p+4}+\biaslyap{s}{u}[(4p+4)]
\Big) \rmd u \\
&\le
2 \Delta \maxctegrowthHess{t,s}^2\Big(1+\maxmultlyap{s}{t}[(4p+4)] \normEc{\x}^{4p+4}+\maxbiaslyap{s}{t}[(4p+4)]\Big) \eqsp.
\end{align*}
Hence,
\begin{align} \label{eq:control_hessian}
    \begin{split}
        &\CPE{}{\Xola_{s}=\x}{ \normFr{\int_{s}^{t}\sqrt{2 \bwdnoisesch{u}} \left( \isvp \Id_\xdim + 2 \Z_u \right) \dbrown{u}}^2}
        \\
        &\leq 4\Delta \left( \isvp^2 \xdim + 8 \maxctegrowthHess{t,s}^2\Big(1+\maxmultlyap{s}{t}[(4p+4)] \normEc{\x}^{4p+4}+\maxbiaslyap{s}{t}[(4p+4)]\Big) \right)
        \eqsp.
    \end{split}
\end{align}
Combining \eqref{eq:control_linear_drift} with \eqref{eq:control_hessian} yields, 
\begin{align*}
    &3 \CPE{}{\Xola_{s}  = \x }{
        \normEc{\isvp^2 \int_{s}^{t} \bwdnoisesch{u} \Xola_u \rmd u}^2
    }
    + 3 \CPE{}{\Xola_s  = \x}{\normEc{\int_{s}^{t}\sqrt{2 \bwdnoisesch{u}} \left( \isvp \Id_\xdim + 2 \Z_{u}^{\x} \right) \dbrown{u}}^2} \\
    &\le
    3 \isvp^4 \Delta^2\Big(
    \maxmultlyap{s}{t}[2]\normEc{\x}^2 + \maxbiaslyap{s}{t}[2]
    \Big)
    + 12 \Delta\Bigg(
    \isvp^2 \xdim
    + 8 \maxctegrowthHess{t,s}^2
    \Big(
    1
    + \maxmultlyap{s}{t}[4p+4]\,\normEc{\x}^{4p+4}
    + \maxbiaslyap{s}{t}[4p+4]
    \Big)
    \Bigg)\eqsp.
\end{align*}
Hence, using that $\normEc{\x}^2 \leq 1 + \normEc{\x}^{4p+4}$,
\begin{align*}
    &3 \CPE{}{\Xola_{s}  = \x }{
        \normEc{\isvp^2 \int_{s}^{t} \bwdnoisesch{u} \Xola_u \rmd u}^2
    }
    + 3 \CPE{}{\Xola_s  = \x}{\normEc{\int_{s}^{t}\sqrt{2 \bwdnoisesch{u}} \left( \isvp \Id_\xdim + 2 \Z_{u}^{\x} \right) \dbrown{u}}^2}
    \\
    &\le
    \Delta \left(\cstediscr + \cstediscrx \normEc{\x}^{4p+4}\right),
\end{align*}
with
\begin{align*}
\cstediscr \eqdef
3 \Delta \isvp^4 \left(\maxmultlyap{s}{t}[2] + \maxbiaslyap{s}{t}[2] \right)
+12\left(\isvp^2\xdim + 8 \maxctegrowthHess{t,s}^2\left(1+\maxbiaslyap{s}{t}[4p+4]\right)\right) \eqsp,
\end{align*}
and
\begin{align*}
\cstediscrx \eqdef
96 \maxctegrowthHess{t,s}^2\,\maxmultlyap{s}{t}[4p+4]
+3 \Delta \isvp^4 \maxmultlyap{s}{t}[2].
\end{align*}
\smallskip
\noindent\textbf{Step 4: combining all the bounds.}
It follows that the one-step discretization error is upper bounded by,
\begin{align*}
    &\kl{
    \delta_\x \bwdker{s}[t]}{
    \delta_\x \approxbwdker{s}[t]
    }
    \\
    & \le
    \frac{1}{4}\int_{s}^{t}
    \bwdnoisesch{u}
    \CPE{}{\Xola_{s}= \x}{
        \normEc{\isvp \left( \Xola_u - \x \right) + 2 \left( \score[T-u][\Xola_u] - \scorenet[T-s][\x] \right) }^2
    }
    \rmd u \\
    &\le \frac{1}{4}\int_s^t \bwdnoisesch{u}\,
    \Big(
    \Delta\big(\cstediscr + \cstediscrx \normEc{\x}^{4p+4}\big)
    + 12\normEc{\errscore{s}{\x}}^2
    \Big)\rmd u \\
    &= \frac{\Delta^2}{4}\big(\cstediscr + \cstediscrx \normEc{\x}^{4p+4}\big)
    + 3\Delta \normEc{\errscore{s}{\x}}^2 \eqsp, 
\end{align*}
absorbing $\tfrac14$ in $\cstediscr$ and $\cstediscrx$ yields the exact constants in the statement of the Lemma.
\end{proof}

\begin{corollary}
\label[corollary]{cor:one_step_stability_pgen}
Let $k\in\{1,\dots,N\}$ and set $\Delta_k \eqdef \int_{t_{k-1}}^{t_k}\bwdnoisesch{u}\,\rmd u$. Assume the hypotheses of \cref{prop:one_step_discr_error} hold, and recall that
$\pgen_k \eqdef \refmeas \approxbwdkercomp[\theta][]{0}{k}$.
Then
\begin{align*}
\bmetric{\pgen_{k-1}  \bwdker{t_{k-1}}[t_k]}{\pgen_{k-1}  \approxbwdker{t_{k-1}}[t_k]} 
&\le
\Delta_k \Cdiscr{k-1} +  \sqrt{\Delta_k} \Cnet{k-1} \left\| \errscore{k-1}{} \right\|_{L_2(\pgen_{k-1})} \eqsp,
\end{align*}
with
\begin{align*}
 \left\| \errscore{k-1}{} \right\|_{L_2(\pgen_{k-1})}  = \mathbb{E} \left[ \left\| \score[T-t_{k-1}][\bar \X_{t_{k-1}}^\theta] - \scorenet[T-t_{k-1}][\bar \X_{t_{k-1}}^\theta] \right\|^2 \right]^{1/2} \eqsp,  
\end{align*}
\begin{align*}
\Cdiscr{k-1} \eqdef \sqrt{\cstediscrgamma[t_{k-1}:t_{k}]+\cstediscrgammax[t_{k-1}:t_{k}] \mathbb{E} \left[ \normEc{\Xbar^{\theta}_{t_{k-1}}}^{4p+4}\right]}
 \sqrt{\mathbb{E} \left[H_k(\Xbar^{\theta}_{t_{k-1}})^2\right]}
\end{align*}
and 
\begin{align*}
\Cnet{k-1} = \sqrt{3 \mathbb{E} \left[H_k(\Xbar^{\theta}_{t_{k-1}})^2\right]} \eqsp
\end{align*}
and where $H_k$ (with $s=t_{k-1}$ and $t=t_k$), $\cstediscrgamma[t_{k-1}:t_{k}]$, and $\cstediscrgammax[t_{k-1}:t_{k}]$ are defined in \cref{prop:one_step_discr_error}.  
\end{corollary}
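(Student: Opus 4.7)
The plan is to lift the pointwise one-step estimate of \cref{prop:one_step_discr_error} from Dirac initial conditions $\delta_\x$ to the generative law $\pgen_{k-1}$ by integrating in $\x$ via the convexity estimate \cref{lem:rho_b_convexity}, and then to separate the moment factor from the score-error factor by Cauchy--Schwarz in $L_2(\pgen_{k-1})$. The structure of the bound to be proved already exhibits the two contributions (discretization vs.\ score approximation), so the entire proof amounts to tracking constants in each of these contributions.

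Concretely, I would first invoke \cref{lem:rho_b_convexity} with $\mu=\pgen_{k-1}$, $K=\bwdker{t_{k-1}}[t_k]$ and $\tilde K=\approxbwdker{t_{k-1}}[t_k]$ to get
\begin{equation*}
\bmetric{\pgen_{k-1}\bwdker{t_{k-1}}[t_k]}{\pgen_{k-1}\approxbwdker{t_{k-1}}[t_k]}
\le
\int \bmetric{\delta_\x \bwdker{t_{k-1}}[t_k]}{\delta_\x \approxbwdker{t_{k-1}}[t_k]} \pgen_{k-1}(\rmd\x).
\end{equation*}
I would then substitute the second (summed) form of the bound in \cref{prop:one_step_discr_error}, which decomposes the pointwise $\rho_b$-distance into $\Delta_k \sqrt{\cstediscrgamma[t_{k-1}:t_{k}]+\cstediscrgammax[t_{k-1}:t_{k}] \normEc{\x}^{4p+4}} H_k(\x)$ plus $\sqrt{3\Delta_k} \normEc{\errscore{k-1}{\x}} H_k(\x)$. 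Applying Cauchy--Schwarz in $L_2(\pgen_{k-1})$ to each of these two integrals separates the Lyapunov-type prefactor and $H_k$ in the first term, and separates $\normEc{\errscore{k-1}{\cdot}}$ and $H_k$ in the second. This yields exactly $\Delta_k\,\Cdiscr{k-1}+\sqrt{\Delta_k}\,\Cnet{k-1}\left\| \errscore{k-1}{} \right\|_{L_2(\pgen_{k-1})}$ after recognizing $\mathbb{E}[\normEc{\bar\X^\theta_{t_{k-1}}}^{4p+4}]$ and $\mathbb{E}[H_k(\bar\X^\theta_{t_{k-1}})^2]$ in the constants.

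The main obstacle is not algebraic but rather verifying that all quantities involved are finite and that the integrability hypotheses of \cref{lem:rho_b_convexity} and \cref{prop:one_step_discr_error} are met under our standing assumptions. For the Lyapunov-weighted factor one needs $\mathbb{E}[\normEc{\bar\X^\theta_{t_{k-1}}}^{4p+4}]<\infty$, which is precisely \cref{hyp:schema-numerique_moments}. For $\mathbb{E}[H_k(\bar\X^\theta_{t_{k-1}})^2]$ one needs to control $\bwdker{t_{k-1}}[t_k](1+b\lyapunov{2}(\cdot))^2$ and $\approxbwdker{t_{k-1}}[t_k](1+b\lyapunov{2}(\cdot))^2$ at the random point $\bar\X^\theta_{t_{k-1}}$; the first is handled by \cref{prop:backward_drift_lyapunov} with $\ell=4$ followed by taking expectation under $\pgen_{k-1}$, and the second by a direct moment computation on the time-changed Euler scheme \eqref{eq:time_changed_euler}, again using \cref{hyp:schema-numerique_moments} (and the polynomial growth of $\scorenet$ underlying it) to obtain a finite bound. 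Once these integrability checks are in place, the final inequality follows by collecting terms in the form stated.
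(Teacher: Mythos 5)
Your proposal is correct and follows essentially the same path as the paper's own proof: lift the one-step Dirac bound from \cref{prop:one_step_discr_error} to $\pgen_{k-1}$ via the convexity estimate \cref{lem:rho_b_convexity}, apply Cauchy--Schwarz in $L_2(\pgen_{k-1})$ separately to the discretization and score-error terms, and then verify finiteness of $\mathbb{E}\bigl[\normEc{\Xbar^\theta_{t_{k-1}}}^{4p+4}\bigr]$ and $\mathbb{E}\bigl[H_k(\Xbar^\theta_{t_{k-1}})^2\bigr]$ using \cref{hyp:schema-numerique_moments} and \cref{prop:backward_drift_lyapunov} with $\ell=4$. The only cosmetic difference is that for the term $\pgen_{k-1}\approxbwdker{t_{k-1}}[t_k][G] = \pgen_k[G]$ no separate moment computation on the Euler scheme is needed --- since $G(\x)\lesssim 1+\lyapunov{4}(\x)$ and $4p+4\ge 8$, \cref{hyp:schema-numerique_moments} at index $k$ yields the bound directly --- but this is exactly what the paper does too, so your argument is sound.
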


\begin{proof}
Using \cref{lem:rho_b_convexity} we have that
\begin{align*}
\bmetric{\pgen_{k-1}\bwdker{t_{k-1}}[t_k]}{\pgen_{k-1}\approxbwdker{t_{k-1}}[t_k]}
&\le
\int
\bmetric{\delta_{\x}\bwdker{t_{k-1}}[t_k]}{\delta_{\x}\approxbwdker{t_{k-1}}[t_k]}
 \pgen_{k-1}(\rmd \x) \eqsp.
\end{align*}
Let $\Xbar^{\theta}_{t_{k-1}} \sim \pgen_{k-1}$. Then, by Cauchy--Schwarz
\begin{align*}
\int \normEc{\errscore{t_{k-1}}{\x}} H_k(\x) \pgen_{k-1}(\rmd \x)
&=
\mathbb{E} \left[
\normEc{\errscore{t_{k-1}}{\Xbar^{\theta}_{t_{k-1}}}} H_k(\Xbar^{\theta}_{t_{k-1}})
\right] \\
&\le
\sqrt{\mathbb{E}\left[\normEc{\errscore{t_{k-1}}{\Xbar^{\theta}_{t_{k-1}}}}^2\right]}
\sqrt{\mathbb{E} \left[H_k(\Xbar^{\theta}_{t_{k-1}})^2\right]} \eqsp,
\end{align*}
and similarly,
\begin{align*}
\int \sqrt{\cstediscrgamma+\cstediscrgammax \normEc{\x}^{4p+4}} H_k(\x) \pgen_{k-1}(\rmd \x)
&\le
\sqrt{\cstediscrgamma+\cstediscrgammax \mathbb{E} \left[ \normEc{\Xbar^{\theta}_{t_{k-1}}}^{4p+4}\right]}
 \sqrt{\mathbb{E} \left[H_k(\Xbar^{\theta}_{t_{k-1}})^2\right]} \eqsp.
\end{align*}
Multiplying by $\Delta_k$ and $\sqrt{3\Delta_k}$ yields the claimed bound. It remains to check the finiteness of the expectations on the right-hand side. In particular, let $G(\x)\eqdef (1+b \lyapunov{2}(\x))^{2}$ and
\begin{align*}
H_k(\x)\eqdef
\sqrt{\bwdker{\grid[k-1]}[\grid[k]]\,G(\x)}
+
\sqrt{\approxbwdker{\grid[k-1]}[\grid[k]]\,G(\x)} \eqsp.
\end{align*}
Then, using $(a+b)^2\le 2(a^2+b^2)$,
\begin{align*}
\mathbb{E} \left[ H_k \big(\Xbar^\theta_{t_{k-1}}\big)^2 \right] & \le
2(\pgen_{k-1}\bwdker{t_{k-1}}[t_k])[G]
+ 2(\pgen_{k-1}\approxbwdker{t_{k-1}}[t_k])[G] \\
&\le 2 (\refmeas \approxbwdkercomp{0}{ k-1} \bwdker{\grid[k-1]}[\grid[k]]) \left[G\right]
+
2  \refmeas \approxbwdkercomp{1}{ k} \left[G\right] \eqsp.
\end{align*}
In particular, since $G(\x)\lesssim 1+\lyapunov{4}(\x)$, \cref{hyp:schema-numerique_moments} yields $ \refmeas \approxbwdkercomp{1}{ k} \left[G\right] <\infty$.
Moreover, applying~\cref{prop:backward_drift_lyapunov} with $\ell=4$ and using again~\cref{hyp:schema-numerique_moments},
\begin{align*}
(\refmeas  \approxbwdkercomp{1}{ k-1}  \bwdker{\grid[k-1]}[\grid[k]]) \left[\lyapunov{4}(\x)\right]
\le
\multlyap{\grid[k-1]}{\grid[k]}[4] \refmeas  \approxbwdkercomp{0}{ k-1}  \left[\lyapunov{4}(\x)\right] 
+\biaslyap{\grid[k-1]}{\grid[k]}[4]
<\infty \eqsp,
\end{align*}
and therefore 
\begin{align*}\mathbb{E} \left[ H_k \big(\Xbar^\theta_{t_{k-1}}\big)^2 \right]<\infty \eqsp.\end{align*} 
\end{proof}





\subsection{Technical lemmas for the main proof}
\label{app:technical_lemmas:distances-and-inequalities}

First, this section provides the pathwise change-of-measure identity used to control the KL divergence between the exact backward SDE over one step and the continuous-time interpolation of the (discrete) SGM dynamics (\cref{lem:boundgirsanovproof} and \cref{cor:boundgirsanovproof}). Second, under the uniform Hessian control from \cref{cor:bound_hessian} we write the SDE representation of the score function that is used in \cref{prop:one_step_discr_error}.

\paragraph{One-step Girsanov type bound}
\begin{lemma} \label[lemma]{lem:boundgirsanovproof}
Fix a discretization grid $0=\grid[1] \leq \cdots \leq \grid[\gridn]=T$, an index $k \in \{1, \cdots, \gridn-1\}$ and a point $\x\in \rset^d$. Let $(\Xola_t)_{t\in[t_k,t_{k+1}]}$ be solution to the backward SDE
\begin{align} \label{eq:backward_sde_girsanov}
\rmd \Xola_t = \bwdnoisesch{t}  b(t, \Xola_t) \rmd t + \sqrt{2 \bwdnoisesch{t}} \dbrown{t} \eqsp, \qquad \Xola_{t_k} = \x \eqsp,
\end{align} 
with $b(t,y) \eqdef \isvp y + 2 \score[T-t][y]$. Denote by $\mathbb P_{t_k}^{ \x}$ the distribution of \eqref{eq:backward_sde_girsanov} on $(\Xola_t)_{t \in [t_k,t_{k+1}]}$. Similarly, let $(\bar\X_t^{\theta})_{t\in[t_k,t_{k+1}]}$ be the continuous-time interpolation of the generative model defined as solution to
\begin{align} \label{eq:euler_approx_girsanov}
\rmd \bar \X_t^\theta
=  \bwdnoisesch{t} b_{\theta}(t_k,\x)  \rmd t
+ \sqrt{2 \bwdnoisesch{t}} \dbrown{t} \eqsp , \qquad \bar \X_{t_k}^\theta = \x \eqsp,
\end{align}
with $b_{\theta}(t_k,\x) \eqdef \isvp \x + 2 \scorenet[T-t_k][\x]$. Denote by $\mathbb{P}^{\x, \theta}_{t_k}$ the distribution of \eqref{eq:euler_approx_girsanov} on $(\bar \X_t^\theta)_{[t_k,t_{k+1}]}$. Suppose that \cref{assump:p0:score} and \cref{assump:p0:hess} hold. Then, it follows that
\begin{align*}
    &\mathbb{E} \left[ \log \left( \frac{\rmd \mathbb P_{t_k}^{\x}}{\rmd \mathbb P_{t_k}^{\x,\theta}} \left((\Xola_{t})_{t\in [t_k,t_{k+1}]} \right) \right) \right]
    \\
    &= \mathbb{E}\Bigg[
    \int_{t_k}^{t_{k+1}}
    \sqrt{\frac{\bwdnoisesch{t}}{2}}
    \big(b(t, \Xola_t )-b_\theta(t_k,\x)\big)^\top \rmd B_t
    +\frac14 \int_{t_k}^{t_{k+1}}
    \bwdnoisesch{t}
    \normEc{b(t, \Xola_t ) - b_\theta(t_k,\x)}^2 \rmd t
    \Bigg] \eqsp,
\end{align*}
where $(B_t)_{t\in[t_k,t_{k+1}]}$ is a Brownian motion under $\mathbb P_{t_k}^{\x}$.
\end{lemma}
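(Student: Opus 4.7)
The plan is to apply Girsanov's theorem on the fixed interval $[t_k,t_{k+1}]$. Since both SDEs \eqref{eq:backward_sde_girsanov} and \eqref{eq:euler_approx_girsanov} share the same deterministic diffusion coefficient $\sqrt{2\bwdnoisesch{t}}$ and the same initial condition $\x$, the path laws $\mathbb{P}^{\x}_{t_k}$ and $\mathbb{P}^{\x,\theta}_{t_k}$ are mutually absolutely continuous on $\mathcal{C}([t_k,t_{k+1}];\rset^d)$, and the Radon--Nikodym derivative is an exponential martingale. The natural Girsanov integrand is obtained by inverting the diffusion coefficient on the drift mismatch, namely
\begin{equation*}
H_t \eqdef \sqrt{\bwdnoisesch{t}/2}\,\bigl(b(t,\Xola_t)-b_\theta(t_k,\x)\bigr), \qquad t\in[t_k,t_{k+1}].
\end{equation*}

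First, I would rewrite \eqref{eq:backward_sde_girsanov} as
\begin{equation*}
\rmd \Xola_t = \bwdnoisesch{t}\,b_\theta(t_k,\x)\,\rmd t + \sqrt{2\bwdnoisesch{t}}\bigl(\rmd B_t + \sqrt{\bwdnoisesch{t}/2}\,(b(t,\Xola_t)-b_\theta(t_k,\x))\,\rmd t\bigr),
\end{equation*}
where $(B_t)$ is the driving Brownian motion under $\mathbb{P}^{\x}_{t_k}$. Girsanov's theorem then yields
\begin{equation*}
\log\frac{\rmd \mathbb{P}^{\x}_{t_k}}{\rmd \mathbb{P}^{\x,\theta}_{t_k}}\bigl((\Xola_t)_{t\in[t_k,t_{k+1}]}\bigr) = \int_{t_k}^{t_{k+1}} H_t^{\top}\rmd B_t + \frac12 \int_{t_k}^{t_{k+1}} \normEc{H_t}^{2}\,\rmd t.
\end{equation*}
Substituting the definition of $H_t$, the quadratic term becomes $\tfrac14 \int_{t_k}^{t_{k+1}} \bwdnoisesch{t}\,\normEc{b(t,\Xola_t)-b_\theta(t_k,\x)}^2\,\rmd t$, and taking expectations under $\mathbb{P}^{\x}_{t_k}$ gives the announced identity.

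The main obstacle is justifying that $\exp(\int H_t^{\top}\rmd B_t - \tfrac12 \int \normEc{H_t}^2\rmd t)$ is a true $\mathbb{P}^{\x,\theta}_{t_k}$-martingale (not only a local martingale), which is what allows us to apply Girsanov and then take expectations term by term. For this I would verify Novikov's condition $\mathbb{E}[\exp(\tfrac12 \int_{t_k}^{t_{k+1}} \normEc{H_t}^2\rmd t)]<\infty$ on the interval. The integrand is controlled by $\bwdnoisesch{t}(\isvp^2\normEc{\Xola_t-\x}^2 + \normEc{\score[T-t][\Xola_t]-\scorenet[T-t_k][\x]}^2)$, and using \cref{cor:bound_hessian} together with the Lyapunov estimate of \cref{prop:backward_drift_lyapunov} one bounds the second term by a polynomial in $\normEc{\Xola_t}$ with exponents controlled by $p$. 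Since the data distribution has sub-Gaussian tails (\cref{lem:pdata-sub-gaussian}) and $\Xola_t$ is, conditionally on $\Xola_{t_k}=\x$, a Gaussian perturbation of a bounded drift on the short interval $[t_k,t_{k+1}]$, the required exponential moments are finite; if needed one localizes by a sequence of stopping times $\tau_n = \inf\{t\ge t_k:\ \normEc{\Xola_t}\ge n\}\wedge t_{k+1}$ and passes to the limit using monotone/dominated convergence. Once this is secured, the displayed identity follows directly.
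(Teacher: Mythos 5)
The high-level idea — Girsanov's theorem applied over $[t_k,t_{k+1}]$ with the drift mismatch as the Cameron--Martin shift — is the same as the paper's. However, the step you use to promote the exponential local martingale to a true martingale is where your argument breaks down, and it is precisely the point the paper's proof is built to avoid.

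You propose to verify Novikov's condition. Under \cref{assump:p0}, \cref{cor:bound_hessian} only gives $\normEc{\score[t][\x]}\lesssim 1+\normEc{\x}^{2p+3}$, so $\normEc{H_t}^2$ has polynomial growth of degree $4p+6\ge 10$. The quantity $\mathbb E[\exp(\tfrac12\int_{t_k}^{t_{k+1}}\normEc{H_t}^2\,\rmd t)]$ then requires finiteness of an exponential moment of a degree-$(4p+6)$ polynomial of $\Xola_t$, which fails even for sub-Gaussian tails: $\exp(c\normEc{\x}^{10})$ dominates any $\exp(-\lambda\normEc{\x}^2)$ density. Your fallback, ``conditionally on $\Xola_{t_k}=\x$, $\Xola_t$ is a Gaussian perturbation of a \emph{bounded} drift,'' is also incorrect — the backward drift contains the score, which is unbounded and super-linear under the standing assumptions. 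Localizing by stopping times gives only a local martingale, and without an extra argument (uniform integrability, or a version of Girsanov with weaker hypotheses) you cannot conclude that $\exp(\int H^\top\rmd B-\tfrac12\int\|H\|^2)$ has unit expectation.

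The paper sidesteps this by performing a deterministic time change to unit diffusion and then invoking Theorem~7.7 of Liptser--Shiryaev, which yields mutual absolute continuity and the explicit Radon--Nikodym derivative under the much weaker condition that $\int \normEc{b}^2\,\rmd u<\infty$ almost surely (on both path laws). Those integrability conditions are checked from the polynomial growth bounds and the Lyapunov moment estimates (\cref{prop:growth_stability}, \cref{prop:backward_drift_lyapunov}), no exponential moments are needed. If you want to keep your direct-Girsanov route, you should replace the Novikov step by a Beneš-type or Liptser--Shiryaev-type criterion that tolerates unbounded drifts of polynomial growth.
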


\begin{proof}
Let $\Omega \eqdef C([t_k,t_{k+1}],\rset^d)$ and $\X=(\X_t)_{t\in[t_k,t_{k+1}]}$ be the canonical process on $\Omega$. Introduce the following reference process
\begin{align} \label{eq:scaled_brownian}
\rmd \mathbf{Z}_t =  \sqrt{2 \bwdnoisesch{t}} \dbrown{t} \eqsp, \qquad \mathbf{Z}_{t_k} = \x \eqsp,
\end{align} 
and denote by $\mathbb{W}^{\x}_{t_k}$ the distribution of \eqref{eq:scaled_brownian} on $\Omega$.

\smallskip
\noindent\emph{Step 1: Time-changed processes.}

Our goal is to apply Theorem~7.7 of \citet{LipsterShiryaev}, which is stated for diffusion processes with unit diffusion coefficient. We therefore introduce the deterministic time-change
\begin{align*}
\tau(t)\coloneqq \int_{t_k}^t 2\bwdnoisesch{s} \rmd s \eqsp , \text{ for } t\in[t_k,t_{k+1}] \eqsp,
\end{align*}
and denote by $\tau^{-1}:[0,\tau(t_{k+1})]\to[t_k,t_{k+1}]$ its inverse. By the inverse function theorem,
\begin{align*}
(\tau^{-1})'(u)=\frac{1}{2\bwdnoisesch{\tau^{-1}(u)}} \eqsp.
\end{align*}

Define the time-changed path space $\widehat{\Omega}\coloneqq C([0,\tau(t_{k+1})],\rset^d) $ and introduce the bijective time-change map $\widehat{T}:\Omega\to\widehat{\Omega}$ by
\begin{align*}
(\widehat{T} \omega)(u)\eqdef \omega(\tau^{-1}(u)),\qquad u\in[0,\tau(t_{k+1})] \eqsp.
\end{align*}
Its inverse $\widehat{T}^{-1}:\widehat{\Omega}\to\Omega$ is given by $(\widehat{T}^{-1}\hat\omega)(t)=\hat\omega(\tau(t))$ for $t\in[t_k,t_{k+1}]$. For $\widehat{\mathbf Z}_u\eqdef \mathbf Z_{\tau^{-1}(u)}$, we get that $(\widehat{\mathbf Z}_u)_{u\in[0,\tau(t_{k+1})]}$ is a standard $d$-dimensional Brownian motion starting from $\x$, and we denote by $\widehat{\mathbb W}^{\x}_{t_k}\eqdef \mathbb W_{t_k}^{\x}\circ \widehat{T}^{-1}$ its law on $\widehat{\Omega}$. Similarly, define the time-changed processes
\begin{align*}
\widehat \Xola_u\eqdef \Xola_{\tau^{-1}(u)}
\qquad
( \text{resp. } \widehat \Xbar_u^\theta\eqdef \Xbar_{\tau^{-1}(u)}^\theta) \eqsp,
\end{align*}
satisfying
\begin{align*}
\rmd \widehat \Xola_u
=\tfrac12 b(\tau^{-1}(u),\widehat \Xola_u)\,\rmd u+\rmd \widehat{\mathbf Z}_u \eqsp,
\qquad
\rmd \widehat \Xbar_u^\theta
=\tfrac12 b_\theta(t_k,\x)\,\rmd u+\rmd \widehat{\mathbf Z}_u \eqsp.
\end{align*}
We denote by $\widehat{\mathbb P}_{t_k}^{\x} =\mathbb P_{t_k}^{\x} \circ \widehat{T}^{-1} $ and $\widehat{\mathbb P}_{t_k}^{\x,\theta} = \mathbb P_{t_k}^{\x,\theta} \circ \widehat{T}^{-1} $ their respective laws on $\widehat{\Omega}$.

\smallskip
\noindent\emph{Step 2: density processes.} 

Let $\widehat{\X}=(\widehat{\X}_u)_{u\in[0,\tau(t_{k+1})]}$
be the canonical process on $\widehat\Omega$.
Under $\widehat{\mathbb W}_{t_k}^\x$, $\widehat{\X}$ is a standard $d$-dimensional Brownian motion with $\widehat{\X}_0=\x$. Moreover, under $\widehat{\mathbb P}_{t_k}^{\x}$ (resp. $\widehat{\mathbb P}_{t_k}^{\x,\theta}$) it has the same distribution as $(\widehat \Xola_u)_{u \in [0, \tau(t_{k+1})]}$ (resp. $(\widehat \Xbar_u^\theta)_{u \in [0, \tau(t_{k+1})]}$). 

It follows from \cref{prop:growth_stability}  and \cref{prop:backward_drift_lyapunov} that
\begin{align*}
    &\mathbb{E} \left[ \int_{0}^{\tau(t_{k+1})} \frac14  \normEc{b(\tau^{-1}(u),\widehat \Xola_u)}^2 \rmd u \right]
    \\
    & = \mathbb{E} \left[ \int_{0}^{\tau(t_{k+1})} \frac14 \normEc{\isvp \widehat \Xola_u + 2 \score[T- \tau^{-1}(u) ][\widehat \Xola_u]}^2 \rmd u \right] < \infty
    \eqsp.
\end{align*}
From \cref{prop:growth_stability}, we also get that
\begin{align*}
\mathbb{E} \left[ \int_{0}^{\tau(t_{k+1})} \frac14  \normEc{b(\tau^{-1}(u), \widehat{\mathbf Z}_u)}^2 \rmd u \right] < \infty \eqsp.
\end{align*}
Since the integrands are nonnegative, these imply the corresponding a.s.\ finiteness conditions required in Theorem 7.7 in
\citet{LipsterShiryaev}. Then, by Theorem 7.7 in  \citet{LipsterShiryaev}, the path measures are equivalent $\widehat{\mathbb P}_{t_k}^{\x} \sim \widehat{\mathbb W}^{\x}_{t_k}$ and
\begin{align*}
    \begin{split}
        \frac{\rmd \widehat{\mathbb P}_{t_k}^{\x}}{\rmd \widehat{\mathbb W}^{\x}_{t_k}} ( \widehat{\X} )
        =
        \exp \left\{
        \int_0^{\tau(t_{k+1})}
        \tfrac12\, b(\tau^{-1}(u),\widehat{\X}_u )^\top \rmd \widehat{\X}_u
        -\frac18 \int_0^{\tau(t_{k+1})} \big\|b(\tau^{-1}(u), \widehat{\X}_u )\big\|^2  \rmd u
        \right\}
        \\
        \widehat{\mathbb W}^{\x}_{t_k} \text{ a.s.}
    \end{split}
\end{align*}
Moreover, since $b_{\theta}(t_k,\x)$ is deterministic, the same theorem yields
$\widehat{\mathbb P}_{t_k}^{\x,\theta} \sim \widehat{\mathbb W}^{\x}_{t_k}$ and
\begin{align*}
\frac{\rmd \widehat{\mathbb P}_{t_k}^{\x,\theta}}{\rmd \widehat{\mathbb W}^{\x}_{t_k}}(\widehat{\X})
=
\exp \left\{
\int_0^{\tau(t_{k+1})}
\tfrac12\, b_{\theta}(t_k, \x )^\top \rmd \widehat{\X}_u
-\frac18 \int_0^{\tau(t_{k+1})} \big\|b_{\theta}(t_k, \x )\big\|^2 \,\rmd u
\right\} \quad \widehat{\mathbb W}^{\x}_{t_k} \text{ a.s.}
\end{align*}
Therefore, the ratio of the above densities yields $\widehat{\mathbb P}_{t_k}^{\x,\theta}$ almost surely,
\begin{align} \label{eq:density_process}
\frac{\rmd \widehat{\mathbb P}_{t_k}^{\x}}{\rmd \widehat{\mathbb P}_{t_k}^{\x,\theta}} (\widehat{\X})
=
\exp\Bigg\{
\int_0^{\tau(t_{k+1})}
\tfrac12\Big(b(\tau^{-1}(u), \widehat{\X}_u )-b_\theta(t_k,\x)\Big)^\top \rmd \widehat{\X}_u
\\
-\frac18\int_0^{\tau(t_{k+1})}
\Big(\normEc{b(\tau^{-1}(u),\widehat{\X}_u )}^2- \normEc{b_\theta(t_k,\x)}^2\Big)\rmd u
\Bigg\} \eqsp.
\end{align}
By equivalence of the measures, equation \eqref{eq:density_process} also holds $\widehat{\mathbb P}_{t_k}^{\x}$ almost surely. 

Under $\widehat{\mathbb P}_{t_k}^{\x}$ the process
\begin{align*}
\widehat{\mathrm B}_u
\coloneqq
 \widehat{\X}_u
-\x
-\int_0^u \tfrac12 b(\tau^{-1}(s),\widehat{\X}_s) \rmd s,
\qquad \text{for } u\in[0,\tau(t_{k+1})] \eqsp,
\end{align*}
is a $d$-dimensional standard Brownian motion. Consequently, under $\widehat{\mathbb P}_{t_k}^{\x}$, $\rmd \widehat{\X}_u =   \tfrac12 b(\tau^{-1}(u),\widehat{\X}_u)\rmd u + \rmd \widehat{\mathrm B}_u$. As a consequence, $\widehat{\mathbb P}_{t_k}^{\x}$ almost surely,
\begin{align*}
\frac{\rmd \widehat{\mathbb P}_{t_k}^{\x}}{\rmd \widehat{\mathbb P}_{t_k}^{\x,\theta}} (\widehat{\X})
=
\exp\Bigg\{
\int_0^{\tau(t_{k+1})}
\tfrac12\Big(b(\tau^{-1}(u), \widehat{\X}_u )-b_\theta(t_k,\x)\Big)^\top \rmd \widehat{\mathrm B}_u
\\
+\frac18\int_0^{\tau(t_{k+1})}
\Big(\normEc{b(\tau^{-1}(u),\widehat{\X}_u ) - b_\theta(t_k,\x)}^2\Big)\rmd u
\Bigg\} \eqsp.
\end{align*}

\smallskip
\noindent\emph{Step 3: density process for the original-time laws.}

Let $\widehat{T}:\Omega\to\widehat\Omega$ be the deterministic time-change map defined in Step~1. By construction of the time-changed processes, recall that
\begin{align*}
\widehat{\mathbb P}_{t_k}^{\x}=\mathbb P_{t_k}^{\x}\circ \widehat{T}^{-1},
\qquad
\widehat{\mathbb P}_{t_k}^{\x,\theta}=\mathbb P_{t_k}^{\x,\theta}\circ \widehat{T}^{-1} \eqsp.
\end{align*}
In particular, we have equivalence of the measures, \ie, $\mathbb P_{t_k}^{\x}\sim \mathbb P_{t_k}^{\x,\theta}$ and for any test function $\psi: \Omega \to \mathbb{R}$, 
\begin{align*}
    \int_{\Omega} \psi(\X) \rmd \mathbb P_{t_k}^{\x} = \int_{\hat \Omega} \psi(\widehat{T}^{-1} \widehat{\X}) \widehat{\mathbb P}_{t_k}^{\x}
    = \int_{\widehat{\Omega}} \psi(\widehat{T}^{-1} \widehat{\X}) \frac{\rmd \widehat{\mathbb P}_{t_k}^{\x} }{ \rmd \widehat{\mathbb P}_{t_k}^{\x,\theta}} (\widehat{\X}) \rmd \widehat{\mathbb P}_{t_k}^{\x,\theta} = \int_{\Omega} \psi( \X) \frac{\rmd \widehat{\mathbb P}_{t_k}^{\x} }{ \rmd \widehat{\mathbb P}_{t_k}^{\x,\theta}} (\widehat{T} \X) \rmd \mathbb P_{t_k}^{\x,\theta}
    \eqsp,
\end{align*}
so that
\begin{align*}
\frac{\rmd \mathbb P_{t_k}^{\x}}{\rmd \mathbb P_{t_k}^{\x,\theta}}(\X)
=
\frac{\rmd \widehat{\mathbb P}_{t_k}^{\x}}{\rmd \widehat{\mathbb P}_{t_k}^{\x,\theta}}(\widehat{T} \X)\eqsp,
\qquad
\mathbb P_{t_k}^{\x,\theta}\text{-a.s.}
\end{align*}
In particular, using that $\mathbb P_{t_k}^{\x,\theta} \sim \mathbb P_{t_k}^{\x}$,
\begin{align*}
    & \mathbb{E}_{\mathbb P_{t_k}^{\x}} \left[ \log \left( \frac{\rmd \mathbb P_{t_k}^{\x}}{\rmd \mathbb P_{t_k}^{\x,\theta}} (\X) \right) \right] \\
    & =\mathbb{E}_{\mathbb P_{t_k}^{\x}} \left[ \int_{t_k}^{t_{k+1}}
    \sqrt{\tfrac{\bwdnoisesch{t}}{2}}\Big(b(t, \X_t )-b_\theta(t_k,\x)\Big) \rmd {\mathrm B}_t +\frac14 \int_{t_k}^{t_{k+1}}
    \bwdnoisesch{t} \Big(\normEc{b(t, \X_t ) - b_\theta(t_k,\x)}^2\Big)\rmd t \right]
    \eqsp,
\end{align*}
which concludes the proof.
\end{proof}

\begin{corollary} \label[corollary]{cor:boundgirsanovproof}
Under \cref{assump:p0}, 
\begin{align*}
    \mathbb{E}\left[
    \int_{t_k}^{t_{k+1}}
    \frac{\bwdnoisesch{t}}{2}
    \normEc{b(t, \Xola_t )-b_\theta(t_k,\x) }^2   \rmd t \right] < \infty \eqsp.
\end{align*}
Thus, we get
\begin{align*}
       \kl{\mathbb P_{t_k}^{ \x}}{\mathbb P_{t_k}^{ \x,\theta}} = \frac14 \mathbb{E}\left[ \int_{t_k}^{t_{k+1}}
\bwdnoisesch{t}\,
\|b(t, \Xola_t ) - b_\theta(t_k,\x)\|^2\,\rmd t
\right] \eqsp.
    \end{align*}
\end{corollary}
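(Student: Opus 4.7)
The plan is to prove the finiteness claim first and then use it to promote the log-likelihood identity from \cref{lem:boundgirsanovproof} (which involves an Itô stochastic integral) into an equality of expectations, by arguing that this stochastic integral is a square-integrable martingale with zero mean.

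First, I would control the drift magnitude along the backward flow. Since $b(t,y)=\isvp y + 2\score[T-t][y]$, the elementary inequality $\normEc{u+v}^2\le 2\normEc{u}^2+2\normEc{v}^2$ gives
\begin{equation*}
\normEc{b(t,\Xola_t)}^2 \le 2\isvp^2\normEc{\Xola_t}^2 + 8\normEc{\score[T-t][\Xola_t]}^2 \eqsp.
\end{equation*}
Under \cref{assump:p0}, \cref{cor:bound_hessian} yields $\normEc{\score[T-t][y]}\le \ctegrowthHess{T-t}(1+\normEc{y}^{2p+3})$, so that $\normEc{\score[T-t][\Xola_t]}^2\le 2\ctegrowthHess{T-t}^2(1+\normEc{\Xola_t}^{4p+6})$. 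Therefore it suffices to bound $\mathbb{E}[\normEc{\Xola_t}^{4p+6}]$ uniformly on $[t_k,t_{k+1}]$. This is exactly \cref{prop:backward_drift_lyapunov} applied with $\ell=4p+6\ge 2$, which gives
\begin{equation*}
\mathbb{E}_{\mathbb{P}_{t_k}^{\x}}\bigl[\normEc{\Xola_t}^{4p+6}\bigr]
\le \multlyap{t_k}{t}[4p+6]\,\normEc{\x}^{4p+6}+\biaslyap{t_k}{t}[4p+6] \eqsp.
\end{equation*}
Since $t\mapsto\bwdnoisesch{t}$ and $t\mapsto\ctegrowthHess{T-t}$ are continuous on the compact $[t_k,t_{k+1}]$, and $\multlyap{t_k}{\cdot}[4p+6],\biaslyap{t_k}{\cdot}[4p+6]$ are also continuous in $t$, all quantities above are uniformly bounded in $t$ on $[t_k,t_{k+1}]$, which shows that $\mathbb{E}\int_{t_k}^{t_{k+1}}\bwdnoisesch{t}\normEc{b(t,\Xola_t)}^2 \rmd t<\infty$. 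Since $b_\theta(t_k,\x)$ is deterministic, $\mathbb{E}\int_{t_k}^{t_{k+1}}\bwdnoisesch{t}\normEc{b_\theta(t_k,\x)}^2 \rmd t=\Delta_k\normEc{b_\theta(t_k,\x)}^2<\infty$, and the triangle inequality yields the first claim.

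Now I would invoke \cref{lem:boundgirsanovproof}, which expresses the log-density under $\mathbb{P}_{t_k}^{\x}$ as
\begin{equation*}
\log\frac{\rmd\mathbb{P}_{t_k}^{\x}}{\rmd\mathbb{P}_{t_k}^{\x,\theta}}\!\bigl((\Xola_t)_{t\in[t_k,t_{k+1}]}\bigr)
= \int_{t_k}^{t_{k+1}}\!\!\sqrt{\tfrac{\bwdnoisesch{t}}{2}}\bigl(b(t,\Xola_t)-b_\theta(t_k,\x)\bigr)^\top\!\rmd B_t
+\tfrac14\!\int_{t_k}^{t_{k+1}}\!\!\bwdnoisesch{t}\normEc{b(t,\Xola_t)-b_\theta(t_k,\x)}^2\rmd t \eqsp.
\end{equation*}
The finiteness established above is precisely the Itô integrability condition ensuring that the stochastic integral is a true (square-integrable) $\mathbb{P}_{t_k}^{\x}$-martingale on $[t_k,t_{k+1}]$ started at $0$, hence has zero expectation. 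Taking expectation under $\mathbb{P}_{t_k}^{\x}$, the martingale term vanishes and, by definition of $\kl{\mathbb{P}_{t_k}^{\x}}{\mathbb{P}_{t_k}^{\x,\theta}}=\mathbb{E}_{\mathbb{P}_{t_k}^{\x}}[\log(\rmd\mathbb{P}_{t_k}^{\x}/\rmd\mathbb{P}_{t_k}^{\x,\theta})]$, the claimed identity follows.

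The main technical point (and only genuine obstacle) is verifying the square-integrability of the Itô integrand, which ultimately reduces to the uniform-in-$t$ polynomial moment bound of order $4p+6$ on the backward process. This step is not entirely automatic because the polynomial growth of the score from \cref{cor:bound_hessian} is of order $2p+3$ (not $p+1$ as for $\score[0]$ itself in \cref{lem:norm-score-bound}), so one must check that \cref{prop:backward_drift_lyapunov}, whose constants are stated for arbitrary $\ell\ge 2$, indeed provides a controlled moment of this sharper order under \cref{assump:p0} alone. Once this is confirmed, the rest of the argument reduces to standard Itô-martingale and Girsanov bookkeeping.
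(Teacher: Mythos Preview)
Your proposal is correct and follows essentially the same approach as the paper: bound the drift by splitting $b$ and $b_\theta$, use polynomial growth of the score plus \cref{prop:backward_drift_lyapunov} to control moments of $\Xola_t$, and then conclude that the stochastic integral in \cref{lem:boundgirsanovproof} has zero expectation. The only minor difference is that the paper invokes \cref{prop:growth_stability} (giving $\normEc{\score[T-t][\cdot]}\lesssim 1+\normEc{\cdot}^{p+1}$, hence requiring moments of order $2p+2$), whereas you use the coarser \cref{cor:bound_hessian} (order $2p+3$, hence moments of order $4p+6$); both are legitimate since \cref{prop:backward_drift_lyapunov} covers any $\ell\ge 2$, and your choice has the advantage of explicitly handling the VP case.
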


\begin{proof}
Note that,
\begin{align*}
    &\mathbb{E} \left[
    \int_{t_k}^{t_{k+1}}
    \frac{\bwdnoisesch{t}}{2}
    \normEc{b(t,\Xola_t)-b_\theta(t_k,\x)}^2 \rmd t\right]
    \\
    &\le
    \mathbb{E} \left[\int_{t_k}^{t_{k+1}}
    \bwdnoisesch{t} \normEc{b(t,\Xola_t)}^2 dt\right]
    +\normEc{b_\theta(t_k,\x)}^2 \int_{t_k}^{t_{k+1}}\bwdnoisesch{t}\rmd t 
    \eqsp.
\end{align*}
Moreover, using \cref{prop:growth_stability}, there exists a universal constant $C_k$ (depending on $[t_k,t_{k+1}]$), such that
\begin{align*}
\mathbb{E} \left[\int_{t_k}^{t_{k+1}}
\bwdnoisesch{t} \normEc{b(t,\Xola_t)}^2  \rmd t\right]
&\le
C_k\int_{t_k}^{t_{k+1}}\bwdnoisesch{t}
\mathbb{E} \left[1+ \normEc{\Xola_t}^{2p+2}\right] \rmd t \eqsp.
\end{align*}
Using \cref{prop:backward_drift_lyapunov} with $\ell=2p+2$, we have
$\sup_{t\in[t_k,t_{k+1}]}\mathbb E[\|\Xola_t\|^{2p+2}]<\infty$. Therefore, 
\begin{align*}
\mathbb{E} \left[\int_{t_k}^{t_{k+1}}
\bwdnoisesch{t} \normEc{b(t,\Xola_t)}^2  \rmd t\right] < \infty \eqsp,
\end{align*}
which implies
\begin{align*}
\mathbb{E} \left[ \log \left( \frac{\rmd \mathbb P_{t_k}^{\x}}{\rmd \mathbb P_{t_k}^{\x,\theta}} \left((\Xola_{t})_{t\in [t_k,t_{k+1}]} \right) \right) \right] = \frac14 \mathbb{E}\Bigg[ \int_{t_k}^{t_{k+1}}
\bwdnoisesch{t}\,
\normEc{b(t, \Xola_t ) - b_\theta(t_k,\x)}^2 \rmd t
\Bigg] \eqsp.
\end{align*}
\end{proof}

\paragraph{Backward evolution of the score.}
Consider the process obtained by evaluating the score function along the backward trajectory, namely
\[
\Y_t \eqdef \score[T-t][\Xola_t],
\]
where \((\Xola_t)_{t\ge 0}\) is a weak solution to \eqref{eq:backward_SDE}. This process can be shown to satisfy a SDE on the entire interval [0,T], as detailed in the following result. The well-posedness of this SDE, as well as the validity of the representation \eqref{eq:SDE-score}, rely critically on
\cref{cor:bound_hessian}.





\begin{lemma}
    \label[lemma]{score differential form}
    Suppose that \cref{assump:p0} hold. Then, for any $\x \in \rset^{\xdim} $, we have
    \begin{align}
        \label{eq:SDE-score}
        \rmd \Y_t = - \isvp \bwdnoisesch{t}  \Y_t  \rmd t + \sqrt{2 \bwdnoisesch{t}} \nabla \score[T-t][\Xola_t]\dbrown{t}
        \eqsp.
    \end{align}
\end{lemma}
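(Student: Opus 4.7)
The plan is to compute $d\Y_t$ by applying It\^o's formula to the map $f(t,x) = \score[T-t][x]$ along the backward diffusion \eqref{eq:backward_SDE}, then to collapse the resulting drift using the PDE satisfied by the forward score. The requisite $C^{1,2}$ regularity of $f$ on $(0,T]\times\rset^\xdim$ follows from the Gaussian convolution representation of $\fwdmarg{u}$ supplied by \cref{lem:forward_process_law}, which makes $x \mapsto \score[u][x]$ smooth for every $u>0$, while \cref{cor:bound_hessian} provides the polynomial growth of $\jscore[T-t]$ needed, together with the moment bounds of \cref{prop:backward_drift_lyapunov}, to ensure that the Brownian integral arising in the expansion is a genuine square-integrable martingale on $[0,T]$.

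It\^o's formula applied componentwise gives
$$d\Y_t = \partial_t \score[T-t][\Xola_t]\, dt + \jscore[T-t][\Xola_t]\, d\Xola_t + \bwdnoisesch{t}\Delta \score[T-t][\Xola_t]\, dt,$$
where $\Delta$ denotes the componentwise Laplacian. Substituting \eqref{eq:backward_SDE} isolates the diffusion term $\sqrt{2\bwdnoisesch{t}}\,\jscore[T-t][\Xola_t]\,\dbrown{t}$ and leaves the explicit drift
$$\partial_t \score[T-t][\Xola_t] + \isvp \bwdnoisesch{t}\,\jscore[T-t][\Xola_t]\Xola_t + 2\bwdnoisesch{t}\,\jscore[T-t][\Xola_t]\score[T-t][\Xola_t] + \bwdnoisesch{t}\Delta \score[T-t][\Xola_t].$$
To evaluate $\partial_t \score[T-t]$, I will start from the forward Fokker--Planck equation $\partial_u \fwdmarg{u}[x] = \isvp \noisesch{u}\nabla\cdot(x\fwdmarg{u}[x]) + \noisesch{u}\Delta \fwdmarg{u}[x]$; dividing by $\fwdmarg{u}[x]$ and using the identity $\Delta \fwdmarg{u}[x]/\fwdmarg{u}[x] = \trace{\jscore[u][x]} + \normEc{\score[u][x]}^2$ yields
$$\partial_u \log \fwdmarg{u}[x] = \isvp \noisesch{u}\bigl(\xdim + \dotprod{x}{\score[u][x]}\bigr) + \noisesch{u}\bigl(\trace{\jscore[u][x]} + \normEc{\score[u][x]}^2\bigr),$$
and taking $\nabla_x$ of both sides, using symmetry of $\jscore[u]$ so that $\nabla \normEc{\score[u]}^2 = 2\jscore[u]\score[u]$ and $\nabla \trace{\jscore[u]} = \Delta \score[u]$ (componentwise), produces
$$\partial_u \score[u][x] = \isvp \noisesch{u}\bigl(\score[u][x] + \jscore[u][x] x\bigr) + \noisesch{u}\bigl(\Delta \score[u][x] + 2\,\jscore[u][x]\score[u][x]\bigr).$$

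Setting $u=T-t$ and using $\partial_t \score[T-t][x] = -\partial_u \score[u][x]|_{u=T-t}$ together with $\bwdnoisesch{t} = \noisesch{T-t}$, three of the four contributions to the It\^o drift cancel exactly against the substituted $\partial_t \score[T-t]$, leaving the reduced drift $-\isvp \bwdnoisesch{t}\score[T-t][\Xola_t] = -\isvp \bwdnoisesch{t}\Y_t$, which is the claimed identity. The main obstacle is not the algebra but the rigorous justification of the differentiation-under-the-integral steps used to obtain the forward-score PDE and the moment control needed to show $\int_0^T \bwdnoisesch{t}\PE{}{\normFr{\jscore[T-t][\Xola_t]}^2}\, dt < \infty$: the former is standard on $(0,T]$ by the smoothness of the Gaussian kernel together with the sub-Gaussian tails of $\pidata$ granted by \cref{lem:pdata-sub-gaussian}, and the latter follows directly from \cref{cor:bound_hessian} combined with the Lyapunov estimate of \cref{prop:backward_drift_lyapunov} applied with $\ell = 4p+4$.
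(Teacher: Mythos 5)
Your proposal is correct and follows essentially the same route as the paper's proof: apply It\^o's formula to $\score[T-t][\Xola_t]$ along the backward SDE, derive a PDE for the score from the forward Fokker--Planck equation, substitute, and observe the cancellation, then justify integrability of the martingale's quadratic variation via the Hessian growth bound. The only cosmetic difference is organizational: you differentiate the $\log p_t$ PDE in $x$ up front to get a closed PDE for $\partial_u \score[u]$ and substitute that into the It\^o drift, whereas the paper keeps the $\log$-level identity, rewrites part of the It\^o drift as $\nabla\bigl(\partial_t\phi_t + \bwdnoisesch{t}(\Delta\phi_t + \|\nabla\phi_t\|^2)\bigr)$ using $2\nabla^2\phi\,\nabla\phi=\nabla\|\nabla\phi\|^2$, and only then plugs in the PDE; the two are algebraically identical and lead to the same $-\isvp\bwdnoisesch{t}\Y_t$ drift.
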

\begin{proof}
    Consider the Fokker--Planck equation associated with the forward process \eqref{eq:forward_SDE}, \ie,
    \begin{align}
    \label{eq:Fokker-Planck}
        \partial_{t} \fwdmarg{t}(\x) = \isvp \noisesch{t} \divergence{\x \fwdmarg{t} (\x)} + \noisesch{t} \Delta \fwdmarg{t} (\x)\eqsp,
    \end{align}
    for $\x \in \rset^\xdim, t\in (0,T]$.
    We manipulate the preceding equation to derive the partial differential equation satisfied by the function \((t,x)\mapsto \log \fwdmarg{t}(x)\). Thus,
    \begin{align*}
        \partial_{t} \log \fwdmarg{t}(\x) = \isvp \noisesch{t} \frac{\divergence{\x \fwdmarg{t} (\x)}}{\fwdmarg{t}(\x)} +  \noisesch{t} \frac{\Delta \fwdmarg{t} (\x)}{\fwdmarg{t}(\x)}
        \eqsp.
    \end{align*}
    Introducing $\Delta \fwdmarg{t}/ \fwdmarg{t} = \Delta \log \fwdmarg{t} + \normEc{\nabla \log \fwdmarg{t}}^2$ and expanding the previous equation yields
    \begin{align}
        \partial_{t} \log \fwdmarg{t}(\x) = \isvp \noisesch{t} ( d + \dotprod{\x}{\nabla \log \fwdmarg{t}})+ \noisesch{t} \left(  \Delta \log \fwdmarg{t} + \normEc{\nabla \log \fwdmarg{t}}^2  \right)\eqsp. \label{eq:fokkerplank:logpt}
    \end{align}
    Using Itô's Lemma, $Y_t = \nabla \phi_t ( \Xola_t)$, with $\phi_t(x) = \log \fwdmarg{T-t}(x)$ we get
    \begin{align*}
        &\rmd Y_t
        \\
        &= \Big[
        \partial_t \nabla \phi_t\left(\Xola_t\right) + \nabla^2 \phi_t\left(\Xola_t\right)\left(   
            \isvp \bwdnoisesch{t} \Xola_t + 2 \bwdnoisesch{t} \nabla\phi_t\left(\Xola_t\right)
        \right)
         \\
        &  \qquad\qquad\qquad\qquad
        \qquad\qquad 
        +
        \bwdnoisesch{t} \Delta \nabla \phi_t\left(\Xola_t\right)
        \Big]\rmd t + \sqrt{2 \bwdnoisesch{t}} \nabla^2 \phi_t\left(\Xola_t\right) \dbrown{t}
        \\
        &=
        \Big[\nabla\left(
            \partial_t \phi_t\left(\Xola_t\right)+
            \bwdnoisesch{t} \left(\Delta \phi_t\left(\Xola_t\right)+
            \normEc{\nabla\phi_t\left(\Xola_t\right)}^2\right)
        \right)
        \\
            & \qquad\qquad\qquad\qquad
        \qquad\qquad 
        +
        \isvp \bwdnoisesch{t}
        \nabla^2 \phi_t\left(\Xola_t\right) \Xola_t
        \Big]\rmd t + \sqrt{2 \bwdnoisesch{t}} \nabla^2 \phi_t\left(\Xola_t\right) \dbrown{t} \eqsp,
    \end{align*}
    where we used that, for $\x \in \rset^\xdim$ $ 2 \nabla^2 \phi_t(\x)  \nabla \phi_t(\x) = \nabla \normEc{\nabla \phi_t(\x)}^2$. By \eqref{eq:fokkerplank:logpt},  
    \begin{align*}
        \partial_{t} \phi_t (\x) = -\isvp \noisesch{T-t} ( d + \dotprod{\x}{\nabla \log \fwdmarg{t}}) - \bwdnoisesch{t} \left(  \Delta \log \fwdmarg{t} + \normEc{\nabla \log \fwdmarg{t}}^2  \right)\eqsp, 
    \end{align*}
    which implies
    \begin{align*}
        \rmd Y_t = \left[ - \isvp \bwdnoisesch{t} \left( \nabla \phi_t(\Xola_t) + \nabla^2 \phi_t (\Xola_t) \Xola_t  \right) + \isvp \bwdnoisesch{t} \nabla^2 \phi_t (\Xola_t) \Xola_t \right] \rmd t 
        + \sqrt{2 \bwdnoisesch{t}} \nabla^2 \phi_t\left(\Xola_t\right) \dbrown{t} \eqsp.
    \end{align*}
    which implies \eqref{eq:SDE-score}. 
    Note that the Fokker–Planck equation \eqref{eq:Fokker-Planck} is only known to hold for $t \in (0,T]$. Therefore, the validity of this SDE representation cannot be extended directly up to the terminal time $T$. To overcome this limitation, it is necessary to establish additional regularity properties of the stochastic terms involved. In particular, we must show that the quadratic variation of the martingale component in \eqref{eq:SDE-score} remains uniformly bounded on $[0,T]$, which allows the SDE to be extended to the full time horizon and ensures the well-posedness of the backward dynamics. 
    
    Applying \cref{cor:bound_hessian}, there exists a constant $C>0$, independent of $t\in[0,T)$, that bounds the quadratic variation of \eqref{eq:SDE-score} as follows
    \begin{align*}
        \PE{}{
            \int_{0}^{t}2 \bwdnoisesch{s}\normFr{\jscore[T-s][\Xola_s]}^2 \rmd s
        \leq
        }
        \PE{}{
            \int_{0}^{t}4 \bwdnoisesch{s}
            \ctegrowthHess{T-s}^2 \left(1 + \normEc{\Xola_s}^{4p+4}\right)
            \rmd s
        }
        \leq C
        \eqsp,
    \end{align*}
    where in the last inequality we used the continuity of $t\mapsto \ctegrowthHess{t}$ from \cref{cor:bound_hessian}, together with the Gaussian representation \eqref{forward_marginal}. Indeed, \cref{lem:forward_process_law} shows that the law of $\Xola_t$, equal to the law of $\Xora_{T-t}$, is the product of a continuous factor times a Gaussian convolution. As both the normal distribution and $p_0$ admit any order moment, from \cref{lem:pdata-sub-gaussian}, we have that the previous bound is uniform in time and does not explode for $t\to T$.
\end{proof}

\section{Numerical protocols and additional illustrations} \label{app:experiments}


    This appendix provides the experimental details and additional numerical results supporting the forgetting/stability mechanism illustrated in the main text. We first describe the controlled synthetic settings used to isolate the effect predicted by the theory, and then report an additional real-data experiment on CIFAR-10 showing that the same qualitative trend remains visible for a pretrained diffusion model.
    All experiments were run on a server equipped with a single NVIDIA A100-SXM4 GPU with 40GB of memory. The synthetic experiments were designed to be lightweight and reproducible on CPU.

\subsection{Synthetic datasets} \label{subapp:datasets}

We study the following probability distributions that satisfy \cref{assump:p0}.

\paragraph{Gaussian random vectors.} We set dimension to $d=50$ and consider Gaussian data distributions of the form
\begin{align*}
\pi_{\rm data}=\mathcal N(\mu,\Sigma),
\qquad 
\mu \coloneqq \mathbf 1_d \in \mathbb R^d,
\end{align*}
where $\mathbf 1_d$ denotes the vector of ones. We study three covariance structures:

\begin{enumerate}[leftmargin=1.2em]
\item \textbf{Isotropic.}
We take an isotropic covariance
\begin{align*}
\Sigma^{(\mathrm{iso})} \eqdef \sigma^2 \Id_\xdim,
\qquad \sigma^2 = 0.1,
\end{align*}
so that all coordinates have the same variance and are uncorrelated.

\item \textbf{Anisotropic (Heteroscedastic).}
We consider a diagonal covariance with two variance levels:
\begin{align*}
\Sigma^{(\mathrm{heterosc})} \eqdef \mathrm{diag}(v_1,\dots,v_d),
\qquad
v_j \eqdef 
\begin{cases}
1, & 1\le j\le 5,\\
10^{-3}, & 6\le j\le d,
\end{cases}
\end{align*}
so that the first five coordinates have much larger variance than the remaining ones (no cross-correlation).

\item \textbf{Correlated.}
We use a dense covariance matrix with unit marginal variances and slowly decaying off-diagonal correlations:
\begin{align*}
\Sigma^{(\mathrm{corr})}_{jj} \eqdef 1,
\qquad
\Sigma^{(\mathrm{corr})}_{jj'} \eqdef \frac{1}{\sqrt{|j-j'|+1}}
\quad\text{for } j\neq j',
\end{align*}
for $1\le j,j'\le d$.
\end{enumerate}

\paragraph{Gaussian Mixture Model.} We consider a Gaussian mixture model in $\rset^{50}$ defined by
\begin{align*}
    \pdata[\x] = \sum_{i=1}^{25} \omega_i \gaussiand{\mu_i}{\Sigma_i} \eqsp,
\end{align*}
where 
\begin{itemize}
    \item $\{\omega_i\}_{i=1}^{25}$ are sampled i.i.d. from a $\xi^2$ distributions with 3 degrees of freedom and re-normalized to sum to $1$,
    \item $\{\mu_i\}_{i=1}^{25}$ have all but the first $2$ coordinates at $0$. The first two coordinates are evenly spaced at a square with lower corner at $(-10, -10)$ and upper corner at $(10, 10)$,
    \item Each $\Sigma_i$ is of the type $U_i^T D U_i$ where $U_i$ is obtained as one of the orthonormal matrices from the SVD of a random Gaussian matrix $\tilde{U}_i$ and $D$ is a diagonal matrices with entries $(1, 1/2, \cdots, 1/25)$.
\end{itemize}
We used a single draw of the aforementioned random numbers for all the experiments.

\subsection{Sensitivity to initialization} \label{subapp:sensitivity_to_init}

\paragraph{Gaussian targets.} This experiment isolates the effect of a controlled perturbation injected at an intermediate diffusion time $t_{\rm bias}\in(0,T]$ on the final generated sample, and studies how this effect varies with the perturbation time and the perturbation magnitude.

Fix $t_{\rm bias}\in(0,T]$ and consider the backward reverse-time transition represented by the Markov kernel $\bwdker{t_{\rm bias}}[T]$, together with its numerical approximation $\bwdkerdiscr{t_{\rm bias}}[T]$ obtained by time-discretization.
We investigate whether we observe a robustness behavior for the \emph{discretized} reverse chain with respect to the true kernel by monitoring the discrepancy
\begin{align*}
\wasserstein[2]{p_{t_{\rm bias}} \bwdker{{t_{\rm bias}}}[T]}{\tilde p_{t_{\rm bias}} \bwdkerdiscr{t_{\rm bias}}[T]}
= \wasserstein[2]{\pi_{\rm data}}{\tilde p_{t_{\rm bias}} \bwdkerdiscr{{t_{\rm bias}}}[T]} \eqsp.
\end{align*}
We define a biased initialization at time $t_{\rm bias}$by applying a shift in a random  unit direction $u$:
\begin{align*}
\x_{{t_{\rm bias}},\lambda}^{(i)} = \x_{t_{\rm bias}}^{(i)} +  \lambda u,
\qquad \|u\|=1,
\end{align*}
where $u$ is drawn once per replication and $\lambda\ge 0$ is chosen to control the bias magnitude. For each $({t_{\rm bias}},\lambda)$, we run an Euler--Maruyama discretization of the reverse-time dynamics from time $t_{\rm bias}$ down to $T$ and obtain samples $\hat \x_T^{(i)}$. We fix the step-size to $h$ throughout the experiment. A  pseudocode of the protocol is given in~\Cref{alg:init-bias}. On Gaussian targets, the \emph{exact} reverse semigroup admits closed-form expressions and satisfies a strict contraction in $\mathcal W_2$, which provides a clean reference behavior. The goal here is not to re-derive these formulas, but to assess whether the \emph{numerical} reverse chain exhibits the same qualitative robustness to perturbations.

\begin{algorithm}[t]
\caption{Initialization-bias robustness at time ${t_{\rm bias}} \in \mathcal T_{\rm perturb} $.}
\label{alg:init-bias}
\begin{algorithmic}[1]
\State  Target distribution $\pi_{\rm data}$, forward SDE, reverse SDE discretization (Euler--Maruyama) with step size $h$, horizon $T$, number of points to generate $M$, perturbation times $\mathcal T_{\rm perturb} \subset(0,T]$, bias magnitudes $\Lambda\subset\mathbb R_+$, number of replications $\rm{rep}$.
\For{$r=1,\dots,\rm{rep}$}
    \State Sample a unit vector $u^{(r)} \sim \mathrm{Unif}(\mathbb{S}^{d-1})$ 
    \For{each $t \in \mathcal T_{\rm perturb}$}
        \For{each $\lambda \in \Lambda$}
        \State Sample $\x_t^{(i)} = \fwdmean{0}{t} \x_0^{(i)} + \fwdstd{0}{t} G^{(i)}$ for $i=1,\dots,M$ 
        \State Set biased initialization $\x_{t,\lambda}^{\mathrm{bias} (i)} \gets \x_t^{(i)} + \lambda u^{(r)}$ for $i=1,\dots,M$
        \State Run discretization of the reverse dynamics from time $t$ to $T$ with initial points $\{\x_{t,\lambda}^{\mathrm{bias}(i)}\}_{i=1}^M$
        \State Obtain $\{\hat \x_{T,\lambda}^{(i)}\}_{i=1}^M$ and empirical law $\widehat\pi_{T,\lambda}^{\rm bias}$
        \State Compute $\hat{\mathcal W}_2 (t,\lambda) \gets \mathcal W_2\!\left(\widehat\pi_{T,\lambda}^{\rm bias}, \pi_{\rm data}\right)$
        \EndFor
    \EndFor
\EndFor
\State \textbf{Plot:} for each $\lambda\in\Lambda$, plot $t\mapsto \frac{1}{\mathrm{rep}}\sum_{r=1}^{\mathrm{rep}} \hat{\mathcal W}_2^{(r)}(t,\lambda)$ (with $\pm$ one std).
\end{algorithmic}
\end{algorithm}

Since the target is Gaussian, we quantify errors by the closed-form Gaussian $\mathcal{W}_2$ (Bures) distance between the empirical output (via its sample mean/covariance) and $\pi_{\rm data}=\mathcal{N}(\mu,\Sigma)$. Each curve in Figure \ref{fig:comparison_gaussian_initial_bias} is obtained by averaging over 5 independent replications and 30 000 points are generated for each bias at each perturbation time. We use for constant step size and linear (non optimized) schedule with maximal value $\beta_{\max} = 20$  and minimal value $\beta_{\min} = 0.1$ in both the VP and VE case. We use the forward-time convention: $t=0$ corresponds to $\pidata$ and $t=T$ to the noisiest level used to initialize the reverse sampler. Across all covariance structures and for both VE and VP dynamics, we observe a clear forgetting trend: perturbations injected at larger diffusion times (\ie, early along the reverse trajectory) are substantially discounted. This behavior is consistent with the contraction/robustness mechanism established in the main text: the reverse-time dynamics progressively contracts discrepancies as it evolves toward the data distribution.

\paragraph{Mixture of Gaussian targets.} The experiment is similar to the one for Gaussien targets. However, as there is no analytical formula for the Wasserstein distance in this case, we employ the Maximum sliced Wasserstein distance (\maxsw) as in \citet[Section 4.3]{kolouri2019generalized} using a total of $10^{6}$ samples. The optimization procedure employs the Adam optimization algorithm with learning rate $10^{-3}$ and is stop when either the last optimization update is smaller than $10^{-7}$ or the optimization has reached a total of $10^5$ iterations.

In this setting, we consider exclusively the \emph{Variance Exploding} framework and define for $\lambda \in \rsetpos$ an initialization perturbation of the type 
\begin{align*}
\x_{{t_{\rm bias}},\lambda}^{(i)} = \x_{t_{\rm bias}}^{(i)} + \fwdvar{0}{t_{\rm bias}}\lambda u_{\operatorname{max}}
\end{align*}
where $u_{\operatorname{max}} \in \sphere{49}$ is obtained as the solution of the optimization problem for the {\maxsw} distance for two sets of $2\times 10^5$ independent samples of $\pdata$. All the calculations have been replicated $20$ times using different seeds and the reported results consist of the mean and the standard deviations of those $20$ replicates. For the scheduling, we used the scheduler from \citet[Equation 5]{karras2022elucidating} with $N=100$, $\sigma_{\operatorname{min}}=0.002$, $\sigma_{\operatorname{max}}=80$ and $\rho=3$. Results are reported in
\Cref{fig:gmm:maxsw} (Left)
and \Cref{fig:gmm:pert:init}.

\begin{figure}[t]
\centering
\begin{subfigure}{0.32\textwidth}
    \centering
    \includegraphics[width=\linewidth]{../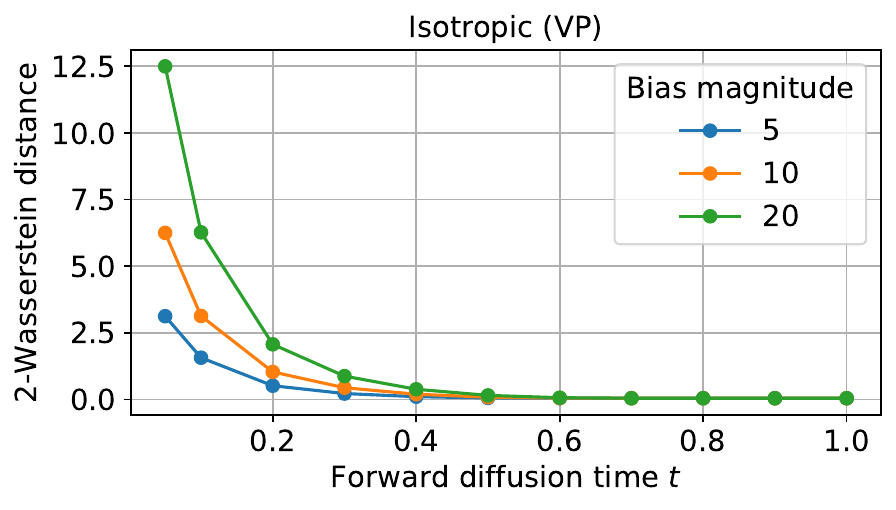}
\end{subfigure}
\hfill
\begin{subfigure}{0.32\textwidth}
    \centering
    \includegraphics[width=\linewidth]{../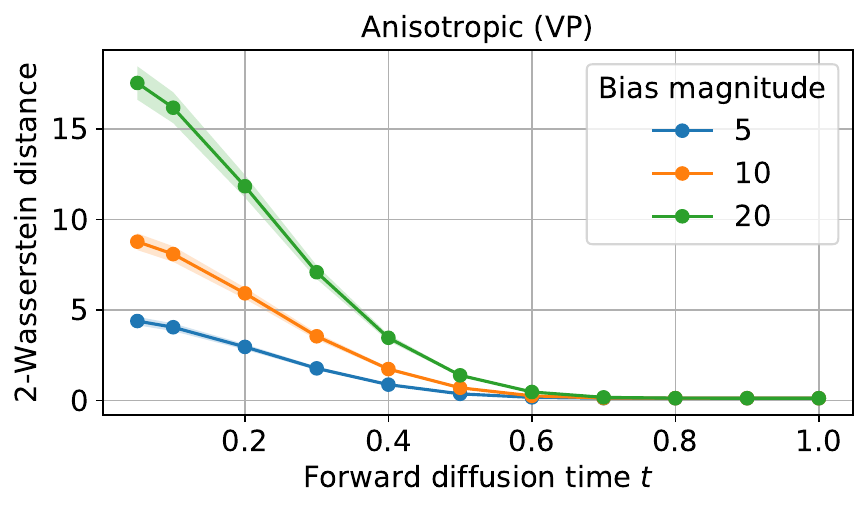}
\end{subfigure}
\hfill
\begin{subfigure}{0.32\textwidth}
    \centering
    \includegraphics[width=\linewidth]{../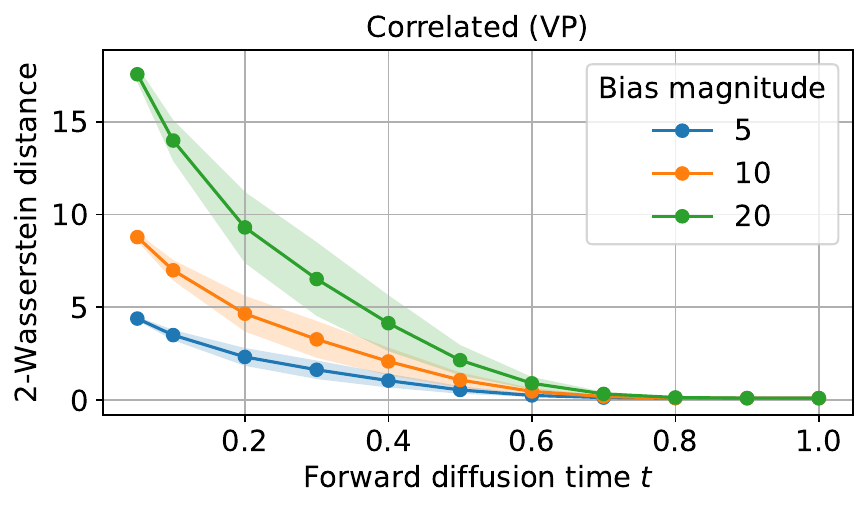}
\end{subfigure}

\vspace{0.3cm}

\begin{subfigure}{0.32\textwidth}
    \centering
    \includegraphics[width=\linewidth]{../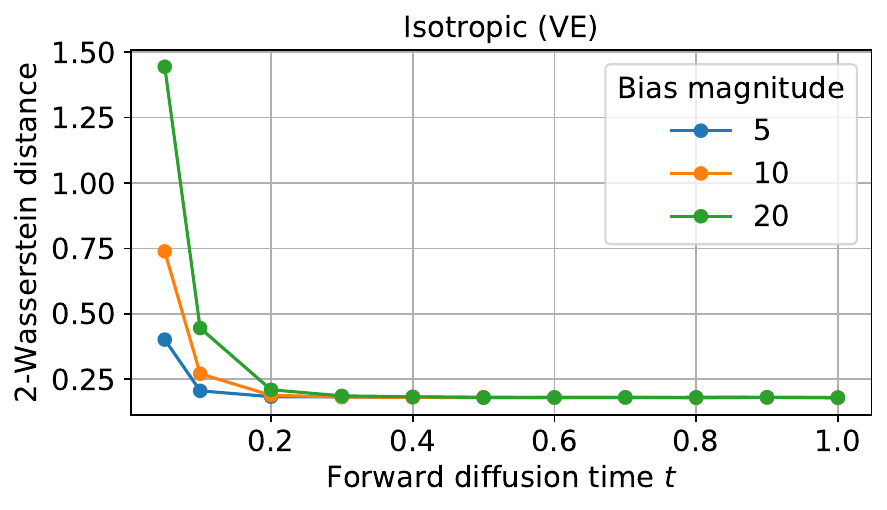}
\end{subfigure}
\hfill
\begin{subfigure}{0.32\textwidth}
    \centering
    \includegraphics[width=\linewidth]{../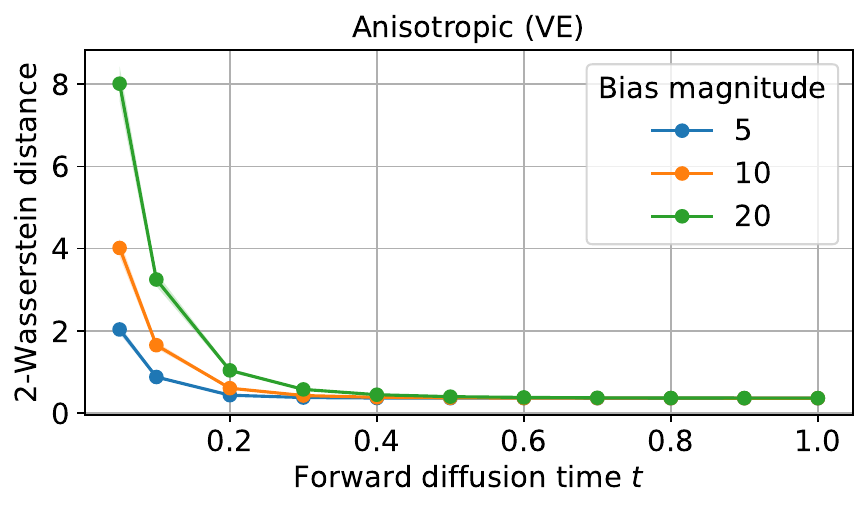}
\end{subfigure}
\hfill
\begin{subfigure}{0.32\textwidth}
    \centering
    \includegraphics[width=\linewidth]{../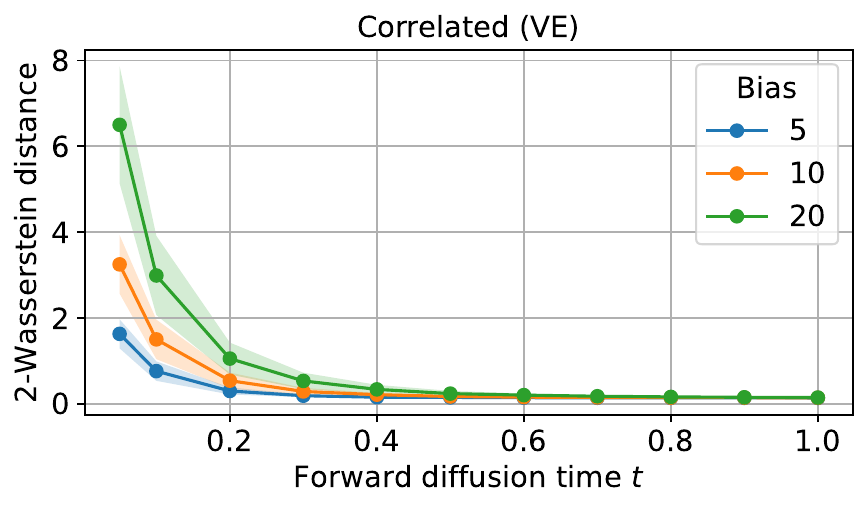}
\end{subfigure}

\caption{Sensitivity to initialization for isotropic, heteroscedastic, and correlated Gaussian targets. Curves report the mean over 5 independent replications of~\Cref{alg:init-bias} ($T=1$, $M=30{,}000$, $h=2.5\times10^{-3}$); shaded regions indicate $\pm 1$ standard deviation across replications. The horizontal axis is the forward diffusion time $t$. Each dotted point on the lines correspond to a perturbation time and the bias magnitude corresponds to the choice of $\lambda$.}
\label{fig:comparison_gaussian_initial_bias}
\end{figure}

\subsection{Sensitivity to approximation errors} \label{subapp:sensitivity_score_error}

This section describes a second numerical protocol designed to analyse the \emph{local-to-global error propagation} predicted by the stability results in the main text. Instead of modifying the initialization, we introduce a \emph{single local perturbation} in the reverse-time dynamics: at one prescribed time $t_{\mathrm{err}}\in[0,T]$, we perturb the score term used by the Euler--Maruyama discretization, and we measure how the resulting output error depends on the location $t_{\mathrm{err}}$ and magnitude of this perturbation.

Fix $n_{\mathrm{err}}\in \{ 0, \cdots, N-1 \}$ and consider the Euler--Maruyama discretization of the reverse-time SDE on the uniform grid. We run the reverse chains starting from the same terminal initialization $\x_0^{(i)}\sim \pi_{\infty}$ with the exact score function except at the unique step corresponding to $t_{\mathrm{err}}$ (\ie, $t_{n_{\mathrm{err}}})$, where we replace the score by a perturbed version only at the step $t_{n_{\mathrm{err}}}$.

\paragraph{Gaussian targets.}
We consider the following perturbation with  $u$ a random unit direction 
\begin{align*}
\tildescore[t_{{\mathrm{err}}}][\x]
=
\score[t_{{\mathrm{err}}}][\x] +  \frac{\lambda}{\fwdvar{0}{t_{\mathrm{err}}}}u \eqsp,
\end{align*}
A  pseudocode of the protocol is given in~\Cref{alg:local-score-error}. Once again, because the target distribution is Gaussian, we quantify the output error using the closed-form Gaussian $\mathcal W_2$ (Bures) distance between the empirical output distribution (via its sample mean and covariance) and $\pi_{\rm data}$. Repeating the experiment over independent replications (fresh draws of the terminal samples and Brownian increments) yields mean and standard deviation curves as functions of $t_{\mathrm{err}}$ and $\lambda$ in Figure \ref{fig:comparison_gaussian_score_error}.

\begin{algorithm}[t]
\caption{Local score-perturbation study (one-step score error at index $n_{\rm err}$).}
\label{alg:local-score-error}
\begin{algorithmic}[1]
\State \textbf{Inputs:} target $\pi_{\rm data}$; reverse-time Euler--Maruyama sampler with step size $h$ and grid $\{t_n\}_{n=0}^{N-1}$; number of points to generate $M$; error indices $\mathcal N_{\rm err}\subset\{0,\dots,N-1\}$; magnitudes $\Lambda\subset\mathbb R_+$; replications $\mathrm{rep}$.
\For{$r=1,\dots,\mathrm{rep}$}
    \State Sample $\{\x_0^{(i)}\}_{i=1}^M \sim \pi_\infty$.
    \State Sample $u^{(r)} \sim \mathrm{Unif}(\mathbb S^{d-1})$.
    \For{each $n_{\rm err}\in\mathcal N_{\rm err}$}
        \For{each $\lambda\in\Lambda$}
            \State Initialize $\x^{(i)}\gets \x_T^{(i)}$ for $i=1,\dots,M$.
            \For{$n=0,\dots,N-1$}
                \State Euler--Maruyama update:
                \begin{align*}
                \x^{(i)}_{n+1} \gets \x^{(i)}_n +
                \Big( \isvp \bwdnoisesch{t_n} \x^{(i)}_n+ 2 \bwdnoisesch{t_n} \tilde s^{(i)}_{T-t_n} (\x^{(i)}_n) \Big)h
                + \sqrt{2 \bwdnoisesch{t_n} } \sqrt{h} \xi_n^{(i)} \eqsp,
                \qquad \xi_n^{(i)}\sim\mathcal N(0,\Id_d) \eqsp,
                \end{align*}
                with
                \begin{align*}
                \tilde s^{(i)}_{T-t_n} (\x^{(i)}_n) \gets \score[T-t_n]{(\x^{(i)}_n)} + \mathbf 1_{\{n=n_{\rm err}\}} \frac{\lambda}{\fwdvar{0}{t_n}} u^{(r)} \qquad \text{ for} i=1,\dots,M \eqsp.
                \end{align*}
            \EndFor
            \State Let $\hat\pi_{N,\lambda,n_{\rm err}}$ be the empirical law of $\{\x^{(i)}_N\}_{i=1}^M$.
            \State Compute $\hat{\mathcal W}_2(\lambda,n_{\rm err}) \gets \mathcal W_2 \left(\hat\pi_{N,\lambda,n_{\rm err}}, \pi_{\rm data}\right)$ 
        \EndFor
    \EndFor
\EndFor
\State \textbf{Plot:} for each $\lambda\in\Lambda$, plot $t\mapsto \frac{1}{\mathrm{rep}}\sum_{r=1}^{\mathrm{rep}} \hat{\mathcal W}_2^{(r)}(\lambda, n_{\rm err})$ (with $\pm$ one std).
\end{algorithmic}
\end{algorithm}

\begin{figure}[t]
\centering
\begin{subfigure}{0.32\textwidth}
    \centering
    \includegraphics[width=\linewidth]{../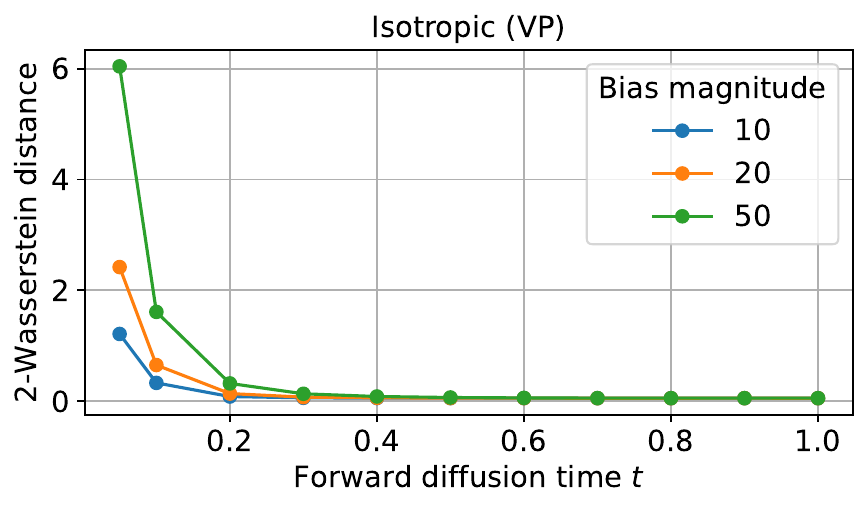}
\end{subfigure}
\hfill
\begin{subfigure}{0.32\textwidth}
    \centering
    \includegraphics[width=\linewidth]{../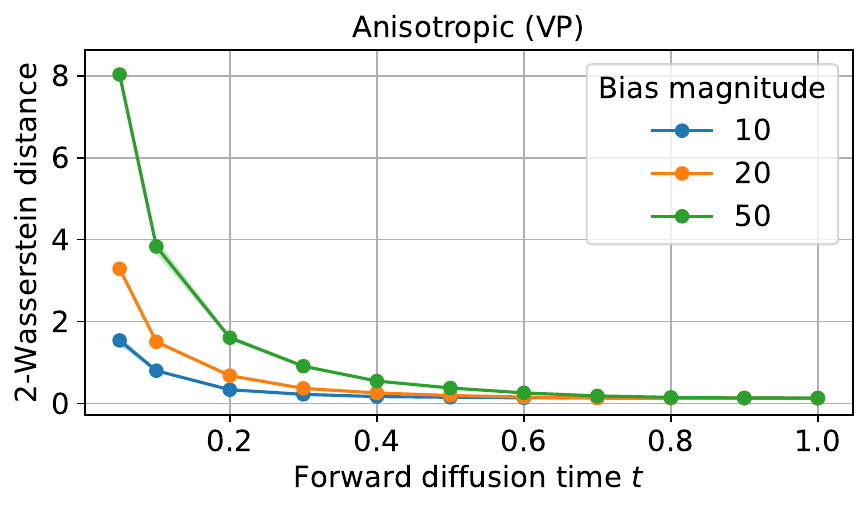}
\end{subfigure}
\hfill
\begin{subfigure}{0.32\textwidth}
    \centering
    \includegraphics[width=\linewidth]{../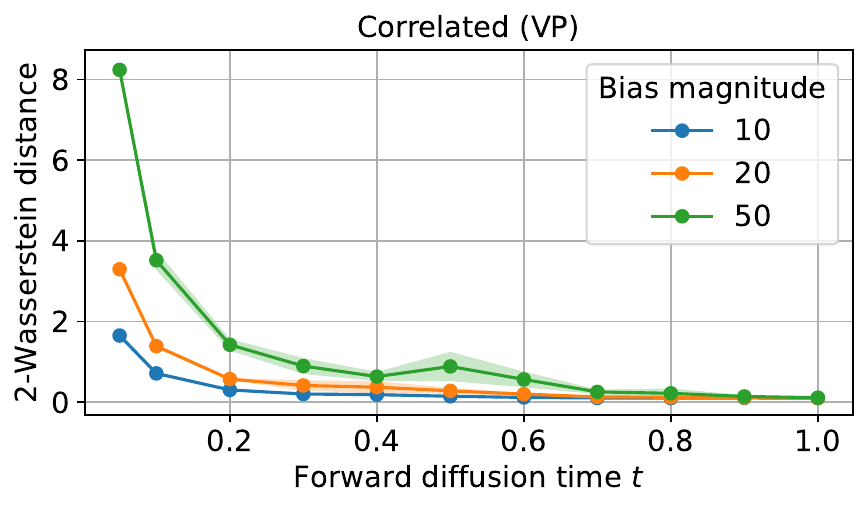}
\end{subfigure}

\vspace{0.3cm}

\begin{subfigure}{0.32\textwidth}
    \centering
    \includegraphics[width=\linewidth]{../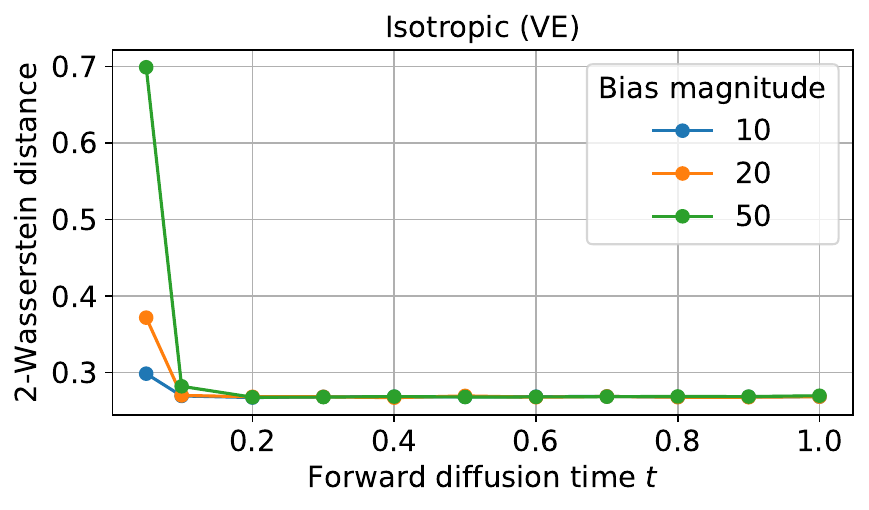}
\end{subfigure}
\hfill
\begin{subfigure}{0.32\textwidth}
    \centering
    \includegraphics[width=\linewidth]{../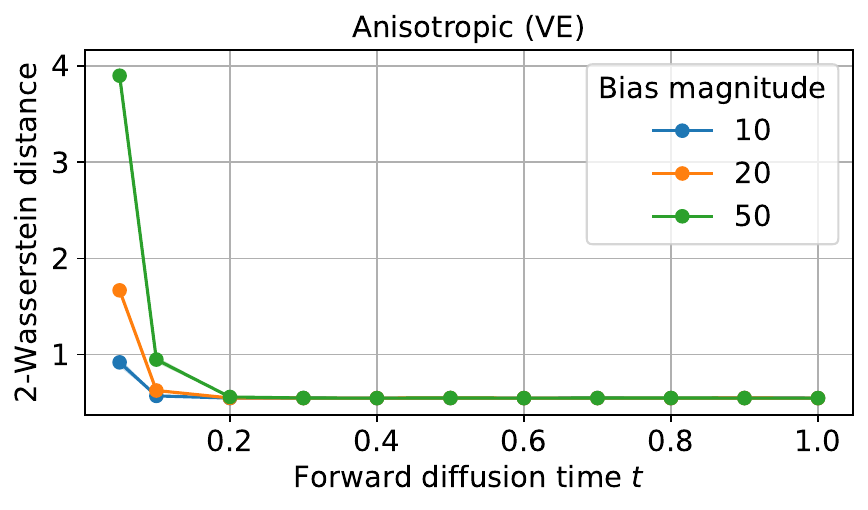}
\end{subfigure}
\hfill
\begin{subfigure}{0.32\textwidth}
    \centering
    \includegraphics[width=\linewidth]{../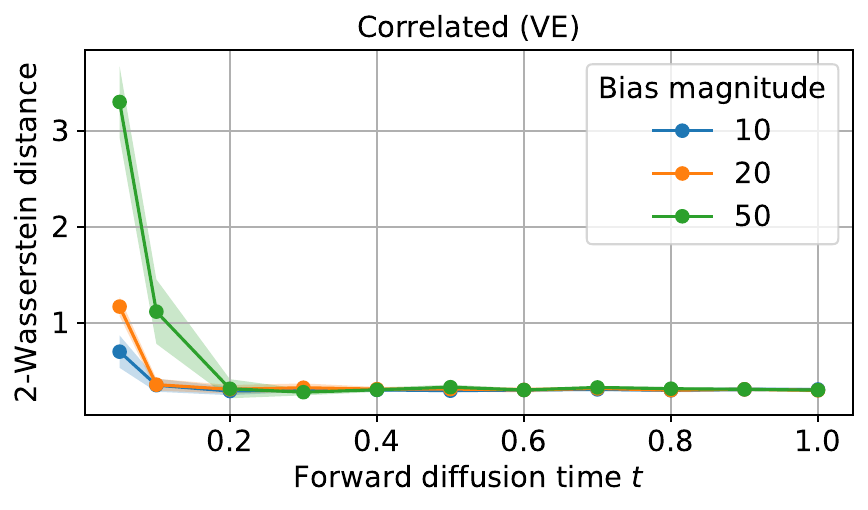}
\end{subfigure}

\caption{Sensitivity to local score error for isotropic, heteroscedastic, and correlated Gaussian targets. Curves report the mean over 5 independent replications of~\Cref{alg:init-bias} ($T=1$, $M=30{,}000$, $h=5\times10^{-3}$); shaded regions indicate $\pm 1$ standard deviation across replications. The horizontal axis is the forward diffusion time $t$. Each dotted point on the lines correspond to a perturbation time and the bias magnitude corresponds to the choice of $\lambda$.}
\label{fig:comparison_gaussian_score_error}
\end{figure}


\paragraph{Mixture of Gaussian targets.}
Once again, the experiment is similar to the one for Gaussian targets. But with the use of the Maximum sliced Wasserstein distance (\maxsw). The rest of the conditions is similar to the previous experiment on initialization bias. Also, in this setting, we consider exclusively the \emph{Variance Exploding} framework and define for $\lambda \in \rsetpos$ and a score perturbation of the type
\begin{align}\label{eq:local-score-perturb:mixt:ve}
\tildescore[t_{{\mathrm{err}}}][\x]
=
\score[t_{{\mathrm{err}}}][\x] +  \frac{\lambda}{\fwdvar{0}{t_{\mathrm{err}}}}u_{\operatorname{max}} \eqsp,
\end{align}
where $u_{\operatorname{max}} \in \sphere{49}$ is obtained as the solution of the optimization problem for the {\maxsw} distance for two sets of $2\times 10^5$ independent samples of $\pdata$. All the calculations have been replicated $20$ times using different seeds and the reported results consist of the mean and the standard deviations of those $20$ replicates. Results are reported in
\Cref{fig:gmm:maxsw} (Right)
and~\Cref{fig:gmm:pert:score}.
\begin{figure*}[t]
    \centering
    \setlength{\tabcolsep}{2pt}
    \renewcommand{\arraystretch}{1}
    \includegraphics[width=0.19\textwidth]{../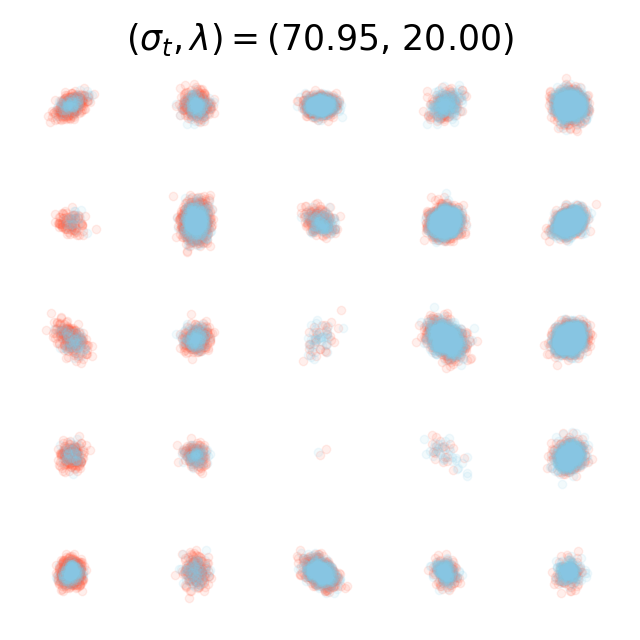}
    \includegraphics[width=0.19\textwidth]{../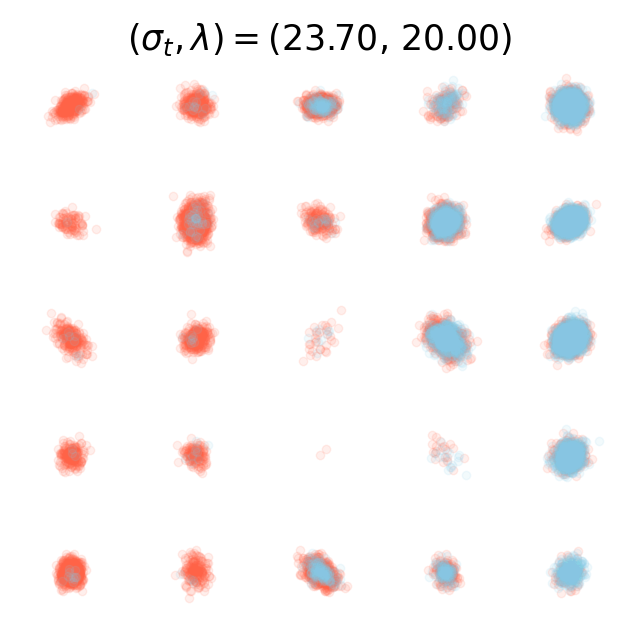}
    \includegraphics[width=0.19\textwidth]{../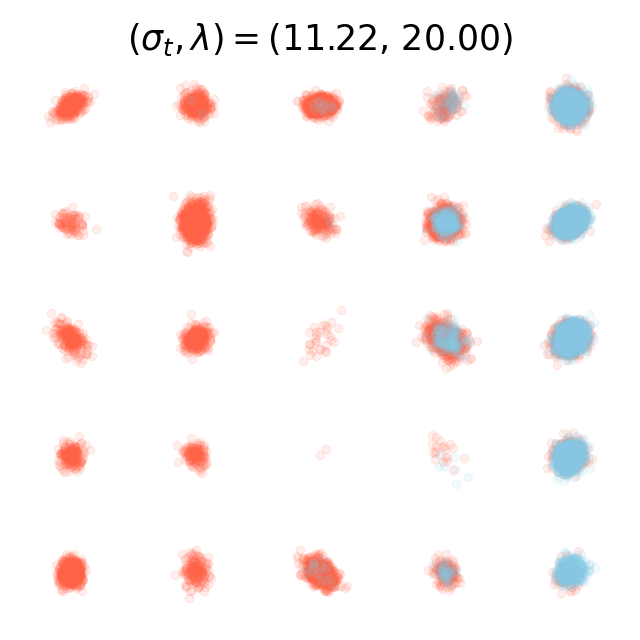}
    \includegraphics[width=0.19\textwidth]{../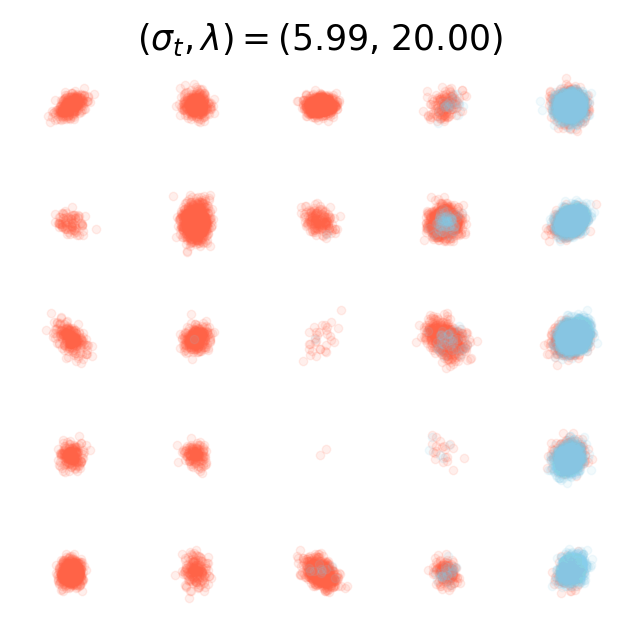}
    \includegraphics[width=0.19\textwidth]{../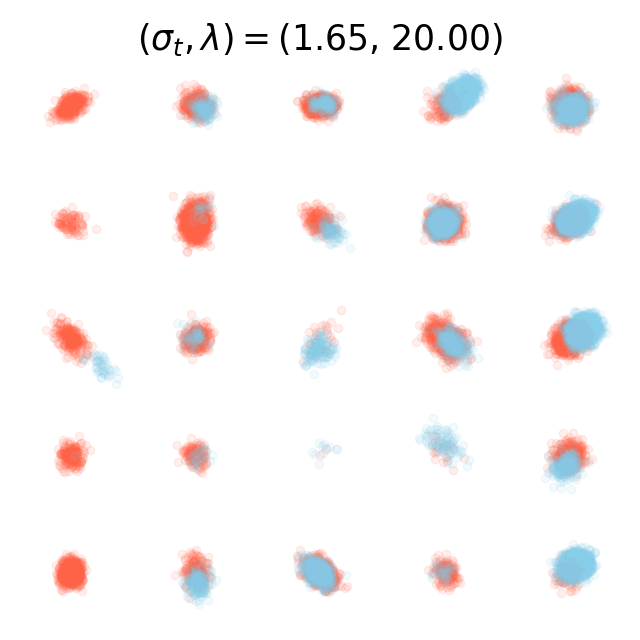}
    \caption{Initialization bias in the GMM case for several noise levels $\sigma_t$ and $\lambda=20$. Red points are samples from $\pidata$ and blue points are samples obtained after the biased initialization experiment.}
    \label{fig:gmm:pert:init}
\end{figure*}

{

\subsection{Real-data illustration on CIFAR-10}
\label{subapp:cifar10}

The experiments in the main text are intentionally conducted in controlled settings, such as Gaussian distributions and Gaussian mixture models, in order to isolate the forgetting effect without confounding it with optimization errors or score-learning errors. To complement these synthetic illustrations, we also run experiments on CIFAR-10 using the pretrained EDM VP model of \citet{karras2022elucidating} with an Euler--Maruyama sampler.

We perturb the denoiser at a single prescribed sampling step and measure the resulting degradation in the generated samples. More precisely, we first fix a perturbation direction in image space by approximately maximizing a sliced-Wasserstein discrepancy between two independently generated batches. We then perturb the denoiser once along this direction, with a prescribed perturbation magnitude, and complete the reverse sampling procedure. We use the convention that small step indices correspond to early reverse times, close to Gaussian initialization, while large step indices correspond to later reverse times, closer to the data distribution. The quality of the generated samples is evaluated using FID and maximum sliced-Wasserstein distance, computed over $50{,}000$ generated samples; for the latter we use $512$ projections.

\begin{table}[htbp]
\centering
\caption{CIFAR-10 denoiser perturbation experiment using the pretrained EDM VP model of \citet{karras2022elucidating} with $100$ Euler--Maruyama steps. The denoiser is perturbed once at the indicated step.}
\label{tab:cifar-perturbation-100}
\begin{tabular}{c|ccccccccc}
\toprule
Step & 0 & 25 & 50 & 70 & 75 & 80 & 85 & 90 & 95 \\
\midrule
FID & 13.3 & 13.0 & 13.1 & 13.6 & 14.4 & 16.4 & 28.3 & 153 & 304 \\
{\maxsw} & 0.011 & 0.014 & 0.016 & 0.033 & 0.044 & 0.060 & 0.094 & 0.266 & 0.779 \\
\bottomrule
\end{tabular}
\end{table}

The results in \Cref{tab:cifar-perturbation-100} show the same qualitative trend as in the controlled experiments: perturbations introduced early in the reverse trajectory have a limited effect on the final samples, whereas perturbations introduced late lead to a sharp degradation in both FID and maximum sliced-Wasserstein distance. This behavior is consistent with the geometric discounting predicted by the stability analysis.

We also checked the robustness of this observation using $200$ Euler--Maruyama steps and several perturbation magnitudes. The corresponding results are reported in \Cref{tab:cifar-perturbation-200}. The number in parentheses denotes the perturbation magnitude.

\begin{table}[htbp]
\centering
\caption{CIFAR-10 denoiser perturbation experiment using $200$ Euler--Maruyama steps and several perturbation magnitudes. The number in parentheses denotes the perturbation magnitude.}
\label{tab:cifar-perturbation-200}
\begin{tabular}{c|cccccc}
\toprule
Step & 0 & 50 & 100 & 150 & 175 & 190 \\
\midrule
FID (100) & 6.1 & 6.4 & 6.4 & 6.2 & 9.2 & 194 \\
FID (150) & 6.1 & 6.4 & 6.4 & 6.8 & 16.8 & 257 \\
FID (200) & 6.1 & 6.3 & 6.3 & 6.8 & 36.6 & 292 \\
{\maxsw} (100) & 0.010 & 0.011 & 0.015 & 0.021 & 0.045 & 0.357 \\
{\maxsw} (150) & 0.010 & 0.017 & 0.024 & 0.027 & 0.076 & 0.520 \\
{\maxsw} (200) & 0.010 & 0.019 & 0.027 & 0.038 & 0.117 & 0.659 \\
\bottomrule
\end{tabular}
\end{table}

These additional experiments confirm the same qualitative phenomenon: errors introduced close to the Gaussian initialization are largely forgotten, whereas errors introduced near the data distribution are much more persistent. This suggests that the forgetting mechanism is not specific to the synthetic distributions used in the main text and remains observable in a realistic image-generation setting.

}

\end{document}